\theoremstyle{plain}
\newtheorem{theorem}{Theorem}[section]
\newtheorem{proposition}[theorem]{Proposition}
\newtheorem{lemma}[theorem]{Lemma}
\newtheorem{corollary}[theorem]{Corollary}
\theoremstyle{definition}
\newtheorem{definition}[theorem]{Definition}
\theoremstyle{remark}
\newtheorem{remark}[theorem]{Remark}
\newcommand{\rs}{\ensuremath{\text{\tt RS}}}
\newcommand{\rspe}{\ensuremath{\text{\tt RS-PE}}}
\newcommand{\dsmpe}{\ensuremath{\text{\tt SIPS}}}
\newcommand{\ips}{\ensuremath{\text{\tt IPS}}}
\newcommand{\rsbpi}{\ensuremath{\text{\tt RS-BPI}}}
\newcommand{\grsbpi}{\ensuremath{\text{\tt G-RS-BPI}}}
\newcommand{\dsmbpi}{\ensuremath{\text{\tt SBPI}}}
\newcommand{\rsslin}{\ensuremath{\text{\tt RS-RMIN}}}
\newcommand{\floor}[1]{\left\lfloor #1 \right\rfloor}
\newcommand{\ceil}[1]{\left\lceil #1 \right\rceil}
\DeclareFontFamily{U}{mathx}{\hyphenchar\font45}
\DeclareFontShape{U}{mathx}{m}{n}{<-> mathx10}{}
\DeclareSymbolFont{mathx}{U}{mathx}{m}{n}
\DeclareMathAccent{\widebar}{0}{mathx}{"73}
\icmltitlerunning{Low-Rank Bandits via Tight Two-to-Infinity Singular Subspace Recovery}
\begin{document}

\twocolumn[
\icmltitle{Low-Rank Bandits via Tight Two-to-Infinity Singular Subspace Recovery}



\icmlsetsymbol{equal}{*}

\begin{icmlauthorlist}
\icmlauthor{Yassir Jedra}{equal,yyy}
\icmlauthor{William Réveillard}{equal,xxx}
\icmlauthor{Stefan Stojanovic}{equal,xxx}
\icmlauthor{Alexandre Proutiere}{xxx}
\end{icmlauthorlist}

\icmlaffiliation{yyy}{Laboratory for Information and Decision Systems, MIT, Cambridge, MA, USA}
\icmlaffiliation{xxx}{Division of Decision and Control Systems, KTH, Stockholm, Sweden}

\icmlcorrespondingauthor{Yassir, Jedra}{jedra@mit.edu}

\icmlkeywords{low-rank bandits, entrywise recovery, misspecified bandits}

\vskip 0.3in
]



\printAffiliationsAndNotice{\icmlEqualContribution} 

\begin{abstract}
We study contextual bandits with low-rank structure where, in each round, if the (context, arm) pair $(i,j)\in [m]\times [n]$ is selected, the learner observes a noisy sample of the $(i,j)$-th entry of an unknown low-rank reward matrix. Successive contexts are generated randomly in an i.i.d. manner and are revealed to the learner. For such bandits, we present efficient algorithms for policy evaluation, best policy identification and regret minimization. For policy evaluation and best policy identification, we show that our algorithms are nearly minimax optimal. For instance, the number of samples required to return an $\varepsilon$-optimal policy with probability at least $1-\delta$ typically\footref{fn:typically} scales as ${r(m+n)\over \varepsilon^2}\log(1/\delta)$. Our regret minimization algorithm enjoys minimax guarantees typically\footnote{\label{fn:typically}By "typically" here, we mean when the matrix $M$ and the (context, arm) distribution are homogeneous (refer to Definition \ref{def:homogeneous_M} for details). Note that under homogeneity of $M$, $m = \Theta(n)$.}
scaling as $r^{5/4}(m+n)^{3/4}\sqrt{T}$, which improves over existing algorithms. All the proposed algorithms consist of two phases: they first leverage spectral methods to estimate the left and right singular subspaces of the low-rank reward matrix. We show that these estimates enjoy tight error guarantees in the two-to-infinity norm. This in turn allows us to reformulate our problems as a misspecified linear bandit problem with dimension roughly $r(m+n)$ and misspecification controlled by the subspace recovery error, as well as to design the second phase of our algorithms efficiently.
\end{abstract}

\section{Introduction}

Stochastic Multi-Armed Bandits (MABs) \cite{lai1985} provide an efficient and natural framework for the analysis of the exploration-exploitation trade-off in sequential decision making and have been used in a variety of applications ranging from recommendation systems to clinical trials. When designing bandit algorithms, it is crucial to leverage as much as possible all information initially available to the learner. This information often comes as structural properties satisfied by the mapping of the arms to their rewards, see e.g. linear \cite{dani2008stochastic}, convex \cite{agarwal2011}, or unimodal \cite{Combes2014}. In this paper, we investigate bandit problems where the arm-to-reward mapping exhibits a {\it low-rank structure}. Such a structure has received a lot of attention recently \cite{kveton2017stochastic,jun2019bilinear,jang2021improved,bayati2022speed,kang2022efficient,pal2022online,lee2023context}. Despite these efforts, it remains unclear how one may optimally exploit low-rank structures and how much gains in terms of performance they bring. We make progress towards answering these questions.

We consider the following contextual low-rank bandit model, similar to that studied in \cite{bayati2022speed,pal2022online,lee2023context}. In each round, the learner first observes a random context $i\in [m]$, and then selects an arm $j\in [n]$. When the pair $(i,j)$ is selected, the learner observes a noisy sample of the $(i,j)$-th entry of a rank-$r$ matrix $M$, where $r$ is typically assumed to be much smaller than $m$ and $n$. For this low-rank bandit model, we address three learning tasks: policy evaluation, best policy identification, and regret minimization. Our contributions are as follows:

{\it (a)} We propose a generic method to design two-phase algorithms for low-rank bandit problems. In the first phase, we estimate the singular subspaces of the reward matrix $M$ using simple spectral methods. We establish tight upper bounds on the subspace recovery error in the two-to-infinity norm. In turn, these new error bounds allow us to reformulate our low-rank bandits as a misspecified contextual linear bandit problem with dimension $r(m+n)-r^2$ and controlled misspecification. In the second phase, the algorithm solves the resulting misspecified contextual linear bandit problem. The main contribution of this paper is to demonstrate that the above method, based on tight subspace recovery guarantees in the two-to-infinity norm, yields computationally efficient algorithms that either are nearly minimax optimal or outperform existing algorithms for the three considered learning tasks. 

{\it (b) Policy Evaluation (PE).} For this task, the objective is to estimate the average reward of a given {\it target} policy based on data generated from a fixed behavior policy. We derive instance-specific and minimax lower bounds on the sample complexity satisfied by any $(\varepsilon,\delta)$-PAC algorithms\footnote{These algorithms return an $\varepsilon$-accurate estimate of the value of the policy with certainty level $1-\delta$.}. The latter typically\footref{fn:typically} scales as ${m+n\over \varepsilon^2}\log(1/\delta)$. We leverage our method to devise \dsmpe\ (Spectral Importance Propensity Score) and \rspe \ (Recover Subspace for Policy Evaluation), two PE algorithms with nearly minimax optimal sample complexity. The second phase of \rspe\ refines the estimate of the reward matrix, using a regularized least-square-estimator applied to the misspecified linear bandit model obtained in the first phase. 

{\it (c) Best Policy Identification (BPI).} Here, the goal is to return an approximately optimal policy based on data gathered using a fixed or adaptive sampling strategy. Using the same estimates of the reward matrix as those used in the PE algorithms, we devise \dsmbpi\ (Spectral Best Policy Identification) and \rsbpi\ (RS for Best Policy Identification), two $(\varepsilon,\delta)$-PAC algorithms with nearly minimax optimal sample complexity, again typically scaling as ${m+n\over \varepsilon^2}\log(1/\delta)$. This significantly improves over existing algorithms whose sample complexity scales as ${m+n\over \varepsilon^{2+r}}\log(1/\delta)$ \cite{lee2023context}.   

{\it (d) Regret minimization.} The two-phase design method yields \rsslin \ (RS for Regret MINimization),  an algorithm with minimax regret guarantees typically scaling as $(m+n)^{3/4}\sqrt{T}$ over $T$ rounds. Surprisingly and to the best of our knowledge, \rsslin \ is the first algorithm enjoying minimax guarantees that are always strictly tighter than those achieved in unstructured bandits (in this case, the best guarantees scale as $\sqrt{mnT}$). In its second phase, \rsslin\ uses an extension of \textsc{SupLinUCB} \citep{chu2011contextual,takemura2021parameter} to solve the misspecified linear bandit problem derived at the end of the first phase.

\paragraph{Notation.} For a given matrix $M \in \RR^{m \times n}$, we denote its $i$-th row  by $M_{i,:}$, its $j$-th column by $M_{:, j}$, and its $(i,j)$-th entry by $M_{i,j}$. We denote by $\Vert M \Vert_\op$ its Euclidean operator norm, by $\Vert M \Vert_\F$ its Frobenius norm, by $\Vert M \Vert_{2\to \infty} = \max_{i \in [m]} \Vert M_{i,:} \Vert_2$ its two-to-infinity norm, by $\Vert M \Vert_{\max} = \max_{(i,j) \in [m]\times[n]} \vert M_{i,j} \vert$ its max-norm, by $\lambda_{\max}\left(M\right)$ (resp. $\lambda_{\min}\left(M\right)$) its maximal (resp. minimal) eigenvalue. For a given vector $x$, we denote its Euclidean norm by $\Vert x \Vert_2$, and by $\Vert x \Vert_\Lambda=\sqrt{x^\top\Lambda x}$ its Euclidean norm weighted by some positive definite matrix $\Lambda$. The notation $f(x) \lesssim g(x)$ (resp. $f(x) \gtrsim g(x)$) means that there exists a universal
constant $C > 0$ such that $f(x) \leq Cg(x)$ (resp. $f(x) \geq Cg(x)$) for all $x$. We write  $f(x) = \Theta\left(g(x)\right)$  when $f(x) \lesssim g(x)$ and $f(x) \gtrsim g(x)$.
We write $\operatorname{poly}\left(x\right)$ to denote a quantity that is upper bounded by a polynomial function of $x$. We also use $a\wedge b = \min(a,b)$ and $a\vee b = \max(a,b)$. Finally, for any $p, q \in (0,1)$, we define $\operatorname{kl}(p, q) = p\log(p/q) + (1-p)\log((1-p)/(1-q))$ as the KL-divergence between two Bernoulli distributions with mean $p$ and $q$, respectively.
 

\section{Related Work}

In this section, we discuss existing results for the three investigated learning tasks in contextual low-rank bandits, as well as for singular subspace recovery for low-rank matrices. Additional related work can be found in Appendix \ref{app:rel} (there, we discuss misspecified linear bandits and other models related to low-rank bandits).

\paragraph{Contextual low-rank bandits.} There has been some interest in settings similar to ours \citep{gentile2014online, gopalan2016low, pal2022online, lee2023context, pal2023optimal}, although mostly in the context of regret minimization. Nonetheless, progress on minimax guarantees has remained surprisingly limited. For example, \cite{pal2022online} proposed \textsc{OCTAL}, an algorithm that achieves a regret of order $O(\mathrm{polylog}(m+n)\sqrt{T})$ for rank-$1$ bandits, and a simple ETC-based algorithm achieving a regret of $O(\mathrm{polylog}(m+n)T^{3/4})$ for general rank $r$. However, their results assume that the learner observes $m$ entries per round, while in our setting, only one entry is observed. Recently, \cite{lee2023context} considered a setting closely related to ours, and referred to as \emph{context-lumpable bandits}. There, the rows of the reward matrix can be clustered into $r$ groups, within which the rows are identical. They establish minimax regret upper bounds of order $\widetilde{O}(\sqrt{r^3(m+n) T})$. They further extend their results to contextual low-rank bandits, but for these bandits, prove a regret upper bound of order $\widetilde{O}((m+n)^{\frac{1}{3r+2}} T^{\frac{3r+1}{3r+2}})$ (see Theorem 26 in \cite{lee2023context}). A cluster-like structure comparable to that of \cite{lee2023context} was also considered in \cite{pal2023optimal}, where the authors proposed an algorithm that attains a regret of order $\tilde{O}(\sqrt{\mathrm{poly}(r)(m+n)T})$. These works leave the existence of an algorithm with minimax regret of order $\widetilde{O}(\sqrt{\mathrm{poly}(r)(m+n) T})$ in contextual low-rank bandit as an open question. We do not fully answer this question, but propose an algorithm with regret scaling as  $\widetilde{O}(r^{5/4}(m+n)^{3/4}\sqrt{T})$, which improves over existing algorithms.


For the BPI task, \cite{lee2023context} exhibits sample complexity guarantees of order $\widetilde{O}(r\left(m+n\right)/\varepsilon^{2})$ in the context-lumpable case. For contextual low-rank bandits, the authors present an algorithm with sample complexity guarantees of order $\widetilde{O}\left(\left(m+n\right)/\varepsilon^{2+r}\right)$ (see their Theorem 25 and the discussion thereafter). We get guarantees similar to those they obtain for context-lumpable bandits, but for more general low-rank bandits. 
We finally mention the work of \cite{xi2023matrix} addressing the PE task in low-rank MDPs. When applied to contextual low-rank bandits (see their Subsection 4.3.2), the authors obtain a sharp estimation error term with an additional bias that does not vanish unless the behavior and target policy are identical; an assumption that we do not require.

\textbf{Singular subspace recovery in the two-to-infinity norm.}
It is not surprising that estimating the singular subspaces of $M$ is useful in the context of bandit problems with low-rank structure as it has been showcased in \citep{jun2019bilinear,lale2019stochastic, lu2021low, kang2022efficient}. However, to the best of our knowledge, only Frobenius norm guarantees have been used, and guarantees in the two-to-infinity norm have remained largely unexplored. The latter guarantees are often harder to obtain and progress towards obtaining them have only emerged recently \citep{eldridge2018unperturbed,fan2018eigenvector,cape2019two,abbe2020entrywise}.  Typically, under standard assumptions, such as bounded incoherence \citep{candes2010power, recht2011simpler}, these guarantees state that, for a matrix of size $m\times n$, the subspace estimation error in the two-to-infinity norm is smaller by a factor of $\sqrt{m+n}$ than the achievable bounds on the estimation error in the Frobenius norm. For this reason, we say that the subspace recovery error exhibits a \emph{delocalization} phenomenon \citep{rudelson2015delocalization}, i.e., the estimation error is spread out across $m+n$ directions (see also the survey \citep{chen2021spectral} and references therein).

The major difficulty towards using two-to-infinity norm guarantees is that their derivation requires stringent independence assumptions. Recently, \cite{stojanovic2024spectral} provided tools to relax such assumptions and to accommodate scenarios that are suitable for sequential decision making problems such as Markov decision processes and bandits. We make use of these tools to obtain tight guarantees on the singular subspace recovery in the two-to-infinity norm for low-rank bandits. These guarantees enable an effective reduction of the contextual low-rank bandit problem to the misspecified contextual linear bandit problem.

\section{Model and Learning Objectives}

We consider a stochastic bandit problem with low-rank structure. Specifically, we assume that the expected rewards are parametrized by a matrix $M\in \mathbb{R}^{m\times n}$ which has low rank $r\ll \min(m,n)$. Its SVD is $M = U \Sigma V^\top$, where $U \in \RR^{m \times r}$ (resp. $V \in \RR^{n\times r}$) contains the left (resp. right) singular vectors, and where $\Sigma = \mathrm{diag}(\sigma_1, \dots, \sigma_r)$ with $\sigma_1\ge \sigma_2\ge \ldots \ge \sigma_r$. The incoherence parameters of $M$ are defined as  $\mu(U) = \sqrt{m/r} \Vert U \Vert_{2 \to \infty}$ and $\mu(V) =  \sqrt{n/r} \Vert V \Vert_{2 \to \infty}$. We define $\mu = \mu(U) \vee \mu(V)$, and denote by $\kappa = \sigma_1/\sigma_r$ the condition number of $M$.


\noindent
 {\bf Contextual low-rank bandits.} In each round $t\ge 1$, the learner first observes a context $i_t$, selected in an i.i.d. manner and with distribution $\rho$ over $[m]$. She then selects an arm $j_t\in [n]$ potentially based on previous observations, and receives a reward $r_t=M_{i_t,j_t}+\xi_t$. The reward matrix $M$ is a priori unknown,
 and $(\xi_t)_{t\ge 1}$ is a sequence of i.i.d. zero-mean and $\sigma$-subgaussian random variables. We denote the (context, arm) distribution by $\omega.$

For a given randomized policy $\pi$, we define its policy value as $v^{\pi}:=\sum_{i,j}\omega^{\pi}_{i,j}M_{i,j}$ with $\omega^{\pi}_{i,j}:=\rho_i\pi\left(j|i\right)$ ($\pi(j|i)$ is the probability to select arm $j$ when observing context $i$). We further define the optimal policy $\pi^\star := \argmax_\pi v^\pi$ and denote its value by $v^\star$. We consider three learning tasks:
\begin{itemize}
    \item[\it (i)] \emph{Policy evaluation.} Given a target policy $\pi$, and assuming data is gathered under a policy\footnote{The arm $j$ is selected with probability $\pi^b(j|i)$ when the context $i$ is observed.} $\pi^b$,  we aim at designing an efficient estimator of its policy value $v^{\pi}$. We say that a PE estimator $\hat{v}^{\pi}$ is $(\varepsilon,\delta)$-PAC if $\mathbb{P}_{M}\left(|v^{\pi}-\hat{v}^\pi| \le \varepsilon \right) \ge 1- \delta$ for every rank-$r$ matrix $M \in \mathbb{R}^{m \times n}$. 
    \item[\it(ii)] \emph{Best policy identification.} 
    We aim to design an efficient algorithm to identify an $\varepsilon$-optimal policy. We say that a BPI algorithm is $\left(\varepsilon,\delta\right)$-PAC if it outputs a policy $\hat{\pi}: [m] \to [n] $ such that $\mathbb{P}_{M}\left( v^{\star} - v^{\hat{\pi}} \le \varepsilon\right) \ge 1- \delta$ for every rank-$r$ matrix $M \in \mathbb{R}^{m \times n}$.
\end{itemize}
We define the \textit{sample complexity} of a PE estimator or a BPI algorithm as the number of samples required to achieve an $\left(\varepsilon,\delta\right)$-PAC guarantee. 

\begin{itemize}
\item [\it(iii)]\emph{Regret minimization.} Here, the objective is to minimize the regret. The regret up to round $T$ of a sequential decision algorithm $\pi$ is defined as $R^\pi(T)=\sum_{t=1}^T \mathbb{E}[M_{i_t,\pi^\star(i_t)} - M_{i_t,j^\pi_t}]$ where $j^\pi_t$ is the arm selected under algorithm $\pi$ in round $t$.
\end{itemize}

Throughout the paper, we make the assumption that the learner is aware of upper bounds on $\kappa, \mu, \Vert M \Vert_{\max}$ and $1/(m\min_{i\in [m]}\rho_i) > 0$. Our results exhibit precise dependencies on all these parameters. Sometimes, to simplify the exposition of our results, we consider that the reward matrix $M$, the context distribution $\rho$ or the (context, arm) distribution $\omega$ are \textit{homogeneous} in the following sense:
\begin{definition}[Homogeneity]
\label{def:homogeneous_M}
The reward matrix $M$ is \textit{homogeneous} when 
$m=\Theta(n)$, $r=\Theta(1)$, $\mu =\Theta(1)$, $\kappa=\Theta(1)$. A distribution $p$ on a finite set ${\cal I}$ is  \textit{homogeneous} when $p_{\min}=\Theta\left(p_{\max}\right)$ for $p_{\min}=\min_{i\in {\cal I}} p_i$,  $p_{\max} = \max_{i\in {\cal I}} p_i$ \footnote{This implies $p_{\min}=\Theta\left(1/\vert \cal I \vert\right)$,  $p_{\max}=\Theta\left(1/\vert \cal I \vert\right)$. } .
\end{definition}

The notion of homogeneity allows us to have meaningful discussions but also constitutes a reasonable assumption. In particular, assuming that the matrix $M$ has bounded incoherence entails that the matrix $M$ can be recovered using fewer samples than $mn$ \cite{candes2008exact}. The bounded incoherence property is also related to the notion of spikiness of a matrix, which has to do with assuming that $\Vert M \Vert_{\max} / \sigma_1 = \Theta(1/\sqrt{mn})$ \cite{negahban2012restricted, mackey2015distributed}). Indeed, we can verify (see Lemma \ref{lem:spikiness}) that
       $\frac{1}{\sqrt{mn}}\le \frac{\Vert M \Vert_{\max}}{\sigma_1} \le \frac{\mu^2r}{\sqrt{mn}}$.
Note that this inequality does not mean that $\Vert M \Vert_{\max} = \Theta(1/\sqrt{mn})$ when $\mu = \Theta(1)$, because for most interesting settings $\sigma_{1}$ would scale as $\sqrt{mn}$. A naive example would be to take a matrix $M$ that has all entries equal to $1$. Observe then that $\textup{rank}(M) = 1$, $\Vert M \Vert_{\max} = 1$,  $\sigma_1 = \sqrt{mn}$, and $M$ satisfies the bounded incoherence assumption.


\section{Singular Subspace Recovery}\label{sec:subspacerecovery}

In the first phase of our algorithms for low-rank bandits, the singular subspaces of $M$ are estimated. In this section, we study the performance of this estimation procedure. We assume that the learner observes $T$ samples of noisy entries of $M$ chosen in an i.i.d. manner according to some distribution $\omega$ over $[m]\times [n]$, and that from these observations, she builds estimates of the singular subspaces of $M$. For simplicity, we assume that $\omega$ is known to the learner. This is without loss of generality in our settings\footnote{Indeed, we can estimate the context distribution $\rho$ using only $\widetilde{O}(1/\rho_{\min})$ samples, and select $j$ uniformly at random (refer to Appendix \ref{subsec:knowledge-of-context})}. We also assume that $\omega_{\min}=\min_{(i,j)\in [m]\times [n]}\omega_{i,j}>0$. 

 
\subsection{Subspace Estimation}\label{subsec:subest} 
To estimate the singular subspaces of $M$, we construct $\widetilde{M}$ as: for all $(i,j)\in [m]\times [n]$,
\begin{align}
\label{eq:def_Mtilde_uniform}
    \widetilde{M}_{i,j} =  \frac{1}{T \omega_{i,j}} \sum_{t=1}^T (M_{i_t, j_t} + \xi_t) \indicator_{\lbrace (i_t, j_t) = (i,j)\rbrace}.
\end{align}
Then, we let $\widehat{M}$ be the best $r$-rank approximation of $\widetilde{M}$. More precisely, we write from the SVD of $\widetilde{M}$:
\begin{align}\label{eq:SVD_hat_M}
\widehat{M} = \widehat{U}\widehat{\Sigma} \widehat{V}^\top,    
\end{align}
where $\widehat{\Sigma}$ is the diagonal matrix containing the $r$ largest singular values of $\widetilde{M}$. $\widehat{U} \in \RR^{m\times r}$ contains their corresponding $r$ left singular vectors, and $\widehat{V} \in \RR^{n\times r}$ their corresponding $r$ right singular vectors. By Eckart-Young-Mirsky's theorem, the matrix $\widehat{M}$ is the best $r$-rank approximation of $\widetilde{M}$. We use $\widehat{U}$ and $\widehat{V}$ as our estimates of the singular subspaces spanned by $U$ and $V$, respectively.

In what follows, we denote by $\widehat{U}_{\perp} \in\RR^{m\times (m-r)}$ a matrix made of orthonormal vectors completing $\widehat{U}$ so as to obtain an orthonormal basis of $\mathbb{R}^m$. $\widehat{V}_\perp \in \RR^{n\times (n-r)}$ is constructed similarly.

\subsection{Guarantees in the Two-to-Infinity Norm} 
\label{subsec:main_guarantees_two_to_inf}


We wish to examine the performance of our subspace recovery method in the two-to-infinity norm. More precisely, we denote the left and right singular subspace recovery errors in this norm by 
\begin{align*}
    d_{2\to\infty}(U, \widehat{U}) = \Vert U U^\top - \widehat{U} \widehat{U}^\top \Vert_{2 \to \infty}, \\
    d_{2\to\infty}(V, \widehat{V}) = \Vert V V^\top - \widehat{V} \widehat{V}^\top \Vert_{2 \to \infty}.
\end{align*}
Now, we present the main guarantee on these errors, which is adapted from \cite{stojanovic2024spectral}.  
\begin{theorem}[Subspace recovery in $\Vert \cdot \Vert_{2 \to \infty}$]\label{thm:recovery-two-to-infinity-norm}
    Let us define $\epsilon_{\textup{Sub-Rec}} := \max( d_{2\to\infty}(U,\widehat{U}), d_{2\to\infty}(V,\widehat{V}))$ and $L := \Vert M \Vert_{\max} \vee \sigma$. For any $\delta \in (0,1)$, the following event: 
    \begin{align}\label{eq:error-rate}
    \epsilon_{\textup{Sub-Rec}} \lesssim  \sqrt{\frac{L^2\mu^2\kappa^2 r (m+n) }{\sigma_r^2 T\omega_{\min}(m\wedge n)}\log^3\left(\frac{(m+n)T}{\delta}\right)}
    \end{align}
    holds with probability at least $1- \delta$, provided that 
    \begin{align}\label{eq:sc}
    T \gtrsim \frac{L^2 (m+n)}{\sigma_r^2 \omega_{\min}} \log^3\left( \frac{(m+n)T}{\delta}\right).
    \end{align}  
\end{theorem}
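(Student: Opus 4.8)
\emph{Proof sketch (proposal).} The plan is to write $\widetilde M$ as a perturbation of $M$, control that perturbation in the operator norm and in a couple of ``subspace-projected'' norms, and then invoke the abstract two-to-infinity singular subspace perturbation bound of \cite{stojanovic2024spectral}, of which \eqref{eq:error-rate} should be a direct instantiation. The starting observation is that the inverse-propensity construction \eqref{eq:def_Mtilde_uniform} is unbiased: $\widetilde M = M + E$ with $E = \tfrac1T\sum_{t=1}^T (Z_t - M)$, where $Z_t := \tfrac{M_{i_t,j_t}+\xi_t}{\omega_{i_t,j_t}}\, e_{i_t} e_{j_t}^\top$ are i.i.d.\ and $\mathbb E[Z_t] = M$. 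The summands fail to be uniformly bounded for two reasons --- division by $\omega_{i_t,j_t}\ge\omega_{\min}$, and the sub-Gaussian noise --- the first being harmless for Bernstein-type inequalities, and the second being handled by the standard truncation: on an event of probability $\ge1-\delta/3$, $\max_{t\le T}|\xi_t|\lesssim\sigma\sqrt{\log(T/\delta)}$, so there all entries of $\widetilde M$ may be treated as bounded by $\asymp L\sqrt{\log(T/\delta)}$ while per-sample variances stay of order $L^2/\omega_{\min}$.

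Next I would establish the two quantitative inputs the perturbation machinery needs. First, a matrix Bernstein inequality for $E$: since $\Vert\mathbb E[Z_1Z_1^\top]\Vert_\op\vee\Vert\mathbb E[Z_1^\top Z_1]\Vert_\op\lesssim\tfrac{(m\vee n)L^2}{\omega_{\min}}$ (both are diagonal, with entries $\sum_j\tfrac{M_{i,j}^2+\sigma^2}{\omega_{i,j}}$, resp.\ $\sum_i\tfrac{M_{i,j}^2+\sigma^2}{\omega_{i,j}}$), and using \eqref{eq:sc} to absorb the lower-order ``max-term'', one gets $\Vert E\Vert_\op\lesssim\sqrt{\tfrac{(m\vee n)L^2}{T\omega_{\min}}\log\tfrac{(m+n)T}{\delta}}\ll\sigma_r/\kappa$ with probability $\ge1-\delta/3$; by Weyl this separates the top-$r$ singular subspace of $\widetilde M$. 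Second, subspace-projected row/column control: for each fixed $i$, $E_{i,:}V$ is an average of i.i.d.\ centered vectors whose per-sample second moment is $\lesssim\tfrac{L^2r}{\omega_{\min}}$ --- crucially dimension-free, since incoherence gives $\Vert V_{j,:}\Vert_2\le\mu\sqrt{r/n}$, contributing a factor $r$ rather than $n$ --- so a vector Bernstein inequality plus a union bound over $i$ (and the symmetric statement for $E^\top U$, union over $j$) yield $\Vert EV\Vert_{2\to\infty}\vee\Vert E^\top U\Vert_{2\to\infty}\lesssim\sqrt{\tfrac{rL^2}{T\omega_{\min}}\log\tfrac{(m+n)T}{\delta}}$ with probability $\ge1-\delta/3$; if \cite{stojanovic2024spectral} requires a leave-one-out decoupling, one additionally proves a uniform version of this bound over an $\varepsilon$-net of $r$-dimensional subspaces, costing only another logarithmic factor.

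Plugging these into the two-to-infinity bound of \cite{stojanovic2024spectral} --- roughly of the form $\epsilon_{\textup{Sub-Rec}}\lesssim\kappa\big(\Vert U\Vert_{2\to\infty}\vee\Vert V\Vert_{2\to\infty}\big)\tfrac{\Vert E\Vert_\op}{\sigma_r}+\kappa\tfrac{\Vert EV\Vert_{2\to\infty}\vee\Vert E^\top U\Vert_{2\to\infty}}{\sigma_r}+(\text{higher order})$, valid since $\Vert E\Vert_\op\ll\sigma_r/\kappa$ --- and using $\Vert U\Vert_{2\to\infty}\le\mu\sqrt{r/m}$, $\Vert V\Vert_{2\to\infty}\le\mu\sqrt{r/n}$ so that the dominant factor is $\mu\sqrt{r/(m\wedge n)}$, the first term is $\lesssim \kappa\mu\sqrt{r/(m\wedge n)}\cdot(\Vert E\Vert_\op/\sigma_r)$, and substituting the operator-norm bound this equals, up to logarithmic losses,
\[
\sqrt{\frac{\mu^2\,\kappa^2\, r\, (m\vee n)\, L^2}{\sigma_r^2\, T\,\omega_{\min}\,(m\wedge n)}\log(\cdots)},
\]
which is \eqref{eq:error-rate} since $m\vee n=\Theta(m+n)$; the second term is of the same or smaller order (as $\mu\ge1$ and $m\vee n\ge m\wedge n$), and the higher-order terms are negligible under \eqref{eq:sc}. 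Collecting the logarithmic losses ($\log(T/\delta)$ from truncation, $\log\tfrac{m+n}{\delta}$ from the Bernstein bounds, and the leave-one-out factor) gives $\log^3\tfrac{(m+n)T}{\delta}$, and a union bound over the $O(1)$ sub-events finishes the argument.

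The point I expect to be most delicate is \emph{not} any single concentration estimate but ensuring that the raw quantity $\Vert E\Vert_{2\to\infty}=\max_i\Vert\widetilde M_{i,:}-M_{i,:}\Vert_2$ --- of order $\sqrt{nL^2/(T\omega_{\min})}$, far too large to appear in \eqref{eq:error-rate} --- never survives into the final bound, being replaced throughout by the projected quantities $\Vert EV\Vert_{2\to\infty}$, $\Vert E^\top U\Vert_{2\to\infty}$ and by $\Vert U\Vert_{2\to\infty}\Vert E\Vert_\op$; this is precisely what the leave-one-out machinery of \cite{stojanovic2024spectral} buys. The real work then lies in verifying its hypotheses under inverse-propensity weighting --- in particular that the heavy right tail from dividing by $\omega_{i_t,j_t}$ keeps us in the variance-dominated regime, which is where the sample-complexity requirement \eqref{eq:sc} originates --- together with the truncation bookkeeping for the sub-Gaussian noise; the i.i.d.\ structure of the samples here makes the decoupling considerably easier than in the Markovian setting \cite{stojanovic2024spectral} was built for.
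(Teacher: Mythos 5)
Your two quantitative inputs are essentially the ones the paper uses: the operator-norm bound on $E=\widetilde M-M$ via a truncated matrix Bernstein inequality (Proposition \ref{prop:reward-concentration-1}) and the row-wise projected bound $\Vert E_{\ell,:}A\Vert_2\lesssim L\Vert A\Vert_{2\to\infty}\sqrt{n/(T\omega_{\min})}\cdot\mathrm{polylog}$ for a \emph{fixed} matrix $A$ (Proposition \ref{prop:reward-concentration-2}), and your second-moment computations and incoherence bookkeeping are correct. You also correctly identify the crux: preventing the raw $\Vert E\Vert_{2\to\infty}$, of order $\sqrt{nL^2/(T\omega_{\min})}$, from surviving into the final bound. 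But the mechanism you offer for handling the dependence between $E$ and the estimated subspace does not work, and the mechanism the paper actually needs is absent from your sketch.

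Concretely, two problems. First, your fallback of proving the row-projected bound ``uniformly over an $\varepsilon$-net of $r$-dimensional subspaces, costing only another logarithmic factor'' is quantitatively false: an $\varepsilon$-net of the Grassmannian of $r$-planes in $\RR^n$ has cardinality $e^{\Theta(nr\log(1/\varepsilon))}$, so the union bound multiplies the variance term by $nr$ and reintroduces exactly the factor $n$ you are trying to avoid. This is precisely why one must use a leave-one-out argument (apply the fixed-subspace bound to $\widehat Q^{(\ell)}$, which is independent of $E_{\ell,:}$) rather than uniform control. Second, the leave-one-out decoupling requires the \emph{entries} of $E$ to be independent, and under the actual sampling model they are not: the counts $Z_{i,j}=\sum_t\indicator_{\{(i_t,j_t)=(i,j)\}}$ are multinomial with fixed total $T$, hence negatively correlated across cells, so $E_{\ell,:}$ is not independent of the remaining rows. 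Your remark that ``the i.i.d.\ structure of the samples makes the decoupling considerably easier'' is backwards — i.i.d.\ sampling of cells couples the entries. The paper resolves this with a Poisson approximation step (Lemma \ref{lem:poisson-approx-rewards}): it replaces the multinomial counts by independent $\mathrm{Poisson}(T\omega_{i,j})$ counts, proves everything under that independent-entries model, and transfers the result back at the cost of a factor $e\sqrt T$ in the failure probability (absorbed into the $\log^3((m+n)T/\delta)$). Without this step, or some substitute decoupling device, your proof of the row-projected control for the data-dependent subspace does not go through, and that is the step on which the entire two-to-infinity improvement rests.
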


The proof of Theorem \ref{thm:recovery-two-to-infinity-norm} is sketched in Appendix \ref{sec:app_sing_sub_rec}. The analysis is technically involved and relies on combining the so-called leave-one-out argument (see \cite{chen2021spectral} and references therein) together with a Poisson approximation technique \cite{stojanovic2024spectral}.

From Theorem \ref{thm:recovery-two-to-infinity-norm}, we can immediately verify that if the matrix $M$ and the sampling distribution $\omega$ are homogeneous, then the subspace recovery error bound in the two-to-infinity norm is smaller by a factor of $\sqrt{m+n}$ than any achievable subspace recovery error bound in the Frobenius norm with our estimates of the singular subspaces.
Indeed, assume additionally and only for simplicity that
$\sigma \lesssim \Vert M\Vert_{\max}$, then after only $\widetilde{\Omega}(m+n)$ observations, we can recover the singular subspaces with an error rate in the $\Vert \cdot \Vert_{2 \to \infty}$ norm that scales as $\widetilde{O}(1/\sqrt{T})$ with high probability. To see that, note that when $\sigma \lesssim \Vert M\Vert_{\max}$, then according to Lemma \ref{lem:spikiness}, we have $L /\sigma_r = \Theta(1/\sqrt{mn})$. The recovery error in $\Vert \cdot \Vert_\F$ would typically scale as $\widetilde{O}(\sqrt{(m+n)/T})$ (see Lemma \ref{lemma:subspace_Frobenius} which follows from classical arguments as in \cite{jun2019bilinear}).
This suggests that the subspace recovery error (seen as a matrix) is {\it delocalized}, i.e., spread out along $m+n$ directions. Thus, the provided two-to-infinity norm guarantees offer a finer and more precise control over the subspace recovery error than the Frobenius norm guarantees. We take advantage of this when devising algorithms for contextual low-rank bandits.

It is worth mentioning that the obtained error rate \eqref{eq:error-rate}  in Theorem \ref{thm:recovery-two-to-infinity-norm} exhibits a dependence on $L = \Vert M \Vert_{\max} \vee \sigma$ instead of only $\sigma$. This means that  
 the proposed guarantees do not suggest exact subspace recovery when $\sigma \to  0$.  This is to be expected because we rely on a spectral method that consists in truncating the SVD of $\widetilde{M}$. Indeed, such a method typically suffers from a bias that comes from setting unobserved entries to zero, as can be remarked in \eqref{eq:def_Mtilde_uniform} (see also \cite{chen2021spectral}). As we shall see 
 in the coming sections, 
 this affects the final bounds we obtain in all three learning tasks. Nonetheless, we provide a discussion in Appendix \ref{app:debiased_regret} on how we may improve this dependence, which may be of independent interest. Indeed, one may resort to ideas based on nuclear norm penalization which have been shown recently to enjoy tight entry-wise matrix estimation guarantees \cite{chen2020noisy}. However, one has to take further care to ensure that the results based on such ideas would extend to our dependent noise setting. Finally, let us also mention that the dependence on $\sigma$, through $L$, in the sample complexity \eqref{eq:sc} provided in Theorem \ref{thm:recovery-two-to-infinity-norm} is to be expected and cannot be further improved.

\section{Reduction to Misspecified Linear Bandits}
\label{subsec:reductions}

In this section, we assume that we use $T_1$ reward observations to estimate the singular subspaces $U$ and $V$ of $M$ as described in the previous section. These $T_1$ first observations will constitute the first phase of our low-rank bandit algorithms. Based on the estimates of $U$ and $V$, one could, as in \citep{jun2019bilinear, kang2022efficient}, reduce the problem faced by the learner in the remaining $(T-T_1)$ rounds to an \emph{almost low-dimensional linear bandit}; see Appendix \ref{subsec:main_reduction_almost_lowd} for a detailed discussion. The second phase of low-rank bandit algorithms would then just consist in applying an algorithm for such linear bandits. We explain in Appendix \ref{subsec:main_reduction_almost_lowd} that for regret minimization, this approach would however not lead to the tightest regret guarantees possible. Instead, we establish that the second phase of low-rank bandit algorithms can be reduced to solving a {\it misspecified} linear bandit with misspecification controlled using our subspace recovery error guarantees in the two-to-infinity norm. This new approach yields improved regret upper bounds, but also allows us to devise efficient algorithms for PE and BPI. In what follows, we make the reduction to a misspecified linear bandit precise.



First observe that, following a similar transformation of the so-called left and right features as in \citep{jun2019bilinear}, we may express any entry of the matrix $M$ as: 
\begin{align}
     &M_{ij} = e_i^\top \widehat{U} (\widehat{U}^\top M \widehat{V} ) \widehat{V}^\top e_j + e_i^\top \widehat{U}(\widehat{U}^\top M \widehat{V}_\perp  ) \widehat{V}^\top_\perp  e_j \nonumber \\ &+ e_i^\top \widehat{U}_\perp(\widehat{U}^\top_\perp M \widehat{V} ) \widehat{V}^\top  e_j + e_i^\top \widehat{U}_\perp(\widehat{U}^\top_\perp M \widehat{V}_\perp ) \widehat{V}^\top_\perp  e_j.
     \label{eq:M_decomposition}
\end{align}

From (\ref{eq:M_decomposition}), we can write that for any $(i,j)\in [m]\times [n]$, 
\begin{align} \label{eq:miss-LB}
    M_{i,j}= \phi_{i,j}^{\top} \theta + \epsilon_{i,j},
\end{align}
where $\epsilon_{i,j} = e_i^\top \widehat{U}_\perp(\widehat{U}^\top_\perp M \widehat{V}_\perp ) \widehat{V}^\top_\perp  e_j$, and 
\begin{align}\label{eq:phi}
    \phi_{i,j} = \begin{bmatrix}\textup{vec}( \widehat{U}^\top e_i e_j^\top\widehat{V} ) \\
    \textup{vec}( \widehat{U}^\top e_i e_j^\top\widehat{V}_\perp )\\
        \textup{vec}(\widehat{U}_\perp^\top e_i e_j^\top\widehat{V} ) \\
    \end{bmatrix}\!, \theta    = \begin{bmatrix}
        \textup{vec}(\widehat{U}^\top M \widehat{V} ) \\
        \textup{vec}(\widehat{U}^\top M  \widehat{V}_\perp ) \\
        \textup{vec}(\widehat{U}_\perp^\top  M \widehat{V})
    \end{bmatrix}\!.
\end{align} 

With this interpretation, we obtain a misspecified linear bandit of dimension $d:=r(m+n)-r^2$. The benefit of this reduction is that we can characterize the misspecification $\epsilon_{\max}:=\max_{(i,j) \in [m]\times [n]} \vert \epsilon_{i,j} \vert$ using our subspace recovery guarantees in the two-to-infinity norm. Indeed, we establish:

\begin{corollary}[Misspecification error]
    Let $\delta \in (0,1)$, $T_1 > 0$ be the number of samples observed by the sampling distribution $\omega$. 
    Then, the event 
    \begin{align*}
         \epsilon_{\max} \lesssim  \frac{L^2\mu^2\kappa^3 r (m+n) }{\sigma_r T_1\omega_{\min}(m\wedge n)}\log^3\left(\frac{(m+n)T_1}{\delta}\right) 
    \end{align*}
    holds with probability at least $1-\delta$,
    provided that 
    \begin{align*}
    T_1 \gtrsim \frac{L^2 (m+n)}{\sigma_r^2 \omega_{\min}} \log^3\left( \frac{(m+n)T_1}{\delta}\right).
    \end{align*} 
 \label{corr:misspecification}
\end{corollary}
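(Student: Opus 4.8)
The plan is to bound $\epsilon_{\max}$ directly by relating the misspecification matrix $E := \widehat{U}_\perp \widehat{U}_\perp^\top M \widehat{V}_\perp \widehat{V}_\perp^\top$ to the subspace recovery errors controlled by Theorem~\ref{thm:recovery-two-to-infinity-norm}. Observe first that $\widehat{U}_\perp \widehat{U}_\perp^\top = I - \widehat{U}\widehat{U}^\top$ and likewise for $V$, so $\epsilon_{i,j} = e_i^\top (I-\widehat{U}\widehat{U}^\top) M (I-\widehat{V}\widehat{V}^\top) e_j$. Since $M = UU^\top M = M VV^\top$, we can write $(I-\widehat{U}\widehat{U}^\top)M = (I-\widehat{U}\widehat{U}^\top)(UU^\top - \widehat{U}\widehat{U}^\top)M$ because $(I-\widehat{U}\widehat{U}^\top)\widehat{U}\widehat{U}^\top = 0$; an analogous identity holds on the right. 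Hence
\begin{align*}
E = (I-\widehat{U}\widehat{U}^\top)(UU^\top - \widehat{U}\widehat{U}^\top)\, M\, (VV^\top - \widehat{V}\widehat{V}^\top)(I-\widehat{V}\widehat{V}^\top),
\end{align*}
which already exhibits the quadratic dependence on the subspace errors that will produce the $T_1^{-1}$ rate (rather than $T_1^{-1/2}$).

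Next I would pass to the $\max$-norm. Writing $P := UU^\top - \widehat{U}\widehat{U}^\top$ and $Q := VV^\top - \widehat{V}\widehat{V}^\top$, and using that $I-\widehat{U}\widehat{U}^\top$ and $I-\widehat{V}\widehat{V}^\top$ are orthogonal projections (operator norm $1$), the key step is the bound $\Vert E \Vert_{\max} \le \Vert P \Vert_{2\to\infty}\, \Vert M \Vert_{\op}\, \Vert Q \Vert_{2\to\infty}$. This follows from $|e_i^\top (I-\widehat U \widehat U^\top) P M Q (I-\widehat V\widehat V^\top) e_j| \le \Vert P^\top (I-\widehat U\widehat U^\top) e_i \Vert_2 \,\Vert M \Vert_{\op}\, \Vert Q (I-\widehat V\widehat V^\top) e_j \Vert_2$ by Cauchy--Schwarz and the variational characterization of $\Vert M\Vert_{\op}$, together with $\Vert P^\top (I-\widehat U\widehat U^\top) e_i\Vert_2 \le \Vert P e_i\Vert_2 \le \Vert P\Vert_{2\to\infty}$ (using symmetry of $P$ and that the projection is a contraction), and similarly for the $Q$ factor. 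Therefore
\begin{align*}
\epsilon_{\max} = \Vert E \Vert_{\max} \le d_{2\to\infty}(U,\widehat U)\, \sigma_1\, d_{2\to\infty}(V,\widehat V) \le \sigma_1\, \epsilon_{\textup{Sub-Rec}}^2.
\end{align*}

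Finally I would substitute the high-probability bound on $\epsilon_{\textup{Sub-Rec}}$ from Theorem~\ref{thm:recovery-two-to-infinity-norm} with $T$ replaced by $T_1$, valid under the stated sample-complexity condition $T_1 \gtrsim \frac{L^2(m+n)}{\sigma_r^2 \omega_{\min}}\log^3(\cdot)$. Squaring the bound in \eqref{eq:error-rate} gives $\epsilon_{\textup{Sub-Rec}}^2 \lesssim \frac{L^2\mu^2\kappa^2 r(m+n)}{\sigma_r^2 T_1 \omega_{\min}(m\wedge n)}\log^3\!\big(\tfrac{(m+n)T_1}{\delta}\big)$, and multiplying by $\sigma_1 = \kappa \sigma_r$ yields exactly the claimed bound $\epsilon_{\max} \lesssim \frac{L^2\mu^2\kappa^3 r(m+n)}{\sigma_r T_1\omega_{\min}(m\wedge n)}\log^3\!\big(\tfrac{(m+n)T_1}{\delta}\big)$; the probability $1-\delta$ and the sample-complexity requirement are inherited verbatim from Theorem~\ref{thm:recovery-two-to-infinity-norm}. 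The main obstacle, and the only genuinely nontrivial point, is establishing the mixed-norm inequality $\Vert E\Vert_{\max}\le \Vert P\Vert_{2\to\infty}\Vert M\Vert_{\op}\Vert Q\Vert_{2\to\infty}$ with the correct placement of the projections so that the two-to-infinity factors genuinely appear (as opposed to a weaker bound involving $\Vert M\Vert_{\max}$ or an extra $\sqrt{m+n}$ loss); everything else is algebraic bookkeeping and direct substitution.
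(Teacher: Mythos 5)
Your proof is correct and is essentially the paper's own argument: the paper likewise rewrites $\epsilon_{i,j}=e_i^\top (UU^\top-\widehat{U}\widehat{U}^\top)M(VV^\top-\widehat{V}\widehat{V}^\top)e_j$ (using $M=UU^\top M VV^\top$ to absorb the outer projections, so the exact identity $\widehat{U}_\perp\widehat{U}_\perp^\top M\widehat{V}_\perp\widehat{V}_\perp^\top=(UU^\top-\widehat{U}\widehat{U}^\top)M(VV^\top-\widehat{V}\widehat{V}^\top)$ holds and your extra contraction step for the projections is not actually needed), bounds this by $d_{2\to\infty}(U,\widehat{U})\,d_{2\to\infty}(V,\widehat{V})\,\Vert M\Vert_\op\le \kappa\sigma_r\,\epsilon_{\textup{Sub-Rec}}^2$, and substitutes the bound of Theorem \ref{thm:recovery-two-to-infinity-norm} with $T=T_1$. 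The two proofs coincide.
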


Corollary \ref{corr:misspecification}  follows immediately from Theorem \ref{thm:recovery-two-to-infinity-norm}. To see that, observe that we have, for all $(i,j)\in [m]\times[n]$,
\begin{align}
     \vert \epsilon_{i,j}\vert &  = \vert e_i^\top \widehat{U}_\perp(\widehat{U}^\top_\perp M \widehat{V}_\perp ) \widehat{V}^\top_\perp  e_j \vert \nonumber \\
    & \overset{(a)}{=}  \vert e_i^\top ( UU^\top - \widehat{U}\widehat{U}^\top)M (VV^\top - \widehat{V}\widehat{V}^\top) e_{j} \vert  \nonumber \\ 
        &  \overset{(b)}{\le} d_{2\to\infty}(U, \widehat{U})  d_{2\to\infty}(V, \widehat{V}) \Vert M \Vert_\op  \nonumber \\
        &\overset{(c)}{\le} \epsilon_{\textup{Sub-Rec}}^2 \kappa \sigma_r(M) 
        \label{eq:eps_bound_pert_bound}
\end{align}
where equality (a) follows from the fact that $ \widehat{U}_\perp\widehat{U}_\perp^\top M = (I - \widehat{U}\widehat{U}^\top) M$ since $\widehat{U}\widehat{U}^\top + \widehat{U}_\perp\widehat{U}_\perp^\top = I$, and similarly for $M \widehat{V}_\perp \widehat{V}_\perp^\top$, inequality (b) follows from $\vert x^\top M y \vert \le \Vert x\Vert_2 \Vert y\Vert_2 \Vert M \Vert_{\textup{op}}$ for any vectors $x \in \mathbb{R}^{m}, y \in \mathbb{R}^{n}$, and inequality (c) follows from the definition of the condition number $\kappa$. 

We remark that when $M$ and $\omega$ are homogeneous, we have $\epsilon_{\max} = \widetilde{O}((m+n)/T_1)$ due to the quadratic dependence on $\epsilon_{\textup{Sub-Rec}}$ appearing in \eqref{eq:eps_bound_pert_bound}.

\section{Policy Evaluation}\label{sec:pe}

In this section, we present algorithms for PE and analyze their performance. The first algorithm, referred to as \dsmpe\ (Spectral Importance Propensity Score), consists of a single phase and exploits all samples to construct a rank-$r$ estimate of $M$. The value of the target policy is directly obtained using this estimate. We show that, when $M$ and $\omega$ are homogeneous, \dsmpe\ exhibits a minimax optimal sample complexity w.r.t. the matrix size $(m,n)$ and the accuracy level $\varepsilon$, but not w.r.t. the confidence level $\delta$. The design of the second algorithm, \rspe\ (Recover Subspace for Policy Evaluation), leverages our two-phase approach. In its second phase, \rspe\ refines the estimate of $M$ via a regularized least squares estimator applied to the misspecified linear bandit model obtained at the end of the first phase.  Applying this two-phase approach has significant benefits: (i) \rspe\ enjoys instance-dependent sample complexity guarantees; 
(ii) When $M$ and $\omega$ are homogeneous, it is nearly minimax optimal w.r.t. the parameters $m,n$, $\varepsilon$ and $\delta$; 
(iii) as shown in Appendix \ref{subsec:pe-experiments}, it outperforms \dsmpe\ in numerical experiments. We conclude this section by deriving instance-dependent and minimax sample complexity lower bounds.

Throughout this section, we assume that the observations are gathered in an i.i.d. manner using a behavior policy $\pi^b$, and we wish to estimate the value $v^{\pi}$ of a target policy $\pi$. To simplify the notation, we abbreviate the (context, arm) distribution $\omega^{\pi^b}$ by $\omega$.


\subsection{Spectral Importance Propensity Score}\label{subsec:dsm-pe}


A natural approach to PE consists in directly constructing $\widehat{M}$, a rank-$r$ estimate of $M$ obtained via the spectral method described in \eqref{eq:SVD_hat_M}, using all of the $T$ available samples gathered under the (context, arm) distribution $\omega$. We estimate the value of the target policy $\pi$ by:
\begin{align}\label{eq:est-IP-SM}
    \hat{v}^\pi_{\dsmpe} :=\sum_{(i,j) \in [m]\times [n]}\omega^{\pi}_{i,j}\widehat{M}_{i,j}.
\end{align}
The estimator $\hat{v}^\pi_{\dsmpe}$ can be interpreted as an Importance Propensity Score (IPS) estimator that takes into account the low-rank structure of $M$. Indeed, in the absence of such low-rank structure, i.e., $r = m\wedge n$, we see that our policy value estimator reduces to the classical \ips\ estimator \cite{wang2017optimal}. 

\begin{theorem}\label{thm:dsm-pe-error}
Let $\varepsilon \in \left(0,\Vert M \Vert_{\max}\right)$ and $\delta \in \left(0,1\right)$. For any target policy $\pi$, $\vert v^{\pi}-\hat{v}^{\pi}_{\dsmpe} \vert \leq \varepsilon$ with probability larger than $1-\delta$ as soon as 
\begin{align*}
    T \gtrsim \frac{L^2\mu^{6} \kappa^4 r^3  (m+n)}{\omega_{\min} (m\wedge n)^2 \varepsilon^2}\log^3\!\left(  \frac{(m+n)T}{\delta}\right)\!.
\end{align*}
\end{theorem}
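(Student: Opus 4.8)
The plan is to reduce the value-estimation error to a purely entrywise control of the rank-$r$ estimate $\widehat M$. Writing $\Omega^\pi:=(\omega^\pi_{i,j})_{(i,j)\in[m]\times[n]}$, we have $\hat v^\pi_{\dsmpe}-v^\pi=\langle\Omega^\pi,\widehat M-M\rangle$; since $\sum_{(i,j)}\omega^\pi_{i,j}=\sum_i\rho_i\sum_j\pi(j|i)=1$, H\"older's inequality gives
\[
\bigl|\hat v^\pi_{\dsmpe}-v^\pi\bigr|\le\Vert\widehat M-M\Vert_{\max},
\]
so it suffices to prove $\Vert\widehat M-M\Vert_{\max}\le\varepsilon$ with probability at least $1-\delta$ under the stated lower bound on $T$. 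There is no residual bias term to treat separately: $\langle\Omega^\pi,M\rangle=v^\pi$ holds exactly, so whatever bias the spectral truncation carries is already contained in $\widehat M-M$.

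Next I would use $\widehat M=\widehat U\widehat U^\top\widetilde M\widehat V\widehat V^\top$ and $M=UU^\top M=MVV^\top$ to obtain the deterministic decomposition, with $E:=\widetilde M-M$,
\begin{align*}
\widehat M-M&=-(UU^\top-\widehat U\widehat U^\top)M-\widehat U\widehat U^\top M(VV^\top-\widehat V\widehat V^\top)\\
&\quad+\widehat U\widehat U^\top E\,\widehat V\widehat V^\top,
\end{align*}
using the identity $(I-\widehat U\widehat U^\top)M=(UU^\top-\widehat U\widehat U^\top)M$ and its transpose analogue, and then bound the three summands in the max-norm. For the first, $\Vert(UU^\top-\widehat U\widehat U^\top)M\Vert_{\max}\le d_{2\to\infty}(U,\widehat U)\max_j\Vert M_{:,j}\Vert_2\le d_{2\to\infty}(U,\widehat U)\,\sigma_1\mu\sqrt{r/n}$, where the essential step is to control $\Vert M_{:,j}\Vert_2\le\sigma_1\Vert V\Vert_{2\to\infty}$ through incoherence rather than through $\Vert M\Vert_{\textup{op}}$. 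Symmetrically, $\Vert\widehat U\widehat U^\top M(VV^\top-\widehat V\widehat V^\top)\Vert_{\max}\le\Vert\widehat U\Vert_{2\to\infty}\Vert M\Vert_{\textup{op}}\,d_{2\to\infty}(V,\widehat V)\lesssim\mu\sqrt{r/m}\,\sigma_1\,d_{2\to\infty}(V,\widehat V)$, after noting $\Vert\widehat U\Vert_{2\to\infty}=\Vert\widehat U\widehat U^\top\Vert_{2\to\infty}\le\Vert U\Vert_{2\to\infty}+d_{2\to\infty}(U,\widehat U)\lesssim\mu\sqrt{r/m}$ (valid once $T$ is past the threshold of Theorem \ref{thm:recovery-two-to-infinity-norm}). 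For the noise term, $\Vert\widehat U\widehat U^\top E\widehat V\widehat V^\top\Vert_{\max}\le\Vert\widehat U\Vert_{2\to\infty}\Vert\widehat V\Vert_{2\to\infty}\Vert E\Vert_{\textup{op}}\lesssim\frac{\mu^2 r}{\sqrt{mn}}\Vert E\Vert_{\textup{op}}$.

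To conclude, I would substitute the quantitative bounds: Theorem \ref{thm:recovery-two-to-infinity-norm} controls $d_{2\to\infty}(U,\widehat U),d_{2\to\infty}(V,\widehat V)\le\epsilon_{\textup{Sub-Rec}}$, and a standard matrix-Bernstein estimate for the inverse-propensity matrix (the operator-norm bound used inside the proof of Theorem \ref{thm:recovery-two-to-infinity-norm}, see also Lemma \ref{lemma:subspace_Frobenius}) yields $\Vert E\Vert_{\textup{op}}\lesssim L\sqrt{\tfrac{m+n}{T\omega_{\min}}\log\bigl(\tfrac{(m+n)T}{\delta}\bigr)}$ on the same high-probability event. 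Using $\sigma_1=\kappa\sigma_r$ and $mn\ge(m\wedge n)^2$, each of the three terms is then $\le\varepsilon/3$ once $T$ meets the lower bound in the statement. The one item to check is that this lower bound is strong enough both to invoke Theorem \ref{thm:recovery-two-to-infinity-norm} and to guarantee $\Vert\widehat U\Vert_{2\to\infty},\Vert\widehat V\Vert_{2\to\infty}\lesssim\mu\sqrt{r/(m\wedge n)}$; this is precisely where the hypothesis $\varepsilon<\Vert M\Vert_{\max}$ is used, together with the spikiness bound $\Vert M\Vert_{\max}\le\mu^2 r\,\sigma_1/\sqrt{mn}$ of Lemma \ref{lem:spikiness}, which turns $1/\varepsilon^2$ into the required multiple of $1/\sigma_r^2$. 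A union bound over the two subspace-recovery events and the noise event, followed by a rescaling of $\delta$, completes the proof.

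The main difficulty is not internal to this argument: granting Theorem \ref{thm:recovery-two-to-infinity-norm}, the rest is a deterministic decomposition plus incoherence bookkeeping. The genuinely delicate point is to make the delocalization visible---one must avoid bounding $\Vert(UU^\top-\widehat U\widehat U^\top)M\Vert_{\max}$ via $\Vert M\Vert_{\textup{op}}=\sigma_1$, which would lose a factor $\sqrt{m\wedge n}$ and degrade the sample complexity from order $(m+n)/\varepsilon^2$ to $(m+n)^2/\varepsilon^2$, and instead exploit that the rows and columns of $M$ have $\ell_2$-norm only $\sigma_1\mu\sqrt{r/(m\wedge n)}$, while keeping track of the two-to-infinity norms of $\widehat U$ and $\widehat V$ rather than their order-one operator norms.
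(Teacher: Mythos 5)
Your proposal is correct, and it follows the same overall strategy as the paper: reduce $\vert v^{\pi}-\hat{v}^{\pi}_{\dsmpe}\vert$ to $\Vert M-\widehat{M}\Vert_{\max}$ via $\sum_{i,j}\omega^{\pi}_{i,j}=1$, control that max-norm using the two-to-infinity subspace guarantees of Theorem \ref{thm:recovery-two-to-infinity-norm} together with the operator-norm concentration of $E=\widetilde{M}-M$ and incoherence, and finally absorb the sample-size threshold of Theorem \ref{thm:recovery-two-to-infinity-norm} into the stated condition using $\varepsilon\le\Vert M\Vert_{\max}$ and Lemma \ref{lem:spikiness}. Where you differ is in how the entrywise bound is organized: the paper (Proposition \ref{prop:dsm-max-error}) proceeds in two nested stages, first bounding $\Vert M-\widehat{M}\Vert_{2\to\infty}$ by $\epsilon_{\textup{Sub-Rec}}(\Vert E\Vert_\op+\Vert M\Vert_\op)+\Vert U\Vert_{2\to\infty}\Vert E\Vert_\op$ and then bootstrapping to the max-norm via a second inequality of the same shape, both imported from \cite{stojanovic2024spectral}; you instead write the exact identity $\widehat{M}-M=-(UU^\top-\widehat{U}\widehat{U}^\top)M-\widehat{U}\widehat{U}^\top M(VV^\top-\widehat{V}\widehat{V}^\top)+\widehat{U}\widehat{U}^\top E\widehat{V}\widehat{V}^\top$ and bound each summand directly in $\Vert\cdot\Vert_{\max}$. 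Your route is self-contained and, as far as I can check, yields a marginally sharper constant in the incoherence/rank dependence ($\mu^{4}r^{2}$ rather than $\mu^{6}r^{3}$), which of course still implies the theorem.

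One small inaccuracy to fix: your argument only establishes $\Vert\widehat{U}\Vert_{2\to\infty},\Vert\widehat{V}\Vert_{2\to\infty}\lesssim\mu\sqrt{r/(m\wedge n)}$ (since the perturbation term $\epsilon_{\textup{Sub-Rec}}\lesssim\mu\sqrt{r/(m\wedge n)}$ does not distinguish the two dimensions), so the noise term should be bounded by $\tfrac{\mu^{2}r}{m\wedge n}\Vert E\Vert_{\op}$ rather than $\tfrac{\mu^{2}r}{\sqrt{mn}}\Vert E\Vert_{\op}$. This weaker bound still fits inside the target rate, whose denominator carries $(m\wedge n)^{2}$, so the conclusion is unaffected.
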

The proof of this theorem, presented in Appendix \ref{subsec:proof-dsm-pe-error}, relies on deriving entry-wise guarantees for the estimate $\widehat{M}$. These guarantees follow from the two-to-infinity guarantees provided in Theorem \ref{thm:recovery-two-to-infinity-norm}. Observe that when both $M$ and $\omega$ are homogeneous, the sample complexity guarantees derived in Theorem \ref{thm:dsm-pe-error} scale as ${L^2(m+n)\over \varepsilon^2}\log^3(1/\delta)$. The dependence in $L$ is a consequence of the bias induced by setting the unobserved entries of $M$ to $0$ in the definition \eqref{eq:def_Mtilde_uniform} of $\widetilde{M}$, as explained in Section \ref{subsec:main_guarantees_two_to_inf}.

Note that a naive control by the Frobenius norm would not be sufficient to obtain an error guarantee that scales with $\sqrt{m+n}$. Indeed, by the Cauchy-Schwartz inequality, $\vert v^{\pi}-\hat{v}^{\pi}_{\dsmpe} \vert \leq \Vert \omega^{\pi} \Vert_{F} \Vert M- \widehat{M} \Vert_{F}.$  If $\omega$ is homogeneous, $\Vert M- \widehat{M} \Vert_{F}$ typically scales with $\sqrt{mn\left(m+n\right)/T}$, and $\Vert \omega^{\pi} \Vert_{F}=\Theta\left(1/\sqrt{m}\right)$ when $\pi$ is deterministic. We consequently obtain $\vert v^{\pi}-\hat{v}^{\pi}_{\dsmpe} \vert \lesssim \sqrt{n\left(m+n\right)/T}$, which does not improve over existing PE error bounds for contextual bandits with no structure \cite{yin2020asymptotically}.

\subsection{Algorithm via the Two-Phase Approach}\label{subsec:rs-pe}

Next, we present $\rspe$ (Recover Subspace for Policy Evaluation), a two-phase algorithm that proceeds as follows. (i) In the first phase, we recover the subspaces by using the first $T_1$ samples to construct the estimates $\widehat{U}$ and $\widehat{V}$ as described in \eqref{eq:M_decomposition}. (ii) In the second phase, using $\widehat{U}$ and $\widehat{V}$, we reduce the problem to a {misspecified linear bandit} as described in \eqref{eq:miss-LB}, and then run least squares estimation with the remaining $T - T_1$ samples to construct  
\begin{equation}\label{eq:rspe-lse}
    \hat{\theta} = \left( \sum_{t=t_1}^T \phi_{i_t, j_t}\phi_{i_t, j_t}^{\top} + \tau I_d \right)^{-1} \left( \sum_{t=t_1}^{T} r_t  \phi_{i_t,j_t}\right),
\end{equation}
where $\tau>0$ is a regularization parameter, $t_1 = T_1 + 1$, and $d = r(m+n) - r^2$.
Finally, the estimated value of the target policy $\pi$ is: 
\begin{align}\label{eq:rspe-pv}
    \hat{v}_{\rspe}^\pi :=\sum_{(i,j) \in [m]\times [n]} \omega^{\pi}_{i,j} \phi_{i,j}^\top \hat{\theta}.
\end{align}


To analyze $\rspe$, we introduce the following instance-dependent quantities. Define, for all $(i,j) \in [m]\times [n]$, 
$$
\psi_{i,j}  = \begin{bmatrix}\textup{vec}( U^\top e_i e_j^\top V ) \\
    \textup{vec}( U^\top e_i e_j^\top V_\perp )\\
        \textup{vec}(U_\perp^\top e_i e_j^\top V ) \\
    \end{bmatrix},
$$
and $\psi_\pi = \sum_{(i,j) \in [m]\times[n]} \omega_{i,j}^\pi \psi_{i,j}$. 

\begin{theorem}\label{thm:rs-pe-error} 
Let $\varepsilon \in \left(0,\Vert M \Vert_{\max}\right)$ and $ \delta \in \left(0,1\right)$. With the choices $T_1 = \lfloor T/2 \rfloor$ and $\tau \le r(m\wedge n)^{-1} (\omega_{\min}/\omega_{\max}) \log(16d/\delta)$, for any target policy $\pi$,  $\vert v^{\pi}-\hat{v}^\pi_{\rspe} \vert \leq \varepsilon$ with probability larger than $1-\delta$ as soon as
\begin{align*}
    T \gtrsim \frac{\sigma^2\Vert \psi_\pi \Vert^2_2}{\omega_{\min}\,\varepsilon^2}\log\left(\frac{e}{\delta}\right) + \frac{K}{\varepsilon^{4/3}}\log^{3}\!\left( \frac{(m+n)T}{\delta}\right)\!, 
\end{align*}
where $K$ depends polynomially on the model parameters\footnote{Refer to \eqref{eq:def_of_K1} in Appendix \ref{subsec:proof-rs-pe-error} for a complete expression.} $\Vert M\Vert_{\max},\sigma, \mu, \kappa, r, m, n, (\omega_{\min} mn )^{-1}$. 
\end{theorem}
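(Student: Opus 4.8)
I would start from the exact decomposition
\[
\hat v^\pi_{\rspe} - v^\pi = \phi_\pi^\top(\hat\theta - \theta) - \bar\epsilon_\pi,\qquad \phi_\pi := \sum_{i,j}\omega^\pi_{i,j}\phi_{i,j},\quad \bar\epsilon_\pi := \sum_{i,j}\omega^\pi_{i,j}\epsilon_{i,j},
\]
which follows from \eqref{eq:miss-LB}, \eqref{eq:rspe-pv} and $v^\pi = \sum_{ij}\omega^\pi_{ij}M_{ij}$. The term $\bar\epsilon_\pi$ is a pure misspecification bias, $|\bar\epsilon_\pi|\le\epsilon_{\max}$, controlled by Corollary~\ref{corr:misspecification}; with $T_1=\lfloor T/2\rfloor$ it contributes a summand of order $\operatorname{poly}(\cdot)(m+n)/T$ (up to logs). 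For the regression term, with $\Lambda := \sum_{t=t_1}^T \phi_{i_t,j_t}\phi_{i_t,j_t}^\top + \tau I_d$ and $r_t = \phi_{i_t,j_t}^\top\theta + \epsilon_{i_t,j_t} + \xi_t$ I would expand, using \eqref{eq:rspe-lse},
\[
\phi_\pi^\top(\hat\theta - \theta) = \underbrace{\phi_\pi^\top\Lambda^{-1}\textstyle\sum_{t=t_1}^T\xi_t\phi_{i_t,j_t}}_{\mathrm{(I)}} + \underbrace{\phi_\pi^\top\Lambda^{-1}\textstyle\sum_{t=t_1}^T\epsilon_{i_t,j_t}\phi_{i_t,j_t}}_{\mathrm{(II)}} - \underbrace{\tau\,\phi_\pi^\top\Lambda^{-1}\theta}_{\mathrm{(III)}},
\]
and bound the three pieces separately.

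The first ingredient is a lower bound on the empirical Gram matrix. A direct computation using $\widehat U^\top\widehat U = \widehat V^\top\widehat V = I_r$, $\widehat U^\top\widehat U_\perp = 0 = \widehat V^\top\widehat V_\perp$, $\sum_i e_ie_i^\top = I_m$, $\sum_j e_je_j^\top = I_n$ and the Kronecker structure of the blocks of \eqref{eq:phi} yields the exact identity $\sum_{i,j}\phi_{i,j}\phi_{i,j}^\top = I_d$, hence $\mathbb E[\phi_{i_t,j_t}\phi_{i_t,j_t}^\top]=\sum_{ij}\omega_{ij}\phi_{ij}\phi_{ij}^\top \succeq \omega_{\min}I_d$. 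Since $\|\phi_{i,j}\|_2^2 \le \|\widehat U\|_{2\to\infty}^2 + \|\widehat V\|_{2\to\infty}^2$ and the two-to-infinity incoherence of $\widehat U,\widehat V$ is controlled through $\|\widehat U\|_{2\to\infty}\le\|U\|_{2\to\infty}+d_{2\to\infty}(U,\widehat U)$ together with Theorem~\ref{thm:recovery-two-to-infinity-norm}, a matrix Chernoff bound gives $\Lambda \succeq \tfrac12(T-T_1)\omega_{\min}I_d$ on an event of probability $\ge 1-\delta/4$, as soon as $T-T_1\gtrsim \frac{\mu^2 r}{(m\wedge n)\omega_{\min}}\log(d/\delta)$ --- a requirement already subsumed by \eqref{eq:sc}. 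The stated upper bound on $\tau$ is exactly what makes the regularization harmless here (and in (III)).

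For (I), conditioning on the design (contexts and arms, independent of the noise), $\sum_{t=t_1}^T\xi_t\phi_{i_t,j_t}$ is a sum of conditionally $\sigma$-subgaussian increments, so a self-normalized (Azuma-type) bound gives $|\mathrm{(I)}|\lesssim \sigma\,\|\phi_\pi\|_{\Lambda^{-1}}\sqrt{\log(1/\delta)}$ with probability $\ge 1-\delta/4$, using $\Lambda^{-1}(\sum_t\phi_t\phi_t^\top)\Lambda^{-1}\preceq\Lambda^{-1}$. Combined with the Gram bound, $\|\phi_\pi\|_{\Lambda^{-1}}^2\le \tfrac{2\|\phi_\pi\|_2^2}{(T-T_1)\omega_{\min}}$, and the remaining step is to pass from the data-dependent $\|\phi_\pi\|_2$ to the instance quantity $\|\psi_\pi\|_2$. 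Writing $\Omega^\pi=\sum_{ij}\omega^\pi_{ij}e_ie_j^\top$ one has $\|\phi_\pi\|_2^2=\langle\widehat U\widehat U^\top,\Omega^\pi(\Omega^\pi)^\top\rangle+\|\widehat U_\perp^\top\Omega^\pi\widehat V\|_F^2$ (and the analogue with $U,V$ for $\|\psi_\pi\|_2^2$), so substituting $\widehat U\widehat U^\top=UU^\top+(\widehat U\widehat U^\top-UU^\top)$, $\widehat V\widehat V^\top=VV^\top+(\widehat V\widehat V^\top-VV^\top)$ and bounding the cross terms via $|\langle A,B\rangle|\le\|A\|_{2\to\infty}\|B\|_{2,1}$ (using Theorem~\ref{thm:recovery-two-to-infinity-norm}) and the operator-norm $\sin\Theta$ bound of Lemma~\ref{lemma:subspace_Frobenius} gives $\|\phi_\pi\|_2^2 = \|\psi_\pi\|_2^2 + O(\epsilon_{\textup{Sub-Rec}})$. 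This produces the leading summand $\sqrt{\tfrac{\sigma^2\|\psi_\pi\|_2^2}{(T-T_1)\omega_{\min}}\log(1/\delta)}$, whose $\le\varepsilon$ requirement is the first term in the statement; note the transfer is only first order in the recovery error, so the correction term is $\sqrt{\sigma^2\epsilon_{\textup{Sub-Rec}}/((T-T_1)\omega_{\min})}$, and since $\epsilon_{\textup{Sub-Rec}}\propto T_1^{-1/2}\propto T^{-1/2}$ sits under this square root the correction scales as $T^{-3/4}$ --- this is precisely where the exponent $4/3$ in the theorem originates. For (II), Cauchy--Schwarz and $\Phi\Lambda^{-1}\Phi^\top\preceq I$ give $|\mathrm{(II)}|\le \|\phi_\pi\|_{\Lambda^{-1}}\sqrt{T-T_1}\,\epsilon_{\max}$, and plugging in the bounds on $\|\phi_\pi\|_{\Lambda^{-1}}$ and on $\epsilon_{\max}$ (which is $\propto\epsilon_{\textup{Sub-Rec}}^2\kappa\sigma_r\propto(m+n)/T_1$ by Corollary~\ref{corr:misspecification} and \eqref{eq:eps_bound_pert_bound}) yields a term of order $\operatorname{poly}(\cdot)/T$, which is dominated by the $T^{-3/4}$ term above. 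For (III), $\tau|\phi_\pi^\top\Lambda^{-1}\theta|\le \tau\|\phi_\pi\|_2\|\theta\|_2/\lambda_{\min}(\Lambda)$ with $\|\theta\|_2\lesssim\|M\|_F\le\sqrt r\,\sigma_1$ is made negligible by the prescribed choice of $\tau$.

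Finally I would take a union bound over the four events, use $T-T_1\ge T/2-1$, and collect the constraints: the noise leading term forces $T\gtrsim \tfrac{\sigma^2\|\psi_\pi\|_2^2}{\omega_{\min}\varepsilon^2}\log(e/\delta)$; the $T^{-3/4}$ correction of (I) together with the $\operatorname{poly}(\cdot)/T$ contributions of $\bar\epsilon_\pi$, (II) and (III) force $T\gtrsim K/\varepsilon^{4/3}$ for the polynomial $K$ of Appendix~\ref{subsec:proof-rs-pe-error}; and the subspace-recovery prerequisite \eqref{eq:sc} appears as an $\varepsilon$-free additive term (absorbed in the logarithmic factors). I expect the main obstacles to be: (a) the matrix-concentration lower bound on $\Lambda$ in dimension $d=r(m+n)-r^2$, which genuinely relies on the two-to-infinity incoherence of the \emph{estimated} $\widehat U,\widehat V$ --- a Frobenius subspace bound would not rule out a handful of heavy rows of $\phi$ inflating $\max_{i,j}\|\phi_{i,j}\|_2$ and ruining the Chernoff step; and (b) transferring $\|\phi_\pi\|_2$ to $\|\psi_\pi\|_2$ without paying a spurious $\sqrt{m+n}$, i.e. without destroying delocalization, which forces one to route the perturbation through the two-to-infinity norm rather than the operator norm. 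Carefully tracking all the $\epsilon_{\textup{Sub-Rec}}$-dependent lower-order terms and verifying that they collapse into the single $K/\varepsilon^{4/3}$ summand is the bookkeeping-heavy remainder.
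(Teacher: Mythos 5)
Your proposal is correct and follows essentially the same route as the paper: the same decomposition into a noise term, a misspecification term and a regularization term, the same leading term $\sqrt{\sigma^2\Vert\phi_\pi\Vert_2^2\log(1/\delta)/(\omega_{\min}T)}$, and the same first-order transfer $\Vert\phi_\pi\Vert_2^2=\Vert\psi_\pi\Vert_2^2+O(\epsilon_{\textup{Sub-Rec}})$ whose $T^{-3/4}$ correction is indeed the source of the $\varepsilon^{-4/3}$ term (cf.\ Lemma \ref{lemma:feature-map-bound} and Theorem \ref{thm:rs-pe-error-full}). The only technical variation is that you control the noise term against the empirical Gram matrix via a conditional subgaussian bound plus a matrix Chernoff lower bound, whereas the paper works against the population covariance $\Lambda_\phi$ via Freedman-type inequalities together with a covariance-comparison bound; both yield the same leading constant.
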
 

We present in Appendix \ref{subsec:proof-rs-pe-error} a stronger version of Theorem \ref{thm:rs-pe-error} and its proof. The latter relies on intermediate results pertaining to the PE task in contextual linear bandits, for which we present in Appendix \ref{app:linear} a complete analysis: we derive an instance-dependent sample complexity lower bound and an algorithm matching this limit. 


Note that the guarantees for \rspe\ are instance-dependent through $\psi_\pi$. By slightly altering the proof of Theorem \ref{thm:rs-pe-error} and leveraging the upper bound $\Vert \psi_{\pi} \Vert^2_{2} \leq \mu^2r\frac{m+n}{mn}$ (see Appendix \ref{subsec:useful-lemmas} for a proof), we can also derive minimax guarantees with a remainder term that scales with $1/\varepsilon$ instead of $1/\varepsilon^{4/3}$. We refer to Theorem \ref{thm:rs-pe-error-minimax} for the precise statement. 

Asymptotically, when $\varepsilon$ tends to 0, these guarantees exhibit a better dependence in $\mu$, $\kappa$, $r$, $\Vert M \Vert_{\max}$ and $\delta$ (and even in $(m,n)$ for non-homogeneous matrices $M$) compared to those of \dsmpe: for $\varepsilon$ small enough\footnote{Refer to \eqref{cond-on-epsilon} in Appendix \ref{subsec:proof-rs-pe-error} for a precise condition.}, the sample complexity guarantees of \rspe\ scale as $\frac{\sigma^2\mu^2r(m+n)}{\omega_{\min}mn \varepsilon^2} \log(1/{\delta})$. When, in addition, $M$ and $\omega$ are homogeneous, they scale as $\frac{\sigma^2(m+n)}{\varepsilon^2} \log(1/{\delta})$. 

For large $\varepsilon$, the sample complexity guarantees of \dsmpe\ presented in Theorem \ref{thm:dsm-pe-error} may be better than those of \rspe\ (we believe however that the analysis of \rspe\ can be improved for such $\varepsilon$). Yet, as our experiments (see Appendix \ref{subsec:pe-experiments}) suggest, \rspe\ outperforms \dsmpe\ numerically even in the large $\varepsilon$ regime.

Observe further that in any case, the first term in the sample complexity guarantees of \rspe\ scales with $\sigma^2$. In contrast, the guarantees of \dsmpe\ scale with $L^2=\left(\Vert M \Vert_{\max} \vee \sigma \right)^2$ (see Theorem \ref{thm:dsm-pe-error}). The remainder term, however, still depends on $L$ through the factor $K$.


We finally note that without accounting for the low-rank structure, the minimax sample complexity of any PE algorithm would scale as ${mn\over \varepsilon^2}\log(1/\delta)$ \cite{yin2020asymptotically}. Exploiting the low-rank structure allows us to replace the factor $mn$ by $(m+n)$.

\subsection{Sample Complexity Lower Bounds}\label{subsec:lowerbounds-pe}

We first derive an instance-dependent lower bound on the sample complexity. Consider the following factorization of $M$: $M=PQ^\top$ with $P \in \RR^{m\times r}$ and $Q \in \RR^{n\times r}$ (this factorization is not unique). Let us define for every  $i \in [m]$, $$\Lambda^{i}_Q=\sum_{j=1}^{n}\omega_{i,j}Q_jQ_j^\top \quad 
\text{ and } \quad Q^{i}_{\pi}=\sum_{j=1}^{n}\omega^{\pi}_{i,j}Q_j.$$
Similarly, for every $j \in [n]$, let 
$$\Lambda^{j}_P=\sum_{i=1}^{m}\omega_{i,j}P_iP_i^\top \quad  \text{ and }  \quad P^{j}_{\pi}=\sum_{i=1}^{m}\omega^{\pi}_{i,j}P_i.$$ 
In the above definitions, $Q_j$ (resp. $P_i$) denotes the $j$-th row vector of $Q$ (resp. $i$-th row vector of $P$). 

\begin{theorem}\label{thm:instance-lower-bound-pe}
      Let $\varepsilon >0, \delta \in (0,1/2)$ and assume that $\xi_t \sim \mathcal{N}\left(0,\sigma^2\right)$. The sample complexity of any $(\epsilon,\delta)$-PAC estimator of $v^{\pi}$ must satisfy 
      $$
      T \geq \frac{\sigma^2L_{M,\pi} }{2\varepsilon^2}\operatorname{kl}\left(\delta,1-\delta\right)
      $$ 
      for $L_{M,\pi}:=\max\big(\sum_i \Vert Q^{i}_{\pi}\Vert ^2_{(\Lambda^{i}_Q)^{-1}},\sum_j \Vert P^{j}_{\pi}\Vert ^2_{\left(\Lambda^{j}_P\right)^{-1}}\big)$. Furthermore, $L_{M,\pi}$ is independent of the choice of rank factorization $M=PQ^\top$. 
\end{theorem}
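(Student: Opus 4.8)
The plan is to establish the lower bound via a standard change-of-measure argument, and then prove the invariance of $L_{M,\pi}$ under rank factorization separately. For the lower bound, I would fix a rank-$r$ instance $M$ with factorization $M = PQ^\top$, and construct a perturbed instance $M'$ that is also rank-$r$, differs from $M$ in a way that changes $v^\pi$ by more than $2\varepsilon$, yet is hard to distinguish from $M$ under the sampling distribution $\omega$. Concretely, the cleanest construction keeps $Q$ fixed and perturbs $P$: set $M' = P' Q^\top$ where $P'_i = P_i + \alpha_i (\Lambda^i_Q)^{-1} Q^i_\pi$ for a suitable scalar direction, chosen so that $v^{\pi}(M') - v^{\pi}(M) = \sum_i \alpha_i \langle Q^i_\pi, (\Lambda^i_Q)^{-1} Q^i_\pi\rangle$. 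Then I would invoke the transportation lemma / data-processing inequality for KL divergence (e.g. the standard $(\varepsilon,\delta)$-PAC lower bound machinery, as in Kaufmann–Cappé–Garivier or Lemma 1 of Garivier–Ménard–Stoltz): for any $(\varepsilon,\delta)$-PAC estimator, $\sum_{t} \mathbb{E}[\mathrm{KL}(\nu_{i_t,j_t} \| \nu'_{i_t,j_t})] \geq \operatorname{kl}(\delta, 1-\delta)$ whenever the two instances force the estimator to different outputs. Since the noise is Gaussian with variance $\sigma^2$, $\mathrm{KL}(\nu_{i,j}\|\nu'_{i,j}) = (M_{i,j} - M'_{i,j})^2/(2\sigma^2)$, and summing against the expected number of pulls $T\omega_{i,j}$ gives an expected total information of $\frac{T}{2\sigma^2}\sum_{i,j}\omega_{i,j}(M_{i,j}-M'_{i,j})^2$.

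The heart of the calculation is then an optimization: among all rank-$r$ perturbations $M'$ achieving a target value gap $\Delta := v^\pi(M') - v^\pi(M)$, minimize $\sum_{i,j}\omega_{i,j}(M_{i,j}-M'_{i,j})^2$. Restricting to perturbations of the form $M' = P'Q^\top$ with $P'$ perturbed row by row, the perturbation of row $i$ contributes $\sum_j \omega_{i,j}\langle P'_i - P_i, Q_j\rangle^2 = (P'_i - P_i)^\top \Lambda^i_Q (P'_i - P_i)$, while its contribution to the value gap is $\langle P'_i - P_i, Q^i_\pi\rangle$. By Cauchy–Schwarz in the $\Lambda^i_Q$-inner product, the cheapest way to achieve a given per-row value contribution is to move in the direction $(\Lambda^i_Q)^{-1}Q^i_\pi$, and then the total cost to achieve gap $\Delta$ (optimizing the split of $\Delta$ across rows) is exactly $\Delta^2 / \sum_i \|Q^i_\pi\|^2_{(\Lambda^i_Q)^{-1}}$ by a weighted-Cauchy-Schwarz / Lagrange-multiplier computation. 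Taking $\Delta = 2\varepsilon$ (so the PAC estimator must be wrong on one of the two instances) and comparing with $\operatorname{kl}(\delta,1-\delta)$ yields $T \geq \frac{\sigma^2}{2\varepsilon^2}\sum_i \|Q^i_\pi\|^2_{(\Lambda^i_Q)^{-1}} \operatorname{kl}(\delta,1-\delta)$. By symmetry — perturbing $Q$ instead of $P$ — one gets the analogous bound with $\sum_j \|P^j_\pi\|^2_{(\Lambda^j_P)^{-1}}$, and since both are valid lower bounds we may take the maximum, giving $L_{M,\pi}$. One technical point to verify: the perturbed instances $M'$ must remain genuinely rank-$r$ (automatic, since we keep one factor fixed) and, strictly, within the model class — if there are boundedness constraints like $\|M'\|_{\max} \le$ some bound, one scales $\Delta$ down and notes that for small $\varepsilon$ this is vacuous; I would handle this by working with arbitrarily small $\varepsilon$ or noting the bound is stated for all $\varepsilon > 0$ with the instance itself in the class.

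For the invariance claim, I would argue as follows. Any two rank-$r$ factorizations $M = PQ^\top = \widetilde{P}\widetilde{Q}^\top$ with $P, \widetilde P$ of full column rank $r$ are related by $\widetilde P = P G^{-\top}$, $\widetilde Q = Q G$ for some invertible $G \in \mathbb{R}^{r\times r}$ (both column spaces equal $\mathrm{col}(M)$, resp. $\mathrm{col}(M^\top)$, and the factorization determines $G$ up to this). Then $\widetilde Q_j = G^\top Q_j$, so $\widetilde\Lambda^i_Q = \sum_j \omega_{i,j}G^\top Q_j Q_j^\top G = G^\top \Lambda^i_Q G$ and $\widetilde Q^i_\pi = G^\top Q^i_\pi$; hence $\|\widetilde Q^i_\pi\|^2_{(\widetilde\Lambda^i_Q)^{-1}} = (Q^i_\pi)^\top G (G^\top \Lambda^i_Q G)^{-1} G^\top Q^i_\pi = (Q^i_\pi)^\top (\Lambda^i_Q)^{-1} Q^i_\pi$, term by term, so the sum over $i$ is unchanged; symmetrically for the sum over $j$, hence for the max. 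A minor subtlety is the case where $\Lambda^i_Q$ is singular (some rows of $Q$ effectively unsampled within the relevant span); I would either restrict to the nondegenerate case, interpret the inverse as a pseudoinverse and check the same algebra goes through on the relevant subspace, or note $\|Q^i_\pi\|^2_{(\Lambda^i_Q)^{-1}} = +\infty$ when $Q^i_\pi \notin \mathrm{range}(\Lambda^i_Q)$, in which case the bound is trivially true and still factorization-independent.

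The main obstacle I anticipate is the optimization step — specifically, being careful that the row-by-row perturbation ansatz $M' = P'Q^\top$ is without loss of generality for the lower bound. It is not literally true that every rank-$r$ competitor has this form, but for a \emph{lower} bound we only need \emph{some} hard alternative, so restricting to this family is legitimate and in fact gives the tightest bound of this type; the two symmetric choices (perturb $P$ vs.\ perturb $Q$) are what force the maximum in the definition of $L_{M,\pi}$. The other place requiring care is the precise form of the transportation inequality and making sure the expected-pull-count identity $\mathbb{E}[N_{i,j}] = T\omega_{i,j}$ holds exactly here — which it does, since contexts and the behavior policy are i.i.d. and $\omega = \omega^{\pi^b}$ is fixed, so this is just linearity of expectation, not a Wald-type argument.
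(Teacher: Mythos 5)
Your proposal is correct and follows essentially the same route as the paper: a change-of-measure/transportation argument with Gaussian KL and $\mathbb{E}[N_{i,j}]=T\omega_{i,j}$, followed by restricting the alternatives to one-sided perturbations ($Q'=Q$ or $P'=P$), solving the resulting quadratic program to get $\sum_i \Vert Q^i_\pi\Vert^2_{(\Lambda^i_Q)^{-1}}$ and its symmetric counterpart, taking the maximum, and proving invariance via the $\widetilde{P}=PG^{-\top}$, $\widetilde{Q}=QG$ relation between rank factorizations. The only detail you gloss over is that a one-sided perturbation can drop the rank of $M'$ below $r$, which the paper handles with a short density/continuity argument over the set of rank-$r$ matrices; this does not affect the substance of your argument.
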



From Theorem \ref{thm:instance-lower-bound-pe}, we can deduce a minimax lower bound. For some constants $c=(c_1,c_2)$ and $c'$, we introduce the set of matrices ${\cal M}(c,c')=\{M\in \mathbb{R}^{m\times n}: \mathrm{rank}(M)=r\le c', c_1m \le n\le c_2m\}$. 

\begin{proposition}\label{corr:worst-case-lower-bound-pe}
 Let $\varepsilon > 0, \delta \in (0,1/2)$ and assume that $\xi_t \sim \mathcal{N}\left(0,\sigma^2\right)$. There exists a target policy $\pi$ and a homogeneous matrix $M\in {\cal M}(c,c')$ such that the sample complexity of any $\left(\varepsilon,\delta\right)-$PAC estimator of $v^{\pi}$ must satisfy  $$
      T \gtrsim \frac{\sigma^2\left(m+n\right)}{\omega_{\max}mn\varepsilon^2}\operatorname{kl}\left(\delta,1-\delta\right).
      $$  
\end{proposition}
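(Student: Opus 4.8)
The plan is to derive Proposition~\ref{corr:worst-case-lower-bound-pe} from the instance-dependent bound in Theorem~\ref{thm:instance-lower-bound-pe} by exhibiting a single well-chosen rank-$r$ homogeneous matrix $M$ and target policy $\pi$ for which the instance-dependent constant $L_{M,\pi}$ is at least of order $(m+n)/(\omega_{\max}mn)$. Since Theorem~\ref{thm:instance-lower-bound-pe} gives $T \geq \tfrac{\sigma^2 L_{M,\pi}}{2\varepsilon^2}\operatorname{kl}(\delta,1-\delta)$ for \emph{every} instance, it suffices to produce one instance in ${\cal M}(c,c')$ witnessing the claimed scaling, and then absorb the factor $1/2$ into the $\gtrsim$.

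First I would pick the simplest possible construction: take $\omega$ to be (close to) the uniform (context, arm) distribution, so $\omega_{i,j} = \Theta(1/(mn))$ and hence $\omega_{\max} = \Theta(1/(mn))$, which makes the target $(m+n)/(\omega_{\max}mn) = \Theta(m+n)$. For the matrix, I would choose a rank-$r$ matrix built from a few $\pm 1$-type blocks (e.g.\ a matrix whose rows take $r$ distinct values, each value a fixed $\{\pm c\}$-vector, balanced so that incoherence, condition number and rank are all $\Theta(1)$); the "all-ones" example mentioned after Definition~\ref{def:homogeneous_M} already shows such constructions are homogeneous. Take $\pi$ deterministic, say $\pi(i) = j^\star$ for a single fixed arm $j^\star$ (or more generally a policy concentrated on one column), so that $\omega^\pi_{i,j} = \rho_i \indicator\{j = j^\star\}$.

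Next I would lower-bound $L_{M,\pi}$ using, say, the column term $\sum_j \Vert P^{j}_{\pi}\Vert^2_{(\Lambda^j_P)^{-1}}$ for a convenient factorization $M = PQ^\top$ (Theorem~\ref{thm:instance-lower-bound-pe} guarantees $L_{M,\pi}$ does not depend on which factorization we use, so I may take the SVD-based one $P = U\Sigma$, $Q = V$, or a scaled indicator factorization that makes the block structure transparent). For the deterministic $\pi$ concentrated on $j^\star$, only $j = j^\star$ contributes, and $P^{j^\star}_\pi = \sum_i \rho_i P_i$ while $\Lambda^{j^\star}_P = \sum_i \omega_{i,j^\star} P_i P_i^\top = \Theta\big(\tfrac{1}{mn}\big)\sum_i P_i P_i^\top$. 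With the homogeneous block construction, $\sum_i P_i P_i^\top \asymp (m/r)\,\mathrm{I}_r$ up to constants, so $(\Lambda^{j^\star}_P)^{-1} \asymp \tfrac{mn\,r}{m}\,\mathrm{I}_r = \Theta(nr)\,\mathrm{I}_r$, and $\Vert P^{j^\star}_\pi\Vert_2^2 = \Theta(1)$ by homogeneity of $\rho$ and the $\Theta(1)$ magnitude of the $P_i$'s. This yields $L_{M,\pi} \gtrsim n \asymp m+n$, which is exactly the desired order once multiplied by $\sigma^2/\varepsilon^2$ and $\operatorname{kl}(\delta,1-\delta)$, and matches $\Theta\big(\tfrac{m+n}{\omega_{\max}mn}\big)$ since $\omega_{\max}mn = \Theta(1)$.

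The main obstacle — and the step requiring the most care — is verifying that the explicitly constructed matrix $M$ simultaneously satisfies \emph{all} the homogeneity requirements ($\mathrm{rank}(M) = r$ exactly, $\mu = \Theta(1)$, $\kappa = \Theta(1)$) \emph{and} gives the clean estimates $\sum_i P_iP_i^\top \asymp (m/r)\mathrm{I}_r$ and $\Vert P^{j^\star}_\pi\Vert_2^2 = \Theta(1)$; one has to choose the block sizes and signs so that the row sums $\sum_i \rho_i P_i$ do not accidentally collapse to something of smaller order, and so that the Gram matrix is well-conditioned. A secondary subtlety is that the distribution $\omega^{\pi^b}$ that the learner samples from must be allowed to be the (near-)uniform one — i.e.\ the behaviour policy $\pi^b$ is a free design choice of the lower-bound instance — which is fine here since Proposition~\ref{corr:worst-case-lower-bound-pe} only claims existence of a hard instance. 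Once the construction is pinned down, the rest is just plugging constants into Theorem~\ref{thm:instance-lower-bound-pe}.
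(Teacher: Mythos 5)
Your overall strategy---instantiate the instance-dependent bound of Theorem~\ref{thm:instance-lower-bound-pe} at a homogeneous matrix whose leading singular vectors are flat (the all-ones matrix for $r=1$), take a constant target policy $\pi(\cdot)=j^\star$, and lower-bound the column term $\sum_j\Vert P^{j}_{\pi}\Vert^2_{(\Lambda^{j}_{P})^{-1}}$---is exactly the route the paper takes via Corollary~\ref{corollary:instance-lower-bound-pe} and Proposition~\ref{prop:minimax-lower-bound-pe-full}. Your main stated worry (that the Gram matrix be well conditioned and that $\sum_i\rho_iP_i$ not collapse) is dispatched more cheaply the way the paper does it: with the factorization $P=U\sqrt{\Sigma}$, $Q=V\sqrt{\Sigma}$ one has $\sum_iP_iP_i^\top=\Sigma$ exactly, hence $\Lambda^{j}_{P}\preceq\omega_{\max}\Sigma$ and $(\Lambda^{j^\star}_{P})^{-1}\succeq(\sigma_1\omega_{\max})^{-1}I_r$; one then keeps only the first coordinate of $P^{j^\star}_{\pi}=\sum_i\rho_iP_i$, which equals $\sqrt{\sigma_1}/\sqrt{m}$ whenever $U_{:,1}=(1/\sqrt{m},\dots,1/\sqrt{m})^\top$, giving $\Vert P^{j^\star}_{\pi}\Vert^2_{(\Lambda^{j^\star}_{P})^{-1}}\geq 1/(m\,\omega_{\max})=\Theta\bigl((m+n)/(\omega_{\max}mn)\bigr)$. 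No balancing of signs or blocks across all $r$ directions is required.

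The one genuine flaw is your treatment of $\omega$. You fix $\omega$ to be (near-)uniform and justify this by declaring the behavior policy "a free design choice of the lower-bound instance." It is not: the proposition quantifies existentially only over $\pi$ and $M$, and the bound is written in terms of $\omega_{\max}$ precisely because it is meant to hold for whatever behavior distribution the data is collected under; the remark immediately after Proposition~\ref{corr:worst-case-lower-bound-pe}, which \emph{specializes} the bound to homogeneous $\omega$, confirms this reading. As written, your computation of $\Lambda^{j^\star}_{P}$ uses $\omega_{i,j^\star}=\Theta(1/(mn))$ pointwise and therefore only establishes the claim in the special case $\omega_{\max}mn=\Theta(1)$. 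The fix is exactly the relaxation in Corollary~\ref{corollary:instance-lower-bound-pe}: replace the pointwise evaluation by the one-sided bound $\Lambda^{j^\star}_{P}\preceq\omega_{\max}\sum_iP_iP_i^\top$, after which your argument goes through for arbitrary $\omega$ and recovers the stated dependence on $\omega_{\max}$.
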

Note that when $\omega$ is homogeneous, the sample complexity lower bound reduces to $T \gtrsim \frac{\sigma^2\left(m+n\right)}{\varepsilon^2}  \operatorname{kl}\left(\delta,1-\delta\right)$.
Since $\operatorname{kl}\left(\delta,1-\delta\right) \geq \log\left(1/\left(2.4\delta\right)\right)$  \cite{kaufmann2016complexity}, this lower bound nearly matches the sample complexity guarantees of \rspe\ when $M$ and $\omega$ are homogeneous. If in addition, $\sigma \gtrsim \Vert M \Vert_{\max},$ it also matches the sample complexity guarantees of \dsmpe.


\section{Best Policy Identification}\label{sec:bpi}

In this section, following the same approach as that used to address the PE task, we devise two BPI algorithms with nearly minmax optimal sample complexity.
In what follows, we assume that the context distribution $\rho$ is homogeneous. We explain how this condition can be dropped at the reasonable cost of additional logarithmic factors in our bounds in Appendix \ref{subsec:general-context}.

As for the PE learning task, our first algorithm is directly based on the estimate $\widehat{M}$, and the second follows our two-phase approach. These algorithms, referred to as \dsmbpi\ and \rsbpi, output the policy $\hat{\pi}_{\dsmbpi}$ and $\hat{\pi}_{\rsbpi}$ respectively, defined by, for all $i \in [m]$, 
\begin{align*}
    \hat{\pi}_{\dsmbpi}(i)&:= \argmax_{1\leq j \leq n} \widehat{M}_{i,j}, \\
    \hat{\pi}_{\rsbpi}(i) &:=\argmax_{1\leq j \leq n} \phi_{i,j}^T\hat{\theta},
\end{align*}
where $\widehat{M}$, $\phi$ and $\hat{\theta}$ are defined in \eqref{eq:SVD_hat_M}, \eqref{eq:phi} and \eqref{eq:rspe-lse}, respectively. The pseudo-code of \rsbpi\ is presented in Algorithm \ref{algo:RSBPI}.

\begin{algorithm}[t]
\caption{\textsc{\textbf{R}ecover \textbf{S}ubspace for \textbf{B}est \textbf{P}olicy \textbf{I}dentification} $(\rsbpi)$ }\label{algo:RSBPI}
\begin{algorithmic}
\STATE \textbf{Input}: Budgets of rounds $T$ and $T_1$, context distribution $\rho$, regularization parameter $\tau$ 
\STATE \emph{\color{purple}\underline{Phase 1:} Subspace recovery}
\STATE Collect $T$ samples according to the uniform policy
\STATE Use the first $T_1$ samples to construct 
$\widehat{U},\widehat{V}$ as in \eqref{eq:SVD_hat_M}
\STATE \emph{\color{purple}\underline{Phase 2:} Solving a misspecified linear bandit}
    \STATE Construct $\lbrace \phi_{i,j}: (i,j) \in [m]\times[n] \rbrace$ as in \eqref{eq:phi}
    \STATE Use the remaining $T-T_1$ samples to construct the least square estimator $\hat{\theta}$ as in  \eqref{eq:rspe-lse}
\STATE Set $\hat{\pi}_{\rsbpi}\left(i\right) = \displaystyle \arg \max_{1 \leq j \leq n} \phi_{i,j}^\top \hat{\theta}$ for all $i \in \left[m\right]$
\STATE \textbf{Output:} $\hat{\pi}_{\rsbpi}$ 
\end{algorithmic}
\end{algorithm}

\begin{theorem}\label{thm:bpi-bound}
Let $\varepsilon \in \left(0,\Vert M \Vert_{\max}\right)$, $\delta \in \left(0,1\right)$ and assume that $\rho$ is homogeneous.\\
(i) $v^\star-v^{\hat{\pi}_{\dsmbpi}} \leq \varepsilon$ with probability larger than $1-\delta$ as soon as
\begin{align*}
T \gtrsim \frac{L^2\mu^{6} \kappa^4 r^3  (m+n)^3}{(m\wedge n)^2 \varepsilon^2}\log^3\!\left(  \frac{(m+n)T}{\delta}\right)\!.
\end{align*} 
    
(ii) Choosing $\tau \le r(m\wedge n)^{-1} (\rho_{\min}/\rho_{\max}) \log(16d/\delta)$ and $T_1 = \lfloor T/2 \rfloor$, $v^\star-v^{\hat{\pi}_{\rsbpi}} \leq \varepsilon$ with probability larger than $1-\delta$ as soon as 
\begin{align*}
 T  \gtrsim \frac{\sigma^2\mu^2r\!\left(m+n\right)\! }{\varepsilon^2}\log\!\left(\!\frac{m+n}{\delta}\!\right) \!+\!  \frac{K}{\varepsilon}\log^{3}\!\left( \!\frac{(m+n)T}{\delta}\!\right)\!, 
\end{align*}
where $K$ depends polynomially on the model parameters \footnote{Refer to \eqref{eq:def_of_K0_2} in Appendix \ref{app:bpi} for a complete expression.} $\Vert M \Vert_{\max},\sigma, \mu, \kappa, r, m, n$.  
\end{theorem}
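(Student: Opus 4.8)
The plan is to reduce both claims to the elementary fact that the sub-optimality of a greedy policy is governed by the \emph{entry-wise} accuracy of the score matrix it maximizes, and then to plug in the spectral entry-wise bound for \dsmbpi\ and the least-squares entry-wise bound for \rsbpi. For the common reduction: $\pi^\star$ and the output policies are deterministic, $v^\pi=\sum_i\rho_i M_{i,\pi(i)}$, and for any $\hat\pi$ of the form $\hat\pi(i)\in\argmax_j g_{i,j}$,
\[
v^\star-v^{\hat\pi}=\sum_i\rho_i\big(M_{i,\pi^\star(i)}-M_{i,\hat\pi(i)}\big)\le 2\sum_i\rho_i\max_j|M_{i,j}-g_{i,j}|\le 2\max_{i,j}|M_{i,j}-g_{i,j}|,
\]
since $g_{i,\hat\pi(i)}\ge g_{i,\pi^\star(i)}$. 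So it suffices to control $\max_{i,j}|M_{i,j}-g_{i,j}|$ by $\varepsilon/2$, with $g=\widehat M$ for part (i) and $g_{i,j}=\phi_{i,j}^\top\hat\theta$ for part (ii). A uniform policy samples arms uniformly and, since $\rho$ is homogeneous, $\rho_{\min}=\Theta(1/m)$; hence $\omega_{\min}=\min_i\rho_i/n=\Theta(1/(mn))$, so $\omega_{\min}^{-1}=\Theta(mn)\le(m+n)^2$ and $(\omega_{\min}(m\wedge n))^{-1}=\Theta(m\vee n)\le m+n$. These substitutions turn the $\omega_{\min}$-dependence of Theorem \ref{thm:dsm-pe-error} and Corollary \ref{corr:misspecification} into the $(m+n)$-powers of the statement.

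Part (i) follows at once: the proof of Theorem \ref{thm:dsm-pe-error} in fact produces an entry-wise bound $\|M-\widehat M\|_{\max}\lesssim(L^2\mu^6\kappa^4 r^3(m+n)\,\omega_{\min}^{-1}(m\wedge n)^{-2}T^{-1}\log^3((m+n)T/\delta))^{1/2}$ on the event of Theorem \ref{thm:recovery-two-to-infinity-norm}; requiring $2\|M-\widehat M\|_{\max}\le\varepsilon$ and substituting $\omega_{\min}^{-1}\le(m+n)^2$ yields $T\gtrsim L^2\mu^6\kappa^4 r^3(m+n)^3(m\wedge n)^{-2}\varepsilon^{-2}\log^3((m+n)T/\delta)$.

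For part (ii) I condition on the first $T_1$ samples, which fix $\widehat U,\widehat V$ and hence $\phi$, $\theta$ and the misspecification $\epsilon_{i,j}$ (see \eqref{eq:miss-LB}, \eqref{eq:phi}); the phase-2 design $\{(i_t,j_t)\}_{t>T_1}$ is i.i.d.\ $\omega$ and independent of both the phase-2 noise and $V_\lambda:=\sum_{t=t_1}^T\phi_{i_t,j_t}\phi_{i_t,j_t}^\top+\tau I_d$. Writing $r_t=\phi_{i_t,j_t}^\top\theta+\epsilon_{i_t,j_t}+\xi_t$ gives $\phi_{i,j}^\top(\hat\theta-\theta)=\phi_{i,j}^\top V_\lambda^{-1}\sum_t\phi_{i_t,j_t}\xi_t+\phi_{i,j}^\top V_\lambda^{-1}\sum_t\phi_{i_t,j_t}\epsilon_{i_t,j_t}-\tau\phi_{i,j}^\top V_\lambda^{-1}\theta$, and $v^\star-v^{\hat\pi_{\rsbpi}}\le 2\max_{i,j}|\phi_{i,j}^\top(\hat\theta-\theta)|+2\epsilon_{\max}$. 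The two structural facts I use are: (a) the identity $\sum_{i,j}\phi_{i,j}\phi_{i,j}^\top=I_d$ (from $\mathrm{vec}(AXB)=(B^\top\otimes A)\mathrm{vec}(X)$ and the orthonormality of $[\widehat U\ \widehat U_\perp]$ and $[\widehat V\ \widehat V_\perp]$), which yields $\Phi:=\sum_{i,j}\omega_{i,j}\phi_{i,j}\phi_{i,j}^\top\succeq(\rho_{\min}/n)I_d$; and (b) the incoherence transfer $\|\widehat U\|_{2\to\infty}=\|\widehat U\widehat U^\top\|_{2\to\infty}\le\|U\|_{2\to\infty}+\epsilon_{\textup{Sub-Rec}}\lesssim\mu\sqrt{r/m}$ on the event of Theorem \ref{thm:recovery-two-to-infinity-norm} (and likewise for $\widehat V$), which, via $\|\phi_{i,j}\|_2^2=\|\widehat U_{i,:}\|_2^2+\|\widehat V_{j,:}\|_2^2-\|\widehat U_{i,:}\|_2^2\|\widehat V_{j,:}\|_2^2$, gives $\max_{i,j}\|\phi_{i,j}\|_2^2\lesssim\mu^2 r(m+n)/(mn)$. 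Together with a matrix-Chernoff bound $V_\lambda\succeq\tfrac12(T-T_1)\Phi$ (valid with probability $\ge1-\delta/3$ once $T-T_1\gtrsim\mu^2 r(m+n)\log((m+n)/\delta)$) these give $\max_{i,j}\|\phi_{i,j}\|_{V_\lambda^{-1}}^2\le\tfrac{2n}{(T-T_1)\rho_{\min}}\max_{i,j}\|\phi_{i,j}\|_2^2\lesssim\mu^2 r(m+n)/(T-T_1)$ and $\lambda_{\min}(V_\lambda)\gtrsim(T-T_1)/(mn)$. The noise term, conditionally on the design, is sub-Gaussian with proxy $\le\sigma^2\|\phi_{i,j}\|_{V_\lambda^{-1}}^2$ (as $V_\lambda^{-1}(V_\lambda-\tau I)V_\lambda^{-1}\preceq V_\lambda^{-1}$), so a union bound over the $mn$ pairs gives $\max_{i,j}|\mathrm{noise}|\lesssim(\sigma^2\mu^2 r(m+n)\log((m+n)/\delta)/(T-T_1))^{1/2}$, which is $\le\varepsilon/8$ exactly when $T\gtrsim\sigma^2\mu^2 r(m+n)\varepsilon^{-2}\log((m+n)/\delta)$ --- the leading term; the misspecification term is $\le\epsilon_{\max}\sum_t|\phi_{i,j}^\top V_\lambda^{-1}\phi_{i_t,j_t}|\le\epsilon_{\max}((T-T_1)\|\phi_{i,j}\|_{V_\lambda^{-1}}^2)^{1/2}\lesssim\epsilon_{\max}(\mu^2 r(m+n))^{1/2}$, and Corollary \ref{corr:misspecification} (with $(\omega_{\min}(m\wedge n))^{-1}\le m+n$, $T_1=\lfloor T/2\rfloor$) makes it $\le\varepsilon/8$ under a requirement $T\gtrsim\mathrm{poly}(\|M\|_{\max},\sigma,\mu,\kappa,r,m,n)/\varepsilon\cdot\log^3((m+n)T/\delta)$; the ridge bias is $\le\tau\lambda_{\min}(V_\lambda)^{-1}\|\phi_{i,j}\|_2\|\theta\|_2$, which with $\|\theta\|_2\lesssim\sqrt r\,\sigma_1\lesssim\sqrt r\|M\|_{\max}\sqrt{mn}$ (Lemma \ref{lem:spikiness}), the prescribed bound on $\tau$ and $\lambda_{\min}(V_\lambda)\gtrsim(T-T_1)/(mn)$ again decays like $1/(T-T_1)$ and contributes another $1/\varepsilon$ requirement. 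Intersecting the three events (Theorem \ref{thm:recovery-two-to-infinity-norm}/Corollary \ref{corr:misspecification}; matrix-Chernoff; noise concentration) at level $\delta/3$, combining the deterministic requirements, and collecting every $1/\varepsilon$ contribution (and the additive $\mu^2 r(m+n)\log$) into a single factor $K$ gives the claimed bound.

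The main obstacle is the sharpness of the noise analysis: bringing the variance proxy of $\phi_{i,j}^\top V_\lambda^{-1}\sum_t\phi_{i_t,j_t}\xi_t$ down to exactly $\sigma^2\mu^2 r(m+n)/(T-T_1)$ --- with the true incoherence $\mu$, no $\kappa$, and only one factor $(m+n)$ --- simultaneously needs the closed form $\sum_{i,j}\phi_{i,j}\phi_{i,j}^\top=I_d$, the two-to-infinity recovery guarantee to control $\|\widehat U\|_{2\to\infty}$ and $\|\widehat V\|_{2\to\infty}$ (an $\ell_\infty$-type statement that a Frobenius bound cannot supply), and the empirical-to-population covariance transfer; and it requires checking that the misspecification and ridge-bias errors, which are genuinely nonzero, pollute only the $1/\varepsilon$ remainder and not the $1/\varepsilon^2$ leading term --- which is precisely what the split $T_1=\lfloor T/2\rfloor$ and the small choice $\tau\lesssim r(m\wedge n)^{-1}\log(d/\delta)$ are designed to guarantee. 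As in the analysis of \rspe, the cleanest execution is to perform the whole computation on the idealized features $\psi_{i,j}$ built from the true $U,V$ (for which (a)--(b) hold with the genuine $\mu$ and no $\kappa$, and $\|\psi_{i,j}\|_2^2\le\mu^2r(m+n)/(mn)$), and to absorb the feature perturbation $\|\phi_{i,j}-\psi_{i,j}\|$, controlled by $\epsilon_{\textup{Sub-Rec}}$ through Theorem \ref{thm:recovery-two-to-infinity-norm}, into $K$.
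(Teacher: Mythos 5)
Your proposal is correct and follows essentially the same route as the paper: reduce both parts to an entry-wise (max-norm) guarantee on the score matrix via the greedy-policy argument, obtain that guarantee for $\widehat{M}$ from the spectral max-norm bound (Proposition \ref{prop:dsm-max-error}), and for $\phi_{i,j}^\top\hat\theta$ by re-running the regularized least-squares analysis of \rspe\ with $\phi_{i,j}$ in place of $\phi_\pi$ (decomposing into noise, misspecification and ridge bias, using $\sum_{i,j}\phi_{i,j}\phi_{i,j}^\top=I_d$, the incoherence transfer to $\widehat U,\widehat V$, and Corollary \ref{corr:misspecification}) followed by a union bound over the $mn$ pairs, exactly as in Proposition \ref{prop:rs-max-error}. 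The only deviations are cosmetic: you handle the noise term by conditioning on the phase-2 design and invoking sub-Gaussianity directly rather than the paper's Freedman-type martingale inequalities, and you bound the misspecification term by Cauchy--Schwarz rather than through the paper's $L_3$ term; both land in the same $K/\varepsilon$ remainder.
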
 

Note that under our homogeneity assumption on $\rho$, the sample complexity guarantees of \rsbpi\  scale as $\frac{\sigma^2\mu^2r\left(m+n\right)}{\varepsilon^2}\log\left(1/{\delta}\right)$ when $\varepsilon$ is sufficiently small. In this case, it exhibits a better dependence on the model parameters than the sample complexity guarantees of \dsmbpi\ when $M$ is not homogeneous. For large $\varepsilon$, the guarantees of \dsmbpi\ can be better than those of \rsbpi. Nonetheless, our experiments (see Appendix \ref{subsec:bpi-experiment}) suggest that \rsbpi\ still outperforms \dsmbpi\ in the large $\varepsilon$ regime.

A known minimax sample complexity lower bound for BPI in contextual low-rank bandits scales as ${r(m+n)\over \varepsilon^2}\log(1/\delta)$ (this is essentially proved by \cite{lee2023context} for context-lumpable bandits, a subclass of contextual low-rank bandits). Hence, in view of Theorem \ref{thm:bpi-bound}, both \dsmbpi\ and \rsbpi\ are nearly minimax optimal when the reward matrix and context distribution are homogeneous; the sample complexity of \dsmbpi\ has an additional $\log^2(1/\delta)$ term, and \rsbpi\ is minimax optimal when $\varepsilon$ is sufficiently small. 

Finally, it is worth noting that for the two proposed algorithms, data is simply gathered using the uniform sampling policy (there is no need to be adaptive to achieve a minimax optimal sample complexity when $M$ and $\rho$ are homogeneous).  


\section{Regret Minimization}\label{sec:regret}







In this section, we introduce \rsslin, our algorithm designed for regret minimization in contextual low-rank bandits. Presented as Algorithm \ref{algo:ESRED},  \rsslin\ runs in two phases. In the first phase, we collect $T_1$ samples by simply selecting arms uniformly at random. Based on the corresponding reward observations, we estimate the singular subspaces of $M$ using the procedure outlined in Subsection \ref{subsec:subest}. The problem is then reformulated as a misspecified contextual linear bandit, as described in Section \ref{subsec:reductions}.  

In the second phase, to solve the misspecified linear bandit for the remaining $T-T_1$ rounds, \rsslin\ uses a variant of \textsc{SupLinUCB}  \cite{takemura2021parameter}, outlined in Algorithm \ref{algo:LowSupLinUCB_reg} in Appendix \ref{subsec:misspecified_linear_regret}. In each round $t$, the set of feature vectors ${\cal X}_t=\{ \phi_{i_t,j}, j\in [n]\}$ is constructed using the estimated singular subspaces as defined in \eqref{eq:phi}, with features of dimension $d=r(m+n)-r^2$. 

The choice of $T_1$ will be specified later. Let $T_2 = T-T_1$. We define the threshold $\beta(\delta)$ used in \textsc{SupLinUCB} as: 
\begin{align}
\label{alg1:beta}
    \beta(\delta) =   \sigma ( 1 + \sqrt{2\log \left( \frac{T_2 mn}{\delta} \ceil{\frac{1}{2}\log\left(\frac{T_2}{d}\right) } \right) } ).
\end{align}



\begin{theorem}
\label{thm:main_ESRED_context}
Suppose\footnote{Refer to \eqref{eq:T1_def_regret_app} in Appendix \ref{app:regret_min} for an explicit expression.} $T_1=\widetilde{\Theta}(f(T,M,\rho))$, where 
\begin{align*}
    f(T,M,\rho) := L \mu^2 \kappa^2 r^{5/4}  \frac{(m+n)^{3/4}(mn)^{1/4} }{\sqrt{m\rho_{\min}}(m\wedge n)^{1/2}} \sqrt{T }. 
\end{align*}
Then, the regret of $\pi=\rsslin$ satisfies:\\  $R^\pi(T) = \widetilde{O} (f(T,M,\rho))$, for all $T \ge 1$.
\end{theorem}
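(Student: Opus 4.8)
\textbf{Proof plan for Theorem \ref{thm:main_ESRED_context}.}

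The plan is to decompose the regret into the first-phase contribution and the second-phase contribution, bound each separately, and then optimize over $T_1$. For the first phase, since arms are pulled uniformly at random over $T_1$ rounds, the incurred regret is at most $T_1 \cdot 2\Vert M\Vert_{\max}$ (each round loses at most the range of the reward matrix), so this contributes $O(T_1)$. For the second phase, the key is Corollary \ref{corr:misspecification}: after $T_1$ samples gathered under the uniform (context, arm) distribution $\omega$ (for which $\omega_{\min} = \Theta(\rho_{\min}/n)$ under homogeneity of $\rho$), the reduction of Section \ref{subsec:reductions} yields a misspecified linear bandit of dimension $d = r(m+n)-r^2$ with misspecification level $\epsilon_{\max} \lesssim \frac{L^2\mu^2\kappa^3 r(m+n)}{\sigma_r T_1 \omega_{\min}(m\wedge n)}\log^3(\cdot)$, holding on an event of probability $1-\delta$. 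On the complementary event (probability $\delta$), we crudely bound the regret by $2\Vert M\Vert_{\max} T$ and choose $\delta = 1/T$ so this contributes only $O(1)$.

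Next I would invoke the regret guarantee of the \textsc{SupLinUCB} variant for misspecified linear bandits (Algorithm \ref{algo:LowSupLinUCB_reg}, analyzed in Appendix \ref{subsec:misspecified_linear_regret}). The standard form of such a bound, with $n$ arms per round, feature dimension $d$, horizon $T_2$, noise level $\sigma$, and misspecification $\epsilon_{\max}$, is of order $\widetilde{O}\big(\sqrt{d T_2} + \epsilon_{\max}\sqrt{d}\, T_2\big)$ — the first term being the usual linear-bandit regret and the second the unavoidable cost of misspecification (here one must be careful that the features $\phi_{i,j}$ and parameter $\theta$ satisfy the right norm bounds, which follow from $\Vert\phi_{i,j}\Vert_2 = \Theta(\sqrt{r(m+n)/(mn)})$ via incoherence and $\Vert\theta\Vert_2 \le \Vert M\Vert_{\F} \lesssim \sigma_1\sqrt{r}$, or alternatively one rescales to unit-norm features and absorbs the scaling into $\beta(\delta)$ as in \eqref{alg1:beta}). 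Substituting $T_2 \le T$ and the bound on $\epsilon_{\max}$ from Corollary \ref{corr:misspecification}, the second-phase regret is
\begin{align*}
\widetilde{O}\!\left(\sqrt{dT} + \frac{L^2\mu^2\kappa^3 r(m+n)}{\sigma_r T_1\omega_{\min}(m\wedge n)}\sqrt{d}\,T\right).
\end{align*}

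Finally, I would combine the three pieces: the total regret is $\widetilde{O}\big(T_1 + \sqrt{dT} + \frac{C\sqrt{d}\,T}{T_1}\big)$ where $C = \frac{L^2\mu^2\kappa^3 r(m+n)}{\sigma_r\omega_{\min}(m\wedge n)}$ collects the misspecification constant, and then optimize the sum $T_1 + C\sqrt{d}\,T/T_1$ over $T_1$ by the AM-GM inequality, which is minimized at $T_1 = \Theta\big(\sqrt{C\sqrt{d}\,T}\big) = \Theta\big((C)^{1/2} d^{1/4}\sqrt{T}\big)$, giving a contribution of $\widetilde{O}\big((C)^{1/2} d^{1/4}\sqrt{T}\big)$. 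Plugging in $d = \Theta(r(m+n))$, $\omega_{\min} = \Theta(\rho_{\min}/n) = \Theta(\rho_{\min}(m\wedge n)/(mn))$ (under homogeneity $m=\Theta(n)$), $\sigma_r = \sigma_1/\kappa$ and the spikiness relation $L/\sigma_r \asymp$ (up to $\mu^2 r$ factors) $1/\sqrt{mn}$ from Lemma \ref{lem:spikiness}, one checks after routine algebra that $(C)^{1/2} d^{1/4}$ matches the claimed prefactor $L\mu^2\kappa^2 r^{5/4}\frac{(m+n)^{3/4}(mn)^{1/4}}{\sqrt{m\rho_{\min}}(m\wedge n)^{1/2}}$, and that this dominates the $\sqrt{dT}$ term; the corresponding optimal $T_1$ equals $\widetilde{\Theta}(f(T,M,\rho))$ as in \eqref{eq:T1_def_regret_app}, since $f$ is exactly this optimizer. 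The main obstacle I anticipate is twofold: first, verifying that the \textsc{SupLinUCB} variant genuinely tolerates the dependent-noise structure induced by the first-phase estimation (the features $\phi_{i,j}$ depend on $\widehat U,\widehat V$, which depend on the first-phase randomness, but are independent of the second-phase noise, so conditioning on the first phase makes the second phase a bona fide misspecified linear bandit — this is precisely why the two-phase split is used); and second, the bookkeeping of the numerous instance parameters ($L,\mu,\kappa,r,m,n,\rho_{\min}$) through the AM-GM optimization and the Lemma \ref{lem:spikiness} substitution, which must be done carefully to land exactly on $f(T,M,\rho)$ rather than something off by a $\kappa$ or $\mu$ power.
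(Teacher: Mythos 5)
Your plan follows essentially the same route as the paper's proof in Appendix \ref{app:regret_min}: the same phase decomposition, the bound $R_1^\pi \le 2T_1\Vert M\Vert_{\max}$, the reduction to a misspecified linear bandit with $\epsilon_{\max}$ controlled by Corollary \ref{corr:misspecification}, the Takemura et al.\ \textsc{SupLinUCB} guarantee, the $\delta = 1/T$ handling of the bad event, and the final balancing of $T_1$ against $1/T_1$ using Lemma \ref{lem:spikiness} and $\omega_{\min}=\rho_{\min}/n$. Two small but concrete points need fixing. First, in the balancing step you optimize $T_1 + C\sqrt{d}\,T/T_1$, having absorbed $\Vert M\Vert_{\max}$ into the $O(\cdot)$; since $\Vert M\Vert_{\max}$ is an instance parameter, the correct objective is $\Vert M\Vert_{\max}T_1 + C\sqrt{d}\,T/T_1$, whose minimizer is $\sqrt{C\sqrt{d}\,T/\Vert M\Vert_{\max}}$ and whose value is $\sqrt{\Vert M\Vert_{\max}C\sqrt{d}\,T}$ --- dropping the weight leaves you with $f/\sqrt{\Vert M\Vert_{\max}}$ rather than $f$, and it is exactly the $\Vert M\Vert_{\max}/\sigma_r$ ratio from Lemma \ref{lem:spikiness} that makes the weighted optimum land on $f$. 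Second, the claim is stated for all $T\ge 1$, so you must also handle the regime where the prescribed $T_1$ exceeds $T$ or fails the sample-size condition of Corollary \ref{corr:misspecification}; the paper's Step 4 covers this by showing that in that regime the trivial bound $R^\pi(T)\le 2\Vert M\Vert_{\max}T$ is already $\widetilde{O}(f(T,M,\rho))$.
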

Here, $\widetilde{\Theta}(\cdot), \widetilde{O}(\cdot)$ may hide logarithmic factors in $T$, $m$, $n$ and $L/\sigma$. Theorem \ref{thm:main_ESRED_context} is proved in Appendix \ref{app:regret_min}. When the reward matrix $M$ and context distribution $\rho$ are homogeneous, the regret upper bound  of $\rsslin$ simplifies to:
$$
R^\pi(T) = \widetilde{O}\left(Lr^{5/4} n^{3/4}\sqrt{T} \right).
$$

Observe that if we were to use an algorithm meant for contextual bandits with no structure, then the best regret guarantee we can obtain is $\widetilde{O}(\sqrt{mnT})$ \cite{lattimore2020bandit}. In contrast, our regret upper bound scales as $\widetilde{O}(n^{3/4}\sqrt{T})$, and clearly $n^{3/4}\sqrt{T} \ll n\sqrt{T} = \Theta(\sqrt{mnT})$ for all $T \ge 1$ whenever $m = \Theta(n)$. In other words, our algorithm truly takes advantage of the low-rank structure. We finally remark that \citet{lee2023context} also provides non-trivial regret bounds for general low-rank reward matrices, as discussed in the introduction, but only when $T = \widetilde{O}((m+n)^{2(1+(3r)^{-1})})$.

\begin{algorithm}[t]
   \caption{\textsc{\textbf{R}ecover \textbf{S}ubspace for \textbf{R}egret \textbf{MIN}imization (\rsslin)}}
   \label{algo:ESRED}
\begin{algorithmic}
   \STATE {\bfseries Input:} Budget of rounds $T$ and $T_1$, 
 context distribution $\rho$ 
   \STATE \emph{\color{purple}\underline{Phase 1:} Subspace recovery}
   \STATE Collect $T_1$ samples according to the uniform policy
    \STATE Construct $\widehat{U},\widehat{V}$ as described in \eqref{eq:SVD_hat_M}
    \STATE \emph{\color{purple}\underline{Phase 2:} Solving a misspecified linear bandit}
    \STATE Construct $\lbrace \phi_{i,j}: (i,j) \in [m]\times[n] \rbrace$ as in \eqref{eq:phi}
    \STATE Set $T_2\leftarrow T-T_1$, $\delta\leftarrow 1/T$, $\beta(\delta)$ as in \eqref{alg1:beta}, 
    \STATE \qquad $\Lambda \leftarrow \sigma^2/(\|M\|_{\max}^{2}mn) I_d$
    \STATE Run \textsc{SupLinUCB} for the remaining rounds with inputs $T_2$, $\beta(\delta)$ and $\Lambda$.
\end{algorithmic}
\end{algorithm}


We also explain in Appendix \ref{subsec:main_reduction_almost_lowd} why reducing the problem to almost-low-dimensional linear bandits, a framework employed for regret minimization in previous works such as \cite{jun2019bilinear,lu2021low,kang2022efficient}, gives suboptimal regret guarantees compared to our reduction to misspecified linear bandits. Additionally, we present an algorithm based on the reduction to almost-low-dimensional linear bandits that leverages our entrywise guarantees, with regret $R^\pi(T) = \widetilde{O}((m+n)\sqrt{T})$ in the homogeneous case. Our results complement observations made in \cite{kang2024efficient}, where reducing to misspecified linear bandits yields $\Theta(r^{1/4}(m+n)^{1/4})$ higher regret than reducing to almost-low-dimensional linear bandits.

As mentioned in Section \ref{subsec:main_guarantees_two_to_inf}, using spectral methods introduces in the subspace recovery error bound a dependence on $L = \Vert M\Vert_{\max} \vee \sigma$ instead of $\sigma$ (see also \cite{keshavan2009}). This may affect the regret bounds one may obtain in settings where $\sigma \ll \Vert M\Vert_{\max}$. Indeed, should this improvement be possible, one may expect a regret upper bound of order  $\widetilde{O}(\sigma (n+m)^{3/4} \sqrt{T} + \Vert M \Vert_{\max} (n+m) )$ when $M$ and $\omega$ are homogeneous (see Appendix \ref{app:debiased_regret} for more details) which is better than ours for $\sigma \ll \Vert M \Vert_{\max}$. 
We believe that the dependence on $\sigma$ can be improved using either the method proposed in Appendix \ref{subsec:max-norm-improvements} or methods based on nuclear norm regularization \cite{chen2020noisy} as discussed in Appendix \ref{app:debiased_regret}.

\begin{proof}[Proof sketch of Theorem \ref{thm:main_ESRED_context}]
For the sake of simplicity, we assume that $\mu,\kappa=\Theta(1)$ and $m = \Theta(n)$ in the proof sketch. The regret accumulated during the exploration phase (Phase 1) is upper bounded by $ 2T_1 \Vert M\Vert_{\max}$. To upper bound the regret of the second phase, we observe that we face a misspecified linear bandit with misspecification $\epsilon_{\max} = \widetilde O \left( (L^2 r)/(T_1 \sigma_r \omega_{\min} ) \right)$ for $T_1$ sufficiently large according to Corollary \ref{corr:misspecification}, and we apply the regret bound for misspecified linear bandits from Theorem 1 in \cite{takemura2021parameter} (see Theorem \ref{thm:mainthm_LowSupLinUCB_misspec}). Putting all of this together, we get:
\begin{align*}
    R^\pi(T) = \widetilde{O}\left( T_1 \Vert M \Vert_{\max} + \frac{L^2}{T_1} \frac{r^{3/2}\sqrt{n}T }{\sigma_r \omega_{\min}} +  L\sqrt{rnT}  \right).
\end{align*}
Choosing $T_1$ that minimizes this expression while using that $\Vert M\Vert_{\max}/\sigma_r \lesssim r/\sqrt{mn}$ (see Lemma \ref{lem:spikiness}) yields the claimed regret.

\end{proof}



\section{Conclusion}




We devised new algorithms achieving state-of-the-art guarantees in several learning tasks for contextual low-rank bandits. A key observation behind our results is that subspace recovery guarantees in the two-to-infinity norm can be leveraged to perform a succinct reduction from a low-rank bandit to a \emph{misspecified} linear bandit problem. This motivates a two-stage approach as an algorithm design principle, and constitutes a core message of our work. 

For PE and BPI, we followed such a two-stage approach and proposed $\rspe$ and $\rsbpi$ whose minimax sample complexity typically scales as $\widetilde{O}(r(m+n)/\varepsilon^2)$. We also considered a single-stage approach to propose $\dsmpe$ and $\dsmbpi$ with a similar minimax sample complexity. However, experimentally, $\rspe$ and $\rsbpi$ exhibit far superior performance in comparison with $\dsmpe$ and $\dsmbpi$. Shedding some light on this discrepancy is of future interest.  

For regret minimization, the benefits of the two-stage approach are even more pronounced. Indeed, the proposed \rsslin~ achieves a non-trivial regret upper bound typically scaling as $\widetilde{O}(r^{5/4}(m+n)^{3/4}\sqrt{T})$. It is unclear if achieving  $\widetilde{O}(\sqrt{\mathrm{poly}(r)(m+n)T})$ is possible for general low-rank reward matrices, but we believe this to be an exciting research question.





\newpage
\section*{Acknowledgements}
This research was supported by the Wallenberg AI, Autonomous Systems and Software Program
(WASP) funded by the Knut and Alice Wallenberg Foundation, the Swedish Research Council (VR), and Digital Futures.

\section*{Impact Statement} 
This paper presents work whose goal is to advance the field of Machine Learning. There are many potential societal consequences of our work, none which we feel must be specifically highlighted here.

\bibliography{references, bandit}
\bibliographystyle{icml2024}

\newpage
\appendix
\onecolumn

\section{Additional Related Work}\label{app:rel}

In this appendix, we provide additional related work. Most specifically, we survey some of the recent algorithms for \emph{misspecified} contextual linear bandits.  Next, we also discuss bilinear bandits from which the idea of reducing a low-rank bandit problem to an \emph{almost low-dimensional linear bandit} originated. 

\paragraph{Misspecified contextual linear bandits.} 
Misspecified contextual linear bandits have recently received a lot of attention \citep{gopalan2016low, ghosh2017misspecified,lattimore2020learning,foster2020adapting, zanette2020learning, takemura2021parameter}. The best achievable minimax regret bounds for this setting are  $\widetilde{O}(d\sqrt{T} + \epsilon  \sqrt{d} T )$ where $d$ is the ambient dimension, $\epsilon$ is the misspecification, and $T$ is the time horizon. This guarantee is achievable even if $\epsilon$ is not known as proven by \cite{foster2020adapting}. Interestingly, \cite{lattimore2020learning} show that one cannot hope to improve the dependence on $d$ in the term $\epsilon \sqrt{d}$ due to misspecification for most interesting regimes. However, the first term, that is of order $d\sqrt{T}$, is due to the assumption that the set of arms per context is continuous or infinite. In fact, \cite{lattimore2020learning} also shows that a minimax regret of order $\widetilde{O}(\sqrt{d\log(K)T} + \epsilon \sqrt{d}T)$ is achievable when the misspecified linear bandit has a finite number of arms $K$. Furthermore, in the case of misspecified contextual linear bandits with a finite number of arms per context, say $K$, \cite{takemura2021parameter} show that \textsc{SupLinUCB}, initially proposed by \cite{chu2011contextual},  achieves a regret guarantee of order $\widetilde{O}(\sqrt{d\log\phantom{)\!\!\!}^2(K)T} + \epsilon \sqrt{d \log(K)} T)$ without knowledge of $\epsilon$. They further propose a variant of \textsc{SupLinUCB} that has an improved regret upper bound of order $\widetilde{O}(\sqrt{d\log(K)T} + \epsilon \sqrt{d}T)$.

\paragraph{Bilinear bandits with low-rank structure.} In this setting, originally introduced by \cite{jun2019bilinear}, in each round, the learner selects a pair of features $(x, y)$ in $\cX \times \cY \subseteq \RR^{m} \times \RR^{n}$, and then observes a reward with expected value $x^\top M y$, where $M$ is assumed to be of rank $r \ll m + n$. This setting is closely related to ours, because if one assumes that $\cX$ and $\cY$ correspond to the canonical basis in $\RR^{m}$ and $\RR^{n}$, respectively, then one recovers our reward model. To solve this problem, \cite{jun2019bilinear} proposed \textsc{ESTR}, a two-stage algorithm that first estimates the singular subspaces of $M$ using spectral methods, and then solves an almost low-dimensional linear bandit problem, using an adaptation of \textsc{OFUL} \citep{abbasi2011improved} referred to as \textsc{LowOFUL}.  The key insight is that subspace recovery enables a reformulation of bilinear bandit problems as low-dimensional linear bandits. This lead to the first regret guarantees of order $\widetilde{O}((m+n)^{3/2}\sqrt{T})$. Since then, there has been many subsequent work with new algorithms, improvements, and various generalizations \citep{jang2021improved, lu2021low, kang2022efficient, cai2023doubly}. Notably, \cite{kang2022efficient} proposes an algorithm, \textsc{G-ESTT}, to solve generalized bilinear bandit problems. Their algorithm also achieves a regret guarantee of order $\widetilde{O}((m+n)^{3/2}\sqrt{T})$, and relies on the same two-stage algorithm design idea as \cite{jun2019bilinear}, but uses instead a novel subspace estimator that is based on Stein's method. A major limitation of existing state-of-the-art algorithms for bilinear bandits is that when $\vert \cX \times \cY \vert = \widetilde{O}(mn)$, as is the case in our setting, the obtained regret guarantees are no better than the ones achievable by minimax optimal algorithms for unstructured bandits. Indeed, an algorithm such as \textsc{UCB} that does not use the low-rank structure achieves a regret rate of order $\widetilde{O}(\sqrt{mnT})$  \citep{lattimore2020bandit}).

It is worth noting that in a concurrent work, \cite{jang2024efficient} derive a regret bound that scales with $\tilde{O}((m+n)^{1/4}\sqrt{B T})$ where $B$ is a constant that depends on the action set geometry. This constant appears in their bounds because they use an experimental design approach. It is unclear how this constant would scale with our action set, but for the case of the unit sphere, which would also include our action set, they obtain a regret bound of order $\widetilde{O}( (m+n)^{5/4}\sqrt{T})$.  This still does not attain a sub-linear dependence on $m+n$ as we do in our work.

\newpage

\section{Singular Subspace Recovery}
\label{sec:app_sing_sub_rec}

In this appendix, we give the proofs of our results on the singular subspace recovery in the two-to-infinity norm. In Appendix \ref{app:subspace-recovery-1}, we provide the proof of Theorem \ref{thm:recovery-two-to-infinity-norm} which in fact follows from Theorem \ref{thm:recovery-two-to-infinity-norm-2}. The latter theorem is also a guarantee on the two-to-infinity norm but where the estimation error is evaluated in an alternative way. The proof of Theorem \ref{thm:recovery-two-to-infinity-norm-2} is provided in Appendix \ref{app:subspace-recovery-2} and features two key ideas, namely the leave-one-out analysis \citep{abbe2020entrywise, chen2021spectral} and the Poisson approximation argument \citep{stojanovic2024spectral}.


\subsection{ Proof of Theorem \ref{thm:recovery-two-to-infinity-norm} -- Recovery in $\Vert \cdot \Vert_{2 \to \infty}$} \label{app:subspace-recovery-1}

We now present the two key results from which Theorem \ref{thm:recovery-two-to-infinity-norm} follows. We defer the proofs of these two claims to Appendices \ref{app:subspace-recovery-2} and \ref{subsec:app_proof_lemma_recovery-two-to-infinity-norm}, respectively.

\begin{theorem}\label{thm:recovery-two-to-infinity-norm-2}
    Let $\delta \in (0, 1)$. Then, the following event 
    \begin{align*}
    \max\left( \Vert U - \widehat{U} (\widehat{U}^\top U)\Vert_{2 \to \infty}, \Vert V - \widehat{V} (\widehat{V}^\top V)\Vert_{2 \to \infty} \right)\le C \frac{L }{\sigma_r(M)} \sqrt{\frac{\mu^2\kappa^2 r (m+n) }{T\omega_{\min}(m\wedge n)}\log^3\left(\frac{(m+n)T}{\delta}\right)},\
    \end{align*}
    holds with probability at least $1- \delta$ as long as 
    \begin{align*}
    \quad T \geq c \frac{L^2 (m+n)}{\sigma_r^2(M) \omega_{\min}} \log^3\left( \frac{(m+n)T}{\delta}\right)
    \end{align*}  
    for some universal constants $C,c>0$. 
\end{theorem}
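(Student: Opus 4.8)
\textbf{Proof strategy for Theorem \ref{thm:recovery-two-to-infinity-norm-2}.}

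The plan is to establish the two-to-infinity bound on $\Vert U - \widehat{U}(\widehat{U}^\top U)\Vert_{2\to\infty}$ (the bound for $V$ follows by symmetry, working with $\widetilde{M}^\top$) via the \emph{leave-one-out} analysis adapted to our sampling model. First I would set up the perturbation decomposition: writing $E = \widetilde{M} - M$, the standard first-order expansion for spectral subspace perturbation gives
\begin{align*}
    U - \widehat{U}(\widehat{U}^\top U) \approx E \widehat{V} \widehat{\Sigma}^{-1} + (\text{higher-order terms}),
\end{align*}
so that controlling $\Vert U - \widehat{U}(\widehat{U}^\top U)\Vert_{2\to\infty}$ reduces to bounding each row $\Vert e_i^\top E \widehat{V}\widehat{\Sigma}^{-1}\Vert_2$, plus residual terms that are handled using the already-available operator-norm and Frobenius-norm control on $E$ (which in turn comes from matrix concentration for the sum of independent rank-one terms in \eqref{eq:def_Mtilde_uniform}). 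The operator norm bound $\Vert E\Vert_{\mathrm{op}} \lesssim L\sqrt{(m+n)/(T\omega_{\min})}\cdot\mathrm{polylog}$, together with $\widehat{\Sigma}^{-1}\approx \Sigma^{-1}$ (valid under the sample-size condition, via Weyl's inequality), is what produces the $\sigma_r(M)$ in the denominator and the $\kappa$ factor.

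The core difficulty — and the place where the model-specific work happens — is that $e_i^\top E$ and $\widehat{V}$ are \emph{statistically dependent}: the $i$-th row of the noise $E$ influences $\widehat{V}$ through the SVD. The leave-one-out device circumvents this by introducing, for each $i$, an auxiliary matrix $\widetilde{M}^{(i)}$ that agrees with $\widetilde{M}$ everywhere except that its $i$-th row is replaced by the (noiseless, correctly-scaled) expectation $M_{i,:}$; let $\widehat{V}^{(i)}$ be the corresponding right singular subspace. Then $e_i^\top E$ is independent of $\widehat{V}^{(i)}$, so $\Vert e_i^\top E \widehat{V}^{(i)}\widehat{\Sigma}^{-1}\Vert_2$ can be bounded by a conditional concentration inequality for a sub-Gaussian/bounded vector hit by a fixed orthonormal matrix. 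To swap $\widehat{V}$ for $\widehat{V}^{(i)}$ one proves a \emph{leave-one-out stability} bound $\Vert \widehat{V}(\widehat{V}^\top\widehat{V}^{(i)}) - \widehat{V}^{(i)}\Vert$ is small (operator norm), again using $\Vert E \Vert_{\mathrm{op}}$ and the eigengap; the incoherence of $M$ (through $\mu$) enters here to control how much a single row perturbation can move the subspace. The remaining ingredient is that, in our setting, the number of observed entries in row $i$ is itself random (Binomial), so the per-entry variance proxy $1/(T\omega_{i,j})$ and the magnitude of each summand require a high-probability bound on this count; this is exactly where I would invoke the \emph{Poisson approximation} argument of \cite{stojanovic2024spectral} to decouple the entries and reduce to a clean Poissonized model where independence across entries is restored, yielding the $(m\wedge n)$ and $\omega_{\min}$ dependence and the cubic-log factor.

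I expect the main obstacle to be making the leave-one-out argument rigorous under the Poissonized, heteroskedastic sampling model rather than the textbook i.i.d.\ Gaussian-entries model: one must carefully track the interaction between (i) the high-probability events on which row-wise observation counts are well-behaved, (ii) the conditional sub-Gaussian concentration for $e_i^\top E\widehat{V}^{(i)}$ given those counts, and (iii) the union bound over all $m$ (resp.\ $n$) leave-one-out copies, ensuring the $\log^3$ factor suffices and the failure probabilities compose to $\delta$. Once the row-wise bound $\Vert e_i^\top(U - \widehat{U}(\widehat{U}^\top U))\Vert_2 \lesssim \frac{L}{\sigma_r(M)}\sqrt{\frac{\mu^2\kappa^2 r(m+n)}{T\omega_{\min}(m\wedge n)}\mathrm{polylog}}$ holds uniformly, taking the maximum over $i$ and combining with the symmetric argument for $V$ finishes the proof; the deduction of Theorem \ref{thm:recovery-two-to-infinity-norm} from this is then the elementary observation that $d_{2\to\infty}(U,\widehat{U}) = \Vert UU^\top - \widehat{U}\widehat{U}^\top\Vert_{2\to\infty} \lesssim \Vert U - \widehat{U}(\widehat{U}^\top U)\Vert_{2\to\infty}$ up to constants, deferred to Appendix \ref{subsec:app_proof_lemma_recovery-two-to-infinity-norm}.
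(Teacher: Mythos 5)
Your proposal is correct and follows essentially the same route as the paper's proof in Appendix B.2: a Poisson approximation to restore entrywise independence, a perturbation decomposition whose bottleneck is the cross term coupling $E$ with the estimated subspace, and a leave-one-out construction (one auxiliary matrix per row/column, with a stability bound and a union bound over the $m+n$ copies) to break that dependence, with the row-wise concentration of $E_{\ell,:}A$ for $A$ independent of $E_{\ell,:}$ supplying the final rate. The only cosmetic difference is that the paper runs the argument on the symmetric dilation of $M$ so that $U$ and $V$ are treated simultaneously, rather than handling $U$ directly and appealing to symmetry for $V$.
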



\begin{lemma}\label{lem:recovery-two-to-infinity-norm}
    Under the event $\Vert \widetilde{M} - M \Vert_\op \le \sigma_r(M)/2$,  we have  
    \begin{align*}
        d_{2\to\infty}(U, \widehat{U}) = \Vert   UU^\top - \widehat{U}\widehat{U}^\top \Vert_{2 \to \infty} & \le 2 \Vert U - \widehat{U} \widehat{U}^\top U \Vert_{2 \to \infty}  + \frac{4 \Vert U \Vert_{2 \to \infty} \Vert \widetilde{M} - M \Vert_\op}{\sigma_r(M)},\\ 
        d_{2\to\infty}(V, \widehat{V}) = \Vert   VV^\top - \widehat{V}\widehat{V}^\top \Vert_{2 \to \infty} & \le 2 \Vert V - \widehat{V} \widehat{V}^\top V \Vert_{2 \to \infty}  + \frac{4 \Vert V \Vert_{2 \to \infty} \Vert \widetilde{M} - M \Vert_\op}{\sigma_r(M)}.
    \end{align*}
\end{lemma}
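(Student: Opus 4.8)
The plan is to prove Lemma~\ref{lem:recovery-two-to-infinity-norm} by a direct triangle-inequality decomposition of the projector difference. I will only spell out the argument for $U$; the argument for $V$ is identical by symmetry (working with $\widetilde{M}^\top$ in place of $\widetilde{M}$, whose top $r$ left singular vectors are the $\widehat{V}$'s). The starting point is the algebraic identity
\begin{align*}
UU^\top - \widehat{U}\widehat{U}^\top = (U - \widehat{U}\widehat{U}^\top U)U^\top + \widehat{U}\widehat{U}^\top(U U^\top - \widehat{U}\widehat{U}^\top).
\end{align*}
Taking $\Vert\cdot\Vert_{2\to\infty}$ and using submultiplicativity of the two-to-infinity norm against the operator norm on the right ($\Vert AB\Vert_{2\to\infty}\le \Vert A\Vert_{2\to\infty}\Vert B\Vert_{\op}$), together with $\Vert U^\top\Vert_{\op}=1$, gives
\begin{align*}
d_{2\to\infty}(U,\widehat{U}) \le \Vert U - \widehat{U}\widehat{U}^\top U\Vert_{2\to\infty} + \Vert \widehat{U}\widehat{U}^\top(UU^\top - \widehat{U}\widehat{U}^\top)\Vert_{2\to\infty}.
\end{align*}

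For the second term I would write $\widehat{U}\widehat{U}^\top = UU^\top - (UU^\top - \widehat{U}\widehat{U}^\top)$ and expand, so that $\widehat{U}\widehat{U}^\top(UU^\top - \widehat{U}\widehat{U}^\top) = UU^\top(UU^\top-\widehat{U}\widehat{U}^\top) - (UU^\top-\widehat{U}\widehat{U}^\top)^2$. The first piece is $U\,[\,U^\top(UU^\top-\widehat{U}\widehat{U}^\top)\,]$, which in $\Vert\cdot\Vert_{2\to\infty}$ is bounded by $\Vert U\Vert_{2\to\infty}\Vert UU^\top-\widehat{U}\widehat{U}^\top\Vert_{\op}$; the second piece is handled by pulling out one factor in $\Vert\cdot\Vert_{2\to\infty}$ and the other in $\Vert\cdot\Vert_{\op}$, giving $\Vert UU^\top-\widehat{U}\widehat{U}^\top\Vert_{2\to\infty}\Vert UU^\top-\widehat{U}\widehat{U}^\top\Vert_{\op}$. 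Hence I need a bound on the operator-norm distance $\Vert UU^\top-\widehat{U}\widehat{U}^\top\Vert_{\op}$. This is exactly where the hypothesis $\Vert\widetilde{M}-M\Vert_{\op}\le \sigma_r(M)/2$ enters: by a Davis--Kahan / Wedin-type perturbation bound (the $\sin\Theta$ theorem), since the $r$-th singular value gap of $M$ is $\sigma_r(M)$ and the perturbation is at most $\sigma_r(M)/2$, one gets $\Vert UU^\top-\widehat{U}\widehat{U}^\top\Vert_{\op}\le 2\Vert\widetilde{M}-M\Vert_{\op}/\sigma_r(M)\le 1$. Plugging this in, the second term above is at most $\tfrac{2\Vert U\Vert_{2\to\infty}\Vert\widetilde{M}-M\Vert_{\op}}{\sigma_r(M)} + \Vert UU^\top-\widehat{U}\widehat{U}^\top\Vert_{2\to\infty}\cdot 1$, and absorbing the (now appearing on both sides) term $d_{2\to\infty}(U,\widehat{U})$ would over-count; instead I will be slightly more careful and avoid self-reference.

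To avoid the circularity, the cleaner route is: bound the second term $\Vert\widehat{U}\widehat{U}^\top(UU^\top-\widehat{U}\widehat{U}^\top)\Vert_{2\to\infty}$ directly as $\Vert\widehat{U}\widehat{U}^\top U - \widehat{U}\widehat{U}^\top\widehat{U}\widehat{U}^\top\Vert_{2\to\infty}\cdot$ --- wait, cleaner still: note $\widehat{U}\widehat{U}^\top(UU^\top-\widehat{U}\widehat{U}^\top) = \widehat{U}(\widehat{U}^\top U)U^\top - \widehat{U}\widehat{U}^\top$, and then $\widehat{U}(\widehat{U}^\top U)U^\top - \widehat{U}\widehat{U}^\top = (\widehat{U}(\widehat{U}^\top U) - \widehat{U}\widehat{U}^\top U)U^\top + \widehat{U}\widehat{U}^\top U U^\top - \widehat{U}\widehat{U}^\top$; the first term vanishes, and for the rest write $\widehat{U}\widehat{U}^\top = (I-\widehat{U}_\perp\widehat{U}_\perp^\top)$ to relate to $\widehat{U}_\perp$. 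The most robust phrasing, and the one I would ultimately commit to: use $\Vert AB\Vert_{2\to\infty}\le \Vert A\Vert_{2\to\infty}\Vert B\Vert_{\op}$ with $A = UU^\top-\widehat{U}\widehat{U}^\top$ and $B = \widehat{U}\widehat{U}^\top$ to get $\Vert(UU^\top-\widehat{U}\widehat{U}^\top)\widehat{U}\widehat{U}^\top\Vert_{2\to\infty}\le d_{2\to\infty}(U,\widehat{U})$, then combine with $\Vert(UU^\top-\widehat{U}\widehat{U}^\top)UU^\top\Vert_{2\to\infty}\le \Vert(UU^\top-\widehat{U}\widehat{U}^\top)U\Vert_{2\to\infty}$ and the decomposition $UU^\top-\widehat{U}\widehat{U}^\top = (UU^\top-\widehat{U}\widehat{U}^\top)UU^\top + (UU^\top-\widehat{U}\widehat{U}^\top)\widehat{U}_\perp\widehat{U}_\perp^\top$. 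This isolates $\Vert(UU^\top-\widehat{U}\widehat{U}^\top)U\Vert_{2\to\infty}$, which expands as $\Vert U - \widehat{U}\widehat{U}^\top U\Vert_{2\to\infty}$, plus a cross term controlled by $\Vert U\Vert_{2\to\infty}$ and $\Vert\widehat{U}_\perp^\top U\Vert_{\op}$; the latter is again $\le 2\Vert\widetilde{M}-M\Vert_{\op}/\sigma_r(M)$ by Wedin. Tracking the constants ($2$ in front of the first term, $4$ in front of the perturbation term) matches the claimed inequality. The main obstacle is purely bookkeeping: choosing the decomposition so that no term refers back to $d_{2\to\infty}(U,\widehat{U})$ itself, and correctly invoking the Davis--Kahan bound to control both $\Vert UU^\top-\widehat{U}\widehat{U}^\top\Vert_{\op}$ and $\Vert\widehat{U}_\perp^\top U\Vert_{\op}$ by $2\Vert\widetilde{M}-M\Vert_{\op}/\sigma_r(M)$ under the stated event. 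No concentration or probabilistic input is needed here; the randomness is entirely deferred to verifying $\Vert\widetilde{M}-M\Vert_{\op}\le\sigma_r(M)/2$, which is done elsewhere.
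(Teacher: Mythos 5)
Your core strategy---decompose the projector difference algebraically, extract $\Vert U-\widehat{U}\widehat{U}^\top U\Vert_{2\to\infty}$ via $\Vert AB\Vert_{2\to\infty}\le\Vert A\Vert_{2\to\infty}\Vert B\Vert_\op$, and control the residual operator-norm quantities by Davis--Kahan under the stated event---is sound, and it is genuinely different from the paper's route, which aligns $\widehat{U}$ to $U$ through the sign matrix $W=\mathrm{sgn}(\widehat{U}^\top U)$ and the identity $\Vert\widehat{U}^\top U-\mathrm{sgn}(\widehat{U}^\top U)\Vert_\op=\Vert\sin\Theta(\widehat{U},U)\Vert_\op^2$. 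But the version you finally commit to does not close. The decomposition $UU^\top-\widehat{U}\widehat{U}^\top=(UU^\top-\widehat{U}\widehat{U}^\top)UU^\top+(UU^\top-\widehat{U}\widehat{U}^\top)\widehat{U}_\perp\widehat{U}_\perp^\top$ is false: it would require $UU^\top+\widehat{U}_\perp\widehat{U}_\perp^\top=I$, whereas the resolutions of identity available are $UU^\top+U_\perp U_\perp^\top=I$ and $\widehat{U}\widehat{U}^\top+\widehat{U}_\perp\widehat{U}_\perp^\top=I$; you must choose one consistently. With the correct choice $(UU^\top-\widehat{U}\widehat{U}^\top)(UU^\top+U_\perp U_\perp^\top)$, the first piece indeed yields $\Vert U-\widehat{U}\widehat{U}^\top U\Vert_{2\to\infty}$, but the cross term equals $-\widehat{U}(\widehat{U}^\top U_\perp)U_\perp^\top$ and is naturally bounded by $\Vert\widehat{U}\Vert_{2\to\infty}\Vert \widehat{U}^\top U_\perp\Vert_\op$---note $\Vert\widehat{U}\Vert_{2\to\infty}$, not the $\Vert U\Vert_{2\to\infty}$ appearing in the statement. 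Converting via $\Vert\widehat{U}\Vert_{2\to\infty}\le\Vert U\Vert_{2\to\infty}+d_{2\to\infty}(U,\widehat{U})$ reintroduces the quantity you are bounding, so some self-reference is unavoidable in this approach; you do not address this step.

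The deeper issue is that your aversion to the self-referential term is misplaced. Under the event, Davis--Kahan gives $\Vert UU^\top-\widehat{U}\widehat{U}^\top\Vert_\op=\Vert\sin\Theta(U,\widehat{U})\Vert_\op\le c<1$, so a term of the form $c\cdot d_{2\to\infty}(U,\widehat{U})$ on the right-hand side is absorbed into the left at the cost of a factor $1/(1-c)$; there is no over-counting, and this absorption is exactly where the constants $2$ and $4$ come from. The paper's own proof performs the same maneuver when it bounds $\Vert U-\widehat{U}W\Vert_{2\to\infty}$ by an expression containing $\tfrac{1}{2}\Vert U-\widehat{U}W\Vert_{2\to\infty}$. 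In fact your \emph{first} attempt---writing $\widehat{U}\widehat{U}^\top(UU^\top-\widehat{U}\widehat{U}^\top)=UU^\top(UU^\top-\widehat{U}\widehat{U}^\top)-(UU^\top-\widehat{U}\widehat{U}^\top)^2$ and bounding the square by $d_{2\to\infty}(U,\widehat{U})\Vert UU^\top-\widehat{U}\widehat{U}^\top\Vert_\op$---already constitutes a complete proof once you absorb; you abandoned the working argument in favor of a broken one. One caveat you gloss over: whether absorption succeeds, and what constants result, depends on which Davis--Kahan constant you invoke ($\Vert\sin\Theta\Vert_\op\le\Vert\widetilde{M}-M\Vert_\op/\sigma_r(M)$ gives $c\le 1/2$ and a clean factor of $2$, while the cruder $\le 2\Vert\widetilde{M}-M\Vert_\op/\sigma_r(M)$ only gives $c\le 1$ and the absorption fails); your claim that the constants "match" is asserted, not verified.
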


Now the proof of Theorem \ref{thm:recovery-two-to-infinity-norm} follows straightforwardly. First, we note that under the event that $\Vert \widetilde{M} - M\Vert_\op \le \sigma_r(M)/2$, by Lemma \ref{lem:recovery-two-to-infinity-norm} we only need to upper bound the terms $\Vert U - \widehat{U} \widehat{U}^\top U \Vert_{2 \to \infty},\Vert V - \widehat{V} \widehat{V}^\top V \Vert_{2 \to \infty},\Vert \widetilde{M} - M \Vert_\op$. 
Now, observe that we may use Theorem  \ref{thm:recovery-two-to-infinity-norm-2} to upper bound $\Vert U - \widehat{U} \widehat{U}^\top U \Vert_{2 \to \infty}$ and $\Vert   V - \widehat{V}\widehat{V}^\top V \Vert_{2 \to \infty}$ with high probability, and use Proposition \ref{prop:reward-concentration-1} to upper bound the term $\Vert \widetilde{M} - M \Vert_\op$ with high probability. The obtained upper bounds combined with the fact that 
$\Vert U\Vert_{2 \to \infty} \vee \Vert V \Vert_{2\to \infty}  \le \mu \sqrt{r/(m\wedge n)}$ yield the final result.

\subsection{Proof of Theorem \ref{thm:recovery-two-to-infinity-norm-2}} \label{app:subspace-recovery-2}

The proof of Theorem \ref{thm:recovery-two-to-infinity-norm-2} is rather involved but follows the same steps as the proof of Lemma 30 in \cite{stojanovic2024spectral}, with the slight difference that in \cite{stojanovic2024spectral}, it is assumed that $\omega_{i,j}=1/(mn)$ for all $i,j$. For the sake of completeness, we highlight the key steps of the proof next.

\paragraph{Step 1: Poisson approximation.} For the sake of the analysis, it is rather convenient to describe the estimate $\widetilde{M}$ given in \eqref{eq:def_Mtilde_uniform} with an alternative random matrix which has the same distribution. For all $(i,j) \in [m]\times [n]$, define $Z_{i,j} = \sum_{t=1}^T \indicator_{\lbrace (i_t, j_t) = (i , j)\rbrace}$ and let $(\xi_{i,j,t}')$ be a sequence of i.i.d. random variables that have the same distribution as $\xi_1$. Now, observe that our matrix $\widetilde{M}$ has the same distribution as the random matrix $\widetilde{M}'$ defined as: 
\begin{align}\label{eq:def_Mtilde_alt}
     \forall (i,j) \in [m]\times [n], \qquad   \widetilde{M}'_{i,j} = \frac{1}{T \omega_{i,j}} \sum_{t=1}^{Z_{i,j}} (M_{i,j} + \xi_{i,j,t}' ).
\end{align}
Let $\PP$ denote the joint probability distribution of $(Z_{i,j})_{(i,j) \in [m]\times[n]}$ and $(\xi'_{i,j,t})_{(i,j)\in [m]\times [n], t \ge 1}$.

Next, we describe a compound Poisson random matrix model that will serve as an approximation of the random model \eqref{eq:def_Mtilde_alt}. Let $Y \in \RR^{m\times n}$ be a random matrix with independent entries, such that $Y_{i,j} \sim \textrm{Poisson}(T\omega_{i,j})$, for all $(i,j) \in [m]\times [n]$. Let us further define the random matrix $X$ taking values $\RR^{m \times n}$ as follows:
\begin{align}\label{eq:def_M_poisson}
        \forall (i,j) \in [m] \times [n], \qquad X_{i,j} = \frac{1}{T\omega_{i,j}}\sum_{t=1}^{Y_{i,j}} (M_{i,j} + \xi'_{i,j,t}). 
\end{align}
We observe that the entries of the matrix $X$ are independent and distributed according to compound Poisson distributions. 

Let $\PP'$ denote the joint probability distribution of $(Y_{i,j})_{(i,j) \in [m] \times [n]}$ and $(\xi_{i,j,t}')_{(i,j) \in [m]\times [n], t\ge 1}$. The Poisson approximation argument is formalized in the next lemma.

\begin{lemma}[Poisson Approximation]\label{lem:poisson-approx-rewards}
    Let $(\Omega, \mathcal{F} , \PP)$ (resp.  $(\Omega, \cF , \PP')$) be the probability space under the random matrix model \eqref{eq:def_Mtilde_alt}  (resp. \eqref{eq:def_M_poisson}). Then for any event $\mathcal{E} \in \mathcal{F}$, we have 
    \begin{align*}
        \PP \left( \mathcal{E}  \right)  \le e \sqrt{T} \; \PP'\left(\mathcal{E} \right).
    \end{align*}
\end{lemma}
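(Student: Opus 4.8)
The goal is to control the probability of an arbitrary event $\mathcal{E}$ under the multinomial-type model \eqref{eq:def_Mtilde_alt} by its probability under the Poissonized model \eqref{eq:def_M_poisson}. The key structural observation is that the only difference between the two probability spaces lies in the laws of the count variables: $(Z_{i,j})$ is a multinomial vector with parameters $T$ and $(\omega_{i,j})$, whereas $(Y_{i,j})$ is a vector of independent Poisson random variables with means $(T\omega_{i,j})$; the noise variables $(\xi'_{i,j,t})$ are the same i.i.d. sequence in both models and are independent of the counts. So the plan is to first reduce everything to a statement about the counts, then invoke the classical Poissonization fact that a Poisson vector, \emph{conditioned on its total}, is exactly multinomial.

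First I would write any event $\mathcal{E} \in \mathcal{F}$ by conditioning on the realization of the noise sequence: it suffices to prove the inequality for events of the form $\{(Z_{i,j})_{i,j} \in A\}$ for arbitrary $A \subseteq \mathbb{N}^{m\times n}$, uniformly over the noise randomness, since the noise has the same law and is independent of the counts under both $\PP$ and $\PP'$. (Concretely, condition on $(\xi'_{i,j,t})$, note $\mathcal{E}$ becomes a set of count-configurations, and take expectations at the end.) Second, I would use that under $\PP'$, the total count $N := \sum_{i,j} Y_{i,j}$ is $\mathrm{Poisson}(T)$ since $\sum_{i,j}\omega_{i,j}=1$, and that conditionally on $\{N=T\}$ the vector $(Y_{i,j})$ is distributed exactly as $(Z_{i,j})$ under $\PP$ — this is the standard multinomial-conditioning identity. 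Hence for any count-event $A$,
\begin{align*}
\PP'\big((Y_{i,j})\in A\big) \ge \PP'\big((Y_{i,j})\in A \mid N=T\big)\,\PP'(N=T) = \PP\big((Z_{i,j})\in A\big)\cdot \PP'(N=T).
\end{align*}
Third, I would lower bound $\PP'(N=T)=e^{-T}T^T/T!$ using Stirling's bound $T! \le e\sqrt{T}\,(T/e)^T$, which gives $\PP'(N=T) \ge 1/(e\sqrt{T})$, i.e.\ $\PP((Z_{i,j})\in A) \le e\sqrt{T}\,\PP'((Y_{i,j})\in A)$. Finally, un-condition on the noise: since the bound holds pointwise in the noise realization and the noise law is identical under both measures, taking expectations over $(\xi'_{i,j,t})$ yields $\PP(\mathcal{E}) \le e\sqrt{T}\,\PP'(\mathcal{E})$.

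I do not expect a serious obstacle here — the argument is essentially the textbook Poissonization trick plus Stirling. The one point requiring a little care is the bookkeeping in the first reduction step: making precise that an event in $\mathcal{F}$ (the joint $\sigma$-algebra of counts and noise) can be handled by fixing the noise and that the resulting slicewise inequality integrates correctly, given that counts and noise are independent with the noise marginal shared across $\PP$ and $\PP'$. Once that is set up cleanly, the chain of inequalities above closes the proof, and the constant $e\sqrt{T}$ comes out exactly from the Stirling estimate on $T!$.
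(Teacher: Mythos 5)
Your proposal is correct and follows essentially the same route as the paper: condition on the noise (which has the same law and is independent of the counts under both measures), reduce to a comparison between the multinomial count vector and its Poissonized version, and integrate back. The only difference is that the paper invokes this comparison as a black-box lemma (Lemma \ref{lem:poisson-approx}, borrowed from \cite{stojanovic2024spectral} and ultimately from Theorem 5.7 of \cite{mitzenmacher2017probability}), whereas you unpack its standard proof — conditioning the Poisson vector on its total $N=T$ to recover the multinomial law and lower bounding $\PP'(N=T)=e^{-T}T^T/T!\ge 1/(e\sqrt{T})$ via Stirling — which is exactly where the constant $e\sqrt{T}$ comes from.
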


The proof of Lemma \ref{lem:poisson-approx-rewards} follows from Lemma \ref{lem:poisson-approx} and is similar to the proof of Lemma 20 in \cite{stojanovic2024spectral}.

\begin{proof}[Proof]
Let $\mathcal{E} \in \mathcal{F}$ be an event and for convenience, introduce the notation $X = (\xi_{i,j,t})_{(i,j)\in [m]\times [n],t \ge 1}$. First, we note that $(Y_{i,j})_{(i,j) \in [m]\times [n]} \sim \textrm{Multinomial}(T, (\omega_{i,j})_{(i,j)\in m \times n})$. Let $f(Y, X) = \indicator_{\lbrace \mathcal{E}\rbrace}$. Then, by Lemma \ref{lem:poisson-approx}, we have 
\begin{align*}
    \EE[f(X, Y) \vert X] \le e \sqrt{T} \EE'[f(X, Z) \vert X].  
\end{align*}
By taking the expectation with respect to $X$, we get $\EE[f(X, Y)] \le \EE[f(X, Z)]$, which means that $\PP(\mathcal{E}) \le e\sqrt{T}\PP'(\mathcal{E})$.
\end{proof}

We borrow the following Lemma from \cite{stojanovic2024spectral} (see their Lemma 14) which itself is a mild generalization of Theorem 5.7 in \cite{mitzenmacher2017probability}

\begin{lemma}\label{lem:poisson-approx}
    Let $Y_i^{(t)} \sim \mathrm{Poisson}(t p_i)$, $i=1,\dots,n$, be independent random variables with $\sum_{i=1}^{n} p_i = 1$. Moreover, let $(Z_1^{(t)},Z_2^{(t)},\dots,Z_n^{(t)})\sim\allowbreak \mathrm{Multinomial}(t,(p_1,\dots,p_n))$. Let $f:\mathbb{R}^p\to\mathbb{R}_+$ be any non-negative function. Then
    \begin{align*}
        \EE [f(Z_1^{(t)},\dots,Z_{p}^{(t)})] \leq e\sqrt{t} \EE [f(Y_1^{(t)},\dots,Y_{p}^{(t)})].
    \end{align*}
    \label{lemma:hetero_poisson_multi}
\end{lemma}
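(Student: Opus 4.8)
The plan is to use the classical Poissonization/conditioning argument underlying Theorem~5.7 of \cite{mitzenmacher2017probability}, adapted to heterogeneous probabilities $p_1,\dots,p_n$. Fix the integer $t \ge 1$ (the case $t=0$ is trivial, since both sides reduce to $f(0,\dots,0)$) and set $S := \sum_{i=1}^{n} Y_i^{(t)}$. The first step is the standard identity that, because the $Y_i^{(t)}$ are independent $\mathrm{Poisson}(tp_i)$ with $\sum_i p_i = 1$, one has $S \sim \mathrm{Poisson}(t)$, and conditionally on $\{S = k\}$ the vector $(Y_1^{(t)},\dots,Y_n^{(t)})$ has law $\mathrm{Multinomial}(k,(p_1,\dots,p_n))$. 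I would verify this in one line from the joint pmf: for $(k_1,\dots,k_n)$ with $\sum_i k_i = k$, the ratio $\big[\prod_i (tp_i)^{k_i}e^{-tp_i}/k_i!\big]\big/\big[t^k e^{-t}/k!\big]$ collapses, using $\sum_i k_i = k$ and $\sum_i p_i = 1$, to $\binom{k}{k_1,\dots,k_n}\prod_i p_i^{k_i}$.

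Next, I would expand the target expectation by conditioning on $S$:
\[
\EE\big[f(Y_1^{(t)},\dots,Y_n^{(t)})\big] \;=\; \sum_{k \ge 0} \EE\big[f(Z_1^{(k)},\dots,Z_n^{(k)})\big]\,\PP(S=k),
\]
where the replacement of $Y^{(t)}$ by $Z^{(k)}$ inside the sum is exactly the previous step. Since $f \ge 0$, every term of this series is nonnegative, so I may keep only the $k=t$ term to obtain
\[
\EE\big[f(Y_1^{(t)},\dots,Y_n^{(t)})\big] \;\ge\; \EE\big[f(Z_1^{(t)},\dots,Z_n^{(t)})\big]\,\PP(S=t) \;=\; \EE\big[f(Z_1^{(t)},\dots,Z_n^{(t)})\big]\,\frac{t^t e^{-t}}{t!}.
\]
Finally, I would lower-bound $\PP(S=t) = t^t e^{-t}/t!$ via the Stirling-type inequality $t! \le e\sqrt{t}\,(t/e)^t$, which gives $\PP(S=t) \ge 1/(e\sqrt{t})$; rearranging the last display yields $\EE[f(Z^{(t)})] \le e\sqrt{t}\,\EE[f(Y^{(t)})]$, as claimed. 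The inequality $t! \le e\sqrt{t}\,(t/e)^t$ I would prove self-containedly by checking that $a_t := t!\big/\big(\sqrt{t}\,(t/e)^t\big)$ satisfies $a_1 = e$ and $a_{t+1}/a_t = e\,(1-1/(t+1))^{t+1/2} \le 1$ for all $t \ge 1$ (the latter from $-\log(1-x) \ge x + x^2/2 + x^3/3$ applied at $x = 1/(t+1) \in (0,1/2]$), so that $a_t \le e$ for every $t$.

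I do not expect a genuine obstacle here: the argument is essentially a one-page computation. The only point requiring a little care is obtaining the clean constant $e\sqrt{t}$ — a naive appeal to Stirling's formula introduces an extra $\sqrt{2\pi}$ (and an $e^{1/12}$) factor, so I would use the monotonicity bound above rather than an asymptotic expansion. I would also take care to state explicitly that the \emph{only} property of $f$ used is nonnegativity, since that is precisely what licenses discarding all but one term of the series, and to note the harmless notational mismatch in the statement (the ambient dimension of $f$ is the number of categories $n$).
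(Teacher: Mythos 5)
Your proof is correct. Note that the paper does not prove this lemma at all --- it imports it verbatim from Lemma 14 of \cite{stojanovic2024spectral}, itself described as a mild generalization of Theorem 5.7 in \cite{mitzenmacher2017probability} --- and your argument (condition the independent Poissons on their sum $S\sim\mathrm{Poisson}(t)$ to recover the multinomial law, discard all terms except $k=t$ using $f\ge 0$, and bound $\mathbb{P}(S=t)\ge 1/(e\sqrt{t})$ via the monotonicity bound $t!\le e\sqrt{t}\,(t/e)^t$) is exactly the standard proof underlying that cited result, so it correctly fills the gap the paper leaves to the reference.
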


\paragraph{Step 2: Key concentration inequalities.}

The following propositions are direct consequences of a truncated matrix Bernstein inequality (see e.g., Proposition A.7 in \cite{hopkins2016fast}). 

\begin{proposition}\label{prop:reward-concentration-1}
    Under the random matrix model \eqref{eq:def_M_poisson} with compound Poisson entries, for all $\delta \in (0,1)$, for all $
        T  \ge   13/(\omega_{\min} (m \wedge n)) \log^3\left( (m+n)/\delta\right)$, we have:
    \begin{align*}
         \PP\left(\Vert \widetilde{M} - M \Vert_\op  \le  36\sqrt{2}L  \sqrt{\frac{1}{ T \omega_{\min}}}\left( \sqrt{ (m+n)  \log\left(\frac{m+n}{\delta}\right)  }  + \log^{3/2}\left(\frac{m+n}{\delta}\right)   \right)   \right) \ge 1 - \delta 
    \end{align*}
    with $L = \Vert M \Vert_{\max} \vee \sigma$.
\end{proposition}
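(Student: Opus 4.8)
Since the statement is phrased for the compound Poisson model \eqref{eq:def_M_poisson}, the matrix $\widetilde{M}$ here (namely $X$) has \emph{independent} entries with $\EE[\widetilde{M}_{i,j}]=M_{i,j}$ (the jumps $\xi'_{i,j,t}$ being zero-mean). The plan is therefore to write $\widetilde{M}-M=\sum_{(i,j)\in[m]\times[n]}(\widetilde{M}_{i,j}-M_{i,j})\,e_ie_j^\top$ as a sum of $mn$ independent, zero-mean, rank-one random matrices, pass to its $(m+n)\times(m+n)$ Hermitian dilation (whose spectral norm equals $\Vert\widetilde{M}-M\Vert_\op$), and apply the truncated matrix Bernstein inequality (Proposition A.7 in \cite{hopkins2016fast}). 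Two inputs are required: a bound on the matrix variance statistic, and a high-probability truncation level together with a control of the truncation bias.

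First I would compute the per-entry variance. Using the compound Poisson identity $\mathrm{Var}\big(\sum_{t=1}^{N}W_t\big)=\lambda\,\EE[W_1^2]$ for $N\sim\mathrm{Poisson}(\lambda)$ with $W_t$ i.i.d. independent of $N$, applied with $\lambda=T\omega_{i,j}$ and $W_t=M_{i,j}+\xi'_{i,j,t}$, gives $\mathrm{Var}(\widetilde{M}_{i,j})=(M_{i,j}^2+\sigma^2)/(T\omega_{i,j})\le 2L^2/(T\omega_{\min})$ (recall $L=\Vert M\Vert_{\max}\vee\sigma$). Summing over a row (resp. column) and using $\Vert M\Vert_{\max}\le L$ bounds the row variance statistic $\max_i\sum_j\mathrm{Var}(\widetilde{M}_{i,j})$ by $2nL^2/(T\omega_{\min})$ and the column one by $2mL^2/(T\omega_{\min})$, so the matrix variance statistic of the dilation is at most $v:=2(m+n)L^2/(T\omega_{\min})$.

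Next I would control the centred entries, written as $\widetilde{M}_{i,j}-M_{i,j}=(T\omega_{i,j})^{-1}\big[(Y_{i,j}-T\omega_{i,j})M_{i,j}+\sum_{t\le Y_{i,j}}\xi'_{i,j,t}\big]$: a Bernstein bound for the Poisson count $Y_{i,j}$ handles the first summand, and the conditional $\sigma\sqrt{Y_{i,j}}$-subgaussianity of the jump sum combined with a tail bound on $Y_{i,j}$ handles the second. A union bound over the $mn$ entries then gives, with probability at least $1-\delta/2$, a uniform bound $\max_{i,j}|\widetilde{M}_{i,j}-M_{i,j}|\le B$ with $B\lesssim L\sqrt{\log((m+n)/\delta)/(T\omega_{\min})}+L\log((m+n)/\delta)/(T\omega_{\min})$ (folding $\log(mn)$ into $\log((m+n)/\delta)$); the same tails show that truncating each entry at level $B$ shifts its mean negligibly. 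Feeding $v$ and $B$ into the bound $\Vert\widetilde{M}-M\Vert_\op\lesssim\sqrt{v\log((m+n)/\delta)}+B\log((m+n)/\delta)$ yields a leading term of order $L\sqrt{(m+n)\log((m+n)/\delta)/(T\omega_{\min})}$, a term of order $L\log^{3/2}((m+n)/\delta)/\sqrt{T\omega_{\min}}$, and a cross-term of order $L\log^2((m+n)/\delta)/(T\omega_{\min})$. Finally, the hypothesis $T\ge 13\,\omega_{\min}^{-1}(m\wedge n)^{-1}\log^3((m+n)/\delta)$ gives $\sqrt{T\omega_{\min}}\gtrsim\log^{3/2}((m+n)/\delta)/\sqrt{m\wedge n}\ge\log^{3/2}((m+n)/\delta)/\sqrt{m+n}$, which forces the cross-term below the leading term; collecting the surviving contributions and tracking constants gives the stated bound with prefactor $36\sqrt{2}$.

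\textbf{Main obstacle.} The delicate step is the entry-wise control: the entries are genuinely heavy-tailed (compound Poisson with subgaussian jumps), so one must pick the truncation level $B$ so that it both holds uniformly over all $mn$ entries with the right dependence on $T\omega_{\min}$ and introduces only negligible bias, and verify that the cited truncated matrix Bernstein inequality applies in the form used. The variance computation, the Hermitian dilation, the substitution, and the constant-chasing to reach $36\sqrt{2}$ are all routine.
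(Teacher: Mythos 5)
Your proposal follows essentially the same route as the paper: the paper states that this proposition is a direct consequence of the truncated matrix Bernstein inequality (Proposition A.7 in \cite{hopkins2016fast}) and otherwise defers to Propositions 26--27 of \cite{stojanovic2024spectral}, which proceed exactly as you do — exploiting the independence of the compound Poisson entries, bounding the per-entry variance by $2L^2/(T\omega_{\min})$, and controlling the entry tails to set the truncation level. Your accounting of the variance statistic, the cross-term absorption under the hypothesis on $T$, and the resulting two-term bound is consistent with the stated result.
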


\begin{proposition}\label{prop:reward-concentration-2}
    Let $A$ be a $m \times 2r$ deterministic matrix, and $B$ be a $n \times 2r$ deterministic matrix. Then, under the random matrix model \eqref{eq:def_M_poisson} with compound Poisson entries, and denoting $L = \Vert M\Vert_{\max} \vee \sigma$, we have: 
    \begin{itemize}
        \item [(i)] for all $\ell \in [m]$, for all $\delta\in (0,1)$, for all $T \ge (1/(n\omega_{\min})) \log^3(en/\delta)$, the event   
        \begin{align}\label{prop:eq:statement}
            \Vert (\widetilde{M}_{\ell,:} - M_{\ell, :})  A \Vert_2 \le 73\sqrt{2} L \Vert A \Vert_{2\to \infty} \sqrt{\frac{1}{T \omega_{\min}}}\left(\sqrt{ n \log\left(\frac{en}{\delta}\right)} +  \log^{3/2}\left( \frac{en}{\delta}\right) \right) 
        \end{align}
        holds with probability at least $1-\delta$;
        \item[(ii)]  for all $k \in [n]$, for all $\delta\in (0,1)$, for all $T \ge (1/(m\omega_{\min})) \log^3(em/\delta)$, the event 
        \begin{align}\label{prop:eq:statement-transpose}
            \!\! \Vert (\widetilde{M}_{:,k} - M_{:,k})^\top  B \Vert_2 \le 73\sqrt{2} L \Vert B \Vert_{2\to \infty}\sqrt{\frac{1}{T \omega_{\min}}}\left(\sqrt{ m \log\left(\frac{em}{\delta}\right)} +  \log^{3/2}\left( \frac{em}{\delta}\right) \right) 
        \end{align}
        holds with probability at least $1-\delta$.
    \end{itemize}    
\end{proposition}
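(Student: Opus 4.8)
The plan is to prove part (i); part (ii) then follows by running the identical argument on the transpose $\widetilde{M}^{\top}$, which amounts to exchanging the roles of rows and columns (hence of $m$ and $n$) and of $A$ and $B$. Fix a row index $\ell \in [m]$ and write
\[
(\widetilde{M}_{\ell,:} - M_{\ell,:})\,A \;=\; \sum_{j=1}^{n} X_j, \qquad X_j \;:=\; (\widetilde{M}_{\ell,j} - M_{\ell,j})\,A_{j,:} \;\in\; \mathbb{R}^{2r},
\]
where $A_{j,:}$ denotes the $j$-th row of $A$. Under the compound-Poisson model \eqref{eq:def_M_poisson} the entries $\widetilde{M}_{\ell,1},\dots,\widetilde{M}_{\ell,n}$ are mutually independent, so the $X_j$ are independent; each is centered because Wald's identity gives $\mathbb{E}[\widetilde{M}_{\ell,j}] = (T\omega_{\ell,j})^{-1}\,\mathbb{E}[Y_{\ell,j}]\,\mathbb{E}[M_{\ell,j}+\xi'_{\ell,j,1}] = M_{\ell,j}$. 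I would then apply a truncated matrix Bernstein inequality, such as Proposition A.7 in \cite{hopkins2016fast}, to $\sum_j X_j$, viewed as a $1\times 2r$ matrix so that its operator norm equals its Euclidean norm.

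For the matrix variance statistic, I would use that a compound Poisson sum $\sum_{t=1}^{Y}(\mu+\eta_t)$ with $Y\sim\mathrm{Poisson}(\lambda)$ and $\eta_t$ i.i.d. zero-mean $\sigma$-subgaussian has variance $\lambda\,\mathbb{E}[(\mu+\eta_1)^2] \le \lambda(\mu^2+\sigma^2)$, so that $\mathrm{Var}(\widetilde{M}_{\ell,j}) \le (\Vert M\Vert_{\max}^2+\sigma^2)/(T\omega_{\ell,j}) \le 2L^2/(T\omega_{\min})$. Consequently both relevant matrix variance statistics of $\sum_j X_j$ are at most
\[
\sum_{j=1}^{n} \mathrm{Var}(\widetilde{M}_{\ell,j})\,\Vert A_{j,:}\Vert_2^2 \;\le\; \frac{2L^2}{T\omega_{\min}}\,\Vert A\Vert_{\mathrm{F}}^2 \;\le\; \frac{2L^2 n}{T\omega_{\min}}\,\Vert A\Vert_{2\to\infty}^2,
\]
using $\Vert A\Vert_{\mathrm{F}}^2 \le n\Vert A\Vert_{2\to\infty}^2$, and $\Vert A\Vert_{\mathrm{op}}^2 \le \Vert A\Vert_{\mathrm{F}}^2$ for the second statistic; this is what produces the factor $\sqrt{n}$ in the target bound.

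For the per-summand control, since the $\widetilde{M}_{\ell,j}$ are unbounded I would truncate: combining a Poisson tail bound for $Y_{\ell,j}$ with a subgaussian bound on the compound noise $\sum_{t\le Y_{\ell,j}}\xi'_{\ell,j,t}$ and a union bound over $j\in[n]$, on an event of probability $\ge 1-\delta/2$ each $X_j$ may be replaced by a truncation $X_j'$ with $\Vert X_j'\Vert_2 \lesssim L\Vert A\Vert_{2\to\infty}\sqrt{(T\omega_{\min})^{-1}\log(en/\delta)} =: R$, without inflating the variance estimate. The hypothesis $T \ge (n\omega_{\min})^{-1}\log^3(en/\delta)$ is exactly what places the compound-Poisson entries in the regime where this value of $R$ is valid: if $n \gtrsim \log^2(en/\delta)$ then the eventual $R\log$-term is in any case dominated by the $\sqrt{n}$-term, while otherwise $T\omega_{\min}\gtrsim\log(en/\delta)$, so the Poisson fluctuations concentrate at the Gaussian scale $\sqrt{T\omega_{\min}}$ and no heavier sub-gamma correction enters.

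Feeding the variance proxy $v \lesssim L^2 n\Vert A\Vert_{2\to\infty}^2/(T\omega_{\min})$ and the effective bound $R$ into the truncated matrix Bernstein inequality then yields, with probability at least $1-\delta$,
\[
\Big\Vert \sum_{j=1}^{n} X_j \Big\Vert_2 \;\lesssim\; \sqrt{v\,\log\tfrac{en}{\delta}} + R\,\log\tfrac{en}{\delta} \;\lesssim\; L\,\Vert A\Vert_{2\to\infty}\,\sqrt{\tfrac{1}{T\omega_{\min}}}\,\Big(\sqrt{n\log\tfrac{en}{\delta}} + \log^{3/2}\tfrac{en}{\delta}\Big),
\]
which is the claimed inequality up to the explicit universal constant (the matrix dimension $2r\le 2n$ only affects the logarithm sub-polynomially and is absorbed). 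I expect the main obstacle to be the truncation step: because the entries of $\widetilde{M}$ are compound Poisson rather than bounded or subgaussian, the truncation level must be chosen so that the discarded probability stays below $\delta$ while the resulting Bernstein parameters reproduce exactly the $\sqrt{n\log(en/\delta)} + \log^{3/2}(en/\delta)$ shape, and the regime split forced by the hypothesis on $T$ is where this bookkeeping is most delicate; everything else is a routine instantiation of matrix Bernstein. Part (ii) is obtained verbatim after transposing, with $\widetilde{M}_{:,k}$, $B$ and $m$ in place of $\widetilde{M}_{\ell,:}$, $A$ and $n$.
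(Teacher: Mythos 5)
Your proposal is correct and follows essentially the same route as the paper, which omits the proof precisely because the claim is a direct consequence of a truncated matrix Bernstein inequality (Proposition A.7 in \cite{hopkins2016fast}) applied to the independent compound-Poisson entries of a single row under the Poissonized model \eqref{eq:def_M_poisson}, following Propositions 26--27 of \cite{stojanovic2024spectral}. Your variance computation, the identification of the hypothesis $T \ge (n\omega_{\min})^{-1}\log^3(en/\delta)$ as exactly what absorbs the sub-gamma truncation correction into the two-term bound, and the transposition argument for part (ii) all match the intended argument; the only thing you do not reproduce is the explicit constant $73\sqrt{2}$, which the paper itself does not derive in-text.
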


The proofs of Proposition \ref{prop:reward-concentration-1} and Proposition  \ref{prop:reward-concentration-2} are essentially the same as those of Proposition 26  and Proposition 27 in \cite{stojanovic2024spectral}, and are omitted. Note that both Propositions in \cite{stojanovic2024spectral} were derived under an i.i.d. sub-gaussian noise assumption.

\paragraph{Step 3: Leave-one-out analysis.} This part of the analysis follows the steps of the proof of Lemma 30 in \cite{stojanovic2024spectral}, but we repeat the main results for the sake of completeness.

(i) First, a dilation trick is needed to reduce the analysis to that of symmetric matrices. We introduce: 
\begin{align*}
    S = \begin{bmatrix}
        0 & M \\
        M^\top & 0
    \end{bmatrix} 
\end{align*}
and recalling the SVD of $M$, $M = U \Sigma V^\top$, we can express the SVD of $S$ as follows:
\begin{align*}
    S = \frac{1}{\sqrt{2}} \begin{bmatrix}
        U & U \\
        V & - V 
    \end{bmatrix} \begin{bmatrix}
        \Sigma & 0 \\
        0 & - \Sigma
    \end{bmatrix} \frac{1}{\sqrt{2}} \begin{bmatrix}
        U & U \\
        V & - V
    \end{bmatrix}^\top : = Q D Q^\top. 
\end{align*}
We can define $\widetilde{S}$ in a similar way, and denote $\widehat{Q} \in \RR^{(m+n)\times 2r}$, the matrix containing the $2r$ eigenvectors of the best $2r$-rank approximation of  $\widetilde{S}$. Here, we observe that:
\begin{align}
    \Vert Q - \widehat{Q} (\widehat{Q}^\top Q)\Vert_{2 \to \infty} = \max \left\lbrace \Vert U - \widehat{U} (\widehat{U}^\top U)\Vert_{2 \to \infty},   \Vert V - \widehat{V} (\widehat{V}^\top V)\Vert_{2 \to \infty} \right\rbrace. 
\end{align}
Additionally, we note that $\sigma_{2r}(S) = \sigma_r(M)$. To simplify the notation, we denote $E= \widetilde{S} - S$ and note that $\Vert E \Vert_\op = \Vert \widetilde{M} - M \Vert_\op$.

(ii) Second, it can be shown, under the event $$\mathcal{E}_1 = \lbrace \Vert E \Vert_\op \le C_1 \sigma_r(M) \rbrace$$ where $C_1$ is a universal and sufficiently small positive constant, that: 
\begin{align} \label{eq:loo-1}
    \| Q - \widehat{Q} (\widehat{Q}^\top Q) \|_{2\to\infty} 
    \leq \frac{1}{\sigma_{2r}(S)} \bigg( \frac{5\| S Q  \|_{2\to\infty} \| E\|_\op}{\sigma_{2r}(S)} + 3\| E Q \|_{2\to\infty} + 2\| E ( Q - \widehat{Q} (\widehat{Q}^\top Q)  \|_{2\to\infty} \bigg).
\end{align}
We can easily control  $\Vert  E \Vert_\op $ using Proposition \ref{prop:reward-concentration-1} and $\Vert E Q\Vert_{2\to \infty}$ using Proposition \ref{prop:reward-concentration-2}. On the other hand, the term $\| E ( Q - \widehat{Q} (\widehat{Q}^\top Q ))\|_{2\to\infty}$ is the bottleneck of the analysis because $E$ and $( Q - \widehat{Q} (\widehat{Q}^\top Q))$ are dependent on each other in a non-trivial way. The leave-one-out analysis is used to deal with this term.

(iii) We make the leave-one-out analysis more precise. We introduce for all $\ell \in [m+n]$, the matrix $\widetilde{S}^{(\ell)}$: for all $(i,j) \in [m+n]\times[m+n]$, 
\begin{align*}
    \widetilde{S}^{(\ell)}_{i,j} = \begin{cases}
        \widetilde{S}_{i,j},\quad &\text{if } i\neq \ell \text{ or } j \neq \ell, \\
        S_{i,j}, \quad &\text{otherwise.}
    \end{cases}
\end{align*}
We further define $\widehat{Q}^{(\ell)} \in \RR^{(m+n)\times 2r}$ as the matrix that contains the $2r$ eigenvector of the best $2r$-rank approximation of $\widetilde{S}^{(\ell)}$. It can be shown that, again under $\mathcal{E}_1$ where $C_1$ is sufficiently small, that
 \begin{align}
    \Vert E (Q - \widehat{Q}W_{\widehat{Q}})\Vert_{2 \to \infty} & \le \max_{\ell \in [m+n]} \Vert  E_{\ell,:} ( Q - \widehat{Q}^{(\ell)} W_{\widehat{Q}^{(\ell)}} ) \Vert_2 + \Vert E \Vert_\op \Vert \widehat{Q} W_{\widehat{Q}}  - \widehat{Q}^{(\ell)} W_{\widehat{Q}^{(\ell)}}\Vert_F,  \label{eq:loo-2}  \\
    \Vert \widehat{Q} W_{\widehat{Q}}  - \widehat{Q}^{(\ell)} W_{\widehat{Q}^{(\ell)}}\Vert_F & \le  \frac{16}{\sigma_r(M)}( \Vert E_{\ell, :} ( Q - \widehat{Q}^{(\ell)} W_{\widehat{Q}^{(\ell)}} ) \Vert_2 + \Vert E_{\ell,:} Q \Vert_2 + \Vert E \Vert_\op \Vert Q - \widehat{Q} W_{\widehat{Q}}\Vert_{2 \to \infty}  + \Vert E \Vert_\op \Vert Q \Vert_{2 \to \infty} )  , \label{eq:loo-3}
\end{align}
where we denote $W_{\widehat{Q}} = \widehat{Q}^\top Q$ and $W_{\widehat{Q}^{(\ell)}} = \widehat{Q}^{(\ell)\top} Q$. 

Now, the key observation that underpins the leave-one-out analysis is that $E_{\ell,:}$ and $Q - \widehat{Q}^{(\ell)} W_{\widehat{Q}^{(\ell)}}$ are independent provided that the entries of the error matrix $E$ are independent. This will be ensured thanks to the Poisson approximation (using Lemma \ref{lem:poisson-approx-rewards}) by controlling the quantities of interest under the random matrix model \eqref{eq:def_M_poisson}. 
Define 
\begin{align*}
\mathcal{B}(\delta) = \sqrt{\frac{m+n}{T \omega_{\min}}\log^3\left(\frac{m+n}{\delta}\right)}
\end{align*}
and the events 
\begin{align*}
    \mathcal{E}_2 & = \left\lbrace  \Vert E \Vert_\op \le C_2 L \mathcal{B}(\delta) \right\rbrace\\
    \forall \ell \in [m+n], \qquad \mathcal{E}_3^{(\ell)} & = \left\lbrace   \Vert E_{\ell,:} (Q - \widehat{Q}^{(\ell)} W_{\widehat{Q}^{(\ell)}})\Vert_2  \le C_3 L\Vert Q - \widehat{Q}^{(\ell)} W_{\widehat{Q}^{(\ell)} }\Vert_{2\to \infty} \mathcal{B}(\delta) \right\rbrace
\end{align*}
where $C_2$ and $C_3$ are sufficiently large positive constants. Then, applying Proposition \ref{prop:reward-concentration-1} and Proposition \ref{prop:reward-concentration-2}, we have that $\PP(\mathcal{E}_2) \ge 1-\delta$ and $\PP(\mathcal{E}_3^{(\ell)}) \ge 1-\delta$ for all $\ell \in [m+n]$, provided that 
$$
T \ge \frac{c}{\omega_{\min}(m \wedge n)} \log^3\left(\frac{m+n}{\delta}\right)
$$
for $c>0$ a sufficiently large universal constant. Additionally, using the fact that $L/\sigma_r(M) \ge 1 / \sqrt{mn}$ from Lemma \ref{lem:spikiness}, we can deduce that $\mathcal{E}_1 \supseteq \mathcal{E}_2 $ and $\PP(\mathcal{E}_2) \ge 1-\delta$ as well as $\PP(\mathcal{E}_3^{(\ell)}) \ge 1-\delta$  for all $\ell \in [m+n]$ when 
\begin{align}\label{eq:condition-1}
    T \ge \frac{c' L^2(m+n)}{\sigma_r^2(M)\omega_{\min}} \log^3\left(\frac{m+n}{\delta}\right)
\end{align}
for a positive universal constant $c'$ sufficiently large. At this point, what remains to be done is to deal with the inequalities \eqref{eq:loo-1}, \eqref{eq:loo-2} and \eqref{eq:loo-3} under the intersection of the events $\mathcal{E}_2, \mathcal{E}_3^{(1)}, \dots, \mathcal{E}_3^{(m+n)}$. We spare the readers the details of this tedious task and write directly the resulting bound
\begin{align*}
    \Vert Q - \widehat{Q}(\widehat{Q}^\top Q)\Vert_{2 \to \infty} \le C'\frac{L}{\sigma_r(M)}  \frac{\mu \kappa r^{1/2}\mathcal{B}(\delta) }{\sqrt{m\wedge n}}
\end{align*}
for some universal constant $C' > 0$. Thus, by a union bound, the above bound holds with probability at least $1 - (1+m+n)\delta$, provided that condition \eqref{eq:condition-1} holds, under the random matrix model \eqref{eq:def_M_poisson}. Using Lemma \ref{lem:poisson-approx-rewards}, we conclude that the same event holds with probability $1 - (1+m+n)e\sqrt{T}\delta$, provided that condition \eqref{eq:condition-1} holds, under the random matrix model \eqref{eq:def_Mtilde_alt}. Re-parametrizing by $\delta' = e(m+n+1)\sqrt{T} \delta$ concludes the proof.

\subsection{Proof of Lemma \ref{lem:recovery-two-to-infinity-norm}}
\label{subsec:app_proof_lemma_recovery-two-to-infinity-norm}
We start with the first inequality. Let $W\in\mathbb{R}^{r\times r}$ be an arbitrary orthogonal matrix and note that   
    \begin{align}\label{ineq:lem:1}
        \Vert   UU^\top - \widehat{U}\widehat{U}^\top \Vert_{2 \to \infty} & \le   \Vert   U U^\top - U (\widehat{U} W )^\top  \Vert_{2 \to \infty} +  \Vert U (\widehat{U} W )^\top - \widehat{U} W  (\widehat{U} W)^\top   \Vert_{2 \to \infty}   \nonumber \\
        & \le \Vert U \Vert_{2 \to \infty}  \Vert U - \widehat{U} W\Vert_\op  + \Vert U - \widehat{U} W \Vert_{2 \to \infty} 
    \end{align}
    where the first inequality follows from the triangular inequality and the fact that $WW^\top = I_r$, and the second inequality follows by the inequality $\Vert A B \Vert_{2 \to \infty} \le \Vert A \Vert_{2 \to \infty} \Vert B \Vert_\op $ and the fact that $\Vert (\widehat{U} W)^\top \Vert_\op = 1$. 
    
    Next, we choose $W = \textrm{sgn}(\widehat{U}^\top U)$ where $\text{sgn}(A)$ denotes the sign matrix of $A$ (see \cite{chen2021spectral}). First, according to Lemma 4.15 in \cite{chen2021spectral} and Davis-Kahan inequality, we know that 
    \begin{align*}
        \Vert \widehat{U}^\top U - \textrm{sgn}(\widehat{U}^\top U) \Vert_\op = \Vert \sin(\widehat{U}, U) \Vert_\op^2 \le  \frac{2 \Vert \widetilde{M} - M \Vert^2_\op}{\sigma_r(M)^2} \le \frac{\Vert \widetilde{M} - M \Vert_\op}{\sigma_r(M)} \le \frac{1}{2}
    \end{align*}
    where the last two inequalities hold under the event $\Vert \widetilde{M} - M \Vert_\op \le \sigma_r(M)/2$.
    Thus, under this event,  we have 
    \begin{align*}
        \Vert U  - \widehat{U} W \Vert_{2 \to \infty} & \le \Vert U - \widehat{U} \widehat{U}^\top U \Vert_{2 \to \infty} + \Vert \widehat{U} \Vert_{2 \to \infty} \Vert \widehat{U}^\top U -  \textrm{sgn}(\widehat{U}^\top U)\Vert_{\op }  \\
        & \le \Vert U - \widehat{U} \widehat{U}^\top U \Vert_{2 \to \infty} + (\Vert U - \widehat{U} W \Vert_{2 \to \infty} + \Vert U \Vert_{2 \to \infty} ) \Vert \widehat{U}^\top U -  \textrm{sgn}(\widehat{U}^\top U)\Vert_{\op } \\
        & \le \Vert U - \widehat{U} \widehat{U}^\top U \Vert_{2 \to \infty} + \frac{\Vert U - \widehat{U} W \Vert_{2 \to \infty}}{2} + \frac{\Vert U \Vert_{2 \to \infty} \Vert \widetilde{M} - M \Vert_\op}{\sigma_r(M)}   
    \end{align*}
    which implies
    \begin{align}\label{ineq:lem:2}
        \Vert U  - \widehat{U} W \Vert_{2 \to \infty} \le 2 \Vert U - \widehat{U} \widehat{U}^\top U \Vert_{2 \to \infty}  + \frac{2 \Vert U \Vert_{2 \to \infty} \Vert \widetilde{M} - M \Vert_\op}{\sigma_r(M)}.   
    \end{align}

    We also have by Davis-Kahan inequality and relations between the subspace distances (see Lemma 2.5 and Lemma 2.6 in \cite{chen2021spectral}) 
    \begin{align}\label{ineq:lem:3}
        \Vert U  - \widehat{U} W \Vert_{\op} \le \sqrt{2} \Vert U U^\top - \widehat{U}\widehat{U}^\top \Vert_\op = \sqrt{2} \Vert \sin(\widehat{U},U)\Vert_\op \le\frac{\sqrt{2} \Vert \widetilde{M} - M \Vert_\op }{\sigma_r(M)}. 
    \end{align}

    Finally, combining the inequalities \eqref{ineq:lem:1}, \eqref{ineq:lem:2} and \eqref{ineq:lem:3}, we conclude that under the event $\Vert \widetilde{M} - M \Vert_\op \le \sigma_r(M)/2$, we have 
    \begin{align*}
        \Vert   UU^\top - \widehat{U}\widehat{U}^\top \Vert_{2 \to \infty} & \le 2 \Vert U - \widehat{U} \widehat{U}^\top U \Vert_{2 \to \infty}  + \frac{4 \Vert U \Vert_{2 \to \infty} \Vert \widetilde{M} - M \Vert_\op}{\sigma_r(M)}.  
    \end{align*}
    This concludes the proof of the first inequality. The second one follows similarly.

\subsection{Subspace recovery in Frobenius norm}

Deriving such guarantees is by now well understood under general noise assumptions (see e.g., \cite{keshavan2009, chen2021spectral}). Indeed, the classical proof follows by first deriving upper bounds on $\| \widetilde{M}-M\|_\op$ with high probability, then using Davis-Kahan's or Wedin's theorem to obtain upper bounds on the subspace recovery error in the Frobenius norm. The following lemma provides such upper bounds:

\begin{lemma}
[Subspace recovery in $\Vert \cdot \Vert_\F$] \label{lemma:subspace_Frobenius}
Let $\delta \in (0,1)$ and consider the setting of Theorem \ref{thm:recovery-two-to-infinity-norm}. Then, with probability at least $1-\delta$, it holds that 
\begin{align*}
    \max\left( \Vert UU^\top - \widehat{U}\widehat{U}^\top \Vert_\F, \Vert V V^\top - \widehat{V} \widehat{V}^\top  \Vert_\F \right) 
       \lesssim   \frac{L\sqrt{r }}{\sigma_r(M)} \sqrt{\frac{m+n}{T\omega_{\min}}\log^3\left( \frac{(m+n)T}{\delta}\right)},
\end{align*}
provided that $T \gtrsim \frac{1}{\omega_{\min}(m\wedge n)}\log^3\left(\frac{(m+n)T}{\delta}\right)$.
\end{lemma}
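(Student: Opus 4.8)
## Proof plan for Lemma~\ref{lemma:subspace_Frobenius}

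The plan is to follow the classical two-step recipe: first establish a high-probability operator-norm bound on the perturbation $\widetilde M - M$, then feed this into a Davis--Kahan/Wedin-type inequality relating the projection distance in Frobenius norm to $\|\widetilde M - M\|_\op$ and the spectral gap $\sigma_r(M)$. The only subtlety compared to the textbook argument is that the noise here is \emph{not} independent across entries, so I would route the operator-norm bound through the Poisson approximation machinery already set up for Theorem~\ref{thm:recovery-two-to-infinity-norm-2}.

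\textbf{Step 1 (operator-norm control).} First I would invoke the Poisson approximation (Lemma~\ref{lem:poisson-approx-rewards}): it suffices to bound $\|\widetilde M - M\|_\op$ under the compound-Poisson model \eqref{eq:def_M_poisson}, at the price of an $e\sqrt{T}$ factor in the failure probability, which is absorbed by re-parametrizing $\delta$ and is the source of the extra $\log T$ inside the $\log^3$ term. Under the Poisson model the entries of $\widetilde M - M$ are independent, so Proposition~\ref{prop:reward-concentration-1} applies directly and gives, with probability at least $1-\delta$,
\begin{align*}
\|\widetilde M - M\|_\op \lesssim L\sqrt{\frac{(m+n)}{T\omega_{\min}}\log^3\!\left(\frac{(m+n)T}{\delta}\right)},
\end{align*}
provided $T \gtrsim \frac{1}{\omega_{\min}(m\wedge n)}\log^3\left(\frac{(m+n)T}{\delta}\right)$, exactly the sample-size condition stated in the lemma. (The logarithmic terms from Proposition~\ref{prop:reward-concentration-1} of the form $\sqrt{(m+n)\log} + \log^{3/2}$ collapse into a single $\sqrt{(m+n)\log^3}$ under this condition.)

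\textbf{Step 2 (Davis--Kahan / Wedin).} On the event of Step~1, note that under the sample-size condition we also have $\|\widetilde M - M\|_\op \le \sigma_r(M)/2$, so the top-$r$ singular subspaces of $\widetilde M$ are well separated from the rest (the gap between $\sigma_r(M)$ and $\sigma_{r+1}(M)=0$ is $\sigma_r(M)$). Wedin's $\sin\Theta$ theorem then gives $\|\sin\Theta(\widehat U, U)\|_\F \lesssim \|\widetilde M - M\|_\F / \sigma_r(M)$... actually more precisely $\|\sin\Theta\|_\F \lesssim \sqrt{r}\,\|\widetilde M - M\|_\op/\sigma_r(M)$, since $\widehat U\widehat U^\top - UU^\top$ has rank at most $2r$ so its Frobenius norm is at most $\sqrt{2r}$ times its operator norm, and $\|UU^\top - \widehat U\widehat U^\top\|_\op = \|\sin\Theta(\widehat U, U)\|_\op \le 2\|\widetilde M-M\|_\op/\sigma_r(M)$ by Davis--Kahan (exactly as used in the proof of Lemma~\ref{lem:recovery-two-to-infinity-norm} via Lemma~4.15 and Lemma~2.6 in \cite{chen2021spectral}). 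The identical argument applied to $S^\top$ (or equivalently to the dilation $S$) controls $\|VV^\top - \widehat V\widehat V^\top\|_\F$, and the same $\|\widetilde M - M\|_\op$ bound serves both. Combining the two steps and taking a union bound over the two subspaces yields the claimed inequality.

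\textbf{Main obstacle.} There is no serious obstacle here — the statement is essentially bookkeeping once Theorem~\ref{thm:recovery-two-to-infinity-norm} is in hand, since the hard analytic content (the dependent-noise operator-norm bound via Poisson approximation) has already been established. The one point requiring a little care is making sure the extra $\sqrt{r}$ factor is correctly accounted for when passing from the operator norm to the Frobenius norm of the rank-$2r$ projection difference, and confirming that the $e\sqrt{T}$ cost of the Poisson approximation only perturbs the polylogarithmic factor (replacing $\log(m+n/\delta)$ by $\log((m+n)T/\delta)$), not the polynomial dependence.
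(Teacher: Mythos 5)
Your proposal is correct and follows essentially the same route as the paper: a high-probability operator-norm bound on $\widetilde{M}-M$ from Proposition \ref{prop:reward-concentration-1} (via the Poisson approximation), combined with the rank-$2r$ Frobenius-to-operator conversion $\Vert UU^\top - \widehat{U}\widehat{U}^\top\Vert_\F \le \sqrt{2r}\,\Vert UU^\top - \widehat{U}\widehat{U}^\top\Vert_\op$ and Davis--Kahan to pick up the $\sqrt{r}/\sigma_r(M)$ factor. Your explicit accounting of the $e\sqrt{T}$ cost of the Poisson approximation and of the $\sqrt{r}$ factor is exactly the bookkeeping the paper performs (it cites Lemma 1 of \cite{cai2018rate} and Corollary 2.8 of \cite{chen2021spectral} for the two inequalities you derive).
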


We remark that after $\widetilde{\Omega}(m+n)$ observations, when $m+n = O(m\wedge n)$, we can recover the singular subspaces with an error in $\Vert \cdot \Vert_\F$  of order $\widetilde{O}(\sqrt{(m+n)/T})$. This contrasts with the error obtained from Theorem \ref{thm:recovery-two-to-infinity-norm} that is of order $\widetilde{O}(\sqrt{1/T})$ (in the homogeneous case) with the same amount of observations. This suggests that the subspace recovery error (seen as a matrix) is {\it delocalized}, i.e., spread out along $m+n$ directions. 


\begin{proof}[Proof of Lemma \ref{lemma:subspace_Frobenius}]
Define $\sin\Theta(U,\widehat{U})$ as a diagonal matrix with entries $\{\sin \theta_i\}_{i=1}^{r}$ where $\theta_i = \arccos \sigma_i(\widehat{U}^\top U)$. Applying first Lemma 1 in \cite{cai2018rate}, and then Davis-Kahan's theorem (Corollary 2.8 in \cite{chen2021spectral}), we obtain:
    \begin{align*}
     \Vert UU^\top - \widehat{U}\widehat{U}^\top \Vert_{F} \le \sqrt{2r}\Vert UU^\top - \widehat{U}\widehat{U}^\top \Vert_{\op} \le 2\sqrt{r}\Vert \sin \Theta(U,\widehat{U})\Vert_\op \leq \frac{2\sqrt{2r} \Vert M - \widetilde{M}\Vert_{\op}}{\sigma_r(M)}. 
\end{align*}
An analogous bound holds for $\Vert VV^\top - \widehat{V}\widehat{V}^\top \Vert_{F}$. Lastly, using Proposition \ref{prop:reward-concentration-1} to bound $\Vert M - \widetilde{M}\Vert_\op$ and Lemma \ref{lem:spikiness} to bound $\Vert M\Vert_{\max}/\sigma_r(M)$ concludes the proof.
\end{proof}

\subsection{Additional Lemmas}

The following result is immediate (see Lemma 17 in \cite{stojanovic2024spectral}).
\begin{lemma}\label{lem:spikiness}
    Let $M$ be an $m\times n$ matrix, with rank $r$, incoherence parameter $\mu > 0$, and condition number $\kappa > 0$. Then, 
    \begin{align*}
           \frac{1}{\sqrt{mn}}  \le \frac{\Vert M \Vert_{\max}}{\sigma_1(M)} \le  \frac{\mu^2 r}{\sqrt{mn}},   
    \end{align*}
    Consequently, we also have 
    \begin{align*}
        \frac{1}{\sqrt{mn}}\le \frac{\Vert M \Vert_{\max}}{\sigma_r(M)}\le \frac{\mu^2 \kappa r}{\sqrt{mn}}.
    \end{align*}
\end{lemma}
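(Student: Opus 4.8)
The plan is to prove the two-sided bound on $\Vert M \Vert_{\max}/\sigma_1(M)$ directly from the SVD $M = U\Sigma V^\top$, and then deduce the bounds for $\sigma_r(M)$ by inserting the factor $\sigma_1(M)/\sigma_r(M) = \kappa$. For the upper bound, I would write, for any $(i,j) \in [m]\times[n]$, $M_{i,j} = (U^\top e_i)^\top \Sigma\, (V^\top e_j)$ and apply Cauchy--Schwarz together with $\Vert \Sigma \Vert_\op = \sigma_1(M)$ to get $\vert M_{i,j}\vert \le \Vert U^\top e_i\Vert_2\, \sigma_1(M)\, \Vert V^\top e_j \Vert_2 \le \Vert U \Vert_{2\to\infty}\, \sigma_1(M)\, \Vert V \Vert_{2\to\infty}$. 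By the definition of the incoherence parameters, $\Vert U \Vert_{2\to\infty} = \mu(U)\sqrt{r/m} \le \mu\sqrt{r/m}$ and $\Vert V \Vert_{2\to\infty} = \mu(V)\sqrt{r/n} \le \mu\sqrt{r/n}$, so taking the maximum over $(i,j)$ yields $\Vert M \Vert_{\max} \le \mu^2 r\, \sigma_1(M)/\sqrt{mn}$, which is the claimed upper bound.

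For the lower bound I would compare the max-norm with the Frobenius norm: since $M$ has $mn$ entries, $\Vert M \Vert_\F^2 = \sum_{i,j} M_{i,j}^2 \le mn\, \Vert M \Vert_{\max}^2$, while $\Vert M \Vert_\F^2 = \sum_{k=1}^r \sigma_k(M)^2 \ge \sigma_1(M)^2$; combining these gives $\Vert M \Vert_{\max} \ge \sigma_1(M)/\sqrt{mn}$. Finally, to pass to $\sigma_r(M)$ one only uses $\sigma_r(M) \le \sigma_1(M)$: the lower bound is immediate since $1/\sqrt{mn} \le \Vert M \Vert_{\max}/\sigma_1(M) \le \Vert M \Vert_{\max}/\sigma_r(M)$, and the upper bound follows by multiplying $\Vert M \Vert_{\max}/\sigma_1(M) \le \mu^2 r/\sqrt{mn}$ by $\sigma_1(M)/\sigma_r(M) = \kappa$, giving $\Vert M \Vert_{\max}/\sigma_r(M) \le \mu^2 \kappa r/\sqrt{mn}$.

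There is essentially no obstacle here: the argument only uses Cauchy--Schwarz, sub-multiplicativity of the operator norm, and the trivial inequality $\Vert \cdot \Vert_\F \le \sqrt{mn}\, \Vert \cdot \Vert_{\max}$, together with the definitions of $\mu$ and $\kappa$. The one point requiring mild care is to pair the correct dimension ($\sqrt{m}$ with $U$, $\sqrt{n}$ with $V$) inside the incoherence identities, so that the product $\Vert U \Vert_{2\to\infty}\Vert V \Vert_{2\to\infty}$ produces exactly the factor $\mu^2 r/\sqrt{mn}$ rather than something mismatched.
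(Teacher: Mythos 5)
Your proof is correct and is the standard argument one would expect here; the paper itself gives no proof of this lemma, simply declaring it immediate and citing Lemma 17 of \cite{stojanovic2024spectral}, and your Cauchy--Schwarz bound for the upper estimate, the $\sigma_1(M)\le\Vert M\Vert_\F\le\sqrt{mn}\,\Vert M\Vert_{\max}$ comparison for the lower estimate, and the insertion of $\kappa=\sigma_1(M)/\sigma_r(M)$ for the second display are exactly the intended steps. No gaps.
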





\newpage
\section{Policy Evaluation for Contextual Linear Bandits}\label{app:linear}

In this appendix, we study the PE task for the simpler contextual linear bandit problem, whose error bound analysis will be useful to handle the PE task for the contextual low-rank bandit problem. We also demonstrate that the linear bandit error bound is instance-optimal by deriving a sample complexity lower bound of independent interest.

We observe $T$ i.i.d. samples of the form $(i_t,j_t,r_t)_{1 \leq t \leq T}$ where the context $i_t \in [m]$ is drawn independently from a known distribution $\rho$ and the arm $j_t \in [n]$ is sampled from $\pi^b(.|i_t)$ for a known behavioral policy $\pi^b$. The reward distribution is assumed to have a linear structure with respect to a known feature map $\psi : [m] \times [n] \to \mathbb{R}^d $ : there exists an unknown parameter $\theta$ in $\mathbb{R}^d$ such that $r_t=\psi_{i_t,j_t}^\top \theta+\xi_t$ where $\left(\xi_t\right)_{t \leq 1}$ is an i.i.d. sequence of $\sigma$-subgaussian noise, independent from all the other random variables. The overall bandit model described above will be denoted as $\mathcal{M}_{\theta}$. For a given target policy $\pi$, we use the previously defined notations $\omega_{i,j}=\rho(x)\pi^b(j|i)$, $\omega^{\pi}_{i,j}=\rho(x)\pi(j|i)$ and we define  $$\Lambda=\displaystyle \sum_{(i,j) \in [m] \times [n]} \omega_{i,j}\psi_{i,j}\psi_{i,j}^\top \quad \text{and} \quad v^{\pi}_{\theta}=\sum_{i,j} \omega^{\pi}_{i,j}\psi_{i,j}^\top \theta.$$ 
For any $\delta \in (0,1)$ and any $\varepsilon >0$,  we say that $\hat{v}^{\pi}$ is $(\varepsilon,\delta)$-PAC\footnote{We will often omit the superscript $\pi$ to ease the notations.}  if $\mathbb{P}_{\theta}\left(|v^{\pi}_{\theta}-\hat{v}^{\pi}| \leq \varepsilon \right) \geq 1-\delta$ for every $\theta \in \mathbb{R}^d$. We will write $v^{\pi}$ for $v^{\pi}_{\theta}$ when there is no ambiguity about the underlying parameter. We also write $N \succeq M$ (resp.  $N \preceq M$) when $N-M$ (resp. $M-N$) is positive semi-definite, and denote the Moore-Penrose pseudoinverse \cite{barata2011moore} of $N$ by $N^{\dagger}$.
\subsection{Instance-dependent sample complexity lower bound} 

In this section only, we assume that the noise follows a centered Gaussian distribution: $\xi_t \sim \mathcal{N}\left(0,\sigma^2\right)$.  
\begin{lemma}\label{lemma:log-likelihood}
    Denote by $q_{\theta}(.|i,j)$ the distribution of the reward when choosing action $j$ under context $i$, and by $N_{i,j}$ the number of times $(i,j)$ has been observed in the sample set. Let $L_T=L_T(i_1,j_1,r_1,\dots,i_T,j_T,r_T)$ be the log-likelihood ratio between two models $\mathcal{M}_{\theta}$ and $\mathcal{M}_{\theta'}$. Then, $$\mathbb{E}_{\theta}[L_{T}]=\sum_{(i,j) \in [m] \times [n]} \mathbb{E}_{\theta}\left[N_{i,j}\right]\operatorname{KL}(q_{\theta}(.|i,j),q_{\theta'}(.|i,j)).$$
\end{lemma}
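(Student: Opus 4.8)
The plan is to exploit the fact that, in this model, the sampling of contexts $i_t \sim \rho$ and of arms $j_t \sim \pi^b(\cdot\mid i_t)$ does not depend on the unknown parameter, so that the likelihood ratio collapses to a sum over the reward observations, after which the tower rule and a counting identity finish the proof. First I would write the joint density of the trajectory $(i_1,j_1,r_1,\dots,i_T,j_T,r_T)$ under $\mathcal{M}_\theta$ via the chain rule as $\prod_{t=1}^T \rho(i_t)\,\pi^b(j_t\mid i_t)\,q_\theta(r_t\mid i_t,j_t)$, and likewise under $\mathcal{M}_{\theta'}$ with $q_\theta$ replaced by $q_{\theta'}$. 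Taking the logarithm of the ratio, the parameter-free factors $\rho(i_t)$ and $\pi^b(j_t\mid i_t)$ cancel, leaving
\[
L_T = \sum_{t=1}^T \log\frac{q_\theta(r_t\mid i_t,j_t)}{q_{\theta'}(r_t\mid i_t,j_t)}.
\]
This ratio is almost surely finite under $\mathbb{P}_\theta$ because $q_\theta(\cdot\mid i,j)\ll q_{\theta'}(\cdot\mid i,j)$, both being $\mathcal{N}(\cdot,\sigma^2)$ laws with the same variance.

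Second, I would take $\mathbb{E}_\theta$ of each term, conditioning on $(i_t,j_t)$. Since, under $\mathbb{P}_\theta$, the conditional law of $r_t$ given $(i_t,j_t)$ is exactly $q_\theta(\cdot\mid i_t,j_t)$, the definition of the Kullback--Leibler divergence gives $\mathbb{E}_\theta[\log(q_\theta(r_t\mid i_t,j_t)/q_{\theta'}(r_t\mid i_t,j_t))\mid i_t,j_t] = \operatorname{KL}(q_\theta(\cdot\mid i_t,j_t),q_{\theta'}(\cdot\mid i_t,j_t))$, hence by the tower rule and linearity
\[
\mathbb{E}_\theta[L_T] = \sum_{t=1}^T \mathbb{E}_\theta\big[\operatorname{KL}(q_\theta(\cdot\mid i_t,j_t),q_{\theta'}(\cdot\mid i_t,j_t))\big].
\]

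Third, I would expand each KL term over the finite index set, writing $\operatorname{KL}(q_\theta(\cdot\mid i_t,j_t),q_{\theta'}(\cdot\mid i_t,j_t)) = \sum_{(i,j)\in[m]\times[n]} \indicator_{\lbrace (i_t,j_t)=(i,j)\rbrace}\,\operatorname{KL}(q_\theta(\cdot\mid i,j),q_{\theta'}(\cdot\mid i,j))$, interchanging the two finite sums, using $\sum_{t=1}^T \indicator_{\lbrace (i_t,j_t)=(i,j)\rbrace} = N_{i,j}$, and taking expectations to obtain the claim. The argument is entirely routine; the only point that deserves a line of justification is the cancellation of the sampling factors in the likelihood ratio, which holds precisely because $\rho$ and $\pi^b$ are common to $\mathcal{M}_\theta$ and $\mathcal{M}_{\theta'}$ — and I would note that the same computation goes through unchanged even if the arm were selected adaptively from past observations, as long as the selection kernel is parameter-independent, which is why this identity is the standard starting point for change-of-measure lower bounds.
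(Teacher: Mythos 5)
Your proof is correct. It reaches the same identity as the paper but by a different mechanism: after the common first step of cancelling the parameter-free sampling factors so that $L_T=\sum_{t=1}^T\log\bigl(q_\theta(r_t\mid i_t,j_t)/q_{\theta'}(r_t\mid i_t,j_t)\bigr)$, the paper regroups the terms by (context, arm) pair, writing $L_T=\sum_{i,j}\sum_{t=1}^{N_{i,j}}\log\bigl(q_\theta(r^{i,j}_t\mid i,j)/q_{\theta'}(r^{i,j}_t\mid i,j)\bigr)$ with $r^{i,j}_t$ the reward at the $t$-th visit of $(i,j)$, and then invokes Wald's lemma for the random number $N_{i,j}$ of i.i.d.\ summands. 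You instead keep the sum in time order, condition each term on $(i_t,j_t)$ so that its conditional expectation under $\mathbb{P}_\theta$ is exactly the KL divergence, and then use the tower rule together with the counting identity $\sum_t \indicator_{\{(i_t,j_t)=(i,j)\}}=N_{i,j}$. Your route is more elementary (it needs no Wald-type identity and no relabelling of the rewards) and, as you correctly observe, it extends verbatim to adaptive, parameter-independent sampling kernels, whereas the paper's regrouping leans on the i.i.d.\ structure of the $r^{i,j}_t$; the paper's version, on the other hand, makes the dependence on $\mathbb{E}_\theta[N_{i,j}]$ appear in a single clean step. Both arguments are valid here since the data are gathered i.i.d.\ under a fixed behavior policy.
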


\begin{proof}[Proof of Lemma \ref{lemma:log-likelihood}]
We can write $$ \begin{aligned}
L_{T}&=\sum_{t=1}^{T}\sum_{i,j} \indicator_{i_t=i,j_t=j}\operatorname{log}\left(\frac{q_{\theta}(r_t|i,j)}{q_{\theta'}(r_t|i,j)}\right) \\
&= \sum_{i,j} \sum_{t=1}^{N_{i,j}} \operatorname{log}\left(\frac{q_{\theta}(r^{i,j}_t|i,j)}{q_{\theta'}(r^{i,j}_t|i,j)}\right) 
\end{aligned}
$$

where $r^{i,j}_t$ is the reward observed after the $t$-th time that arm $j$ has been selected under context $i$. Since the $r^{i,j}_t$ are i.i.d. of distribution $q_{\theta}(|i,j)$ (resp. $q_{\theta'}(|i,j)$) under $\mathcal{M}_{\theta}$ (resp. $\mathcal{M}_{\theta'}$), Wald's Lemma yields
the result.
\end{proof}

\begin{proposition}\label{prop:sampling-lower-bound-1}
Let $\operatorname{Alt}_\varepsilon\left(\theta\right)=\left\{\theta' \in \mathbb{R}^d, |v^{\pi}_{\theta'}-v^{\pi}_{\theta}| \geq 2\varepsilon\right\}$.
If $\hat{v}$ is $(\varepsilon,\delta)$-PAC with $\delta \in (0,1/2]$, then for every $\theta \in \mathbb{R}^d$,
$$T \geq \frac{\operatorname{kl}\left(\delta,1-\delta\right)}{\displaystyle \inf_{\theta' \in \operatorname{Alt}_{\varepsilon}\left(\theta\right)}\frac{1}{2\sigma^2} \Vert \theta'-\theta\Vert^2_{\Lambda}}.$$

\end{proposition}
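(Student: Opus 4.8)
The plan is to use the standard change-of-measure (transportation) lower bound argument for $(\varepsilon,\delta)$-PAC estimation. The key identity is the data-processing inequality for KL-divergence applied to a well-chosen statistic of the observed sample. Fix $\theta \in \mathbb{R}^d$ and let $\theta' \in \operatorname{Alt}_\varepsilon(\theta)$ be arbitrary, so that $|v^\pi_{\theta'} - v^\pi_\theta| \ge 2\varepsilon$. Consider the event $\mathcal{E} = \{ |\hat v - v^\pi_\theta| \le \varepsilon \}$. Because $\hat v$ is $(\varepsilon,\delta)$-PAC, we have $\mathbb{P}_\theta(\mathcal{E}) \ge 1-\delta$; moreover, on $\mathcal{E}$ we have $|\hat v - v^\pi_{\theta'}| \ge |v^\pi_\theta - v^\pi_{\theta'}| - |\hat v - v^\pi_\theta| \ge 2\varepsilon - \varepsilon = \varepsilon$, so $\mathcal{E} \subseteq \{|\hat v - v^\pi_{\theta'}| \ge \varepsilon\}$, hence $\mathbb{P}_{\theta'}(\mathcal{E}) \le \delta$ (again by the PAC property applied under $\theta'$).

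Next I would invoke the data-processing inequality for the Bernoulli random variable $\mathbf{1}_{\mathcal{E}}$: the KL-divergence between the laws of the full sample under $\mathcal{M}_\theta$ and $\mathcal{M}_{\theta'}$ dominates $\operatorname{kl}(\mathbb{P}_\theta(\mathcal{E}), \mathbb{P}_{\theta'}(\mathcal{E}))$. Since $\operatorname{kl}$ is monotone in the appropriate sense and $\mathbb{P}_\theta(\mathcal{E}) \ge 1-\delta \ge 1/2 \ge \delta \ge \mathbb{P}_{\theta'}(\mathcal{E})$, we get $\operatorname{kl}(\mathbb{P}_\theta(\mathcal{E}), \mathbb{P}_{\theta'}(\mathcal{E})) \ge \operatorname{kl}(1-\delta, \delta) = \operatorname{kl}(\delta, 1-\delta)$. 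On the other hand, the KL-divergence between the two sample laws equals $\mathbb{E}_\theta[L_T]$, which by Lemma \ref{lemma:log-likelihood} equals $\sum_{i,j} \mathbb{E}_\theta[N_{i,j}] \operatorname{KL}(q_\theta(\cdot|i,j), q_{\theta'}(\cdot|i,j))$. For Gaussian rewards with variance $\sigma^2$ and means $\psi_{i,j}^\top\theta$ and $\psi_{i,j}^\top\theta'$, the per-step KL is $\frac{1}{2\sigma^2}(\psi_{i,j}^\top(\theta'-\theta))^2$. Since the contexts are i.i.d. from $\rho$ and arms from $\pi^b$, $\mathbb{E}_\theta[N_{i,j}] = T\omega_{i,j}$, so the sum becomes $\frac{T}{2\sigma^2}\sum_{i,j}\omega_{i,j}(\psi_{i,j}^\top(\theta'-\theta))^2 = \frac{T}{2\sigma^2}\|\theta'-\theta\|_\Lambda^2$.

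Combining the two bounds yields $\frac{T}{2\sigma^2}\|\theta'-\theta\|_\Lambda^2 \ge \operatorname{kl}(\delta,1-\delta)$ for every $\theta' \in \operatorname{Alt}_\varepsilon(\theta)$. Rearranging and taking the infimum over $\operatorname{Alt}_\varepsilon(\theta)$ gives the claimed bound $T \ge \operatorname{kl}(\delta,1-\delta) / \left(\inf_{\theta' \in \operatorname{Alt}_\varepsilon(\theta)} \frac{1}{2\sigma^2}\|\theta'-\theta\|_\Lambda^2\right)$. The main subtlety to handle carefully is the reduction to the binary statistic and the monotonicity of $\operatorname{kl}$ used to turn $\operatorname{kl}(\mathbb{P}_\theta(\mathcal{E}), \mathbb{P}_{\theta'}(\mathcal{E}))$ into $\operatorname{kl}(\delta, 1-\delta)$; this is where one must be precise that $\operatorname{kl}(a,b)$ is nondecreasing in $a$ on $[b,1)$ and nonincreasing in $b$ on $(0,a]$, so that pushing $a$ down to $1-\delta$ and $b$ up to $\delta$ only decreases it. A second minor point is that $\operatorname{Alt}_\varepsilon(\theta)$ could a priori be empty, in which case the infimum is $+\infty$ and the bound is vacuous; this happens only in degenerate cases (e.g. $\psi_\pi = 0$) and needs no special treatment.
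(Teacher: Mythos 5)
Your proposal is correct and follows essentially the same route as the paper: a change-of-measure/data-processing argument on a binary event built from the PAC guarantee, combined with Lemma \ref{lemma:log-likelihood} and the Gaussian KL to identify $\mathbb{E}_\theta[L_T]$ with $\frac{T}{2\sigma^2}\Vert\theta'-\theta\Vert_\Lambda^2$. The only cosmetic difference is that you work with the "good" event $\{|\hat v - v^\pi_\theta|\le\varepsilon\}$ while the paper uses its complement, which is immaterial since $\operatorname{kl}(1-\delta,\delta)=\operatorname{kl}(\delta,1-\delta)$.
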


\begin{proof}[Proof of Proposition \ref{prop:sampling-lower-bound-1}]
The information-theoretic arguments used in the proof of Lemma 19 from \cite{kaufmann2016complexity} ensure that for any $\theta$ and $\theta'$ and any measurable set $E$ with respect to $\sigma(i_1,j_1,r_1,\dots,i_{T},j_{T},r_{T}),$ $$\mathbb{E}_{\theta}[L_{T}] \geq \operatorname{kl}(\mathbb{P}_{\theta}\left(E\right),\mathbb{P}_{\theta'}\left(E\right)).$$
The choices $E=\left\{|v^{\pi}_{\theta}-\hat{v}| \geq \varepsilon \right\}$ and $\theta' \in \operatorname{Alt}_{\varepsilon}\left(\theta\right)$
directly yield $\mathbb{P}_{\theta}(E) \leq \delta$. Furthermore, by the triangular inequality, $|v^{\pi}_{\theta}-\hat{v}| \geq |v^{\pi}_{\theta}-v^{\pi}_{\theta'}| - |v^{\pi}_{\theta'}-\hat{v}|$. Hence, $\mathbb{P}_{\theta'}(E) \geq \mathbb{P}_{\theta'}(|v^{\pi}_{\theta'}-\hat{v}| \geq \varepsilon) \geq 1-\delta$. Since $0 < \delta \leq 1/2$, we have in particular $\delta \leq 1-\delta$ and $\mathbb{P}_{\theta}(E) \leq \mathbb{P}_{\theta'}(E)$. When $x \leq y$, $x \mapsto \operatorname{kl}(x,y)$ is decreasing and $y \mapsto \operatorname{kl}(x,y)$ is increasing. Thus, $\operatorname{kl}(\mathbb{P}_{\theta}\left(E\right),\mathbb{P}_{\theta'}\left(E\right)) \geq \operatorname{kl}\left(\delta,1-\delta\right),$ so that $$\mathbb{E}_{\theta}[L_{T}] \geq \operatorname{kl}\left(\delta,1-\delta\right). $$ Since the reward follows a Gaussian distribution, Lemma \ref{lemma:log-likelihood} yields $$\mathbb{E}_{\theta}[L_{T}]=\frac{T}{2\sigma^2}\sum_{(i,j) \in [m] \times [n]} \omega_{i,j}((\theta-\theta')^\top \psi_{i,j})^2,$$ where we have used that in our setting, $\mathbb{E}_{\theta}\left[N_{i,j}\right]=T\omega_{i,j}$. Combining this result with the inequality above gives $$T \geq \frac{\operatorname{kl}\left(\delta,1-\delta\right)}{\frac{1}{2\sigma^2}(\theta'-\theta)^\top \Lambda(\theta'-\theta)}, $$ and taking the infimum over every $\theta' \in \operatorname{Alt}_{\varepsilon}\left(\theta\right)$ completes the proof.

\end{proof}

This infimum can be computed explicitly with the use of Lemma \ref{lemma:psd-infimum}. It yields the following result.

\begin{theorem}\label{thm:sampling-lower-bound-2}

Let $\delta \in (0,1/2]$ and assume that $\psi_{\pi}:=\sum_{i,j} \omega^{\pi}_{i,j} \psi_{i,j} \in \operatorname{Im}(\Lambda)$. The sample complexity of any $\left(\epsilon,\delta\right)$-PAC estimator of $v^{\pi}$ must satisfy $$ T \geq \frac{\sigma^2\operatorname{kl}\left(\delta,1-\delta\right)}{2\varepsilon^2}\Vert \psi_{\pi}\Vert^2_{\Lambda^{\dagger}}.$$
\end{theorem}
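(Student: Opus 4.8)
The plan is to combine Proposition~\ref{prop:sampling-lower-bound-1} with an explicit evaluation of the infimum appearing in its denominator. Proposition~\ref{prop:sampling-lower-bound-1} already gives
\[
T \ge \frac{\operatorname{kl}(\delta,1-\delta)}{\displaystyle\inf_{\theta'\in\operatorname{Alt}_\varepsilon(\theta)}\frac{1}{2\sigma^2}\|\theta'-\theta\|_\Lambda^2},
\]
so it remains to show that
\[
\inf_{\theta'\in\operatorname{Alt}_\varepsilon(\theta)}\|\theta'-\theta\|_\Lambda^2 \;=\; \frac{4\varepsilon^2}{\|\psi_\pi\|_{\Lambda^\dagger}^2},
\]
under the assumption $\psi_\pi\in\operatorname{Im}(\Lambda)$; plugging this in and rearranging yields exactly the claimed bound $T\ge \frac{\sigma^2\operatorname{kl}(\delta,1-\delta)}{2\varepsilon^2}\|\psi_\pi\|_{\Lambda^\dagger}^2$.

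First I would rewrite the alternative set in a more convenient form. Writing $\eta=\theta'-\theta$, note that $v^\pi_{\theta'}-v^\pi_\theta=\psi_\pi^\top\eta$, so $\operatorname{Alt}_\varepsilon(\theta)$ corresponds exactly to $\{\eta:|\psi_\pi^\top\eta|\ge 2\varepsilon\}$, and the infimum becomes a constrained quadratic minimization:
\[
\inf\left\{\eta^\top\Lambda\eta \;:\; |\psi_\pi^\top\eta|\ge 2\varepsilon\right\}.
\]
Because the objective is even in $\eta$ and the constraint is symmetric, this equals $\inf\{\eta^\top\Lambda\eta : \psi_\pi^\top\eta = 2\varepsilon\}$ (the minimum is attained on the boundary, since scaling $\eta$ down toward the boundary only decreases $\eta^\top\Lambda\eta$ when $\Lambda\succeq 0$). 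This is a standard linearly-constrained quadratic program; I expect the paper has isolated its solution as Lemma~\ref{lemma:psd-infimum} (referenced just before the theorem), which presumably states that for $\psi\in\operatorname{Im}(\Lambda)$ one has $\inf\{\eta^\top\Lambda\eta : \psi^\top\eta = c\} = c^2/\|\psi\|_{\Lambda^\dagger}^2$. The role of the hypothesis $\psi_\pi\in\operatorname{Im}(\Lambda)$ is precisely to guarantee this infimum is finite (and positive): if $\psi_\pi$ had a component outside $\operatorname{Im}(\Lambda)=\ker(\Lambda)^\perp$, one could satisfy the constraint with $\eta$ in $\ker(\Lambda)$ at zero cost, making the lower bound vacuous.

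Concretely, to prove the quadratic-program identity directly (in case Lemma~\ref{lemma:psd-infimum} is not exactly in this shape), I would decompose $\eta = \eta_0 + \eta_\perp$ with $\eta_0\in\operatorname{Im}(\Lambda)$ and $\eta_\perp\in\ker(\Lambda)$; then $\eta^\top\Lambda\eta = \eta_0^\top\Lambda\eta_0$ and $\psi_\pi^\top\eta = \psi_\pi^\top\eta_0$ (using $\psi_\pi\in\ker(\Lambda)^\perp$), so without loss of generality $\eta\in\operatorname{Im}(\Lambda)$, where $\Lambda$ restricted to its image is invertible. A Lagrange multiplier / Cauchy--Schwarz argument (writing $2\varepsilon = \psi_\pi^\top\eta = (\Lambda^{\dagger/2}\psi_\pi)^\top(\Lambda^{1/2}\eta)$ and applying Cauchy--Schwarz) gives $(2\varepsilon)^2 \le \|\psi_\pi\|_{\Lambda^\dagger}^2\,\eta^\top\Lambda\eta$ with equality for $\eta$ proportional to $\Lambda^\dagger\psi_\pi$, which delivers the identity. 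The main obstacle is really just bookkeeping around the pseudoinverse and the image/kernel decomposition — there is no deep difficulty, only the need to handle the rank-deficient $\Lambda$ carefully and to invoke Lemma~\ref{lemma:psd-infimum} with the correct hypotheses. Finally, I would substitute back into Proposition~\ref{prop:sampling-lower-bound-1} to conclude.
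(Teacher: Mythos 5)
Your proposal is correct and follows essentially the same route as the paper: reduce the alternative set to the linear equality constraint $\psi_\pi^\top\eta = 2\varepsilon$ (using symmetry to drop the absolute value and scaling to reach the boundary), evaluate the resulting quadratic program via Lemma~\ref{lemma:psd-infimum} (whose statement you correctly anticipated, including the role of $\psi_\pi\in\operatorname{Im}(\Lambda)$ in ruling out a zero-cost kernel direction), and substitute back into Proposition~\ref{prop:sampling-lower-bound-1}. The only cosmetic difference is that the paper also notes the degenerate case $\Vert\psi_\pi\Vert_{\Lambda^\dagger}=0$, where the bound is trivially satisfied.
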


\begin{proof}[Proof of Theorem \ref{thm:sampling-lower-bound-2}]

From Proposition \ref{prop:sampling-lower-bound-1}, we know that $$T \geq \frac{\sigma^2\operatorname{kl}\left(\delta,1-\delta\right)}{\displaystyle \inf_{\theta' \in \operatorname{Alt}_{\varepsilon}\left(\theta\right)}\frac{1}{2} \Vert \theta'-\theta\Vert^2_{\Lambda}}.$$ Let us show that the infimum can be rewritten as the infimum in Lemma \ref{lemma:psd-infimum} for $n=1$, $\Lambda_1=\Lambda$ and $\nu_1=\psi_{\pi}$. First, note that 
since $
v^{\pi}_{\theta}=\theta^\top \left(\sum_{i,j} \omega^{\pi}_{i,j}  \psi_{i,j}\right)$, the constraint $\theta' \in \operatorname{Alt}_{\varepsilon}\left(\theta\right)$ can be rewritten as $|(\theta'-\theta)^\top  \psi_{\pi}| \geq 2\varepsilon$. Since replacing $\theta'-\theta$ by $\theta-\theta'$ yields the same objective, the absolute value can be removed. Moreover, the inequality constraint can be simplified to an equality constraint: if we had $(\theta'-\theta)^\top \psi_{\pi} > 2\varepsilon, $ we could normalize $\theta'-\theta$ to make the objective strictly smaller. Finally, note that the constrained problem only depends on $\theta'$ through $\mu:=\theta'-\theta$. Computing the infimum consequently amounts to solving the quadratic program 
\begin{align}
    \inf_{\mu \in \mathbb{R}^d} & \frac{1}{2}\Vert \mu\Vert^2_{\Lambda} \\
    \text{s.t. } & \mu^\top \psi_{\pi}=2\varepsilon.
\end{align}
Since $\psi_{\pi} \in \operatorname{Im}(\Lambda)$, either $\Vert \psi_{\pi}\Vert_{\Lambda^{\dagger}}=0$ and the bound is trivial, or $\Vert \psi_{\pi}\Vert_{\Lambda^{\dagger}} \neq 0$ and Lemma \ref{lemma:psd-infimum} yields the bound.
    
\end{proof}

\begin{remark}
    The proof shows that interestingly, the bound of Proposition \ref{prop:sampling-lower-bound-1} does not depend on $\theta$.
\end{remark}

\begin{remark}
    If $\psi_{\pi} \notin \operatorname{Im}(\Lambda)$, Lemma \ref{lemma:psd-infimum} entails $T \geq \infty$, and there is no $\left(\epsilon,\delta\right)$-PAC estimator of $v^{\pi}$.
\end{remark}

This sample complexity lower bound can be rewritten as a bound on the optimal estimation accuracy for $T$ samples. When $\psi_{\pi} \in \operatorname{Im}(\Lambda)$ and $\hat{v}$ is an $(\epsilon,\delta)$-PAC estimator of $v^{\pi}$, then 

$$\varepsilon \geq \Vert \psi_{\pi}\Vert_{\Lambda^{\dagger}}\frac{\sigma\sqrt{\operatorname{kl}\left(\delta,1-\delta\right)}}{\sqrt{2T}}.$$ In other words, for any $\delta \in (0,1/2]$, there is no $\left(\varepsilon,\delta\right)-$PAC estimator with $$\varepsilon < \Vert \psi_{\pi}\Vert_{\Lambda^{\dagger}}\frac{\sigma\sqrt{\operatorname{kl}\left(\delta,1-\delta\right)}}{\sqrt{2T}}.$$

\begin{remark}
    Contrary to the lower bound of Theorem 5 in \cite{duan2020minimax} for linear Markov decision processes, our lower bound does not require additional assumptions on the set of contexts.
\end{remark}
\subsection{Matching finite-sample error bound}\label{subsec:linear-error-bound}

We assume in this section that $\Lambda$ is invertible, and the noise is now only assumed $\sigma$-subgaussian. Recall that $v^{\pi}= \theta^\top \psi_{\pi}$ for $\psi_{\pi}=\sum_{i,j}\omega^{\pi}_{i,j}\psi_{i,j}.$ Let $\tau>0.$ Since $\psi_{\pi}$ is known, a natural estimator of $v^{\pi}$ is $\hat{v}=\hat{\theta}^\top \psi_{\pi}$ where $$\hat{\theta}=\left(\sum_{t=1}^{T}\psi_{i_t,j_t}\psi_{i_t,j_t}^\top +\tau I_d\right)^{-1}\left(\sum_{t=1}^T r_t\psi_{i_t,j_t}\right)$$ is a regularized least squares estimator of $\theta$. Note that $\sum_{t=1}^{T}\psi_{i_t,j_t}\psi_{i_t,j_t}^\top+\tau I_d$ is invertible for any $\tau>0$ since $\sum_{t=1}^{T}\psi_{i_t,j_t}\psi_{i_t,j_t}^\top $ is a positive semi-definite matrix with nonnegative eigenvalues. Let $\widehat{\Lambda}_{\tau}:=\frac{1}{T}\left(\sum_{t=1}^{T}\psi_{i_t,j_t}\psi_{i_t,j_t}^\top + \tau I_d\right)$, $L_{\psi}:= \max_{i,j} \psi_{i,j}^\top \Lambda^{-1}\psi_{i,j}$ and $L_{\theta}:=\max_{i,j} \vert \psi_{i,j}^\top \theta \vert.$

\begin{theorem}\label{thm:linear-error-bound}
    Assume that $T \geq 18L_{\psi}\log\left(8d/\delta\right)$ and $\tau \leq L_{\psi}\log(8d/\delta)\lambda_{\min}\left(\Lambda\right).$ With probability at least $1-\delta$, $$\left|v^{\pi}-\hat{v}\right| \leq \frac{1}{\sqrt{T}}E_1+\frac{1}{T}E_2+\frac{1}{T^{3/2}}E_3+\frac{1}{T^2}E_4$$ with $E_1:=\Vert \psi_{\pi}\Vert_{\Lambda^{-1}}\sqrt{2\sigma^2\log(8/\delta)}$, $E_2 :=\Vert \psi_{\pi}\Vert_{\Lambda^{-1}}\left(L_{\psi} L_{\theta}+\sigma(1+4\sqrt{2d})\sqrt{L_{\psi}}\right)\log\left(16(d+1)/\delta\right)$, $E_3:=\Vert \psi_{\pi}\Vert_{\Lambda^{-1}}\left(\frac{\sigma\sqrt{2}}{3}+4\sigma\sqrt{d}+4\sqrt{L_{\psi}}L_{\theta}\right)L_{\psi}\log^{3/2}\left(16(d+1)/\delta\right)$, $E_4:=\Vert \psi_{\pi}\Vert_{\Lambda^{-1}}\frac{4\sigma\sqrt{2}}{3}\sqrt{d}L_{\psi}^{3/2}\log^{2}\left(16(d+1)/\delta\right).$ In particular, $$\left|v^{\pi}-\hat{v}\right| \leq \Vert \psi_{\pi}\Vert_{\Lambda^{-1}}\sqrt{\frac{2\sigma^2\log(8/\delta)}{T}}+O\left(1/T\right)$$ with probability at least $1-\delta.$
\end{theorem}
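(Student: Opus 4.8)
The plan is to decompose the estimation error of $\hat v$ into a "main" term coming from the noise when the sample covariance $\widehat\Lambda_\tau$ equals $\Lambda$ and a collection of "higher order" terms coming from the deviation of $\widehat\Lambda_\tau$ from $\Lambda$ and from the regularization. First I would write the error as $v^\pi - \hat v = \psi_\pi^\top(\theta - \hat\theta)$ and, using $\hat\theta = (T\widehat\Lambda_\tau)^{-1}\sum_t r_t\psi_{i_t,j_t}$ with $r_t = \psi_{i_t,j_t}^\top\theta + \xi_t$, obtain
\begin{align*}
  \theta - \hat\theta = \widehat\Lambda_\tau^{-1}\Big( \tfrac{\tau}{T}\theta - \tfrac{1}{T}\sum_{t=1}^T \xi_t\psi_{i_t,j_t}\Big).
\end{align*}
Then I would insert $\Lambda^{-1}$ by writing $\widehat\Lambda_\tau^{-1} = \Lambda^{-1} + \Lambda^{-1}(\Lambda - \widehat\Lambda_\tau)\widehat\Lambda_\tau^{-1}$, so that the leading contribution to $\psi_\pi^\top(\theta-\hat\theta)$ is $-\tfrac1T\psi_\pi^\top\Lambda^{-1}\sum_t\xi_t\psi_{i_t,j_t}$, which is a sum of $\sigma$-subgaussian increments with variance proxy $\tfrac{\sigma^2}{T}\|\psi_\pi\|_{\Lambda^{-1}\Lambda\Lambda^{-1}}^2 = \tfrac{\sigma^2}{T}\|\psi_\pi\|_{\Lambda^{-1}}^2$ (conditionally on the contexts/arms), giving the $E_1/\sqrt T$ term via Hoeffding/subgaussian concentration.

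The core technical step is to control $\|\Lambda^{-1/2}(\widehat\Lambda_\tau - \Lambda)\Lambda^{-1/2}\|_\op$. Here I would use a matrix Bernstein (or matrix Chernoff) inequality on the i.i.d. terms $\Lambda^{-1/2}\psi_{i_t,j_t}\psi_{i_t,j_t}^\top\Lambda^{-1/2}$, whose operator norm is bounded by $L_\psi = \max_{i,j}\psi_{i,j}^\top\Lambda^{-1}\psi_{i,j}$ and whose mean is the identity; the condition $T \gtrsim L_\psi\log(8d/\delta)$ is exactly what makes the deviation at most a constant (say $1/2$), so that $\widehat\Lambda_\tau \succeq \tfrac12\Lambda$ and $\widehat\Lambda_\tau^{-1}$ is well behaved. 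The regularization hypothesis $\tau \le L_\psi\log(8d/\delta)\lambda_{\min}(\Lambda)$ ensures the $\tfrac\tau T$ bias contributes only at order $1/T$ (after multiplication by $\|\psi_\pi\|_{\Lambda^{-1}}$ and $L_\theta$). The remaining cross terms — products of the $O(1/\sqrt T)$ covariance deviation with the $O(1/\sqrt T)$ noise sum, and with the $O(1/T)$ bias — then produce the $E_2/T$, $E_3/T^{3/2}$, $E_4/T^2$ terms; the explicit constants $(1+4\sqrt{2d})$, $\sqrt d$, etc., come from tracking the matrix-Bernstein constants and the dimension factor in the operator-norm bound for $\sum_t\xi_t\psi_{i_t,j_t}$ (a vector subgaussian/Bernstein bound with an extra $\sqrt d$ from the covering of the unit ball, or from bounding $\|\Lambda^{-1/2}\sum_t\xi_t\psi_{i_t,j_t}\|_2$ directly).

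Concretely, the steps in order are: (1) algebraic decomposition of $v^\pi - \hat v$ into main noise term $+$ bias term $+$ covariance-error cross terms; (2) subgaussian concentration for the main noise term, conditional on $(i_t,j_t)_t$, yielding $E_1$; (3) matrix Bernstein for $\widehat\Lambda_\tau - \Lambda$ in the $\Lambda^{-1/2}$-weighted norm, using $T \gtrsim L_\psi\log(8d/\delta)$ to get the deviation below $1/2$ and hence invertibility/norm control of $\widehat\Lambda_\tau^{-1}$; (4) bound the bias term $\tfrac\tau T\psi_\pi^\top\widehat\Lambda_\tau^{-1}\theta$ using the $\tau$ hypothesis and $|\psi_{i,j}^\top\theta|\le L_\theta$; (5) a vector concentration bound on $\tfrac1T\sum_t\xi_t\psi_{i_t,j_t}$ in the $\Lambda^{-1}$-norm, contributing the $\sqrt d$ factors; (6) combine via triangle inequality, collecting powers of $1/\sqrt T$, and finally absorb everything beyond the leading order into $O(1/T)$ for the simplified statement. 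The main obstacle I anticipate is step (3) together with step (5): one must carefully handle the dependence between the (inverse of the) random matrix $\widehat\Lambda_\tau$ and the noise vector $\sum_t\xi_t\psi_{i_t,j_t}$ — this is cleanly resolved by first conditioning on the sequence $(i_t,j_t)_t$ (so the $\xi_t$ are independent of the covariance), bounding the noise term on this event, and then taking a union bound with the good event for the covariance deviation — and in bookkeeping all the constants so that they land exactly as in the four $E_k$ expressions.
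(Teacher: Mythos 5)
Your proposal is correct and follows essentially the same route as the paper: the same decomposition into a leading noise term involving $\Lambda^{-1}$, cross terms driven by $\Lambda^{-1}-\widehat{\Lambda}_\tau^{-1}$, and the regularization bias, with the same matrix-concentration control of $\Vert \Lambda^{-1/2}(\widehat{\Lambda}_\tau-\Lambda)\Lambda^{-1/2}\Vert_{\op}$ and a $\sqrt{d}$-carrying vector bound on $\Lambda^{-1/2}\sum_t\xi_t\psi_{i_t,j_t}$ (the paper uses Freedman-type martingale inequalities where you condition on the contexts, which is interchangeable here). One small point to fix in the write-up: conditionally on $(i_t,j_t)_t$ the variance proxy of the main term is $\tfrac{\sigma^2}{T^2}\sum_t(\psi_\pi^\top\Lambda^{-1}\psi_{i_t,j_t})^2$, i.e.\ it involves the \emph{empirical} covariance rather than $\Lambda$, so you must invoke your step (3) to replace it by $\tfrac{\sigma^2}{T}\Vert\psi_\pi\Vert^2_{\Lambda^{-1}}(1+o(1))$, the correction being absorbed into the $O(1/T)$ terms exactly as in the paper.
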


This error bound matches the sampling lower bound of Theorem \ref{thm:sampling-lower-bound-2} up to an universal constant. This result is similar to the error bound from \cite{duan2020minimax}, but it requires the use of variants of the classical Freedman's inequality adapted to subgaussian random variables, which are stated in Appendix \ref{subsec:linear_lemmas}. This is due to the fact that we assume in our work that the noise is subgaussian instead of bounded. 

\begin{proof}[Proof of Theorem \ref{thm:linear-error-bound}]
Note that $$\begin{aligned}
    \hat{\theta}-\theta&=\left(\sum_{t=1}^{T}\psi_{i_t,j_t}\psi_{i_t,j_t}^\top + \tau I_d\right)^{-1}\left(\sum_{t=1}^{T} (r_t-\psi_{i_t,j_t}^\top \theta)\psi_{i_t,j_t}-\tau \theta\right) \\
    &= \frac{1}{T}\widehat{\Lambda}_{\tau}^{-1}\left(\sum_{t=1}^{T} \xi_t\psi_{i_t,j_t}-\tau \theta\right).
    \end{aligned}
    $$
Thus, we have $$ \begin{aligned}
    \hat{v}-v^{\pi}&=  \psi_{\pi}^\top\left(\hat{\theta}-\theta\right) \\
    &=\frac{1}{T}\psi_{\pi}^\top \widehat{\Lambda}_{\tau}^{-1}\left(\sum_{t=1}^{T} \xi_t\psi_{i_t,j_t}-\tau \theta\right)\\
    &= L_1 + L_2
\end{aligned}$$
for $L_1:=\frac{1}{T}\sum_{t=1}^{T} \xi_t\psi_{\pi}^\top \Lambda^{-1}\psi_{i_t,j_t} $ and $L_2:=\frac{1}{T}\sum_{t=1}^{T} \xi_t\psi_{\pi}^\top (\Lambda^{-1}-\widehat{\Lambda}_{\tau}^{-1})\psi_{i_t,j_t} + \frac{\tau}{T}\psi_{\pi}^\top \widehat{\Lambda}_{\tau}^{-1} \theta.$ \\

\textbf{Control of $L_1$.}
 To control this term, which turns out to be the dominant term in the bound, we will use Proposition \ref{prop:freedman-1}. Let $\mathcal{F}_{t-1}=\sigma(i_1,j_1,r_1,\dots,i_{t-1},j_{t-1},r_{t-1},i_t,j_t)$, $U_t=\psi_{\pi}^\top \Lambda^{-1}\psi_{i_t,j_t},$ $e_t=\xi_tU_t$ and $ V_t=\operatorname{Var}\left(e_t | \mathcal{F}_{t-1}\right).$ We first note that for any choice of $\gamma>0,$ $$\mathbb{P}\left(\left|L_1\right| \geq \varepsilon\right) \leq \mathbb{P}\left(\left|\sum_{t=1}^{T}e_t\right| \geq T\varepsilon \text{ and  }\sum_{t=1}^{T}V_t \leq \gamma^2\right) +\mathbb{P}\left(\sum_{t=1}^{T}V_t \geq \gamma^2\right).$$
 Since $\xi_t$ is independent of $\mathcal{F}_{t-1}$, we have  $$\mathbb{E}\left[e_t | \mathcal{F}_{t-1}\right]=U_t\mathbb{E}\left[\xi_t\right]=0,$$  so that $(e_t)$ is a martingale difference sequence with respect to $(\mathcal{F}_t).$ 
Consequently, $$ \begin{aligned} V_t&=\mathbb{E}\left[e_t^2 | \mathcal{F}_{t-1}\right] \\ &=U_t^2\mathbb{E}\left[\xi_t^2\right]\\ 
&= \sigma^2U_t^2,  \end{aligned} $$  and
$$ \begin{aligned} 
\mathbb{E}\left[e^{ce_t} | \mathcal{F}_{t-1}\right] &= \mathbb{E}\left[e^{c \xi_t U_t} | \mathcal{F}_{t-1}\right] \\
&= \mathbb{E}\left[e^{c\xi_tU_t} | U_t\right] \\
&\leq e^{c^2V_t/2}, \end{aligned} $$  where the second to last inequality follows from the $\mathcal{F}_{t-1}$-measurability of $U_t$ and the independence of $\xi_t$ from $\mathcal{F}_{t-1}$, and the last inequality follows from the fact that $\xi_t$ is $\sigma$-subgaussian and independent of $U_t$, and from $U_t^2=V_t.$ Hence, by Proposition \ref{prop:freedman-1},  $$\mathbb{P}\left(\left|\sum_{t=1}^{T}e_t\right| \geq T \varepsilon \text{ and  }\sum_{t=1}^{T}V_t \leq \gamma^2\right) \leq 2\exp\left(-\frac{T^2\varepsilon^2}{2\gamma^2}\right)$$ for any $\varepsilon,\gamma > 0$.
Furthermore, $\sum_{t=1}^{T} V_t = \sigma^2T\psi_{\pi}^\top \Lambda^{-1}\widehat{\Lambda}_0\Lambda^{-1}\psi_{\pi} \leq \sigma^2T\Vert \psi_{\pi}\Vert^2_{\Lambda^{-1}}\Vert \Lambda^{-1/2}\widehat{\Lambda}_0\Lambda^{-1/2}\Vert_{\op}$. Indeed, since $\Lambda^{-1/2}\widehat{\Lambda}_0\Lambda^{-1/2}$ is a symmetric positive semi-definite matrix, the min-max theorem entails $\Vert \Lambda^{-1/2}\widehat{\Lambda}_0\Lambda^{-1/2}\Vert_{\op}=\lambda_{\max}\left(\Lambda^{-1/2}\widehat{\Lambda}_0\Lambda^{-1/2}\right)\geq \frac{z^\top \Lambda^{-1/2}\widehat{\Lambda}_0\Lambda^{-1/2}z}{z^\top z}$ for $z=\Lambda^{-1/2}\psi_{\pi}.$
Lemma B.5 from \cite{duan2020minimax} consequently ensures that $\sum_{t=1}^{T} V_t \leq \gamma^2$ with probability larger than $1-\delta/4$ when $\gamma$ is chosen such that $$\gamma^2=\sigma^2T \Vert \psi_{\pi}\Vert^2_{\Lambda^{-1}}(1+K_{\delta/4}) $$ where $K_{\delta/4}:=\sqrt{\frac{2\log(8d/\delta)L_{\psi}}{T}}+\frac{2\log(8d/\delta)L_{\psi}}{3T}.$

Finally, we choose $\varepsilon$ adequately with $\varepsilon:=\frac{\gamma}{T}\sqrt{2\log\left(8/\delta\right)}=\Vert \psi_{\pi}\Vert_{\Lambda^{-1}}\sqrt{\frac{2\sigma^2\log(8d/\delta)}{T}}\sqrt{1+K_{\delta/4}}$ so that $$\mathbb{P}\left(\left|L_1\right| \geq \varepsilon\right) \leq  \delta/4+\delta/4=\delta/2.$$ Finally, $\sqrt{1+x} \leq 1+x/2$ for all $x \geq  0$ implies that  $\left|L_1\right| \leq \Vert \psi_{\pi}\Vert_{\Lambda^{-1}}\sqrt{\frac{2\sigma^2\log(8d/\delta)}{T}}\left(1+\frac{K_{\delta/4}}{2}\right)$  with probability larger than $1-\delta/2.$\\

\textbf{Control of $L_2$.} Let us see that $L_2=O(1/T)$ with probability larger than $1-\delta/2$ by adapting the proof of Lemma B.6 from \cite{duan2020minimax}.
Let $\Delta = \Lambda^{-1}-\widehat{\Lambda}_{\tau}^{-1}$ and $W=\frac{1}{T}\sum_{t=1}^{T}\xi_t\psi_{i_t,j_t},$ so that $\frac{1}{T}\sum_{t=1}^{T} \xi_t\psi_{\pi}^\top (\Lambda^{-1}-\widehat{\Lambda}_{\tau}^{-1})\psi_{i_t,j_t}=\psi_{\pi}^\top \Delta W=\left(\psi_{\pi}^\top \Lambda^{-1/2}\right)\left(\Lambda^{1/2}\Delta \Lambda^{1/2}\right)\left(\Lambda^{-1/2}W\right).$
Note also that $\frac{\tau}{T}\psi_{\pi}^\top \widehat{\Lambda}_{\tau}^{-1} \theta=\frac{\tau}{T}\left(\psi_{\pi}^\top \Lambda^{-1/2}\right)\left(\Lambda^{1/2}\widehat{\Lambda}_{\tau}^{-1}B ^{1/2}\Lambda^{-1}\right)\left(\Lambda^{1/2}\theta\right).$ By the Cauchy-Schwartz inequality and the definition of the operator norm, $$ \begin{aligned} \left|L_2\right| &\leq \Vert \psi_{\pi}^\top \Lambda^{-1/2}\Vert_2\Vert \Lambda^{1/2}\Delta \Lambda^{1/2}\Lambda^{-1/2}W\Vert_2 + \frac{\tau}{T}\Vert \psi_{\pi}^\top \Lambda^{-1/2}\Vert_2\Vert \Lambda^{1/2}\widehat{\Lambda}_{\tau}^{-1}\Lambda^{1/2}\Vert_{\op}\Vert \Lambda^{-1}\Vert_{\op}\Vert \Lambda^{1/2}\theta\Vert_2  \\
&\leq \Vert \psi_{\pi}\Vert_{\Lambda^{-1}}\Vert \Lambda^{1/2}\Delta \Lambda^{1/2}\Vert_{\op}\Vert \Lambda^{-1/2}W\Vert_2+\frac{\tau}{\lambda_{\min}\left(\Lambda\right)T}\Vert \psi_{\pi}\Vert_{\Lambda^{-1}}\Vert \Lambda^{1/2}\widehat{\Lambda}_{\tau}^{-1}\Lambda^{1/2}\Vert_{\op}\Vert \Lambda^{1/2}\theta\Vert_2 \\
&\leq \Vert \psi_{\pi}\Vert_{\Lambda^{-1}}\left(\Vert \Lambda^{1/2}\Delta \Lambda^{1/2}\Vert_{\op}\Vert \Lambda^{-1/2}W\Vert_2+\frac{\tau L_{\theta}}{\lambda_{\min}\left(\Lambda\right)T}\left(1+\Vert \Lambda^{1/2}\Delta \Lambda^{1/2}\Vert_{\op}\right)\right),\end{aligned} $$

where we have used  $\Vert \Lambda^{1/2}\theta\Vert^2_2=\theta^\top \Lambda\theta=\sum_{i,j}\omega_{i,j}\left(\psi_{i,j}^\top \theta\right)^2\leq L_{\theta}^2 $ for the last inequality. 

The proof of Lemma B.6. from \cite{duan2020minimax} ensures 
that as long as $T \geq 18L_{\psi}\log\left(8d/\delta\right)$ and $\tau \leq L_{\psi}\log(8d/\delta)\lambda_{\min}\left(\Lambda\right)$, we have $\Vert \Lambda^{1/2}\Delta \Lambda^{1/2}\Vert_{\op} \leq 4\sqrt{\frac{\log(8d/\delta)L_{\psi}}{T}} $ with probability larger than $1-\delta/4$. To control $\Vert \Lambda^{-1/2}W\Vert_2$, we first write  $\Lambda^{-1/2}W=\frac{1}{T}\sum_{t=1}^{T}W_t$ where $W_t:=\Lambda^{-1/2}\xi_t\psi_{i_t,j_t}$ defines a martingale difference sequence in $\mathbb{R}^d$ with respect to the filtration $(\mathcal{F}_t)_{t \geq 0}$ that was defined earlier. We will use the concentration inequality of Proposition \ref{corr:freedman-3} to perform the analysis. 

Let $P_t:=\Lambda^{-1/2}\psi_{i_t,j_t}$. Note that $P_t$ is $\mathcal{F}_{t-1}$-measurable, which implies that
$W_t=\xi_tP_t$ defines a conditionally symmetric martingale difference sequence with respect to $(\mathcal{F}_t)$. In our case, we have $V_{1,t}=\mathbb{E}\left[W_tW_t^\top  | \mathcal{F}_{t-1}\right]=\sigma^2 P_tP_t^\top  \in \mathbb{R}^{d \times d}$  and $V_{2,t}=\mathbb{E}\left[W_t^\top W_t | \mathcal{F}_{t-1}\right]=\sigma^2 P_t^\top P_t \in \mathbb{R}$. Let us choose $Z_{1,t}:=\xi_t\sqrt{P_tP_t^\top }$ and $Z_{2,t}:=\xi_t\sqrt{P_t^\top P_t}$. By Lemma 4.3 of \cite{tropp2011user},
$$\mathbb{E}\left[e^{c Z_{1,t}} | \mathcal{F}_{t-1}\right]=\mathbb{E}\left[e^{c \xi_t\sqrt{P_tP_t^\top }} | P_t\right] \preceq e^{c^2V_{1,t}/2}$$ and $$\mathbb{E}\left[e^{c Z_{2,t}} | \mathcal{F}_{t-1}\right]=\mathbb{E}\left[e^{c \xi_t\sqrt{P_t^\top P_t}} | P_t\right] \preceq e^{c^2V_{2,t}/2}.$$

Since $P_tP_t^\top $ and $P_t^\top P_t$ have the same non-zero eigenvalues, $$ \begin{aligned} \left\Vert \sum_{t=1}^{T}P_tP_t^\top \right\Vert_{\op} \leq \sum_{t=1}^{T}\Vert P_tP_t^\top \Vert_{\op} = \sum_{t=1}^{T}P_t^\top P_t, \end{aligned} $$ so that $$\max\left( \left\Vert \sum_{t=1}^{T}V_{1,t}\right\Vert_{\op},\left\Vert \sum_{t=1}^{T}V_{2,t}\right\Vert_{\op}\right)=\sigma^2\sum_{t=1}^{T}P_t^\top P_t.$$

Applying Corollary \ref{corr:freedman-3} yields
$$ \begin{aligned} \mathbb{P}\left( \Vert \Lambda^{-1/2}W\Vert_{2} \geq \varepsilon \right) &\leq \mathbb{P}\left(\left\Vert \sum_{t=1}^{T}W_t\right\Vert_{2} \geq T\varepsilon \text{ and  }\sum_{t=1}^{T}P_t^\top P_t\leq \gamma^2\right) + \mathbb{P}\left(\sigma^2\sum_{t=1}^{T}P_t^\top P_t \geq \gamma^2 \right) \\
&\leq 2(d+1)\exp\left(-\frac{T^2\varepsilon^2}{2\gamma^2}\right)+\mathbb{P}\left(\sigma^2\sum_{t=1}^{T}P_t^\top P_t \geq \gamma^2 \right) \end{aligned} $$ for any $\varepsilon,\gamma > 0$. We can control $\sum_{t=1}^{T}P_t^\top P_t$ by adapting the proof of Lemma B.9 from \cite{duan2020minimax}. We have $$ \begin{aligned} \sum_{t=1}^{T}P_t^\top P_t &=\sum_{t=1}^{T}\psi_{i_t,j_t}\Lambda^{-1}\psi_{i_t,j_t} \\
&= Td+T\operatorname{Tr}\left(\Lambda^{-1/2}\widehat{\Lambda}_{\tau}\Lambda^{-1/2}-I_d\right) \\
&\leq Td\left(1+\Vert \Lambda^{-1/2}\hat{\Lambda}_{\tau}\Lambda^{-1/2}-I_d\Vert_{\op}\right).\end{aligned} $$ 
By Lemma B.5. of \cite{duan2020minimax}, $\sigma^2\sum_{t=1}^{T} P_t^\top P_t \leq \gamma^2$ with probability larger than $1-\delta/8$ when $\gamma$ is such that $$\gamma^2=\sigma^2Td(1+K_{\delta/8}). $$ 

Then, we let $\varepsilon:=\frac{\gamma}{T}\sqrt{2\log\left(16(d+1)/\delta\right)}$ so that $\mathbb{P}\left(\Vert \Lambda^{-1/2}W\Vert_2 \geq \varepsilon\right) \leq  \delta/8+\delta/8=\delta/4.$
In particular, $\Vert \Lambda^{-1/2}W\Vert_2 \geq \sqrt{\frac{2d\sigma^2\log\left(16(d+1)/\delta\right)}{T}}\left(1+\frac{K_{\delta/8}}{2}\right)$ with probability smaller than $\delta/4$. To ease the notation, let $C_1:=\log(8d/\delta)$ and $C_2:=\log(16(d+1)/\delta)$. Combining the previous result with the bound on $\Vert \Lambda^{1/2}\Delta \Lambda^{1/2}\Vert_{\op}$ yields that with probability larger than $1-\delta/2,$ 
$$\begin{aligned}
    \left|L_2\right| &\leq \Vert \psi_{\pi}\Vert_{\Lambda^{-1}}\left(\Vert \Lambda^{1/2}\Delta \Lambda^{1/2}\Vert_{\op}\Vert \Lambda^{-1/2}W\Vert_2+\frac{\tau L_{\theta}}{\lambda_{\min}\left(\Lambda\right)T}\left(1+\Vert \Lambda^{1/2}\Delta \Lambda^{1/2}\Vert_{\op}\right)\right) \\
    &\leq \frac{\Vert \psi_{\pi}\Vert_{\Lambda^{-1}}}{T}\left(4\sqrt{2d\sigma^2C_2C_1L_{\psi}}\left(1+\frac{K_{\delta/8}}{2}\right)+\frac{\tau L_{\theta}}{\lambda_{\min}\left(\Lambda\right)}\left(1+4\sqrt{\frac{C_1L_{\psi}}{T}}\right)\right) \\
    &\leq \frac{\Vert \psi_{\pi}\Vert_{\Lambda^{-1}}}{T}\left(4\sqrt{2d\sigma^2C_2C_1L_{\psi}}\left(1+\frac{K_{\delta/8}}{2}\right)+L_{\psi}L_{\theta}C_1\left(1+4\sqrt{\frac{C_1L_{\psi}}{T}}\right)\right),
\end{aligned}$$ 
where we used $\tau \leq L_{\psi}C_1\lambda_{\min}\left(\Lambda\right)$ for the last inequality. In the end, $$ \begin{aligned} \left|v^{\pi}-\hat{v}\right| &\leq \left|L_1\right|+\left|L_2\right| \\ &\leq \Vert \psi_{\pi}\Vert_{\Lambda^{-1}}\sqrt{\frac{2\sigma^2C_1}{T}}\left(1+\frac{K_{\delta/4}}{2}\right)+\frac{\Vert \psi_{\pi}\Vert_{\Lambda^{-1}}}{T}4\sqrt{2d\sigma^2L_{\psi}}C_2\left(1+\frac{K_{\delta/8}}{2}\right)  \\
&+ \frac{\Vert \psi_{\pi}\Vert_{\Lambda^{-1}} L_{\psi} L_{\theta}C_1}{T}\left(1+4\sqrt{\frac{C_1L_{\psi}}{T}}\right) \\
&\leq \Vert \psi_{\pi}\Vert_{\Lambda^{-1}}\left(\sqrt{\frac{2\sigma^2C_1}{T}}+\frac{4\sqrt{2d\sigma^2L_{\psi}}}{T}C_2\right)\left(1+\sqrt{\frac{C_2L_{\psi}}{2T}}+\frac{C_2L_{\psi}}{3T}\right) \\
&+ \frac{\Vert \psi_{\pi}\Vert_{\Lambda^{-1}} L_{\psi} L_{\theta}C_1}{T}\left(1+4\sqrt{\frac{C_1L_{\psi}}{T}}\right) \\
& \leq \frac{1}{\sqrt{T}}E_1 + \Vert \psi_{\pi}\Vert_{\Lambda^{-1}}\frac{4\sqrt{2d\sigma^2L_{\psi}}}{T}C_2 + \frac{L_{\psi} L_{\theta}C_1}{T}\Vert \psi_{\pi}\Vert_{\Lambda^{-1}}\left(1+4\sqrt{\frac{C_1L_{\psi}}{T}}\right)\\ &+\Vert \psi_{\pi}\Vert_{\Lambda^{-1}}\left(\sqrt{\frac{2\sigma^2C_2}{T}}+\frac{4\sqrt{2d\sigma^2L_{\psi}}}{T}C_2\right)\left(\sqrt{\frac{C_2L_{\psi}}{2T}}+\frac{C_2L_{\psi}}{3T}\right)  \\
&\leq \frac{1}{\sqrt{T}}E_1+\frac{1}{T}E_2+\frac{1}{T^{3/2}}E_3+\frac{1}{T^2}E_4 \end{aligned} $$ with probability larger than $1-\delta.$

\end{proof}

\subsection{Additional Lemmas}\label{subsec:linear_lemmas}

The following result is an extension of Lemma 9 from \cite{taupin2022best} to positive semi-definite matrices.

\begin{lemma}\label{lemma:psd-infimum}
For any $\nu_1,\dots,\nu_n \in \mathbb{R}^d$ and any symmetric positive semi-definite matrices $\Lambda_1, \dots, \Lambda_n \in \mathbb{R}^{d \times d}$, we have
$$ \inf_{\substack{\mu \in \mathbb{R}^{n \times d}, \\ \sum_{i=1}^{n}\mu_i^\top \nu_i=2\varepsilon}} \frac{1}{2} \sum_{i=1}^{n}\Vert \mu_i\Vert^2_{\Lambda_i} = \begin{cases}
 \frac{2\varepsilon^2}{\sum_{i=1}^{n}\Vert \nu_i\Vert^2_{\Lambda_i^{\dagger}}} & \text{ if } \forall i \in [n], \nu_i \in \operatorname{Im}\left(\Lambda_i\right) \text{ and } \sum_{i=1}^{n}\Vert \nu_i\Vert^2_{\Lambda_i^{\dagger}} \neq 0\\
  0 & \text{ if } \exists j \in [n], \nu_j \notin \operatorname{Im}\left(\Lambda_j\right). 
\end{cases}.$$

\end{lemma}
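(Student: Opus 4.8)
The plan is to read this as a convex quadratic program with a single linear equality constraint and solve it by combining a two-step Cauchy--Schwarz lower bound with an explicit optimal point, after first isolating the degenerate "not in the range" case. Throughout I will use the standard facts about the Moore--Penrose pseudoinverse of a symmetric PSD matrix $\Lambda_i$: $\Lambda_i\Lambda_i^\dagger = \Lambda_i^{1/2}(\Lambda_i^{1/2})^\dagger$ is the orthogonal projector onto $\operatorname{Im}(\Lambda_i)$, $(\Lambda_i^{1/2})^\dagger = (\Lambda_i^\dagger)^{1/2}$, and $\Lambda_i^\dagger \Lambda_i \Lambda_i^\dagger = \Lambda_i^\dagger$; I will also use that the objective $\tfrac12\sum_i \Vert\mu_i\Vert_{\Lambda_i}^2$ is nonnegative, that $\ker(\Lambda_j) = \operatorname{Im}(\Lambda_j)^\perp$, and that $\nu_i^\top\Lambda_i^\dagger\nu_i = 0$ forces $\Lambda_i^\dagger\nu_i = 0$.

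\textbf{Case 1: some $\nu_j \notin \operatorname{Im}(\Lambda_j)$.} Decompose $\nu_j = \nu_j^\parallel + \nu_j^\perp$ with $\nu_j^\parallel \in \operatorname{Im}(\Lambda_j)$ and $0 \neq \nu_j^\perp \in \ker(\Lambda_j)$. Taking $\mu_j = t\,\nu_j^\perp$ with $t = 2\varepsilon/\Vert\nu_j^\perp\Vert_2^2$ and $\mu_i = 0$ for $i\neq j$ gives a feasible point, since $\mu_j^\top\nu_j = t\Vert\nu_j^\perp\Vert_2^2 = 2\varepsilon$, whose objective value is $\tfrac12 t^2 (\nu_j^\perp)^\top \Lambda_j \nu_j^\perp = 0$ because $\Lambda_j\nu_j^\perp = 0$. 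Together with nonnegativity of the objective, this shows the infimum equals $0$, as claimed.

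\textbf{Case 2: all $\nu_i \in \operatorname{Im}(\Lambda_i)$ and $S := \sum_i \Vert\nu_i\Vert_{\Lambda_i^\dagger}^2 \neq 0$.} For the lower bound, since $\Lambda_i^{1/2}(\Lambda_i^{1/2})^\dagger$ fixes $\nu_i$, write $\mu_i^\top\nu_i = (\Lambda_i^{1/2}\mu_i)^\top\bigl((\Lambda_i^\dagger)^{1/2}\nu_i\bigr)$ and apply Cauchy--Schwarz in $\mathbb{R}^d$ to obtain $|\mu_i^\top\nu_i| \le \Vert\mu_i\Vert_{\Lambda_i}\Vert\nu_i\Vert_{\Lambda_i^\dagger}$; summing over $i$ and applying Cauchy--Schwarz once more in $\mathbb{R}^n$ gives $2\varepsilon = \sum_i\mu_i^\top\nu_i \le \bigl(\sum_i\Vert\mu_i\Vert_{\Lambda_i}^2\bigr)^{1/2} S^{1/2}$, hence $\tfrac12\sum_i\Vert\mu_i\Vert_{\Lambda_i}^2 \ge 2\varepsilon^2/S$ for every feasible $\mu$. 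For the matching upper bound (in fact attained), take $\mu_i^\star = (2\varepsilon/S)\,\Lambda_i^\dagger\nu_i$: then $\sum_i (\mu_i^\star)^\top\nu_i = (2\varepsilon/S)\sum_i\nu_i^\top\Lambda_i^\dagger\nu_i = 2\varepsilon$, so $\mu^\star$ is feasible, and using $\Lambda_i^\dagger\Lambda_i\Lambda_i^\dagger = \Lambda_i^\dagger$ one finds $\Vert\mu_i^\star\Vert_{\Lambda_i}^2 = (2\varepsilon/S)^2\Vert\nu_i\Vert_{\Lambda_i^\dagger}^2$, whence $\tfrac12\sum_i\Vert\mu_i^\star\Vert_{\Lambda_i}^2 = 2\varepsilon^2/S$. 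This closes Case 2.

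I expect the only genuine subtlety to be the Cauchy--Schwarz lower bound in Case 2: the bound $|\mu_i^\top\nu_i| \le \Vert\mu_i\Vert_{\Lambda_i}\Vert\nu_i\Vert_{\Lambda_i^\dagger}$ holds \emph{precisely because} $\nu_i \in \operatorname{Im}(\Lambda_i)$ — otherwise the component of $\mu_i$ in $\ker\Lambda_i$ pairs nontrivially with $\nu_i$ at zero cost, which is exactly the mechanism exploited in Case 1; everything else reduces to bookkeeping with the pseudoinverse identities above. One should also note the edge case not covered by the two listed alternatives, namely all $\nu_i = 0$ (so all $\nu_i\in\operatorname{Im}(\Lambda_i)$ but $S = 0$): there the constraint set is empty for $\varepsilon\neq 0$ and the infimum is $+\infty$ by convention, consistent with the remark following the lemma.
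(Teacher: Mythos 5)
Your proof is correct, and it takes a genuinely different route from the paper's for the key lower bound. The paper argues via the Lagrangian of the quadratic program: it writes down the KKT conditions $\Lambda_i\mu_i=\lambda\nu_i$, plugs in the candidate $\mu_i^*=\lambda^*\Lambda_i^\dagger\nu_i$ when every $\nu_i\in\operatorname{Im}(\Lambda_i)$, and in the degenerate case deduces that $\lambda$ must vanish so any optimizer has zero cost. You instead obtain the lower bound $\tfrac12\sum_i\Vert\mu_i\Vert_{\Lambda_i}^2\ge 2\varepsilon^2/S$ by a two-fold Cauchy--Schwarz argument (first in $\mathbb{R}^d$ through the factorization $\mu_i^\top\nu_i=(\Lambda_i^{1/2}\mu_i)^\top((\Lambda_i^\dagger)^{1/2}\nu_i)$, valid precisely because $\nu_i\in\operatorname{Im}(\Lambda_i)$, then in $\mathbb{R}^n$), and match it with the same explicit feasible point. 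What your approach buys is rigor at the margins: the Cauchy--Schwarz route is weak duality made fully elementary and does not require asserting that a minimizer exists and satisfies stationarity, and in the case $\nu_j\notin\operatorname{Im}(\Lambda_j)$ your explicit zero-cost feasible point $\mu_j=t\,\nu_j^\perp$ is airtight where the paper only characterizes putative optimizers. Your closing remark about the uncovered edge case (all $\nu_i\in\operatorname{Im}(\Lambda_i)$ but $S=0$, i.e. all $\nu_i=0$, where the feasible set is empty for $\varepsilon\neq 0$) is also correct and is not addressed by the paper's statement or proof.
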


\begin{proof}[Proof of Lemma \ref{lemma:psd-infimum}]
Let $L(\mu,\lambda):=\frac{1}{2}\sum_{i=1}^{n}\mu_i^\top \Lambda\mu_i - \lambda\left(\sum_{i=1}^{n}\mu_i^\top \nu_i-2\varepsilon\right)$ be the Lagrangian associated to the quadratic program.
The KKT conditions yield $\Lambda_i\mu_i=\lambda \nu_i$ and $\sum_{i=1}^{n}\mu_i^\top \nu_i=2\varepsilon.$

\begin{itemize}
   
\item If $\nu_i \in \operatorname{Im}\left(\Lambda_i\right)$ for every $i \in [n]$, $\lambda \Lambda_i^{\dagger}\nu_i$ is a solution of $\Lambda_i\mu_i=\lambda \nu_i$ for any $\lambda.$
Thus, for $\lambda^*=\frac{2\varepsilon}{\sum_{i=1}^{n}\nu_i^\top \Lambda_i^{\dagger}\nu_i}$ and $\mu_i^*=\lambda_*\Lambda_i^{\dagger}\nu_i$, we have 
$\Lambda_i\mu_i^*=\lambda^* \nu_i$ and 
$$ \begin{aligned}
    \sum_{i=1}^{n}{\mu_i^{*}}^\top \nu_i &=\frac{2\varepsilon}{\sum_{i=1}^{n}\nu_i^\top \Lambda_i^{\dagger}\nu_i}\sum_{i=1}^{n}\left(\Lambda_i^{\dagger}\nu_i\right)^\top \nu_i \\ 
    &= 2\varepsilon ,
\end{aligned} $$
where the last equality follows from the fact that $\Lambda_i^{\dagger}$ is symmetric (since $\Lambda_i$ is symmetric). It follows that $\mu^*$ is optimal, and the infimum equals $\frac{1}{2}\sum_{i=1}^{n}{\mu_i^*}^\top \Lambda_i\mu_i^*=\frac{1}{2}\lambda^*\sum_{i=1}^{n}{\mu_i^*}^\top \nu_i=\frac{2\varepsilon^2}{\sum_{i=1}^{n}\nu_i^\top \Lambda_i^{\dagger}\nu_i}.$ Note that $\Lambda_i^{\dagger}$ is positive semi-definite since $\Lambda_i$ is, so we have $\nu_i^\top \Lambda_i^{\dagger}\nu_i=\Vert \nu_i\Vert^2_{\Lambda_i^{\dagger}}.$ \\

\item If $\nu_j \notin \operatorname{Im}\left(\Lambda_j\right)$ for some $j \in [n]$, $\Lambda_j\mu_j=\lambda \nu_j$ implies $\lambda=0$. Thus, any optimal point $\mu^*$ must verify $\Lambda_i\mu_i^*=0$ for every $i \in [n]$ which implies that $\frac{1}{2}\sum_{i=1}^{n}{\mu_i^*}^\top \Lambda_i\mu_i^*=0$.
\end{itemize}
\end{proof}

The following concentration inequalities are useful in the analysis of the PE error bounds.
\begin{proposition}[Corollary of Theorem 2.6 from \cite{fan2015exponential}]\label{prop:freedman-1}
Let $\left(e_t\right)_{t=1,\dots,T}$ be a martingale difference sequence with respect to a filtration $(\mathcal{F}_t)_{t=0,\dots,T-1}$ (i.e., $\mathbb{E}\left[e_t | \mathcal{F}_{t-1}\right]=0$ for every $t$), and let $V_t=\operatorname{Var}\left(e_t | \mathcal{F}_{t-1}\right)$. If for all $c \geq 0$ and all $t=1,\dots,T$, we have  $\mathbb{E}\left[e^{c e_t} | \mathcal{F}_{t-1}\right] \leq e^{c^2V_t/2}$, then for all $\varepsilon,\gamma > 0$,
$$ \mathbb{P}\left(\left|\sum_{t=1}^{T}e_t\right| \geq \varepsilon \text{ and  }\sum_{t=1}^{T}V_t \leq \gamma^2\right) \leq 2\exp\left(-\frac{\varepsilon^2}{2\gamma^2}\right).$$
\end{proposition}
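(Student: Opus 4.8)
The plan is to prove this by the classical Chernoff / exponential‑supermartingale method, the only twist being that the variance proxy appearing in the deviation bound is the random predictable sum $\sum_t V_t$ rather than a deterministic constant. First I would fix $c\ge 0$, set $S_t=\sum_{s=1}^t e_s$ and $A_t=\sum_{s=1}^t V_s$ (with $S_0=A_0=0$), and introduce the process $M_t^{(c)}=\exp\bigl(c S_t-\tfrac{c^2}{2}A_t\bigr)$, so that $M_0^{(c)}=1$. Since $V_s=\operatorname{Var}(e_s\mid\mathcal F_{s-1})$ is $\mathcal F_{s-1}$‑measurable, $A_t$ is $\mathcal F_{t-1}$‑measurable, and therefore, using the hypothesis $\mathbb{E}[e^{c e_t}\mid\mathcal F_{t-1}]\le e^{c^2V_t/2}$,
\[
\mathbb{E}\bigl[M_t^{(c)}\mid\mathcal F_{t-1}\bigr]=\exp\!\Bigl(cS_{t-1}-\tfrac{c^2}{2}A_t\Bigr)\,\mathbb{E}\bigl[e^{c e_t}\mid\mathcal F_{t-1}\bigr]\le \exp\!\Bigl(cS_{t-1}-\tfrac{c^2}{2}A_{t-1}\Bigr)=M_{t-1}^{(c)},
\]
because $A_t-A_{t-1}=V_t$. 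Hence $(M_t^{(c)})_t$ is a nonnegative supermartingale and $\mathbb{E}[M_T^{(c)}]\le 1$.

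Next I would localize to the ``good'' event $B_+=\{S_T\ge\varepsilon\}\cap\{A_T\le\gamma^2\}$. The key observation is that on $B_+$, since $c\ge 0$, one has the \emph{deterministic} lower bound $M_T^{(c)}\ge\exp\bigl(c\varepsilon-\tfrac{c^2}{2}\gamma^2\bigr)$, so
\[
1\ \ge\ \mathbb{E}[M_T^{(c)}]\ \ge\ \mathbb{E}\bigl[M_T^{(c)}\mathbf 1_{B_+}\bigr]\ \ge\ \exp\!\Bigl(c\varepsilon-\tfrac{c^2}{2}\gamma^2\Bigr)\,\mathbb{P}(B_+),
\]
i.e. $\mathbb{P}(B_+)\le\exp\bigl(-c\varepsilon+\tfrac{c^2}{2}\gamma^2\bigr)$ for every $c\ge 0$. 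Optimizing over $c$ (the minimizer is $c=\varepsilon/\gamma^2\ge 0$) then yields $\mathbb{P}(B_+)\le\exp\bigl(-\varepsilon^2/(2\gamma^2)\bigr)$.

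To obtain the two‑sided statement I would repeat the argument verbatim for the martingale difference sequence $(-e_t)_t$, which satisfies the same conditional moment‑generating‑function bound (in all applications in this paper that bound is inherited from $\sigma$‑subgaussianity of the noise and hence holds for every real exponent), giving $\mathbb{P}\bigl(\{S_T\le-\varepsilon\}\cap\{A_T\le\gamma^2\}\bigr)\le\exp\bigl(-\varepsilon^2/(2\gamma^2)\bigr)$; a union bound over the two events then produces the factor $2$. (Equivalently, since the statement is stated as a corollary of Theorem 2.6 in \cite{fan2015exponential}, one may simply invoke that theorem and verify that its hypotheses reduce to ours.)

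I expect the only genuinely delicate point to be the treatment of the random threshold $\sum_t V_t\le\gamma^2$: unlike in a plain Hoeffding bound one cannot substitute a deterministic variance estimate, and the event $\{A_T\le\gamma^2\}$ is not known in advance. The device that resolves this is precisely the choice of compensator $A_t$ inside the exponent of $M_t^{(c)}$ — this is what makes $M_T^{(c)}$ deterministically large on $B_+$ regardless of the path — together with the $\mathcal F_{t-1}$‑predictability of $A_t$, which is exactly what makes $(M_t^{(c)})$ a supermartingale. Everything else is a routine Chernoff optimization over $c$.
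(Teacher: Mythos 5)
Your proposal is correct and follows essentially the same route as the paper: the paper simply invokes Theorem 2.6 of \cite{fan2015exponential} with $f(c)=c^2/2$, optimizes in $c$, and applies the same argument to $(-e_t)$ followed by a union bound, whereas you additionally spell out the exponential supermartingale proof that underlies that cited theorem. Your remark that the two-sided bound requires the moment-generating-function hypothesis for $(-e_t)$ as well (available in all of the paper's applications via subgaussianity) matches the implicit assumption the paper also makes at that step.
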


\begin{proof}[Proof of Proposition \ref{prop:freedman-1}]
    Applying Theorem 2.6 from \cite{fan2015exponential} with $V_t=\operatorname{Var}\left(e_t | \mathcal{F}_{t-1}\right)$, $f(c)=c^2/2$ yields, after optimizing the bound in $c,$ $$ \mathbb{P}\left(\sum_{t=1}^{T}e_t \geq \varepsilon \text{ and  }\sum_{t=1}^{T}V_t \leq \gamma^2\right) \leq \exp\left(-\frac{\varepsilon^2}{2\gamma^2}\right).$$

    The same bound applies to the martingale difference sequence $(-e_t)$, and a union bound concludes the proof.
\end{proof}

The following concentration inequality is an extension of Proposition \ref{prop:freedman-1} to martingale difference sequences of $\mathbb{R}^{d \times d'}$ for any $d' \geq 1$. It can be obtained as a corollary of Theorem 2.3 in \cite{tropp2011freedman}.

\begin{proposition}\label{prop:freedman-2}
Let $\left(W_t\right)_{t=1,\dots,T}$ be a conditionally symmetric\footnote{This means that $W_t$ and $-W_t$ have the same conditional distribution with respect to $\mathcal{F}_{t-1}$ for every $t$.} martingale difference sequence of $\mathbb{R}^{d \times d'}$ with respect to a filtration $(\mathcal{F}_t)_{t=0,\dots,T-1}$, and let $V_{1,t} \in \mathbb{R}^{d \times d}$, $V_{2,t} \in \mathbb{R}^{d' \times d'}$ be $\mathcal{F}_{t-1}$-measurable symmetric matrices. Denote by $Z_{1,t} \in \mathbb{R}^{d \times d}$ and $Z_{2,t} \in \mathbb{R}^{d' \times d'}$ any symmetric matrices such that $Z_{1,t}^2=W_tW_t^\top $ and $Z_{2,t}^2=W_t^\top W_t$. If for some nonnegative function $g$, for all $c \in \mathbb{R}$ and all $t=1,\dots,T$,  $\mathbb{E}\left[e^{c Z_{1,t}} | \mathcal{F}_{t-1}\right] \preceq  e^{g(c)V_{1,t}}$\footnote{Recall that $N \preceq M$ means that $M-N$ is positive semi-definite.} and $\mathbb{E}\left[e^{c Z_{2,t}} | \mathcal{F}_{t-1}\right] \preceq e^{g(c)V_{2,t}}$, then for all $\varepsilon,\gamma > 0$,
$$ \mathbb{P}\left(\lambda_{\max}\left(\sum_{t=1}^{T}W_t\right) \geq \varepsilon \text{ and  }\max\left(\lambda_{\max}\left(\sum_{t=1}^{T}V_{1,t}\right),\lambda_{\max}\left(\sum_{t=1}^{T}V_{2,t}\right)\right) \leq \gamma^2\right) \leq (d+d')\inf_{c \in \mathbb{R}} \exp\left(-c \varepsilon + g(c)\gamma^2\right).$$
\end{proposition}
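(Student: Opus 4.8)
The plan is to reduce the rectangular, non–self-adjoint setting to a self-adjoint one by Hermitian dilation, to establish a conditional matrix cumulant bound for the dilated increments, and then to run the trace-exponential supermartingale argument underlying Theorem 2.3 in \cite{tropp2011freedman}. First I would introduce the dilations $\mathbb{W}_t := \begin{bmatrix} 0 & W_t \\ W_t^\top & 0 \end{bmatrix} \in \mathbb{R}^{(d+d')\times(d+d')}$ together with the block-diagonal matrices $\mathbb{Z}_t := \begin{bmatrix} Z_{1,t} & 0 \\ 0 & Z_{2,t}\end{bmatrix}$ and $\mathbb{V}_t := \begin{bmatrix} V_{1,t} & 0 \\ 0 & V_{2,t}\end{bmatrix}$. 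Since dilation is linear, $(\mathbb{W}_t)$ is again a conditionally symmetric martingale difference sequence, $\sum_{t\le T}\mathbb{W}_t$ is the dilation of $\sum_{t\le T}W_t$, so $\lambda_{\max}(\sum_{t\le T}\mathbb{W}_t) = \Vert \sum_{t\le T}W_t\Vert_\op$ (the quantity written $\lambda_{\max}(\sum_t W_t)$ in the statement), while $\lambda_{\max}(\sum_{t\le T}\mathbb{V}_t) = \max(\lambda_{\max}(\sum_t V_{1,t}),\lambda_{\max}(\sum_t V_{2,t}))$ and $\mathbb{V}_t$ is $\mathcal{F}_{t-1}$-measurable. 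Finally, by construction $\mathbb{W}_t^2 = \begin{bmatrix} W_tW_t^\top & 0 \\ 0 & W_t^\top W_t\end{bmatrix} = \mathbb{Z}_t^2$, which is the relation that links the two quantities.

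The crux is the conditional cumulant bound $\mathbb{E}[e^{c\mathbb{W}_t}\mid\mathcal{F}_{t-1}] \preceq e^{g(c)\mathbb{V}_t}$ for all $c$. I would obtain it in three moves: (i) conditional symmetry of $\mathbb{W}_t$ kills the odd part of the matrix exponential, so $\mathbb{E}[e^{c\mathbb{W}_t}\mid\mathcal{F}_{t-1}] = \mathbb{E}[\cosh(c\mathbb{W}_t)\mid\mathcal{F}_{t-1}]$; (ii) $\cosh$ is even, hence a function of the square only, so $\cosh(c\mathbb{W}_t) = \cosh(c\mathbb{Z}_t)$ pathwise because $\mathbb{W}_t^2 = \mathbb{Z}_t^2$; (iii) writing $\cosh(c\mathbb{Z}_t) = \tfrac12(e^{c\mathbb{Z}_t} + e^{-c\mathbb{Z}_t})$ and $e^{\pm c\mathbb{Z}_t} = \mathrm{diag}(e^{\pm cZ_{1,t}},e^{\pm cZ_{2,t}})$, the two hypotheses applied with $\pm c$ give $\mathbb{E}[e^{\pm c\mathbb{Z}_t}\mid\mathcal{F}_{t-1}] \preceq e^{g(\pm c)\mathbb{V}_t}$; averaging and using that $g$ may be taken even (which holds automatically in all our applications, where $g(c)=c^2/2$) yields the claim.

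Given the cumulant bound, I would conclude exactly as in the proof of Theorem 2.3 in \cite{tropp2011freedman}. Fix $c>0$ and set $R_k := \mathrm{tr}\exp(c\sum_{t\le k}\mathbb{W}_t - g(c)\sum_{t\le k}\mathbb{V}_t)$; since $\mathbb{V}_t$ is predictable, Lieb's concavity theorem together with operator monotonicity of $\log$ and monotonicity of the trace exponential show that $(R_k)$ is a supermartingale with $R_0 = d+d'$, hence $\mathbb{E}[R_T] \le d+d'$. On the event in the statement one has $\sum_{t\le T}\mathbb{V}_t \preceq \gamma^2 I$, so $R_T \ge \mathrm{tr}\exp(c\sum_{t\le T}\mathbb{W}_t - g(c)\gamma^2 I) \ge \exp(c\,\lambda_{\max}(\sum_{t\le T}\mathbb{W}_t) - g(c)\gamma^2) \ge e^{c\varepsilon - g(c)\gamma^2}$; Markov's inequality then gives $\mathbb{P}(\cdot) \le (d+d')e^{-c\varepsilon + g(c)\gamma^2}$, and optimizing over $c>0$ (whose infimum coincides with that over $\mathbb{R}$, since for $c\le 0$ the exponent is nonnegative) yields the asserted bound. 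The main obstacle is purely expository: checking cleanly that the conditional-symmetry-plus-square-root reduction produces exactly the cumulant hypothesis required by the matrix Freedman machinery, and noting that the proof of Theorem 2.3 in \cite{tropp2011freedman} goes through verbatim with a general cumulant function $g(c)\mathbb{V}_t$ in place of the uniform increment bound used there; everything else is routine matrix calculus.
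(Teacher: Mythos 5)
Your proposal is correct and follows essentially the same route as the paper's proof: Hermitian dilation of $W_t$, using conditional symmetry to reduce the matrix exponential to $\cosh$ of the block-diagonal square roots, applying the hypotheses at $\pm c$, and then invoking the master tail bound of Theorem 2.3 in \cite{tropp2011freedman} applied to the dilated sequence. The only cosmetic difference is that you re-derive that theorem via the Lieb trace-exponential supermartingale argument where the paper cites it directly; your explicit remarks on the evenness of $g$ and on restricting the infimum to $c>0$ are careful touches the paper leaves implicit.
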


\begin{proof}[Proof of Proposition \ref{prop:freedman-2}]
Let $Y_t:=\begin{pmatrix}
0 & W_t\\
W_t^\top  & 0
\end{pmatrix}$ and $V_t:=\begin{pmatrix}
V_{1,t} & 0\\
0 & V_{2,t}
\end{pmatrix}$. We will use Theorem 2.3 of \cite{tropp2011freedman} on the martingale difference sequence $(Y_t)_{t \geq 0}$ of $\mathbb{R}^{(d+d')^2}$. To do so, we must prove that for all $c \in \mathbb{R}$ , $\mathbb{E}\left[e^{c Y_t} | \mathbb{F}_{t-1}\right] \preceq e^{g(c)V_t}.$ It is straightforward to check that for all $k\geq 1$, $$Y_t^{2k}=\begin{pmatrix}
\left(W_tW_t^\top \right)^k & 0\\
0 & \left(W_t^\top W_t\right)^k
\end{pmatrix}$$ and $$Y_t^{2k+1}=\begin{pmatrix}
0 & \left(W_tW_t^\top \right)^kW_t\\
\left(W_tW_t^\top \right)^kW_t^\top  & 0
\end{pmatrix}.$$ Note also that the conditional symmetry of $W_t$ implies that $\mathbb{E}\left[Y_t^{2k+1} | \mathcal{F}_{t-1}\right]=0$. Consequently, $$ \begin{aligned}
    \mathbb{E}\left[e^{c Y_t} | \mathcal{F}_{t-1}\right]&=\mathbb{E}\left[\sum_{k=0}^{+\infty}\frac{c^{2k}}{(2k)!}Y_t^{2k}+\sum_{k=0}^{+\infty}\frac{c^{2k+1}}{(2k+1)!}Y_t^{2k+1} | \mathcal{F}_{t-1}\right] \\
&= \mathbb{E}\left[\sum_{k=0}^{+\infty}\frac{c^{2k}}{(2k)!}\begin{pmatrix}
\left(W_tW_t^\top \right)^k & 0\\
0 & \left(W_t^\top W_t\right)^k
\end{pmatrix} | \mathcal{F}_{t-1}\right]  \\
&= \mathbb{E}\left[\begin{pmatrix}
\operatorname{cosh}\left(c Z_{1,t}\right) & 0\\
0 & \operatorname{cosh}\left(c Z_{2,t}\right)
\end{pmatrix} | \mathcal{F}_{t-1}\right].
\end{aligned}$$

From the assumptions of Proposition \ref{prop:freedman-2}, we have $$ \begin{aligned} \mathbb{E}\left[\operatorname{cosh}\left(c Z_{1,t}\right) | \mathcal{F}_{t-1}\right]&=\frac{1}{2}\left(\mathbb{E}\left[\exp\left(c Z_{1,t}\right)| \mathcal{F}_{t-1}\right]  + \mathbb{E}\left[\exp\left(-c Z_{1,t}\right)| \mathcal{F}_{t-1}\right]\right)   \\ &\preceq e^{g(c)V_{1,t}} \end{aligned} $$ and similarly, $$\mathbb{E}\left[\operatorname{cosh}\left(c Z_{1,t}\right) | \mathcal{F}_{t-1}\right] \preceq e^{g(c)V_{2,t}}.$$

This finally guarantees that
 $$\begin{aligned} 
 \mathbb{E}\left[e^{c Y_t} | \mathcal{F}_{t-1}\right]
&\preceq \begin{pmatrix}
e^{g(c)V_{1,t}} & 0\\
0 & e^{g(c)V_{2,t}}
\end{pmatrix} \\
&= e^{g(c)V_t}.
\end{aligned}$$
We can now apply Theorem 2.3 of \cite{tropp2011freedman}. It ensures that
$$\mathbb{P}\left(\lambda_{\max}\left(\sum_{t=1}^{T}Y_t\right) \geq \varepsilon \text{ and  }\lambda_{\max}\left(\sum_{t=1}^{T}V_{t}\right) \leq \gamma^2\right) \leq (d+d')\inf_{c \in \mathbb{R}} \exp\left(-c \varepsilon + g(c)\gamma^2\right).$$
Note that $$\lambda_{\max}\left(\sum_{t=1}^{T}Y_t\right)=\lambda_{\max}\left(\sum_{t=1}^{T}W_t\right) \quad \text{and} \quad \lambda_{\max}\left(\sum_{t=1}^{T}V_{t}\right)=\max\left(\lambda_{\max}\left(\sum_{t=1}^{T}V_{1,t}\right),\lambda_{\max}\left(\sum_{t=1}^{T}V_{2,t}\right)\right),$$ which concludes the proof.
\end{proof}

The previous general result directly implies the following concentration inequality. 
\begin{corollary}\label{corr:freedman-3}
Let $\left(W_t\right)_{t=1,\dots,T}$ be a conditionally symmetric martingale difference sequence of $\mathbb{R}^{d \times d'}$ with respect to a filtration $(\mathcal{F}_t)_{t=0,\dots,T-1}$, and let $V_{1,t}:=\mathbb{E}\left[W_tW_t^\top  | \mathcal{F}_{t-1}\right]$, $V_{2,t}:=\mathbb{E}\left[W_t^\top W_t | \mathcal{F}_{t-1}\right]$. Denote by $Z_{1,t} \in \mathbb{R}^{d \times d}$ and $Z_{2,t} \in \mathbb{R}^{d' \times d'}$ any symmetric matrices such that $Z_{1,t}^2=W_tW_t^\top $ and $Z_{2,t}^2=W_t^\top W_t$. If for all $c \in \mathbb{R}$ and all $t=1,\dots,T$,  $\mathbb{E}\left[e^{c Z_{1,t}} | \mathcal{F}_{t-1}\right] \preceq e^{c^2V_{1,t}/2}$ and $\mathbb{E}\left[e^{c Z_{2,t}} | \mathcal{F}_{t-1}\right] \preceq e^{c^2V_{2,t}/2}$, then for all $\varepsilon,\gamma > 0$,
$$ \mathbb{P}\left( \left\Vert \sum_{t=1}^{T}W_t \right\Vert_{\op} \geq \varepsilon\ \mathrm{and}\ \max\left( \left\Vert \sum_{t=1}^{T}V_{1,t}\right\Vert_{\op}, \left\Vert \sum_{t=1}^{T}V_{2,t} \right\Vert_{\op}\right) \leq \gamma^2\right) \leq 2(d+d')\exp\left(-\frac{\varepsilon^2}{2\gamma^2}\right).$$
\end{corollary}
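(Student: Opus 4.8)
The plan is to derive Corollary~\ref{corr:freedman-3} as an immediate specialization of Proposition~\ref{prop:freedman-2}. First I would check that the hypotheses of that proposition hold with the particular choices $V_{1,t}=\mathbb{E}[W_tW_t^\top\mid\mathcal{F}_{t-1}]$, $V_{2,t}=\mathbb{E}[W_t^\top W_t\mid\mathcal{F}_{t-1}]$, and with the nonnegative function $g(c)=c^2/2$. Indeed, $V_{1,t}$ and $V_{2,t}$ are $\mathcal{F}_{t-1}$-measurable and symmetric (in fact positive semi-definite), the sequence $(W_t)$ is conditionally symmetric by assumption, and the two matrix moment-generating-function dominations assumed in the corollary, $\mathbb{E}[e^{cZ_{1,t}}\mid\mathcal{F}_{t-1}]\preceq e^{c^2V_{1,t}/2}$ and $\mathbb{E}[e^{cZ_{2,t}}\mid\mathcal{F}_{t-1}]\preceq e^{c^2V_{2,t}/2}$, are exactly the requirements $\mathbb{E}[e^{cZ_{i,t}}\mid\mathcal{F}_{t-1}]\preceq e^{g(c)V_{i,t}}$ of Proposition~\ref{prop:freedman-2} for this $g$.

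Applying Proposition~\ref{prop:freedman-2} then gives, for all $\varepsilon,\gamma>0$,
\begin{align*}
\mathbb{P}\!\left(\lambda_{\max}\!\Big(\sum_{t=1}^T W_t\Big)\ge\varepsilon\ \text{and}\ \max_{k\in\{1,2\}}\lambda_{\max}\!\Big(\sum_{t=1}^T V_{k,t}\Big)\le\gamma^2\right)\le (d+d')\inf_{c\in\mathbb{R}}e^{-c\varepsilon+c^2\gamma^2/2}.
\end{align*}
What remains is a one-line optimization: the exponent $-c\varepsilon+c^2\gamma^2/2$ is a convex quadratic in $c$, minimized at $c^\star=\varepsilon/\gamma^2>0$, where it equals $-\varepsilon^2/(2\gamma^2)$; hence the infimum is $e^{-\varepsilon^2/(2\gamma^2)}$.

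Finally I would convert the $\lambda_{\max}$ terms into operator norms. Since $\sum_t V_{1,t}$ and $\sum_t V_{2,t}$ are positive semi-definite, $\lambda_{\max}$ agrees with $\|\cdot\|_{\op}$ for each of them. For $\sum_t W_t$, recall (as in the proof of Proposition~\ref{prop:freedman-2}) that $\lambda_{\max}(\sum_t W_t)$ is read as $\lambda_{\max}(\sum_t Y_t)$ for the $2\times 2$ block dilation $Y_t$ of $W_t$, whose eigenvalues are $\pm$ the singular values of $\sum_t W_t$; its largest eigenvalue is therefore $\|\sum_t W_t\|_{\op}$. To obtain control of the operator norm in both tail directions in a self-contained way --- which is where the extra factor $2$ in $2(d+d')$ comes from --- I would also apply Proposition~\ref{prop:freedman-2} to the sequence $(-W_t)$, which is again a conditionally symmetric martingale difference sequence with the same conditional second moments $V_{1,t},V_{2,t}$ and the same admissible square roots $Z_{1,t},Z_{2,t}$, and then take a union bound over the two events. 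Assembling these observations yields exactly the claimed inequality. I do not anticipate a genuine obstacle here; the only point requiring care is the bookkeeping between $\lambda_{\max}$ of the block dilation and the operator norm of $\sum_t W_t$, and the corresponding appearance of the factor $2$.
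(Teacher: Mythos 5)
Your proposal is correct and matches the paper's own proof essentially step for step: both specialize Proposition~\ref{prop:freedman-2} with $g(c)=c^2/2$, evaluate the infimum as $\exp(-\varepsilon^2/(2\gamma^2))$, use positive semi-definiteness of $\sum_t V_{1,t}$ and $\sum_t V_{2,t}$ to identify $\lambda_{\max}$ with the operator norm, and obtain the factor $2$ by applying the bound to $(-W_t)$ together with a union bound. Your extra remark on the block-dilation bookkeeping for $\lambda_{\max}\left(\sum_t W_t\right)$ is a fair clarification of how the rectangular case is interpreted, but it does not change the argument.
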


\begin{proof}[Proof of Corollary \ref{corr:freedman-3}] We choose $g(c)=c^2/2$ in the statement of Proposition \ref{prop:freedman-2}. Note that $\displaystyle \inf_{c \in \mathbb{R}} \exp\left(-c \varepsilon + g(c)\gamma^2\right)=\exp\left(-\frac{\varepsilon^2}{2\gamma^2}\right)$, and that $V_{1,t}$ and $V_{2,t}$ are positive semi-definite matrices, which implies that $\lambda_{\max}\left(\sum_{t=1}^{T}V_{1,t}\right)=\Vert \sum_{t=1}^{T}V_{1,t}\Vert_{\op}$ and $\lambda_{\max}\left(\sum_{t=1}^{T}V_{2,t}\right)=\Vert \sum_{t=1}^{T}V_{2,t}\Vert_{\op}$. 
The same bound applies to the conditionally symmetric martingale difference sequence $\left(-W_t\right)$, and a union bound concludes the proof. 
\end{proof}

\newpage
\section{Proofs of results from Section \ref{subsec:dsm-pe} and \ref{subsec:rs-pe} : Policy Evaluation Upper Bounds}\label{app:pe}

\subsection{Proof of Theorem \ref{thm:dsm-pe-error}}\label{subsec:proof-dsm-pe-error}

We first note that the max-norm guarantee\footnote{The max-norm is denoted by $\Vert .\Vert_{\infty}$ in \cite{stojanovic2024spectral}.} of Theorem 1 of \cite{stojanovic2024spectral} can be generalized to the case where the samples are not necessarily selected uniformly at random by leveraging the two-to-infinity norm guarantees of Theorem \ref{thm:recovery-two-to-infinity-norm}. We state the general result below, where we recall that $L=\Vert M \Vert_{\max} \vee \sigma$ (note that the dependency in $\mu$ is also improved compared to Theorem 1 of \cite{stojanovic2024spectral}).

\begin{proposition}\label{prop:dsm-max-error} Let $\delta \in \left(0,1\right)$. If $T \geq \frac{cL^2\kappa^2(m+n)}{\sigma_r^2\omega_{\min}}\log^3\left(\frac{(m+n)T}{\delta}\right)$ for some universal constant $c>0$, then with probability larger than $1-\delta$, $$\Vert M-\widehat{M}\Vert_{\max} \lesssim \sqrt{\frac{L^2\mu^6\kappa^4 r^3 (m+n) }{T \omega_{\min}(m\wedge n)^2} \log^3\left( \frac{(m+n)T}{\delta}\right)}. $$
\end{proposition}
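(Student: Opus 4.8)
The plan is to bound the entrywise error $\|M - \widehat{M}\|_{\max} = \max_{(i,j)} |M_{i,j} - \widehat{M}_{i,j}|$ by expanding $\widehat{M}_{i,j} = e_i^\top \widehat{U}\widehat{U}^\top \widetilde{M} \widehat{V}\widehat{V}^\top e_j$ and decomposing the difference into a ``bias'' term coming from the gap between $\widehat{M}$ and the oracle projection of the true matrix, plus a ``fluctuation'' term. Concretely, I would first write
\begin{align*}
M_{i,j} - \widehat{M}_{i,j} = e_i^\top (M - \widehat{U}\widehat{U}^\top \widetilde{M}\widehat{V}\widehat{V}^\top) e_j,
\end{align*}
and insert $M = UU^\top M VV^\top$. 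Using $\widehat U \widehat U^\top = I - \widehat U_\perp \widehat U_\perp^\top$, one splits $M - \widehat U\widehat U^\top \widetilde M \widehat V\widehat V^\top$ into: (i) $(UU^\top - \widehat U\widehat U^\top) M VV^\top$, (ii) $\widehat U\widehat U^\top M (VV^\top - \widehat V\widehat V^\top)$, and (iii) $\widehat U\widehat U^\top (M-\widetilde M)\widehat V\widehat V^\top$. For (i) and (ii) I would use $|e_i^\top A e_j| \le \|A^\top e_i\|_2 \|A e_j\|_2 / \ldots$ — more precisely, bound $|e_i^\top (UU^\top - \widehat U\widehat U^\top) M VV^\top e_j| \le d_{2\to\infty}(U,\widehat U)\,\|M\|_{\op}\,\|V\|_{2\to\infty}$ and symmetrically for (ii), since $\|VV^\top e_j\|_2 \le \|V\|_{2\to\infty}$ and $\|M\|_\op = \sigma_1 = \kappa\sigma_r$.

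Then I would invoke Theorem \ref{thm:recovery-two-to-infinity-norm} to control $d_{2\to\infty}(U,\widehat U)$ and $d_{2\to\infty}(V,\widehat V)$ by $\epsilon_{\textup{Sub-Rec}} \lesssim \sqrt{\tfrac{L^2\mu^2\kappa^2 r(m+n)}{\sigma_r^2 T\omega_{\min}(m\wedge n)}\log^3(\cdot)}$, and use the incoherence bound $\|U\|_{2\to\infty}\vee\|V\|_{2\to\infty} \le \mu\sqrt{r/(m\wedge n)}$. Multiplying $\epsilon_{\textup{Sub-Rec}}\cdot \kappa\sigma_r \cdot \mu\sqrt{r/(m\wedge n)}$ gives a contribution of order $\sqrt{\tfrac{L^2\mu^4\kappa^4 r^2(m+n)}{T\omega_{\min}(m\wedge n)^2}\log^3(\cdot)}$, so the $\mu^6 r^3$ in the claim leaves some slack — I expect the extra powers of $\mu$ and $r$ to be absorbed when handling term (iii). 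For (iii), I would project onto the estimated subspaces and use Proposition \ref{prop:reward-concentration-2} (the row/column concentration): writing $\widehat U\widehat U^\top(M-\widetilde M)\widehat V\widehat V^\top e_j = \widehat U [\,\widehat U^\top (M-\widetilde M)_{:,\text{block}}]\ldots$, one controls $\|(\widetilde M_{\ell,:} - M_{\ell,:})\widehat V\|_2$ and similarly the column version, then combines with $\|\widehat U\|_{2\to\infty} \lesssim \|U\|_{2\to\infty} + d_{2\to\infty}(U,\widehat U) \lesssim \mu\sqrt{r/(m\wedge n)}$ (valid under the operator-norm event, via Lemma \ref{lem:recovery-two-to-infinity-norm}-type arguments and Davis–Kahan). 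This is exactly the route the max-norm proof of Theorem 1 in \cite{stojanovic2024spectral} takes, generalized from $\omega_{i,j}=1/(mn)$ to general $\omega$ via the $\omega_{\min}$-dependent concentration in Propositions \ref{prop:reward-concentration-1}–\ref{prop:reward-concentration-2}.

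The main obstacle, as in the two-to-infinity analysis, is term (iii): the matrices $M-\widetilde M$ and $\widehat U,\widehat V$ are statistically dependent, so one cannot directly apply a concentration inequality with $\widehat U,\widehat V$ treated as fixed. The resolution is again the leave-one-out device together with the Poisson approximation (Lemma \ref{lem:poisson-approx-rewards}): replace $\widehat U$ by its leave-one-row-out version $\widehat U^{(\ell)}$, which is independent of the $\ell$-th row of the noise, pay the small $\|\widehat U - \widehat U^{(\ell)}\|$ correction controlled by \eqref{eq:loo-3}, and union-bound over the $m+n$ leave-one-out versions. Everything here is already assembled in Theorem \ref{thm:recovery-two-to-infinity-norm-2} and its proof, so in the write-up I would cite those intermediate bounds rather than re-deriving the leave-one-out estimates, and then simply collect the three contributions, using $L/\sigma_r \ge 1/\sqrt{mn}$ from Lemma \ref{lem:spikiness} to fold lower-order terms into the stated bound. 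The sample-size hypothesis $T \gtrsim \tfrac{L^2\kappa^2(m+n)}{\sigma_r^2\omega_{\min}}\log^3(\cdot)$ is exactly what is needed to be in the regime where $\|\widetilde M - M\|_\op \le \sigma_r/2$ (so Davis–Kahan and Lemma \ref{lem:recovery-two-to-infinity-norm} apply) and where the leave-one-out events hold simultaneously.
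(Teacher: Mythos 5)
Your decomposition is algebraically correct and does yield the stated bound, but it is not the route the paper takes. The paper never goes entrywise directly: it first bounds $\Vert M-\widehat{M}\Vert_{2\to\infty}$ via the inequality $\Vert M-\widehat{M}\Vert_{2\to\infty}\le \epsilon_{\textup{Sub-Rec}}(\Vert E\Vert_\op+\Vert M\Vert_\op)+\Vert U\Vert_{2\to\infty}\Vert E\Vert_\op$ (imported from \cite{stojanovic2024spectral}), and then bootstraps to the max-norm through a second inequality of the same shape, $\Vert M-\widehat{M}\Vert_{\max}\le \epsilon_{\textup{Sub-Rec}}(\Vert M-\widehat{M}\Vert_{2\to\infty}+\Vert M\Vert_{2\to\infty})+\Vert V\Vert_{2\to\infty}\Vert M-\widehat{M}\Vert_{2\to\infty}$, plugging in Theorem \ref{thm:recovery-two-to-infinity-norm} and Proposition \ref{prop:reward-concentration-1}. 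The $\mu^6\kappa^4 r^3$ in the statement is exactly the price of the cross term $\epsilon_{\textup{Sub-Rec}}\Vert M\Vert_{2\to\infty}\le\epsilon_{\textup{Sub-Rec}}\,\sigma_1\mu^2 r/\sqrt{n}$ in this bootstrap; your direct three-term split actually lands at $\mu^4\kappa^4 r^2$ for terms (i)--(ii), i.e.\ slightly sharper, at the cost of not producing the intermediate $2\to\infty$ bound as a byproduct. The one place where you over-engineer is term (iii): you do not need leave-one-out or Poisson approximation there. The crude bound $\vert e_i^\top\widehat{U}\widehat{U}^\top E\widehat{V}\widehat{V}^\top e_j\vert\le\Vert\widehat{U}\Vert_{2\to\infty}\Vert\widehat{V}\Vert_{2\to\infty}\Vert E\Vert_\op\lesssim\frac{\mu^2 r}{m\wedge n}\,L\sqrt{\tfrac{m+n}{T\omega_{\min}}\log^3(\cdot)}$, which holds with high probability by Proposition \ref{prop:reward-concentration-1} alone and requires no independence between $E$ and $(\widehat{U},\widehat{V})$, already fits inside the claimed $\mu^6\kappa^4 r^3$ rate; the row-wise concentration of Proposition \ref{prop:reward-concentration-2} plus the leave-one-out correction \eqref{eq:loo-3} would need nontrivial adaptation (it is stated for the symmetric dilation, not for $\widehat{U}^\top E\widehat{V}$) and buys you nothing at the level of this proposition. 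This is also consistent with the paper: all the delicate leave-one-out work is confined to Theorem \ref{thm:recovery-two-to-infinity-norm-2}, and Proposition \ref{prop:dsm-max-error} is a purely deterministic consequence of $\epsilon_{\textup{Sub-Rec}}$, $\Vert E\Vert_\op$, and incoherence.
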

\begin{proof}[Proof of Proposition \ref{prop:dsm-max-error}]

The proof is divided in two steps:  first, we leverage the bound on $\epsilon_{\textup{Sub-Rec}}$ of Theorem \ref{thm:recovery-two-to-infinity-norm} to bound $\Vert M - \widehat{M} \Vert_{2 \to \infty},$ and then we bound $\Vert M - \widehat{M} \Vert_{\max}$ by relating it to $\epsilon_{\textup{Sub-Rec}}$ and $\Vert M - \widehat{M} \Vert_{2 \to \infty}.$

\paragraph{Bounding $\Vert M - \widehat{M} \Vert_{2 \to \infty}.$}

Let $E:=M-\widetilde{M}$. From (45) in \cite{stojanovic2024spectral},
\begin{equation*}
\Vert M - \widehat{M} \Vert_{2 \to \infty} \leq \epsilon_{\textup{Sub-Rec}}\left(\Vert E \Vert_{\op} + \Vert M \Vert_{\op} \right)+\Vert U \Vert_{2 \to \infty}\Vert E \Vert_{\op}.
\end{equation*}
Hence\footnote{Union bounds are performed throughout the proof. Since the relevant inequalities are stated up to a constant, they are not impacted.}, with probability larger than $1-\delta$, from Proposition \ref{prop:reward-concentration-1}, and Theorem \ref{thm:recovery-two-to-infinity-norm},
\begin{align*}\Vert M - \widehat{M} \Vert_{2 \to \infty} &\lesssim  \sigma_1\epsilon_{\textup{Sub-Rec}}+\mu\sqrt{\frac{r}{m}}\Vert E \Vert_{\op} \\ 
&\lesssim \kappa \sqrt{\frac{L^2\mu^2 \kappa^2 r (m+n) }{T \omega_{\min}(m\wedge n)} \log^3\left( \frac{(m+n)T}{\delta}\right)} \\ &+ \mu\sqrt{\frac{r}{m}}L\sqrt{\frac{m+n}{T\omega_{\min}}\log^3\left( \frac{(m+n)T}{\delta}\right)}
\\
&\lesssim \sqrt{\frac{L^2\mu^2\kappa^4 r (m+n) }{T \omega_{\min}(m\wedge n)} \log^3\left( \frac{(m+n)T}{\delta}\right)}
\end{align*}
under $T \gtrsim \frac{L^2(m+n)}{\sigma_r^2\omega_{\min}}\log^3\left(\frac{(m+n)T}{\delta}\right)$ (note that by Lemma \ref{lem:spikiness}, this condition is stronger than the condition of Proposition \ref{prop:reward-concentration-1}).

\paragraph{Bounding $\Vert M - \widehat{M} \Vert_{\max}.$}

We similarly write (see \cite{stojanovic2024spectral}),
\begin{equation*}
\Vert M - \widehat{M} \Vert_{\max} \leq \epsilon_{\textup{Sub-Rec}}(\Vert M-\widehat{M} \Vert_{2 \to \infty} + \Vert M \Vert_{2 \to \infty})+\Vert V \Vert_{2 \to \infty} \Vert M-\widehat{M} \Vert_{2 \to \infty}.
\end{equation*}

Note that by Lemma \ref{lem:spikiness}, $\Vert M \Vert_{2 \to \infty} \leq \sqrt{m}\Vert M \Vert_{\max} \leq \sigma_1\frac{\mu^2 r}{\sqrt{n}}$. Furthermore, under 
\begin{equation}\label{samples_cond}T \gtrsim \frac{L^2\kappa^2(m+n)}{\sigma_r^2\omega_{\min}}\log^3\left(\frac{(m+n)T}{\delta}\right),\end{equation}
we know that by Theorem \ref{thm:recovery-two-to-infinity-norm},  $\epsilon_{\textup{Sub-Rec}} \lesssim \frac{\mu  \sqrt{r}}{\sqrt{m \wedge n}}$ with probability larger than $1-\delta$. Thus, with probability larger than $1-\delta$, under \eqref{samples_cond},

\begin{align*}\Vert \widehat{M}-M \Vert_{\max}
&\lesssim \left(\epsilon_{\textup{Sub-Rec}}+\mu\sqrt{\frac{r}{n}}\right)\Vert M-\widehat{M} \Vert_{2 \to \infty} + \sigma_1\epsilon_{\textup{Sub-Rec}}\frac{\mu^2r}{\sqrt{n}}\\
&\lesssim \mu\sqrt{\frac{r}{m \wedge n}} \sqrt{\frac{L^2\mu^2\kappa^4 r (m+n) }{T \omega_{\min}(m\wedge n)} \log^3\left( \frac{(m+n)T}{\delta}\right)} \\
&+ \frac{\sigma_1}{\sigma_r}\frac{\mu^2r}{\sqrt{n}} \sqrt{\frac{L^2\mu^2\kappa^2 r (m+n) }{T \omega_{\min}(m\wedge n)} \log^3\left( \frac{(m+n)T}{\delta}\right)} \\
&\lesssim \sqrt{\frac{L^2\mu^6\kappa^4 r^3 (m+n) }{T \omega_{\min}(m\wedge n)^2} \log^3\left( \frac{(m+n)T}{\delta}\right)}.\end{align*}

\end{proof}

\begin{proof}[Proof of Theorem \ref{thm:dsm-pe-error}]

 Since $\sum_{i,j} \omega^{\pi}_{i,j}=1,$ we have      
   $\vert v^{\pi}-\hat{v}_{\dsmpe} \vert=\displaystyle \Big\vert \sum_{i,j}\omega^{\pi}_{i,j}\left(M_{i,j}-\widehat{M}_{i,j}\right)\Big\vert \leq \Vert M-\widehat{M} \Vert_{\max}.$ Proposition \ref{prop:dsm-max-error} entails that 
   $\vert v^{\pi}-\hat{v}_{\dsmpe}\vert \leq \varepsilon$  with probability larger than $1-\delta$ as soon as $$T \gtrsim  \frac{L^2\mu^{6} \kappa^4 r^3 (m+n)}{\omega_{\min} (m\wedge n)^2 \varepsilon^2}\log^3\!\left(  \frac{(m+n)T}{\delta}\right)\!$$ and $$T \gtrsim \frac{L^2\kappa^2(m+n)}{\sigma_r^2\omega_{\min}}\log^3\left(\frac{(m+n)T}{\delta}\right).$$
   By Lemma \ref{lem:spikiness}, $\frac{\kappa^2}{\sigma_r^2} \leq \frac{\mu^4\kappa^2r^2}{mn\Vert M \Vert_{\max}^2} \leq \frac{\mu^6\kappa^4r^3}{(m \wedge n)^2\Vert M \Vert_{\max}^2}$. Since  $\varepsilon \leq \Vert M \Vert_{\max}$, the second condition is weaker and can be dropped.
\end{proof}

\subsection{Proof of Theorem \ref{thm:rs-pe-error}}\label{subsec:proof-rs-pe-error}
Recall that $M_{i,j}=\phi_{i,j}^\top \theta + \epsilon_{i,j}.$ In this appendix, the following additional notations will be used: 
$$\Lambda= \sum_{i,j}\omega_{i,j}\psi_{i,j}\psi_{i,j}^\top, \Lambda_{\phi}=\sum_{i,j}\omega_{i,j}\phi_{i,j}\phi_{i,j}^\top, 
 \widehat{\Lambda}_{\tau}=\frac{1}{T-T_1}\left(\sum_{t=T_1+1}^{T}\phi_{i_t,j_t}\phi_{i_t,j_t}^\top + \tau I_d \right) \text{ and } \phi_{\pi}=\sum_{i,j} \omega^{\pi}_{i,j}\phi_{i,j}.$$ We also recall that $$\epsilon_{\max}=\max_{i \in [m],j \in [n]}\vert \epsilon_{i,j} \vert $$  is the misspecification of the misspecified linear bandit, and that the latter is of dimension $d=r(m+n)-r^2$. Finally, to ease the notation, we do not write the dependency in $\pi$ of the estimator $\hat{v}^{\pi}_{\rspe}$.
 We summarize \rspe\ in Algorithm \ref{algo:RSPE}.
 
\begin{algorithm}[th!]
\caption{\textsc{\textbf{R}ecover \textbf{S}ubspace for \textbf{P}olicy \textbf{E}valuation} $(\rspe)$ }  \label{algo:RSPE}
\begin{algorithmic}
\STATE \textbf{Input}: Samples $\left(i_t,j_t,r_t\right)_{t \in [T]}$, context distribution $\rho$, behavior policy $\pi^b$, target policy $\pi$,  sample size $T_1$, regularization parameter $\tau$
\STATE \emph{\color{purple}\underline{Phase 1:} Subspace recovery}   \STATE Use the first $T_1$ samples to construct 
$\widehat{U},\widehat{V}$ as in \eqref{eq:SVD_hat_M}
\STATE \emph{\color{purple}\underline{Phase 2:} Solving a misspecified linear bandit}
\STATE Use $\widehat{U},\widehat{V}$ to construct $\lbrace \phi_{i,j}: (i,j) \in [m]\times[n] \rbrace$ as in \eqref{eq:phi}
\STATE Use the remaining $T - T_1$ samples to construct the least squares estimator $\hat{\theta}$ as in \eqref{eq:rspe-lse}
\STATE \textbf{Output:} $\hat{v}^\pi_{\rspe}=\sum_{i,j}\omega^{\pi}_{i,j}\phi_{i,j}^T\hat{\theta}$.
\end{algorithmic}
\end{algorithm}

The general idea of the proof of Theorem \ref{thm:rs-pe-error} is to decompose the PE error of \rspe\ as a sum of a term that depends on the misspecification error and a term incurred by the error between $\hat{\theta}$ and $\theta$. The analysis of the latter term, which is in fact the largest contributor to the overall error, is heavily inspired by the analysis in Appendix \ref{subsec:linear-error-bound}. We first derive a high probability guarantee with the lower-order term that one would expect from the contextual linear bandit analysis, i.e., roughly $\Vert \phi_{\pi}\Vert_{\Lambda_{\phi}^{-1}}\sqrt{\frac{\log\left(1/\delta\right)}{T}}$. This bound is however not satisfactory in itself since it is a function of the randomness of the first phase of the algorithm. This issue can be resolved by leveraging the subspace recovery guarantees of Theorem \ref{thm:recovery-two-to-infinity-norm} to show that $\Vert \phi_{\pi}\Vert_2$ converges towards $\Vert \psi_{\pi}\Vert_2$. 

The majority of this section is thus devoted to proving the following intermediary result.

\begin{proposition}\label{prop:rs-pe-random-error} Assume that $T_1 = \lfloor \alpha T \rfloor$ for some $\alpha \in \left(0,1\right)$ and that the regularization parameter of $\widehat{\Lambda}_{\tau}$ verifies $\tau \leq \frac{r}{\left(m \wedge n\right)}\frac{\omega_{\min}}{\omega_{\max}}\log\left(16d/\delta\right)$. There exists a universal constant $C>0$ such that as long as $T \geq \frac{CL^2\kappa^2(m+n)}{\sigma_r^2\omega_{\min}}\log^3\left(\frac{(m+n)T}{\delta}\right)$ and $T \geq \frac{C}{\omega_{\min}}\log\left(16d/\delta\right)$, then with probability larger than $1-\delta$, $$ \begin{aligned} \left|v^{\pi}-\hat{v}_{\rspe}\right| \leq \Vert \phi_{\pi}\Vert_{\Lambda_{\phi}^{-1}}\sqrt{\frac{2\sigma^2\log(16/\delta)}{\left(1-\alpha\right)T}} +K_0\frac{\log^3\left(\left(m+n\right)T/\delta\right)}{T}
\end{aligned}$$ 
where 
\begin{align}\label{eq:def_of_K0} K_0 \lesssim \frac{L^2}{\Vert M \Vert_{\max}}\frac{\mu^{6}\kappa^{4}r^{3}\left(m+n\right)^{2}\sqrt{mn}}{\left(\omega_{\min}mn\right)^2\left(m \wedge n\right)}.\end{align}

\end{proposition}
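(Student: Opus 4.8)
The plan is to decompose the PE error of \rspe\ into a term coming from the misspecification of the linear bandit model and a term coming from the estimation error $\hat\theta-\theta$, and to control each separately using (i) the subspace recovery guarantee of Theorem~\ref{thm:recovery-two-to-infinity-norm} / Corollary~\ref{corr:misspecification}, and (ii) an adaptation of the contextual linear bandit PE analysis from Appendix~\ref{subsec:linear-error-bound}. Concretely, since $M_{i,j}=\phi_{i,j}^\top\theta+\epsilon_{i,j}$, write $v^\pi-\hat v_{\rspe}=\sum_{i,j}\omega^\pi_{i,j}(\phi_{i,j}^\top\theta-\phi_{i,j}^\top\hat\theta)+\sum_{i,j}\omega^\pi_{i,j}\epsilon_{i,j}$. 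The second (misspecification) sum is at most $\epsilon_{\max}$, which by Corollary~\ref{corr:misspecification} is $\widetilde O(L^2\mu^2\kappa^3 r(m+n)/(\sigma_r T_1\omega_{\min}(m\wedge n)))$; with $T_1=\lfloor\alpha T\rfloor$ and $\sigma_r$ bounded via Lemma~\ref{lem:spikiness} ($\sigma_r\gtrsim \sqrt{mn}\,\Vert M\Vert_{\max}/(\mu^2\kappa r)$), this contributes a term of order $\log^3(\cdot)/T$ absorbed into $K_0$. For the estimation term, plug in the explicit form $\hat\theta-\theta=\frac{1}{T-T_1}\widehat\Lambda_\tau^{-1}(\sum_{t}\xi_t\phi_{i_t,j_t}-\tau\theta+\sum_t\epsilon_{i_t,j_t}\phi_{i_t,j_t})$, so that $\phi_\pi^\top(\hat\theta-\theta)$ splits into a "linear-bandit-like" main term $L_1=\frac{1}{T-T_1}\sum_t\xi_t\,\phi_\pi^\top\Lambda_\phi^{-1}\phi_{i_t,j_t}$ plus lower-order terms $L_2$ (the regularization bias, the $\widehat\Lambda_\tau^{-1}-\Lambda_\phi^{-1}$ discrepancy) and $L_3$ (the misspecification-times-features term).

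For $L_1$, I would run exactly the Freedman-type argument of Appendix~\ref{subsec:linear-error-bound}: condition on the first phase (so that $\widehat U,\widehat V$, hence $\phi$ and $\Lambda_\phi$, are fixed), verify the martingale-difference and sub-Gaussian MGF conditions, apply Proposition~\ref{prop:freedman-1} together with Lemma~B.5 of \cite{duan2020minimax} to control $\sum_t V_t$ via $L_\psi$-type quantities (here $L_\phi:=\max_{i,j}\phi_{i,j}^\top\Lambda_\phi^{-1}\phi_{i,j}$), and obtain $|L_1|\le \Vert\phi_\pi\Vert_{\Lambda_\phi^{-1}}\sqrt{2\sigma^2\log(16/\delta)/((1-\alpha)T)}(1+o(1))$ with probability $\ge 1-\delta/2$. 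The key point here is that all of this is \emph{conditional} on Phase~1 and hence works for the realized (random) features. For $L_2$ and $L_3$, I would reuse the bounds on $\Vert\Lambda_\phi^{1/2}(\widehat\Lambda_\tau^{-1}-\Lambda_\phi^{-1})\Lambda_\phi^{1/2}\Vert_\op$ (via Lemma~B.6 of \cite{duan2020minimax}, requiring $T\gtrsim L_\phi\log(d/\delta)$ and the stated condition on $\tau$) and $\Vert\Lambda_\phi^{-1/2}W\Vert_2$ (via Corollary~\ref{corr:freedman-3} and Lemma~B.9 of \cite{duan2020minimax}), and bound the misspecification contribution using $\Vert\phi_\pi^\top\Lambda_\phi^{-1}\sum_t\epsilon_{i_t,j_t}\phi_{i_t,j_t}\Vert\le\epsilon_{\max}\cdot(\text{something like }T_2 L_\phi)$, all of which give $O(\log^3(\cdot)/T)$ remainders. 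The quantities $L_\phi$, $\lambda_{\min}(\Lambda_\phi)$, $\tau$, $\epsilon_{\max}$ then need to be bounded in terms of the model parameters: using $\Vert\phi_{i,j}\Vert_2^2\le r(m+n)/(mn)\cdot\mathrm{poly}$, $\lambda_{\min}(\Lambda_\phi)\gtrsim \omega_{\min}$ times a spectral-gap factor, and the incoherence/condition-number bounds, one collects everything into the single polynomial factor $K_0$ of \eqref{eq:def_of_K0}.

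The main obstacle I anticipate is twofold. First, $\Lambda_\phi$ could be poorly conditioned or even singular (the feature vectors $\phi_{i,j}$ live in a $d=r(m+n)-r^2$-dimensional space but are built from an $m\times n$ grid, so one must argue that $\lambda_{\min}(\Lambda_\phi)$ is bounded below by roughly $\omega_{\min}$ up to factors of $\mu,\kappa,r,m,n$); this is what forces the regularization $\tau$ and the sample-size thresholds $T\gtrsim L^2\kappa^2(m+n)/(\sigma_r^2\omega_{\min})\log^3(\cdot)$ and $T\gtrsim \omega_{\min}^{-1}\log(16d/\delta)$. I would handle this by relating $\Lambda_\phi$ to the "true" $\Lambda$ built from $U,V$ (whose minimal eigenvalue is controllable by construction) via the subspace recovery error, showing that under the stated sample threshold the perturbation is a constant-factor multiplicative one. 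Second — and this is the genuinely delicate part — the features $\phi_{i,j}$ depend on the Phase-1 data, so $\hat\theta$ is \emph{not} an unbiased least-squares estimator of a fixed $\theta$; one must carefully condition on the first $T_1$ samples throughout, treat $\theta$ itself as a (Phase-1-measurable) random vector, and make sure every high-probability statement is first proved conditionally and then integrated. Establishing \eqref{eq:def_of_K0} with the precise exponents will be mostly bookkeeping once the conditional structure is set up correctly, but keeping track of the powers of $\mu,\kappa,r$ and the $(m+n)$ vs.\ $(m\wedge n)$ vs.\ $\sqrt{mn}$ factors through the chain of bounds is where the real care is needed.
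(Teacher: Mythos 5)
Your decomposition and overall plan are exactly the paper's: split the error into the martingale main term $L_1$ (handled by the Freedman-type analysis of Appendix~\ref{subsec:linear-error-bound}, run conditionally on Phase~1), the $\widehat{\Lambda}_{\tau}^{-1}-\Lambda_{\phi}^{-1}$ and regularization remainders $L_2$, and the misspecification term $L_3$ controlled via Corollary~\ref{corr:misspecification}; the bounds on $L_{\phi}$, $L_{\theta}$ and $\epsilon_{\max}$ through Lemmas~\ref{lem:spikiness} and \ref{lemma:feature-map-bound} and the bookkeeping into $K_0$ also match. The insistence on conditioning on the first $T_1$ samples so that the second-phase sums are genuine martingales is precisely the point the paper makes as well.

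There is, however, one step that would fail as you describe it: your plan to lower-bound $\lambda_{\min}(\Lambda_{\phi})$ by relating $\Lambda_{\phi}$ to the ``true'' Gram matrix via the subspace recovery error and arguing the perturbation is a constant-factor multiplicative one. Since $\lambda_{\min}(\Lambda_{\phi})$ must be shown to be of order $\omega_{\min}=O(1/(mn))$, a multiplicative perturbation argument would require the additive perturbation of the Gram matrix to be $O(1/(mn))$, i.e.\ a subspace recovery error of order $1/(mn)$; under the stated threshold $T \gtrsim L^2\kappa^2(m+n)/(\sigma_r^2\omega_{\min})\log^3(\cdot)$, Theorem~\ref{thm:recovery-two-to-infinity-norm} only gives $\epsilon_{\textup{Sub-Rec}} \lesssim \mu\sqrt{r/(m\wedge n)}$, which is far too large (and the two Gram matrices live in different coordinate systems anyway, so the comparison is not even canonical). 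No perturbation argument is needed: because $\bigl[\widehat{U}\ \widehat{U}_{\perp}\bigr]$ and $\bigl[\widehat{V}\ \widehat{V}_{\perp}\bigr]$ are exactly orthogonal matrices regardless of how accurately they estimate $U,V$, one has the exact identity $\sum_{i,j}\phi_{i,j}\phi_{i,j}^{\top} = I_d$ (Lemma~\ref{lemma:cov-invertibility}), hence $\Lambda_{\phi} \succeq \omega_{\min} I_d$ deterministically, and likewise $\lambda_{\max}(\Lambda_{\phi})\le\omega_{\max}$. This is what makes the condition on $\tau$ and the threshold $T\gtrsim \omega_{\min}^{-1}\log(16d/\delta)$ suffice. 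With this replacement, the rest of your argument goes through as in the paper.
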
 

\begin{proof}[Proof of Proposition \ref{prop:rs-pe-random-error}]  Note that $ \displaystyle v^{\pi}=\sum_{i,j} \omega^{\pi}_{i,j} \left(\phi_{i,j}^\top \theta+\epsilon_{i,j}\right)=\phi_{\pi}^\top \theta + \sum_{i,j} \omega^{\pi}_{i,j}\epsilon_{i,j}.$
One can further remark that $$\hat{\theta}-\theta=\frac{1}{T_2}\widehat{\Lambda}_{\tau}^{-1}\left(\sum_{t=T_1+1}^{T}\left(\xi_t+\epsilon_{i_t,j_t}\right)\phi_{i_t,j_t}-\tau \theta\right)$$ for $T_2:=T-T_1.$
Consequently, \begin{align} \left|v^{\pi}-\hat{v}_{\rspe}\right| &\leq \left|\phi_{\pi}^\top \left(\hat{\theta}-\theta\right)\right|+ \sum_{i,j} \left|\omega^{\pi}_{i,j}\epsilon_{i,j}\right| \nonumber \\
&\leq \left|\frac{1}{T_2}\sum_{t=T_1+1}^{T}\phi_{\pi}^\top \left(\xi_t+\epsilon_{i_t,j_t}\right)\widehat{\Lambda}_{\tau}^{-1}\phi_{i_t,j_t}\right|+ \frac{\tau}{T_2}\phi_{\pi}^\top \widehat{\Lambda}_{\tau}^{-1} \theta + \epsilon_{\max} \nonumber \\
&\leq   \left| \frac{1}{T_2}\sum_{t=T_1+1}^{T} \xi_t\phi_{\pi}^\top \widehat{\Lambda}_{\tau}^{-1}\phi_{i_t,j_t}\right| + \frac{\tau}{T_2}\phi_{\pi}^\top \widehat{\Lambda}_{\tau}^{-1} \theta \\ &+ \epsilon_{\max}\left(1+\left| \frac{1}{T_2}\sum_{t=T_1+1}^{T} \phi_{\pi}^\top \widehat{\Lambda}_{\tau}^{-1}\phi_{i_t,j_t}\right|\right). \nonumber \\
 \end{align}

\textbf{Error decomposition.} Let $\Delta:=\Lambda_{\phi}^{-1}-\widehat{\Lambda}_{\tau}^{-1}.$  We consider the following decomposition of the error: 
 $$\left|v^{\pi}-\hat{v}_{\rspe}\right| \leq L_1 + L_2 + L_3$$ where $$L_1=\frac{1}{T_2}\sum_{t=T_1+1}^{T_2} \xi_t\phi_{\pi}^\top \Lambda_{\phi}^{-1}\phi_{i_t,j_t},$$ $$L_2=\frac{1}{T_2}\sum_{t=T_1+1}^{T_2} \xi_t\phi_{\pi}^\top \Delta\phi_{i_t,j_t}+\frac{\tau}{T_2}\phi_{\pi}^\top \widehat{\Lambda}_{\tau}^{-1} \theta $$
 and $$L_3=\epsilon_{\max}\left(1+\left| \frac{1}{T_2}\sum_{t=T_1+1}^{T} \phi_{\pi}^\top \widehat{\Lambda}_{\tau}^{-1}\phi_{i_t,j_t}\right|\right).$$

As discussed before, we first perform the analysis conditionally on the first phase of the algorithm. We will then control the misspecification error induced by the first phase with Corollary \ref{corr:misspecification}.
Let $L_{\phi}:=\max_{i,j} \phi_{i,j}^\top \Lambda_{\phi}^{-1}\phi_{i,j}$, $L_{\theta}:=\max_{i,j} \vert \phi_{i,j}^\top \theta \vert$. Adapting the proof of Theorem \ref{thm:linear-error-bound} yields the following lemma.
\begin{lemma}\label{lemma:rs-pe-random-error}
    As long as $T_2 \geq \frac{18}{\omega_{\min}}\log\left(16d/\delta\right)$ and $\tau \leq \frac{r}{\left(m \wedge n\right)}\frac{\omega_{\min}}{\omega_{\max}}\log\left(16d/\delta\right)$, then with probability at least $1-\delta/2$, $$|L_1|+|L_2| \leq \frac{E_1}{\sqrt{T_2}}+\frac{E_2}{T_2}+\frac{E_3}{T_2^{3/2}}+\frac{E_4}{T_2^2}$$ and $$|L_3| \leq \epsilon_{\max}\left(1+\sqrt{L_{\phi}}\Vert \phi_{\pi}\Vert_{\Lambda_{\phi}^{-1}}\left(1+4\sqrt{\frac{\log\left(16d/\delta\right)}{T_2}}\right)\right)$$ with $E_1:=\Vert \phi_{\pi}\Vert_{\Lambda_{\phi}^{-1}}\sqrt{2\sigma^2\log(16/\delta)}, E_2 :=\Vert \phi_{\pi}\Vert_{\Lambda_{\phi}^{-1}}\left(L_{\phi} L_{\theta}+\sigma(1+4\sqrt{2d})\sqrt{L_{\phi}}\right)\log\left(32(d+1)/\delta\right), E_3:=\Vert \phi_{\pi}\Vert_{\Lambda_{\phi}^{-1}}\left(\frac{\sigma\sqrt{2}}{3}+4\sigma\sqrt{d}+4\sqrt{L_{\phi}}L_{\theta}\right)L_{\phi}\log^{3/2}\left(32(d+1)/\delta\right), E_4:=\Vert \phi_{\pi}\Vert_{\Lambda_{\phi}^{-1}}\frac{4\sigma\sqrt{2}}{3}\sqrt{d}L_{\phi}^{3/2}\log^{2}\left(32(d+1)/\delta\right).$
\end{lemma}

\begin{remark} 
To see why we must split the data for the two phases in the analysis, let $\mathcal{F}_t=\sigma\left(i_1,j_1,r_1,\dots,i_{T_1},j_{T_1},r_{T_1},\dots,i_{t},j_{t},r_{t},i_{t+1},j_{t+1}\right)$ for $t=T_1,\dots, T-1$. Because $\Lambda_{\phi}^{-1}$ is $\mathcal{F}_{T_1}$-measurable, $e_t:=\phi_{\pi}^\top \xi_t\Lambda_{\phi}^{-1}\phi_{i_{t},j_{t}}$ defines a martingale difference sequence with respect to $\left(\mathcal{F}_t\right)_{t=T_1,\dots,T-1}$ and the contextual linear bandit analysis applies. If all the samples were used to build $\phi$, $e_t$ would not define a martingale difference sequence, which means that we could not use our concentration results.
\end{remark}

\begin{proof}[Proof of Lemma \ref{lemma:rs-pe-random-error}]
 The control of these terms closely follows the proof of theorem \ref{thm:linear-error-bound}; in particular, we have $$ \begin{aligned} \left|L_2\right| 
&\leq \Vert \phi_{\pi}\Vert_{\Lambda_{\phi}^{-1}}\left(\Vert \Lambda_{\phi}^{1/2}\Delta \Lambda_{\phi}^{1/2}\Vert_{\op}\Vert \Lambda_{\phi}^{-1/2}W\Vert_2+\frac{\tau L_{\theta}}{\lambda_{\min}\left(\Lambda_{\phi}\right)T_2}\left(1+\Vert \Lambda^{1/2}\Delta \Lambda_{\phi}^{1/2}\Vert_{\op}\right)\right).\end{aligned} $$ for $W=\frac{1}{T_2}\sum_{t=T_1+1}^{T_2}\xi_t\phi_{i_t,j_t}.$ 
 The only difference is the additional term $L_3$, which can be controlled similarly to $L_2$. Indeed, by Cauchy-Schwartz and the definition of the operator norm, $$ \begin{aligned} \frac{1}{T_2}\sum_{t=T_1+1}^{T} \phi_{\pi}^\top \widehat{\Lambda}_{\tau}^{-1}\phi_{i_t,j_t}&=\frac{1}{T_2}\sum_{t=T_1+1}^{T} \phi_{\pi}^\top \Lambda_{\phi}^{-1}\phi_{i_t,j_t}+\frac{1}{T_2}\sum_{t=T_1+1}^{T} \phi_{\pi}^\top \left(\widehat{\Lambda}_{\tau}^{-1}-\Lambda_{\phi}^{-1}\right)\phi_{i_t,j_t} \\
&\leq \frac{1}{T_2}\sum_{t=T_1+1}^{T} \Vert \phi_{\pi}\Vert_{\Lambda_{\phi}^{-1}}\sqrt{\phi_{i_t,j_t}^\top \Lambda_{\phi}^{-1}\phi_{i_t,j_t}} \\ &+ \frac{1}{T_2}\sum_{t=T_1+1}^{T} \Vert \Lambda_{\phi}^{-1/2}\phi_{\pi}\Vert_{2}\Vert \Lambda_{\phi}^{1/2}\Delta \Lambda_{\phi}^{1/2}\Vert_{\op}\Vert \Lambda_{\phi}^{-1/2}\phi_{i_t,j_t}\Vert_{2} \\
&\leq \sqrt{L_{\phi}}\Vert \phi_{\pi}\Vert_{\Lambda_{\phi}^{-1}}\left(1+\Vert \Lambda_{\phi}^{1/2}\Delta \Lambda_{\phi}^{1/2}\Vert_{\op}\right).\end{aligned}$$
Thus, $L_3 \leq \epsilon_{\max}\left(1+\sqrt{L_{\phi}}\Vert \phi_{\pi}\Vert_{\Lambda_{\phi}^{-1}}\left(1+\Vert \Lambda_{\phi}^{1/2}\Delta \Lambda_{\phi}^{1/2}\Vert_{\op}\right)\right).$

By the same arguments as in the proof of Lemma B.6. from \cite{duan2020minimax}, if $T_2 \geq 18L_{\phi}\log\left(16d/\delta\right)$ and $\tau \leq L_{\phi}\log(16d/\delta)\lambda_{\min}\left(\Lambda_{\phi}\right)$, then $\Vert \Lambda_{\phi}^{1/2}\Delta \Lambda_{\phi}^{1/2}\Vert_{\op} \leq 4\sqrt{\frac{\log(16d/\delta)L_{\phi}}{T_2}}$ with probability larger than $1-\delta/8$. 

We can now derive high probability bounds on $L_1$, $L_2$ and $L_3$ by adapting the contextual linear bandit analysis. The main difference is that here, the analysis is performed conditionally on the randomness of the first phase of the algorithm.  One subtlety to note is that we need to use a version of Proposition \ref{prop:freedman-1} that is conditional on the first $T_1$ samples because the choices of $\gamma$ and $\varepsilon$ in the analysis depend on $\phi$, which depends on the estimator $\widehat{M}$ built with the first $T_1$ samples. It is not an issue as it directly implies an unconditional high probability bound by taking the expectation. \\ \\
Following the proof of Theorem \ref{thm:linear-error-bound} thus yields the high probability bound in the statement as soon as $T_2 \geq 18L_{\phi}\log\left(16d/\delta\right)$ and $\tau \leq L_{\phi}\log(16d/\delta)\lambda_{\min}\left(\Lambda_{\phi}\right)$. All that remains to be checked is that these inequalities are true as soon as $T_2 \geq \frac{18}{\omega_{\min}}\log\left(16d/\delta\right)$ and $\tau \leq \frac{r}{\left(m \wedge n\right)}\frac{\omega_{\min}}{\omega_{\max}}\log\left(16d/\delta\right)$. By Lemma \ref{lemma:cov-invertibility} and the fact that $\displaystyle \max_{i,j} \Vert \phi_{i,j}\Vert^2_2 \geq \max\left(\Vert U\Vert^2_{2 \to \infty},\Vert V\Vert^2_{2 \to \infty}\right)$, $$\begin{aligned}
  \frac{r}{\left(m \wedge n \right)\omega_{\max}} \leq \max_{i,j} \frac{\Vert \phi_{i,j}\Vert^2_2}{\lambda_{\max}\left(\Lambda_{\phi}\right)} \leq  L_{\phi} \leq \max_{i,j} \frac{\Vert \phi_{i,j}\Vert^2_2}{\lambda_{\min}\left(\Lambda_{\phi}\right)} \leq \frac{1}{\omega_{\min}},
\end{aligned}$$ which entails the desired result. 
\end{proof}

Note that in addition to $L_{\phi} \leq \frac{1}{\omega_{\min}}$, we also have $L_{\theta} \leq \max_{i,j} \Vert  \phi_{i,j}\Vert_{2} \Vert \theta\Vert_{2} \leq \sqrt{mn}\Vert M\Vert_{\max}$. This shows that the terms that depend on $E_2$, $E_3$ and $E_4$ are of lower order compared to the term that depends on $E_1$, and that they have a polynomial dependency in the model parameters (note that $\Vert \phi_{\pi}\Vert_{\Lambda_{\phi}^{-1}} \leq \sqrt{L_{\phi}}$). We will now see that we can obtain a sharper control of the dependency in the model parameters by leveraging high probability bounds on $L_{\phi}$ and $L_{\theta}$ instead of deterministic ones. These high probability bounds come freely with the control of the misspecification error.

\begin{lemma}\label{lemma:control-of-pe-constants} Let $\delta>0$. There exists an  universal constant $c>0$ such that  $T_1 \geq \frac{cL^2\kappa^2(m+n)}{\sigma_r^2\omega_{\min}}\log^3\left(\frac{(m+n)T_1}{\delta}\right)$ ensures  that with probability larger than $1-\delta$, the following inequalities happen at the same time :
\begin{itemize}
    \item $\epsilon_{\max} \lesssim \frac{ L^2 }{\sigma_r} \frac{\mu^2\kappa^3 r (m+n) }{T_1 \omega_{\min}(m\wedge n)} \log^3\left( \frac{(m+n)T_1}{\delta}\right)$
    \item $L_{\theta} \lesssim \frac{\sigma_1\mu^2r}{m \wedge n}$
    \item $L_{\phi} \lesssim \frac{\mu^2r\left(m+n\right)}{\omega_{\min}mn}$.
\end{itemize}
\end{lemma}

\begin{proof}[Proof of Lemma \ref{lemma:control-of-pe-constants}]

Since $T_1 \gtrsim \frac{L^2(m+n)}{\sigma_r^2\omega_{\min}}\log^3\left(\frac{(m+n)T_1}{\delta}\right)$, by Theorem \ref{thm:recovery-two-to-infinity-norm}, there is a universal constant $C$ such that the event $$A_{\delta}=\left\{\epsilon_{\textup{Sub-Rec}} \leq \frac{CL}{\sigma_r} \sqrt{\frac{\mu^2\kappa^2 r (m+n) }{T_1 \omega_{\min}(m\wedge n)} \log^3\left( \frac{(m+n)T_1}{\delta}\right)} \right\}$$ holds with probability at least $1-\delta.$ Under $A_{\delta}$, from \eqref{eq:eps_bound_pert_bound}, $$\epsilon_{\max} \leq \sigma_1 \epsilon^2_{\textup{Sub-Rec}} \lesssim \frac{ L^2 }{\sigma_r} \frac{\mu^2\kappa^3 r (m+n) }{T_1 \omega_{\min}(m\wedge n)} \log^3\left( \frac{(m+n)T_1}{\delta}\right).$$
Let us now prove that both of the remaining inequalities are true under $A_{\delta}.$ Specifically, we will see that $L_{\theta}$ (resp. $L_{\phi}$) can be deterministically bounded by a function of $\epsilon_{\max}$ (resp. $\epsilon_{\textup{Sub-Rec}}$). 

\textbf{Control of $L_{\theta}$.} Recall that $M_{i,j}=\phi_{i,j}^\top \theta + \epsilon_{i,j}.$ Consequently, $L_{\theta}=\displaystyle \max_{i,j} \left|\phi_{i,j}^\top \theta\right| \leq \Vert M\Vert_{\max} + \epsilon_{\max}$. Since $T_1 \gtrsim \frac{L^2\kappa^2(m+n)}{\sigma_r^2\omega_{\min}}\log^3\left(\frac{(m+n)T_1}{\delta}\right)$, under $A_{\delta}$ we have $\epsilon_{\max} \lesssim \sigma_1\frac{\mu^2r}{m \wedge n}$, so that by Lemma \ref{lem:spikiness}, \begin{align*}L_{\theta} &\leq \Vert M \Vert_{\max}+\epsilon_{\max} \\ &\lesssim \sigma_1\frac{\mu^2r}{\sqrt{mn}}+\sigma_1\frac{\mu^2r}{m \wedge n} \\&\lesssim \frac{\sigma_1\mu^2r}{m \wedge n }.\end{align*} 

\textbf{Control of $L_{\phi}$.} Recall that $L_{\phi} \leq \displaystyle \max_{i,j} \frac{\Vert \phi_{i,j}\Vert^2_2}{\lambda_{\min}\left(\Lambda_{\phi}\right)} \leq 
 \max_{i,j} \frac{\Vert \phi_{i,j}\Vert^2_2}{\omega_{\min}}.$
Note that by Lemma \ref{lemma:feature-map-bound}, $\displaystyle \max_{i,j} \Vert \phi_{i,j}\Vert^2_2 \leq  \mu^2r\frac{m+n-\mu^2r}{mn}+3\epsilon_{\textup{Sub-Rec}}$. Consequently,  $L_{\phi} \leq \frac{1}{\omega_{\min}}\left(\mu^2r\frac{m+n-\mu^2r}{mn}+3\epsilon_{\textup{Sub-Rec}}\right).$ Since $T_1 \gtrsim \frac{L^2\kappa^2(m+n)}{\sigma_r^2\omega_{\min}}\log^3\left(\frac{(m+n)T_1}{\delta}\right)$, under $A_{\delta}$, we have $\epsilon_{\textup{Sub-Rec}} \lesssim \frac{\mu\sqrt{r}}{\sqrt{m \wedge n}}, $ so 
 $$L_{\phi} \lesssim \frac{1}{\omega_{\min}}\left(\mu^2r\frac{m+n-\mu^2r}{mn}+\frac{\mu\sqrt{r}}{\sqrt{m \wedge n}}\right) \lesssim \frac{\mu^2r\left(m+n\right)}{\omega_{\min}mn}.$$ 
    
\end{proof}

\textbf{Choice of $T_1$.} Note that the high probability bounds of Lemma \ref{lemma:rs-pe-random-error} and \ref{lemma:control-of-pe-constants} together suggest that as long as $T_1=O\left(T^{\beta}\right)$ for $1/2<\beta\leq 1$, the OPE error that originates from the misspecification will be of higher order than the error induced by the least squares estimation of $\theta$ with high probability. Furthermore, to ensure that the least squares error term and the misspecification error term both have the best achievable scaling in $T$, we should have $T_1=\Theta(T)$ and $T_2=\Theta(T).$  Specifically, for some $\alpha \in \left(0,1\right),$ we can choose $T_1=\floor{\alpha T}$ and $T_2=\ceil{\left(1-\alpha \right) T}$. 

We can now finish the proof of  Proposition \ref{prop:rs-pe-random-error} by combining the error bounds of Lemma \ref{lemma:rs-pe-random-error} and \ref{lemma:control-of-pe-constants} with a union bound.

Note that the high probability bounds of Lemma \ref{lemma:feature-map-bound} and \ref{lemma:control-of-pe-constants} hold as soon as
\begin{align*}
\alpha T &\geq 1+\frac{cL^2\kappa^2(m+n)}{\sigma_r^2\omega_{\min}}\log^3\left(\frac{(m+n)\alpha T}{\delta}\right),\\
\left(1-\alpha\right)T &\geq \frac{18}{\omega_{\min}}\log\left(16d/\delta\right)\\
\text{ and }\tau &\leq \frac{r}{m \wedge n}\frac{\omega_{\min}}{\omega_{\max}}\log\left(16d/\delta\right)
\end{align*}
for some universal constant $c>0$. Recall that we denoted by $A_{\delta}$ the event under which the bounds of Lemma \ref{lemma:control-of-pe-constants} holds. By Lemma \ref{lemma:rs-pe-random-error}, under $A_{\delta}$, with probability larger than $1-\delta/2$, we have $$\left|L_1\right|+\left|L_2\right|\leq \Vert \phi_{\pi}\Vert_{\Lambda_{\phi}^{-1}}\sqrt{\frac{2\sigma^2\log(16/\delta)}{\left(1-\alpha\right)T}}+K_0\frac{\log^2\left(e/\delta\right)}{T},$$ $$|L_3| \leq K_0\frac{\log^3\left(\left(m+n\right)T/\delta\right)}{T}$$ for $K_0=\operatorname{poly}\left(\Vert M\Vert_{\max},\sigma,\mu,\kappa,r,m,n,(\omega_{\min}mn)^{-1}\right)$ (we have used that  $\frac{1}{\sqrt{T_2}} \leq \frac{1}{\sqrt{\left(1-\alpha\right)T}}$). A union bound concludes the proof of the first part of Proposition \ref{prop:rs-pe-random-error}. \\ \\
Let us now upper bound the polynomial term $K_0$. Direct computations ensure that when the high probability bounds of Lemma \ref{lemma:rs-pe-random-error} and \ref{lemma:control-of-pe-constants} hold, 
\begin{align*}\left|L_3\right| &\lesssim \epsilon_{\max}L_{\phi} \lesssim \frac{L^2}{\sigma_r}\frac{\mu^4\kappa^3r^2\left(m+n\right)^2\log^3\left(\left(m+n\right)T/\delta\right)}{T\omega_{\min}^2mn\left(m \wedge n \right)} \\
&\lesssim \frac{L^2}{\Vert M \Vert_{\max}}\frac{\mu^6\kappa^4r^3\left(m+n\right)^2\sqrt{mn}\log^3\left(\left(m+n\right)T/\delta\right)}{T\left(\omega_{\min}mn\right)^2\left(m \wedge n \right)} ,\end{align*}  $$\frac{E_2}{T} \lesssim \frac{\sqrt{L_{\phi}}\left(L_{\phi}L_{\theta}+\sigma\sqrt{dL_{\phi}}\right)\log(d/\delta)}{T} \lesssim \frac{\left(\Vert M \Vert_{\max}+\sigma\right)\mu^5r^{5/2}\left(m+n\right)^{3/2}\sqrt{mn}\log\left(d/\delta\right)}{T\left(\omega_{\min}mn\right)^{3/2}\left(m \wedge n\right)},$$ $$\begin{aligned} \frac{E_3}{T^{3/2}} &\lesssim \frac{L_{\phi}\left(\sigma\sqrt{d}+\sqrt{L_{\phi}}L_{\theta}\right)\log(d/\delta)}{T} \lesssim \frac{\left(\Vert M \Vert_{\max}+\sigma\right)\mu^5r^{5/2}\left(m+n\right)^{3/2}\sqrt{mn}\log\left(d/\delta\right)}{T\left(\omega_{\min}mn\right)^{3/2}\left(m \wedge n\right)}, \end{aligned} $$ $$\begin{aligned} \text{ and }\frac{E_4}{T^2} &\lesssim \frac{\sigma\sqrt{d}L_{\phi}\log\left(d/\delta\right)}{T} \lesssim \frac{\sigma\mu^2r^{3/2}\left(m+n\right)^{3/2}\log\left(d/\delta\right)}{T\omega_{\min}mn} \end{aligned} $$

where we have used Lemma \ref{lem:spikiness} to bound $\sigma_1$ and $\sigma_r$, and $T \gtrsim L_{\phi}\log(d/\delta)$ (see the end of the proof of Lemma \ref{lemma:rs-pe-random-error}) in the last inequalities.
Since $\frac{4L^2}{\Vert M \Vert_{\max}} \geq \frac{\left(\Vert M \Vert_{\max}+ \sigma\right)^2}{\Vert M \Vert_{\max}} \geq \Vert M \Vert_{\max}+\sigma \geq \sigma$, the first error term dominates all the others, and we can take  $$K_0:=C'\frac{L^2}{\Vert M \Vert_{\max}}\frac{\mu^6\kappa^{4}r^{3}\left(m+n\right)^{2}\sqrt{mn}}{\left(\omega_{\min}mn\right)^2\left(m \wedge n\right)}$$ for some universal constant $C'>0$.
\end{proof}

\begin{remark}\label{rem:instance-optimal-bound} To derive an instance-dependent upper bound from Proposition \ref{prop:rs-pe-random-error}, one may be tempted to prove that with high probability, 
$$\Vert \phi_{\pi}\Vert_{\Lambda_{\phi}^{-1}}=\Vert \psi_{\pi}\Vert_{\Lambda^{-1}}+o\left(1\right).$$ This would yield an error bound whose dominant term scales with $\Vert \psi_{\pi} \Vert_{\Lambda}^{-1}\sqrt{\frac{\log\left(1/\delta\right)}{T}}$ and would echo the linear bandit upper bound of Theorem \ref{thm:linear-error-bound}, that we have demonstrated is instance-optimal. Although numerical experiments suggest this result holds, we were unsuccessful in our proof attempts. Notably, one could write $ \Vert \phi_{\pi}\Vert_{\Lambda_{\phi}^{-1}}^2 =  \Vert \psi_{\pi}\Vert_{\Lambda^{-1}}^2  + (\Vert \phi_{\pi}\Vert_{\Lambda_{\phi}^{-1}}^2-\Vert \phi_{\pi}\Vert_{\Lambda^{-1}}^2)+( \Vert \phi_{\pi}\Vert_{\Lambda^{-1}}^2- \Vert \psi_{\pi}\Vert_{\Lambda^{-1}}^2) $ and attempt to prove that the two rightmost terms converge to $0$ with high probability. However, numerical experiments suggest that these terms do not both vanish in general, but rather cancel out, and thus a more careful analysis is required.
\end{remark}

By leveraging Lemma \ref{lemma:feature-map-bound}, we can however obtain the following instance-dependent bound, with a dependency on $\omega_{\min}$ instead of the entire behavior policy. Theorem \ref{thm:rs-pe-error} will follow from this result.

\begin{theorem}\label{thm:rs-pe-error-full}
Under the assumptions of Proposition \ref{prop:rs-pe-random-error}, with probability larger than $1-\delta,$ 
    $$\begin{aligned} \left|v^{\pi}-\hat{v}_{\rspe}\right| &\leq \frac{\Vert \psi_{\pi}\Vert_{2}}{\sqrt{\omega_{\min}}}\sqrt{\frac{2\sigma^2\log(16/\delta)}{\left(1-\alpha\right)T}} +K_0\left(\frac{\log^3\left(\left(m+n\right)T/\delta\right)}{T}+\frac{\log^{5/4}\left(\left(m+n\right)T/\delta\right)}{T^{3/4}}\right), \end{aligned}$$
    where \begin{align*} K_0 \lesssim \frac{L^2}{\Vert M \Vert_{\max}}\frac{\mu^{6}\kappa^{4}r^{3}\left(m+n\right)^{2}\sqrt{mn}}{\left(\omega_{\min}mn\right)^2\left(m \wedge n\right)}.\end{align*}
\end{theorem}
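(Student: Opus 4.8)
The plan is to start from the conditional error bound of Proposition~\ref{prop:rs-pe-random-error}, namely $|v^\pi-\hat v_{\rspe}|\le \|\phi_\pi\|_{\Lambda_\phi^{-1}}\sqrt{2\sigma^2\log(16/\delta)/((1-\alpha)T)}+K_0\log^3((m+n)T/\delta)/T$, and to replace the random leading factor $\|\phi_\pi\|_{\Lambda_\phi^{-1}}$ by the deterministic quantity $\|\psi_\pi\|_2/\sqrt{\omega_{\min}}$, paying for this with a single extra lower-order term. All the work is carried out on the event of Proposition~\ref{prop:rs-pe-random-error}; by inspection of its proof this event is contained in the event $A_\delta$ of Lemma~\ref{lemma:control-of-pe-constants}, so the subspace-recovery bound of Theorem~\ref{thm:recovery-two-to-infinity-norm} is simultaneously available, and no hypothesis beyond those of Proposition~\ref{prop:rs-pe-random-error} (in particular $T_1=\lfloor\alpha T\rfloor$ and the stated bound on $\tau$) is needed.

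First I would pass from the $\Lambda_\phi^{-1}$-weighted norm to an unweighted one. Since $\widehat U,\widehat V,\widehat U_\perp,\widehat V_\perp$ are orthonormal, the Kronecker structure of the feature map~\eqref{eq:phi} yields the exact identity $\sum_{(i,j)\in[m]\times[n]}\phi_{i,j}\phi_{i,j}^\top=I_d$: each diagonal block sums to an identity while the off-diagonal blocks vanish, being Kronecker products with a factor $\widehat U^\top\widehat U_\perp=0$ or $\widehat V^\top\widehat V_\perp=0$. Hence $\Lambda_\phi=\sum_{(i,j)}\omega_{i,j}\phi_{i,j}\phi_{i,j}^\top\succeq\omega_{\min}I_d$ (this is Lemma~\ref{lemma:cov-invertibility}), and therefore $\|\phi_\pi\|_{\Lambda_\phi^{-1}}\le\|\phi_\pi\|_2/\sqrt{\omega_{\min}}$. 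This step is exactly where the weaker $1/\sqrt{\omega_{\min}}$ weighting replaces the sharper $\Lambda^{-1}$-weighting $\|\psi_\pi\|_{\Lambda^{-1}}$ of Theorem~\ref{thm:linear-error-bound}.

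Next I would move from $\|\phi_\pi\|_2$ to $\|\psi_\pi\|_2$ via Lemma~\ref{lemma:feature-map-bound}, which compares the estimated features $\phi_{i,j}$ (built from $\widehat U,\widehat V$) to the population features $\psi_{i,j}$ (built from $U,V$) and gives $\|\phi_\pi\|_2\lesssim\|\psi_\pi\|_2+\sqrt{\epsilon_{\textup{Sub-Rec}}}$. The square root — rather than a linear dependence on $\epsilon_{\textup{Sub-Rec}}$ — is intrinsic: the perpendicular blocks involve $\widehat U_\perp,\widehat V_\perp$, whose two-to-infinity norms are $\Theta(1)$ and which cannot be rotated onto $U_\perp,V_\perp$ entrywise; only the squared row norms $\|\widehat U^\top e_i\|_2^2=(\widehat U\widehat U^\top)_{ii}$ and $\|\widehat V^\top e_j\|_2^2$ are controllable, through $\|UU^\top-\widehat U\widehat U^\top\|_{2\to\infty}\le\epsilon_{\textup{Sub-Rec}}$ and $\|VV^\top-\widehat V\widehat V^\top\|_{2\to\infty}\le\epsilon_{\textup{Sub-Rec}}$. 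On $A_\delta$, Theorem~\ref{thm:recovery-two-to-infinity-norm} (with $T_1=\lfloor\alpha T\rfloor$) gives $\epsilon_{\textup{Sub-Rec}}\lesssim(L/\sigma_r)\sqrt{\mu^2\kappa^2 r(m+n)\log^3((m+n)T/\delta)/(\omega_{\min}(m\wedge n)T)}$.

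Finally I would assemble the pieces. Combining the previous two bounds, $\|\phi_\pi\|_{\Lambda_\phi^{-1}}\sqrt{2\sigma^2\log(16/\delta)/((1-\alpha)T)}\le \|\psi_\pi\|_2/\sqrt{\omega_{\min}}\cdot\sqrt{2\sigma^2\log(16/\delta)/((1-\alpha)T)}+\sqrt{\epsilon_{\textup{Sub-Rec}}}/\sqrt{\omega_{\min}}\cdot\sqrt{2\sigma^2\log(16/\delta)/((1-\alpha)T)}$; substituting the bound on $\epsilon_{\textup{Sub-Rec}}$ into the second summand, using $\sigma_r\gtrsim\|M\|_{\max}\sqrt{mn}/(\mu^2\kappa r)$ from Lemma~\ref{lem:spikiness} and absorbing $(1-\alpha)$ and the $\log(16/\delta)$ vs.\ $\log((m+n)T/\delta)$ discrepancy into universal constants, turns it into a term of order $K_0'\log^{5/4}((m+n)T/\delta)/T^{3/4}$ with $K_0'$ polynomial in the model parameters; a direct computation using $\mu,\kappa,r\ge 1$ and $\omega_{\min}\le(mn)^{-1}$ shows $K_0'$ is dominated by the right-hand side of~\eqref{eq:def_of_K0}, so it may be merged into $K_0$ without changing its stated upper bound. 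Adding the result to the $K_0\log^3((m+n)T/\delta)/T$ remainder of Proposition~\ref{prop:rs-pe-random-error} yields the claim. The main obstacle — and the reason the remainder degrades from $T^{-1}$ to $T^{-3/4}$ and the weighting from $\Lambda^{-1}$ to $\omega_{\min}^{-1}I$ — is precisely the $\|\phi_\pi\|_2$-to-$\|\psi_\pi\|_2$ comparison: the perpendicular features admit no control tighter than $\sqrt{\epsilon_{\textup{Sub-Rec}}}$, the obstruction already flagged in Remark~\ref{rem:instance-optimal-bound}.
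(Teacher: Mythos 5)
Your proposal is correct and follows essentially the same route as the paper's proof: bound $\Vert\phi_\pi\Vert_{\Lambda_\phi^{-1}}$ by $\Vert\phi_\pi\Vert_2/\sqrt{\omega_{\min}}$ via Lemma \ref{lemma:cov-invertibility}, pass to $\Vert\psi_\pi\Vert_2+\sqrt{3\epsilon_{\textup{Sub-Rec}}}$ via Lemma \ref{lemma:feature-map-bound} and $\sqrt{x+y}\le\sqrt{x}+\sqrt{y}$, substitute the two-to-infinity recovery rate on $A_\delta$ together with Lemma \ref{lem:spikiness}, and absorb the resulting $T^{-3/4}\log^{5/4}$ term into $K_0$ without an extra union bound. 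The only cosmetic difference is your heuristic explanation for why only a $\sqrt{\epsilon_{\textup{Sub-Rec}}}$ rate is attainable, which is consistent with the mechanism inside Lemma \ref{lemma:feature-map-bound}.
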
 

\begin{proof}[Proof of Theorem \ref{thm:rs-pe-error-full}]
Note that by Lemmas \ref{lemma:cov-invertibility} and \ref{lemma:feature-map-bound}, $$\Vert \phi_{\pi}\Vert^2_{\Lambda_{\phi}^{-1}} \leq \frac{\Vert \phi_{\pi}\Vert^2_2}{\omega_{\min}} \leq \frac{1}{\omega_{\min}}\left(\Vert \psi_{\pi}\Vert^2_2+3\epsilon_{\textup{Sub-Rec}}\right).$$ 
Under the event $A_{\delta}$ defined in the proof of Lemma \ref{lemma:control-of-pe-constants}, we consequently also have (using $\frac{1}{\sigma_r} \leq \frac{\mu^2\kappa r}{\sqrt{mn}\Vert M \Vert_{\max}}$) \begin{align}\label{eq:bound_phi_to_psi}\Vert \psi_{\pi}\Vert^2_{\Lambda_{\phi}^{-1}} \leq \frac{1}{\omega_{\min}}\left(\Vert \psi_{\pi}\Vert^2_2+\frac{C}{\sqrt{T}}\frac{L}{\Vert M \Vert_{\max}}\mu^3\kappa^2r^{3/2}\sqrt{\frac{(m+n)}{\omega_{\min}mn\left(m \wedge n\right)}}\log^{3/2}\left(\frac{\left(m+n\right)T}{\delta}\right)\right)\end{align} for some universal constant $C>0.$  Since $\sqrt{x+y} \leq \sqrt{x}+\sqrt{y}$ for every $x,y \geq 0$,  this implies \begin{align}\label{eq:bound-weighted-norm} \Vert \phi_{\pi}\Vert_{\Lambda_{\phi}^{-1}} \leq \frac{\Vert \psi_{\pi}\Vert_2}{\sqrt{\omega_{\min}}}+C^{1/2}\sqrt{\frac{L}{\Vert M \Vert_{\max}}}\mu^{3/2}\kappa r^{3/4}\omega_{\min}^{-3/4}\left(\frac{m+n}{mn\left(m \wedge n\right)}\right)^{1/4}\frac{\log^{3/4}\left(\left(m+n\right)T/\delta\right)}{T^{1/4}}.\end{align} 
Combining this result with Proposition \ref{prop:rs-pe-random-error}\footnote{Note that there is no need for a union bound as the high probability bound \eqref{eq:bound-weighted-norm} is true under $A_{\delta}$.} and noting that $\sigma\sqrt{\frac{L}{\Vert M \Vert_{\max}}} \leq \frac{\sigma L}{\Vert M \Vert_{\max}} \leq \frac{L^2}{\Vert M \Vert_{\max}}:=A$, we get that with probability larger than $1-\delta,$
$$\begin{aligned} \left|v^{\pi}-\hat{v}_{\rspe}\right| &\leq \frac{\Vert \psi_{\pi}\Vert_{2}}{\sqrt{\omega_{\min}}}\sqrt{\frac{2\sigma^2\log(16/\delta)}{\left(1-\alpha\right)T}} +C_1 A\sqrt{\log(e/\delta)}\mu^{3/2}\kappa r^{3/4}\omega_{\min}^{-3/4}\left(\frac{m+n}{mn\left(m \wedge n\right)}\right)^{1/4}\frac{\log^{3/4}\left(\left(m+n\right)T/\delta\right)}{T^{3/4}}\\ &+C_2A\mu^{6}\kappa^{4}r^{3}\frac{\left(m+n\right)^{2}\sqrt{mn}}{\left(\omega_{\min}mn\right)^2\left(m \wedge n\right)}\frac{\log^3\left(\left(m+n\right)T/\delta\right)}{T}\end{aligned}$$ for some universal constants $C_1,C_2>0$.
 Since $\sqrt{\log(e/\delta)}\mu^{3/2}\kappa r^{3/4}\omega_{\min}^{-3/4}\left(\frac{m+n}{mn\left(m \wedge n\right)}\right)^{1/4} \log^{3/4}\left(\left(m+n\right)T/\delta\right)\lesssim \mu^{6}\kappa^{4}r^{3}\frac{\left(m+n\right)^{2}\sqrt{mn}}{\left(\omega_{\min}mn\right)^2\left(m \wedge n\right)}\log^{5/4}\left(\left(m+n\right)T/\delta\right)$, this concludes the proof. 

 \end{proof}
 \begin{remark}\label{remark:homogeneous-scaling}
When $M$ and $\omega$ are homogeneous, $K_0 \lesssim \frac{(\Vert M \Vert_{\max} \vee \sigma)^2}{\Vert M \Vert_{\max}}mn$, and if we additionally have $\Vert M \Vert_{\max}=\Theta(\sigma),$ $K_0 \lesssim \sigma mn.$
\end{remark} 

When both the context distribution and the behavior policy are uniform, the upper bound of Theorem \ref{thm:rs-pe-error-full} corresponds to the conjectured instance-dependent upper bound mentioned in Remark \ref{rem:instance-optimal-bound}. Indeed, Lemma \ref{lemma:cov-invertibility} ensures that in this case, $\Lambda=\frac{1}{mn}I_d,$ and thus
$\Vert \psi_{\pi}\Vert_{\Lambda^{-1}}= \sqrt{mn}\Vert \psi_{\pi}\Vert_{2}=\frac{\Vert \psi_{\pi}\Vert_{2}}{\sqrt{\omega_{\min}}}.$ 

We now explain how to reformulate the statement of Theorem \ref{thm:rs-pe-error-full} as a sample complexity result and retrieve the statement of Theorem \ref{thm:rs-pe-error}.

\begin{proof}[Proof of Theorem \ref{thm:rs-pe-error}]\label{proof:error-to-sample-complexity}

Let us assume that we are under the assumptions of Proposition \ref{prop:rs-pe-random-error}. By Theorem \ref{thm:rs-pe-error-full}, $$\left|v^{\pi}-\hat{v}_{\rspe}\right| \leq \frac{\Vert \psi_{\pi}\Vert_{2}}{\sqrt{\omega_{\min}}}\sqrt{\frac{2\sigma^2\log(16/\delta)}{\left(1-\alpha\right)T}} +K_0\frac{\log^3\left(\left(m+n\right)T/\delta\right)}{T}+K_0\frac{\log^{5/4}\left(\left(m+n\right)T/\delta\right)}{T^{3/4}}$$ with probability larger than $1-\delta.$ In particular, $\left|v^{\pi}-\hat{v}_{\rspe}\right| \leq \varepsilon$ with probability larger than $1-\delta$ as soon as
\begin{align*}
    \frac{\varepsilon}{3} \ge \frac{\Vert \psi_{\pi}\Vert_{2}}{\sqrt{\omega_{\min}}}\sqrt{\frac{2\sigma^2\log(16/\delta)}{\left(1-\alpha\right)T}} \text{,  }\quad  \frac{\varepsilon}{3} \ge 
K_0\frac{\log^3\left(\left(m+n\right)T/\delta\right)}{T},\quad \text{and}\quad
    \frac{\varepsilon}{3} \ge K_0\frac{\log^{5/4}\left(\left(m+n\right)T/\delta\right)}{T^{3/4}}. 
\end{align*}

It is straightforward to check that \begin{align}\label{instance-specific-complexity}
    T \gtrsim \frac{\sigma^2\Vert \psi_\pi \Vert^2_2}{\omega_{\min}\,\varepsilon^2}\log\left(\frac{e}{\delta}\right) + \frac{K_0^{4/3}}{\varepsilon^{4/3}}\log^{3}\left( \frac{(m+n)T}{\delta}\right)\!
\end{align} is enough to satisfy all three inequalities, where we recall that $K_0$ has been defined in \eqref{eq:def_of_K0}. 

    Furthermore, the conditions on the number of samples of Theorem \ref{thm:rs-pe-error-full}, namely $T \gtrsim \frac{L^2\kappa^2(m+n)}{\sigma_r^2\omega_{\min}}\log^3\left(\frac{(m+n)T}{\delta}\right)$ and $T \gtrsim \frac{1}{\omega_{\min}}\log\left(16d/\delta\right)$, are both implied by \eqref{instance-specific-complexity} when $\varepsilon \leq \Vert M \Vert_{\max}$ because this choice implies $T \gtrsim \frac{K_0}{\Vert M \Vert_{\max}}\log^{3}\left( \frac{(m+n)T}{\delta}\right) $ (to see that the second condition is verified, note that  $\frac{K_0}{\Vert M \Vert_{\max}} \gtrsim \frac{(m+n)^2}{\omega_{\min}\sqrt{mn}\left(m \wedge n\right)} \gtrsim \frac{1}{\omega_{\min}}$).

The statement of the theorem thus holds with the multiplicative factor
\begin{align}\label{eq:def_of_K1}K_1:=K_0^{4/3} \lesssim \left(\frac{L^2}{\Vert M \Vert_{\max}}\frac{\mu^{6}\kappa^{4}r^{3}\left(m+n\right)^{2}\sqrt{mn}}{\left(\omega_{\min}mn\right)^2\left(m \wedge n\right)}\right)^{4/3}.\end{align}
We finally note that by Remark \ref{remark:homogeneous-scaling}, when $M$ and $\omega$ are homogeneous and $\Vert M \Vert_{\max} =\Theta(\sigma)$, $K_1 \lesssim \left(\sigma mn\right)^{4/3}.$
\end{proof}

\textbf{Minimax sample complexity guarantee.}
Let us note that by \eqref{eq:phi-pi-norm-bound},  $\Vert \psi_{\pi} \Vert^2_{2} \leq \mu^2r\frac{m+n-\mu^2r}{mn}$, which implies that $\vert v^{\pi}-\hat{v}^\pi_{\rspe} \vert \leq \varepsilon$ with probability larger than $1-\delta$ as soon as $$T \gtrsim \frac{\sigma^2\mu^2r\left(m+n-\mu^2r\right)}{\omega_{\min}mn\,\varepsilon^2}\log\left(\frac{e}{\delta}\right) + \frac{K_1}{\varepsilon^{4/3}}\log^{3}\left( \frac{(m+n)T}{\delta}\right).$$
By adapting the previous analysis with a more refined argument, the dependency in $\varepsilon$ and the model parameters of the second term can be improved. This is what we demonstrate now.

\begin{theorem}\label{thm:rs-pe-error-minimax}
Let $\varepsilon, \delta \in (0,1)$. With the choices $T_1 = \lfloor \alpha T \rfloor$ and $\tau \le r(m\wedge n)^{-1} (\omega_{\min}/\omega_{\max}) \log(16d/\delta)$,
$\vert v^{\pi}-\hat{v}^\pi_{\rspe} \vert \leq \varepsilon$ with probability larger than $1-\delta$ as soon as $$T \gtrsim \frac{\sigma^2\mu^2r\left(m+n\right)}{\omega_{\min}mn\,\varepsilon^2}\log\left(\frac{e}{\delta}\right) + \frac{K_0}{\varepsilon}\log^{3}\left( \frac{(m+n)T}{\delta}\right)$$

where \begin{align} K_0 \lesssim \frac{L^2}{\Vert M \Vert_{\max}}\frac{\mu^{6}\kappa^{4}r^{3}\left(m+n\right)^{2}\sqrt{mn}}{\left(\omega_{\min}mn\right)^2\left(m \wedge n\right)}.\end{align}
\end{theorem}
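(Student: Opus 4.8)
The plan is to retrace the proof of Theorem~\ref{thm:rs-pe-error}, changing only how the prefactor $\Vert\phi_\pi\Vert_{\Lambda_\phi^{-1}}$ is controlled. The $\varepsilon^{-4/3}$ remainder in Theorem~\ref{thm:rs-pe-error} comes entirely from the $T^{-1/4}$ correction in \eqref{eq:bound-weighted-norm}, which, once multiplied by the $T^{-1/2}$ leading factor of Proposition~\ref{prop:rs-pe-random-error}, creates a $T^{-3/4}$ term; replacing that refined (instance-dependent) bound by a single crude $T$-free bound removes it. Concretely, I would start from Proposition~\ref{prop:rs-pe-random-error}: for $T_1=\lfloor\alpha T\rfloor$ and $\tau\le r(m\wedge n)^{-1}(\omega_{\min}/\omega_{\max})\log(16d/\delta)$, provided $T\gtrsim \frac{L^2\kappa^2(m+n)}{\sigma_r^2\omega_{\min}}\log^3\!\big(\tfrac{(m+n)T}{\delta}\big)$ and $T\gtrsim \frac{1}{\omega_{\min}}\log(16d/\delta)$, with probability at least $1-\delta$,
\[
\big|v^\pi-\hat v^\pi_{\rspe}\big|\le \Vert\phi_\pi\Vert_{\Lambda_\phi^{-1}}\sqrt{\frac{2\sigma^2\log(16/\delta)}{(1-\alpha)T}}+K_0\,\frac{\log^3\!\big(\tfrac{(m+n)T}{\delta}\big)}{T}.
\]

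The key step is to bound the prefactor by $L_\phi=\max_{i,j}\phi_{i,j}^\top\Lambda_\phi^{-1}\phi_{i,j}$. Since $\phi_\pi=\sum_{i,j}\omega^\pi_{i,j}\phi_{i,j}$ is a convex combination ($\sum_{i,j}\omega^\pi_{i,j}=1$, $\omega^\pi_{i,j}\ge 0$) and $x\mapsto x^\top\Lambda_\phi^{-1}x$ is convex, Jensen's inequality gives $\Vert\phi_\pi\Vert_{\Lambda_\phi^{-1}}^2\le \sum_{i,j}\omega^\pi_{i,j}\phi_{i,j}^\top\Lambda_\phi^{-1}\phi_{i,j}\le L_\phi$; and Lemma~\ref{lemma:control-of-pe-constants}, whose hypothesis is exactly the condition $T_1\gtrsim \frac{L^2\kappa^2(m+n)}{\sigma_r^2\omega_{\min}}\log^3$ used above (with $T_1=\Theta(T)$), provides an event of probability at least $1-\delta$ on which $L_\phi\lesssim \frac{\mu^2 r(m+n)}{\omega_{\min}mn}$. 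Intersecting the two events (a rescaling of $\delta$ and of the constants absorbs the union bound), I obtain that with probability at least $1-\delta$,
\[
\big|v^\pi-\hat v^\pi_{\rspe}\big|\lesssim \sqrt{\frac{\sigma^2\mu^2 r(m+n)}{\omega_{\min}mn\,T}\log\!\Big(\frac{e}{\delta}\Big)}+K_0\,\frac{\log^3\!\big(\tfrac{(m+n)T}{\delta}\big)}{T},
\]
which no longer contains a $T^{-3/4}$ term. This is the whole modification: one trades the instance-dependent factor $\Vert\psi_\pi\Vert_2$ (which forced the $T^{-1/4}$ correction of \eqref{eq:bound-weighted-norm}) for its worst-case value $\mu^2 r(m+n)/(mn)$, and in return the remainder becomes purely $O(1/T)$.

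Finally, I would convert this into a sample-complexity statement exactly as in the proof of Theorem~\ref{thm:rs-pe-error}: requiring each summand to be at most $\varepsilon/2$ yields $T\gtrsim \frac{\sigma^2\mu^2 r(m+n)}{\omega_{\min}mn\,\varepsilon^2}\log(e/\delta)$ and $T\gtrsim \frac{K_0}{\varepsilon}\log^3\!\big(\tfrac{(m+n)T}{\delta}\big)$ (the latter self-referential in $T$, resolved in the harmless usual way), and then check, via Lemma~\ref{lem:spikiness} to compare the model-parameter factors, that these two requirements already dominate both hypotheses of Proposition~\ref{prop:rs-pe-random-error}. I expect this last verification---together with the bookkeeping of the several probability-$(1-\delta)$ events---to be the only delicate point: one must ensure that $\frac{K_0}{\varepsilon}\log^3$ majorizes $\frac{L^2\kappa^2(m+n)}{\sigma_r^2\omega_{\min}}\log^3$ and $\frac{1}{\omega_{\min}}\log(16d/\delta)$, which is done via Lemma~\ref{lem:spikiness} (as at the end of the proof of Theorem~\ref{thm:rs-pe-error}, using $\varepsilon$ not too large relative to $\Vert M\Vert_{\max}$). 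The analytic content---the convexity inequality plus the plug-in of Lemma~\ref{lemma:control-of-pe-constants}---is light.
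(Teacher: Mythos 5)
Your skeleton is the paper's: start from Proposition~\ref{prop:rs-pe-random-error}, replace the instance-dependent prefactor $\Vert\phi_\pi\Vert_{\Lambda_\phi^{-1}}$ by its worst-case value $\mu\sqrt{r(m+n)/(\omega_{\min}mn)}$ so that the $T^{-3/4}$ cross term disappears, then convert to a sample-complexity statement exactly as in Theorem~\ref{thm:rs-pe-error}. You also correctly diagnose where the $\varepsilon^{-4/3}$ came from, and your closing bookkeeping (checking that the two summands dominate the hypotheses of Proposition~\ref{prop:rs-pe-random-error} via Lemma~\ref{lem:spikiness} and $\varepsilon\lesssim\Vert M\Vert_{\max}$) matches the paper.

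The one place you deviate is the step that actually carries the theorem, and there your shortcut has a gap. The paper does \emph{not} bound the leading term through $L_\phi$: it writes $\Vert\phi_\pi\Vert^2_{\Lambda_\phi^{-1}}\le\omega_{\min}^{-1}\bigl(\Vert\psi_\pi\Vert_2^2+3\epsilon_{\textup{Sub-Rec}}\bigr)$, bounds $\Vert\psi_\pi\Vert_2^2\le\mu^2r(m+n)/(mn)$ by \eqref{eq:phi-pi-norm-bound}, keeps the correction with its explicit $\widetilde{O}(T^{-1/2})$ decay, and applies $\sqrt{1+x}\le 1+x/2$ so that the correction becomes multiplicative; multiplied by the $T^{-1/2}$ of the stochastic term it contributes only $O(1/T)$ and is absorbed into the $K_0\varepsilon^{-1}$ summand. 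Your ``single crude $T$-free bound'' $L_\phi\lesssim\mu^2 r(m+n)/(\omega_{\min}mn)$ instead requires absorbing the additive $3\epsilon_{\textup{Sub-Rec}}$ into $\mu^2 r(m+n)/(mn)$ \emph{uniformly over admissible $T$}. At the minimal admissible $T_1$ one only has $\epsilon_{\textup{Sub-Rec}}\lesssim\mu\sqrt{r}/\sqrt{m\wedge n}=\bigl(\mu^2 r/(m\wedge n)\bigr)^{1/2}$, which exceeds $\mu^2 r(m+n)/(mn)=\Theta\bigl(\mu^2 r/(m\wedge n)\bigr)$ by a factor of order $\sqrt{(m\wedge n)/(\mu^2 r)}$ whenever $\mu^2 r\ll m\wedge n$. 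So the $T$-free bound you invoke does not deliver the claimed leading constant at the minimal sample size (it is stated as the third bullet of Lemma~\ref{lemma:control-of-pe-constants}, but that bullet is only used in the paper to control the lower-order $K_0$ terms, where a polynomial loss is harmless). To repair your route you must reinstate the $T^{-1/2}$ decay of $\epsilon_{\textup{Sub-Rec}}$ inside the square root and expand via $\sqrt{1+x}\le 1+x/2$ --- which is precisely the manipulation the paper performs in \eqref{eq:intermediate-minimax} and the display after it. The Jensen step $\Vert\phi_\pi\Vert^2_{\Lambda_\phi^{-1}}\le L_\phi$ itself is fine; it is the $T$-free control of $L_\phi$ that cannot be taken off the shelf for the leading term.
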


\begin{remark}
When $M$ and $\omega$ are homogeneous and $\Vert M \Vert_{\max}=\Theta(\sigma)$, $K_0 \lesssim \sigma mn.$
\end{remark}
\begin{proof}[Proof of Theorem \ref{thm:rs-pe-error-minimax}] 
Combining \eqref{eq:bound_phi_to_psi} with the bound on $\Vert \psi_{\pi}\Vert^2_2$ entails
\begin{align}\label{eq:intermediate-minimax}\Vert \psi_{\pi}\Vert^2_{\Lambda_{\phi}^{-1}} &\leq \frac{1}{\omega_{\min}}\left(\frac{\mu^2r\left(m+n\right)}{mn}+\frac{C}{\sqrt{T}}\frac{L}{\Vert M \Vert_{\max}}\mu^3\kappa^2r^{3/2}\sqrt{\frac{m+n}{mn\left(m \wedge n\right)\omega_{\min}}}\log^{3/2}\left(\frac{\left(m+n\right)T}{\delta}\right)\right) \nonumber \\
&= \frac{\mu^2r\left(m+n\right)}{\omega_{\min}mn}\left(1+\frac{C}{\sqrt{T}}\frac{L}{\Vert M \Vert_{\max}}\mu\kappa^2r^{1/2}\sqrt{\frac{mn}{\left(m+n\right)\left(m \wedge n\right)\omega_{\min}}}\log^{3/2}\left(\frac{\left(m+n\right)T}{\delta}\right)\right)\end{align} for some universal constant $C>0$.
Now, using the inequality $\sqrt{1+x} \leq 1+\frac{x}{2}$ (instead of $\sqrt{x+y} \leq \sqrt{x} + \sqrt{y}$ which was used previously), \eqref{eq:intermediate-minimax} implies 
\begin{align*} \Vert \psi_{\pi}\Vert_{\Lambda_{\phi}^{-1}} \leq \mu\sqrt{\frac{r\left(m+n\right)}{\omega_{\min}mn}}\left(1+\frac{C}{2\sqrt{T}}\frac{L}{\Vert M \Vert_{\max}}\mu\kappa^2r^{1/2}\sqrt{\frac{mn}{\left(m+n\right)\left(m \wedge n\right)\omega_{\min}}}\log^{3/2}\left(\frac{\left(m+n\right)T}{\delta}\right)\right)\end{align*}

Combining this result with Proposition \ref{prop:rs-pe-random-error} and noting that $\frac{\sigma L}{\Vert M \Vert_{\max}} \leq \frac{L^2}{\Vert M \Vert_{\max}}=A,$ we get that with probability larger than $1-\delta,$
$$\begin{aligned} \left|v^{\pi}-\hat{v}_{\rspe}\right| &\leq \mu\sqrt{\frac{r\left(m+n\right)}{\omega_{\min}mn}}\sqrt{\frac{2\sigma^2\log(16/\delta)}{\left(1-\alpha\right)T}} +C_1A\sqrt{\log(e/\delta)}\mu^{2}\kappa^2r\omega_{\min}^{-1}\left(m \wedge n\right)^{-1/2}\frac{\log^{3/2}\left(\left(m+n\right)T/\delta\right)}{T}\\ &+C_2A\mu^{6}\kappa^{4}r^{3}\frac{\left(m+n\right)^{2}\sqrt{mn}}{\left(\omega_{\min}mn\right)^2\left(m \wedge n\right)}\frac{\log^3\left(\left(m+n\right)T/\delta\right)}{T}\end{aligned}$$ for some universal constants $C_1,C_2>0$.
 Since $\sqrt{\log(e/\delta)}\mu^{2}\kappa^2r\omega_{\min}^{-1}\left(m \wedge n\right)^{-1/2}\log^{3/2}\left(\left(m+n\right)T/\delta\right) \lesssim \mu^{6}\kappa^{4}r^{3}\frac{\left(m+n\right)^{2}\sqrt{mn}}{\left(\omega_{\min}mn\right)^2\left(m \wedge n\right)}\log^{3}\left(\left(m+n\right)T/\delta\right)$, we obtain that under the assumptions of Proposition \ref{prop:rs-pe-random-error},
 \begin{align}\label{eq:sharp-minimax-bound}
 \left|v^{\pi}-\hat{v}_{\rspe}\right| \leq \mu\sqrt{\frac{r\left(m+n\right)}{\omega_{\min}mn}}\sqrt{\frac{2\sigma^2\log(16/\delta)}{\left(1-\alpha\right)T}} +K_0\frac{\log^3\left(\left(m+n\right)T/\delta\right)}{T}\end{align} with probability larger than $1-\delta,$ and transforming this result into a sample complexity guarantee as in the proof of Theorem \ref{thm:rs-pe-error} yields that
$\vert v^{\pi}-\hat{v}^\pi_{\rspe} \vert \leq \varepsilon$ with probability larger than $1-\delta$ as soon as $$T \gtrsim \frac{\sigma^2\mu^2r\left(m+n\right)}{\omega_{\min}mn\,\varepsilon^2}\log\left(\frac{e}{\delta}\right) + \frac{K_0}{\varepsilon}\log^{3}\left( \frac{(m+n)T}{\delta}\right).$$

\end{proof}
The above sample complexity guarantee can be reformulated as \begin{align*}T 
&\gtrsim \frac{\sigma^2\mu^2r\left(m+n\right)}{\omega_{\min}mn\varepsilon^2}\log\left(\frac{e}{\delta}\right) \end{align*} when $\frac{\sigma^2\mu^2r\left(m+n\right)}{\omega_{\min}mn\,\varepsilon^2}\log\left(\frac{e}{\delta}\right) \gtrsim \frac{K_0}{\varepsilon}\log^{3}\left( \frac{(m+n)T}{\delta}\right)$,
i.e., \begin{align}\label{cond-on-epsilon} \varepsilon \lesssim  \frac{\sigma^2 \Vert M \Vert_{\max}}{L^2}\frac{\omega_{\min}mn\left(m \wedge n\right)}{\mu^4\kappa^2r^2\left(m+n\right)\sqrt{mn}}\frac{\log\left(e/\delta\right)}{\log^3\left(\left(m+n\right)T/\delta\right)}.\end{align}

When $M$ and $\omega$ are homogeneous and $\sigma \gtrsim \Vert M \Vert_{\max}$, this simplifies to \begin{align*}T 
&\gtrsim \frac{\sigma^2\left(m+n\right)}{\varepsilon^2}\log\left(\frac{e}{\delta}\right) \end{align*} when \begin{align} \varepsilon \lesssim  \frac{\Vert M \Vert_{\max}}{m+n}\frac{\log\left(e/\delta\right)}{\log^3\left(\left(m+n\right)T/\delta\right)}.\end{align}

\subsection{Additional Lemmas}\label{subsec:useful-lemmas}

The following lemma entails that $\Lambda$ and $\Lambda_{\phi}$ are always invertible under our assumption $\omega_{\min}>0$.
\begin{lemma}\label{lemma:cov-invertibility}
 $\sum_{i,j} \psi_{i,j}\psi_{i,j}^\top =\sum_{i,j} \phi_{i,j}\phi_{i,j}^\top =I_{d}$. Consequently, $$\omega_{\max} \geq \lambda_{\max}\left(\Lambda\right) \geq \lambda_{\min}\left(\Lambda\right)\geq \omega_{\min}$$ and $$\omega_{\max} \geq \lambda_{\max}\left(\Lambda_{\phi}\right) \geq 
\lambda_{\min}\left(\Lambda_{\phi}\right) \geq \omega_{\min}.$$ 

\end{lemma}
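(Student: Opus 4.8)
The plan is to prove the matrix identity $\sum_{i,j}\psi_{i,j}\psi_{i,j}^\top = I_d$ directly, observe that the identical computation (with $\widehat U,\widehat V,\widehat U_\perp,\widehat V_\perp$ in place of $U,V,U_\perp,V_\perp$, which are all orthonormal with orthonormal complements) gives $\sum_{i,j}\phi_{i,j}\phi_{i,j}^\top=I_d$, and then deduce the spectral sandwich for $\Lambda$ and $\Lambda_\phi$ by a routine positive-semidefinite ordering argument. So everything reduces to the first identity.

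First I would rewrite each block of $\psi_{i,j}$ using the vec/Kronecker identity $\textup{vec}(AXB)=(B^\top\otimes A)\textup{vec}(X)$ together with $\textup{vec}(e_ie_j^\top)=e_j\otimes e_i$. This turns the three blocks into $(V^\top\otimes U^\top)(e_j\otimes e_i)$, $(V_\perp^\top\otimes U^\top)(e_j\otimes e_i)$, and $(V^\top\otimes U_\perp^\top)(e_j\otimes e_i)$; that is, block $k$ equals $(B_k^\top\otimes A_k^\top)(e_j\otimes e_i)$ with $(A_k,B_k)\in\{(U,V),(U,V_\perp),(U_\perp,V)\}$. The next ingredient is the elementary identity
\[
\sum_{i\in[m],\,j\in[n]}(e_j\otimes e_i)(e_j\otimes e_i)^\top
=\Big(\sum_{j}e_je_j^\top\Big)\otimes\Big(\sum_{i}e_ie_i^\top\Big)=I_n\otimes I_m=I_{mn},
\]
which uses $(a\otimes b)(c\otimes d)^\top=(ac^\top)\otimes(bd^\top)$.

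Combining these, the $(k,\ell)$ block of $\sum_{i,j}\psi_{i,j}\psi_{i,j}^\top$ equals $(B_k^\top\otimes A_k^\top)\,I_{mn}\,(B_\ell\otimes A_\ell)=(B_k^\top B_\ell)\otimes(A_k^\top A_\ell)$. For $k=\ell$ this is $I\otimes I$ (using $U^\top U=V^\top V=I_r$, $U_\perp^\top U_\perp=I_{m-r}$, $V_\perp^\top V_\perp=I_{n-r}$), while for every $k\neq\ell$ at least one of $A_k^\top A_\ell$ or $B_k^\top B_\ell$ is a zero matrix (e.g.\ $U^\top U_\perp=0$ or $V^\top V_\perp=0$), so the block vanishes. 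Hence the sum is block-diagonal with diagonal blocks $I_{r^2},I_{r(n-r)},I_{(m-r)r}$, whose sizes add to $r^2+r(n-r)+(m-r)r=r(m+n)-r^2=d$, i.e.\ it is $I_d$; replacing $(U,V,U_\perp,V_\perp)$ by $(\widehat U,\widehat V,\widehat U_\perp,\widehat V_\perp)$ gives the same for $\phi$.

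Finally, since each $\psi_{i,j}\psi_{i,j}^\top\succeq 0$ and $\omega_{\min}\le\omega_{i,j}\le\omega_{\max}$, monotonicity of the PSD order gives $\omega_{\min}I_d=\omega_{\min}\sum_{i,j}\psi_{i,j}\psi_{i,j}^\top\preceq\Lambda\preceq\omega_{\max}\sum_{i,j}\psi_{i,j}\psi_{i,j}^\top=\omega_{\max}I_d$, hence $\omega_{\min}\le\lambda_{\min}(\Lambda)\le\lambda_{\max}(\Lambda)\le\omega_{\max}$, and identically for $\Lambda_\phi$. The main obstacle is essentially bookkeeping rather than mathematics: one must carefully track which of the nine $(k,\ell)$ blocks are annihilated by orthogonality of $U$ against $U_\perp$ and of $V$ against $V_\perp$, verify the block sizes sum to $d$, and note explicitly that $\widehat U_\perp,\widehat V_\perp$ are chosen orthonormal (as in Section~\ref{subsec:subest}) so that the same algebraic identities apply to $\phi$.
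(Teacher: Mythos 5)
Your proof is correct and follows essentially the same route as the paper's: establish $\sum_{i,j}\psi_{i,j}\psi_{i,j}^\top = I_d$ from the orthogonality of $[\,U\;\;U_\perp\,]$ and $[\,V\;\;V_\perp\,]$ (and identically for the hatted matrices), then sandwich $\Lambda$ and $\Lambda_\phi$ between $\omega_{\min}I_d$ and $\omega_{\max}I_d$ by monotonicity of the positive-semidefinite order. The only difference is presentational: the paper carries out the ``direct computation'' entry by entry, whereas you package it via the vec/Kronecker identities, which makes the block structure, the vanishing of the cross blocks, and the count $r^2+r(n-r)+(m-r)r=d$ more transparent.
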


\begin{proof}[Proof of Lemma \ref{lemma:cov-invertibility}]
    A direct computation shows that the diagonal entries of $\sum_{i,j} \psi_{i,j}\psi_{i,j}^\top$ can all be written as $\sum_{i,j}U^2_{i,k}V^2_{j,l}$, $\sum_{i,j}U^2_{\perp, i,k}V_{j,l}^2$ or $\sum_{i,j}U^2_{i,k}V_{\perp,j,l}^2$ for some $k$ and $l$. All of these terms are equal to $1$ by the orthogonality of  $ \begin{bmatrix}
U & U_{\perp} \\
\end{bmatrix}$ and $ \begin{bmatrix}
V & V_{\perp} \\
\end{bmatrix}$. The entries outside of the diagonal can all be factorized by a scalar product between two columns of $ \begin{bmatrix}
U & U_{\perp} \\
\end{bmatrix}$ or two columns of $ \begin{bmatrix}
V & V_{\perp} \\
\end{bmatrix}$, which is equal to 0 by the same orthogonality argument. The same reasoning holds for $\sum_{i,j} \phi_{i,j}\phi_{i,j}^\top$, as $ \begin{bmatrix}
\widehat{U} & \widehat{U}_{\perp} \\
\end{bmatrix}$ and $ \begin{bmatrix}
\widehat{V} & \widehat{V}_{\perp} \\
\end{bmatrix}$ are also orthogonal. The other part of the statement directly follows from the identities  $ \omega_{\max}\sum_{i,j} \psi_{i,j}\psi_{i,j}^\top  \succeq \Lambda \succeq \omega_{\min}\sum_{i,j} \psi_{i,j}\psi_{i,j}^\top $  and $ \omega_{\max}\sum_{i,j} \phi_{i,j}\phi_{i,j}^\top  
 \succeq \Lambda_{\phi} \succeq \omega_{\min}\sum_{i,j} \phi_{i,j}\phi_{i,j}^\top $.
\end{proof} 

\begin{lemma}\label{lemma:feature-map-bound}
    Let $\nu$ be a distribution on $[m] \times [n]$ so that $\sum_{i,j} \nu_{i,j}=1$, and let $\psi_{\nu}=\sum_{i,j} \nu_{i,j} \psi_{i,j}$, $\phi_{\nu}=\sum_{i,j} \nu_{i,j} \phi_{i,j}.$ Then
    \begin{itemize}
        \item $\left|\Vert \phi_{\nu}\Vert^2_2-\Vert \psi_{\nu}\Vert^2_2\right| \leq 3\epsilon_{\textup{Sub-Rec}}$
        \item $\displaystyle \max_{i,j} \Vert \psi_{i,j}\Vert^2_2 \leq \mu^2r\frac{m+n-\mu^2r}{mn}$.
    \end{itemize}
    \end{lemma}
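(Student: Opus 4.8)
\textbf{Proof plan for Lemma \ref{lemma:feature-map-bound}.}

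The plan is to handle the two bullet points separately. For the first bullet, I would expand the difference $\Vert \phi_{\nu}\Vert_2^2 - \Vert \psi_{\nu}\Vert_2^2$ using the structure of $\phi_{i,j}$ and $\psi_{i,j}$. The key observation is that $\phi_{i,j}$ (resp. $\psi_{i,j}$) collects the vectorizations of $\widehat{U}^\top e_i e_j^\top \widehat{V}$, $\widehat{U}^\top e_i e_j^\top \widehat{V}_\perp$, $\widehat{U}_\perp^\top e_i e_j^\top \widehat{V}$ (resp. with $U,V$). Summing the outer products $\phi_{i,j}\phi_{i,j}^\top$ against the two weightings one notices that $\Vert \phi_\nu\Vert_2^2 = \big\Vert \sum_{i,j}\nu_{i,j}\, \widehat{U}^\top e_i e_j^\top \widehat{V}\big\Vert_F^2 + (\text{two analogous Frobenius terms with one $\perp$ factor})$, and the full sum over all three blocks telescopes neatly. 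In fact, writing $N = \sum_{i,j}\nu_{i,j} e_i e_j^\top$ (the "averaged" indicator matrix, which has $\Vert N\Vert_F^2 \le 1$ since $\sum \nu_{i,j}=1$ with $\nu\geq0$, actually $\le\max_{i,j}\nu_{i,j}\le 1$), one gets $\Vert \phi_\nu\Vert_2^2 = \Vert \widehat{U}^\top N\widehat{V}\Vert_F^2 + \Vert \widehat{U}^\top N\widehat{V}_\perp\Vert_F^2 + \Vert \widehat{U}_\perp^\top N\widehat{V}\Vert_F^2 = \Vert \widehat{U}^\top N\Vert_F^2 + \Vert \widehat{U}_\perp^\top N\widehat{V}\Vert_F^2 = \Vert \widehat{U}\widehat{U}^\top N\Vert_F^2 + \Vert (I-\widehat{U}\widehat{U}^\top)N\widehat{V}\widehat{V}^\top\Vert_F^2$, and symmetrically for $\psi_\nu$ with $U,V$. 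So the difference reduces to comparing projections $\widehat{U}\widehat{U}^\top$ vs $UU^\top$ and $\widehat{V}\widehat{V}^\top$ vs $VV^\top$ applied to the fixed matrix $N$.

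The next step is to bound each such difference. I would use that for any matrix $A$ with unit Frobenius (or rather $\Vert A\Vert_F\le1$) column-wise structure, $\big|\Vert \widehat{U}\widehat{U}^\top A\Vert_F^2 - \Vert UU^\top A\Vert_F^2\big| \le \Vert (\widehat{U}\widehat{U}^\top - UU^\top) A\Vert_F \cdot (\Vert \widehat{U}\widehat{U}^\top A\Vert_F + \Vert UU^\top A\Vert_F) \le 2\Vert \widehat{U}\widehat{U}^\top - UU^\top\Vert_{2\to\infty}\cdot \Vert A\Vert_{\text{(appropriate)}}$, invoking $\Vert BA\Vert \le \Vert B\Vert_{2\to\infty}\cdot(\text{row/column norms of }A)$. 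Since $N$ has $\sum_{i,j}\nu_{i,j}=1$, the relevant norm of $N$ (the column-sum-of-row-$\ell_1$ type quantity, or simply using $\Vert N\Vert_F\le1$ together with the $2\to\infty$ bound on the projector difference) contributes a factor at most $1$. Tracking the constants — one term from the $U$-side projector mismatch, one from the $V$-side, and one cross term — gives the bound $3\epsilon_{\textup{Sub-Rec}}$, recalling $\epsilon_{\textup{Sub-Rec}} = \max(d_{2\to\infty}(U,\widehat U), d_{2\to\infty}(V,\widehat V))$ and $d_{2\to\infty}(U,\widehat U) = \Vert UU^\top - \widehat U\widehat U^\top\Vert_{2\to\infty}$. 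For the second bullet, $\Vert \psi_{i,j}\Vert_2^2 = \Vert U^\top e_i e_j^\top V\Vert_F^2 + \Vert U^\top e_i e_j^\top V_\perp\Vert_F^2 + \Vert U_\perp^\top e_i e_j^\top V\Vert_F^2 = \Vert U^\top e_i\Vert_2^2 + \Vert U_\perp^\top e_i\Vert_2^2\Vert V^\top e_j\Vert_2^2$, using $\Vert V\Vert_{\text{rows}}^2+\Vert V_\perp\Vert_{\text{rows}}^2 = 1$ for the first two and orthogonality for the third; this equals $\Vert U_{i,:}\Vert_2^2 + (1-\Vert U_{i,:}\Vert_2^2)\Vert V_{j,:}\Vert_2^2 = \Vert U_{i,:}\Vert_2^2 + \Vert V_{j,:}\Vert_2^2 - \Vert U_{i,:}\Vert_2^2\Vert V_{j,:}\Vert_2^2$. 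Then use $\Vert U_{i,:}\Vert_2^2 \le \mu^2 r/m$ and $\Vert V_{j,:}\Vert_2^2 \le \mu^2 r/n$ (the incoherence bounds), and maximize the bilinear expression $a + b - ab$ over $a\in[0,\mu^2r/m], b\in[0,\mu^2r/n]$; since $a+b-ab$ is increasing in each argument on the unit square this is attained at $a=\mu^2r/m, b=\mu^2r/n$, giving $\mu^2r/m + \mu^2r/n - \mu^4r^2/(mn) = \mu^2 r\,\frac{m+n-\mu^2 r}{mn}$, as claimed.

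The main obstacle I anticipate is getting the constant in the first bullet exactly equal to $3$ rather than something larger: one has to be careful about how the $2\to\infty$ norm of a projector difference interacts with $N$, since $N = \sum\nu_{i,j}e_ie_j^\top$ is not a "nice" unit-norm object in every norm simultaneously — only $\sum\nu_{i,j}=1$ is guaranteed, so $\Vert N\Vert_F$ could be as large as $1$ but $N$ need not be rank one. The clean route is to bound $\Vert (\widehat U\widehat U^\top - UU^\top)N\Vert_F \le \Vert \widehat U\widehat U^\top - UU^\top\Vert_{2\to\infty}\cdot \sum_i\big(\sum_j\nu_{i,j}\big)$-type quantity which is exactly $\epsilon_{\textup{Sub-Rec}}$ since the row-sums of $N$ sum to $1$; and similarly the $V$-side gives $\le \epsilon_{\textup{Sub-Rec}}$, with one more $\epsilon_{\textup{Sub-Rec}}$ (or $\epsilon_{\textup{Sub-Rec}}^2 \le \epsilon_{\textup{Sub-Rec}}$ when $\epsilon_{\textup{Sub-Rec}}\le1$) from the cross term where both projectors are perturbed. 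Absorbing the cross term using $\epsilon_{\textup{Sub-Rec}}\le 1$ and adding the three contributions yields the stated $3\epsilon_{\textup{Sub-Rec}}$; the second bullet is routine once the Pythagorean identities are written down.
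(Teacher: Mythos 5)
Your strategy is sound and, for the first bullet, genuinely different from the paper's. The paper works pointwise: it expands $\phi_{i,j}^\top\phi_{i',j'}$ and $\psi_{i,j}^\top\psi_{i',j'}$ as sums of three products of row inner products, uses orthogonality of $[\widehat U\ \widehat U_\perp]$ and $[\widehat V\ \widehat V_\perp]$ to cancel two of the three terms whenever $(i,j)\neq(i',j')$, bounds each surviving difference by $2\epsilon_{\textup{Sub-Rec}}$ (by $3\epsilon_{\textup{Sub-Rec}}$ on the diagonal), and sums against $\nu_{i,j}\nu_{i',j'}$ to get $(2+\sum_{i,j}\nu_{i,j}^2)\epsilon_{\textup{Sub-Rec}}\le 3\epsilon_{\textup{Sub-Rec}}$. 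Your route passes to $N=\sum_{i,j}\nu_{i,j}e_ie_j^\top$ and compares projected Frobenius energies; this is a valid reformulation, and your key estimates ($\Vert(\widehat U\widehat U^\top-UU^\top)N\Vert_F\le\epsilon_{\textup{Sub-Rec}}$ because the entries of $N$ have total mass $1$, and likewise on the $V$ side) are correct.

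The one point that needs care is the constant. With the two-block identity you write, $\Vert\phi_\nu\Vert_2^2=\Vert\widehat U\widehat U^\top N\Vert_F^2+\Vert(I-\widehat U\widehat U^\top)N\widehat V\widehat V^\top\Vert_F^2$, the second block involves both projectors, so the natural expansion of the difference produces two $U$-perturbation terms, one $V$-perturbation term and a cross term, i.e.\ a bound of order $3\epsilon_{\textup{Sub-Rec}}+\epsilon_{\textup{Sub-Rec}}^2$, not $3\epsilon_{\textup{Sub-Rec}}$ (and the $|x^2-y^2|\le|x-y|(x+y)$ variant is worse still, since each block picks up an extra factor $2$ from $x+y\le 2$). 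To land at the stated constant, use instead the complementary identity $\Vert\phi_\nu\Vert_2^2=\Vert N\Vert_F^2-\Vert\widehat U_\perp^\top N\widehat V_\perp\Vert_F^2$: the $\Vert N\Vert_F^2$ cancels against the same term in $\Vert\psi_\nu\Vert_2^2$, and expanding $\mathrm{tr}\bigl[N^\top(I-\widehat U\widehat U^\top)N(I-\widehat V\widehat V^\top)\bigr]-\mathrm{tr}\bigl[N^\top(I-UU^\top)N(I-VV^\top)\bigr]$ leaves exactly one $U$-term, one $V$-term and one cross term, giving $2\epsilon_{\textup{Sub-Rec}}+\epsilon_{\textup{Sub-Rec}}^2\le 3\epsilon_{\textup{Sub-Rec}}$ (using $\epsilon_{\textup{Sub-Rec}}\le 1$, which always holds since a difference of orthogonal projectors has operator norm at most $1$). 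For the second bullet your computation coincides with the paper's; the only caveat is that one of $\mu^2r/m$, $\mu^2r/n$ may exceed $1$, so "increasing on the unit square" is not quite sufficient as stated — you need the observation, which the paper handles by a case distinction on whether $\mu=\mu(U)$ or $\mu=\mu(V)$, that at least one of the two incoherence bounds is attained by a row norm that is itself at most $1$.
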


\begin{proof}[Proof of Lemma \ref{lemma:feature-map-bound}]

We have $\displaystyle \Vert \phi_{\nu}\Vert^2_2=\sum_{(i,j),(i',j')} \nu_{i,j}\nu_{i',j'}\phi_{i,j}^\top\phi_{i',j'}$, so it is enough to show that $\phi_{i,j}^\top\phi_{i',j'}$ approaches $\psi_{i,j}^\top\psi_{i',j'}$ for every $(i,j),(i',j')$. A direct computation yields $$\phi_{i,j}^\top\phi_{i',j'}=\left(\widehat{U}_i^\top\widehat{U}_{i'}\right)\left(\widehat{V}_j^\top\widehat{V}_{j'}\right)+\left(\widehat{U}_{\perp,i}^\top\widehat{U}_{\perp,i'}\right)\left(\widehat{V}_j^\top\widehat{V}_{j'}\right)+\left(\widehat{U}_i^\top\widehat{U}_{i'}\right)\left(\widehat{V}_{\perp,j}^\top\widehat{V}_{\perp,j'}\right) $$ and $$\psi_{i,j}^\top\psi_{i',j'}=\left(U_i^\top U_{i'}\right)\left(V_j^\top V_{j'}\right)+\left(U_{\perp,i}^\top U_{\perp,i'}\right)\left(V_j^\top V_{j'}\right)+\left(U_i^\top U_{i'}\right)\left(V_{\perp,j}^\top V_{\perp,j'}\right).$$
By orthogonality arguments, $\widehat{U}_i^\top \widehat{U}_{i'}+\widehat{U}_{\perp,i}^\top \widehat{U}_{\perp,i'}=U_i^\top U_{i'}+U_{\perp,i}^\top U_{\perp,i'}=0$ when $i \neq i'$, and $\widehat{V}_j^\top \widehat{V}_{j'}+\widehat{V}_{\perp,j}^\top \widehat{V}_{\perp,j'}=V_j^\top V_{j'}+V_{\perp,j}^\top V_{\perp,j'}=0$ when $j \neq j'$. Thus, two out of the three terms in the equalities above cancel out when $(i,j) \neq (i',j')$. When $(i,j)=(i',j')$, we can combine two terms by using the fact that $\Vert U_i\Vert^2_2+\Vert U_{\perp,i}\Vert^2_2=1$ to write $\psi_{i,j}^\top \psi_{i,j} = \Vert V_j\Vert^2_2+\Vert U_i\Vert^2_2\Vert V_{\perp,j}\Vert^2_2.$ In any case, each term in the difference can be controlled by the subspace recovery error. For example,
$$ \begin{aligned} \left(\widehat{U}_i^\top \widehat{U}_{i'}\right)\left(\widehat{V}_{\perp,j}^\top \widehat{V}_{\perp,j'}\right)-\left(U_i^\top U_i'\right)\left(V_{\perp,j}^\top V_{\perp,j'}\right)&=\left(\widehat{V}_{\perp,j}^\top \widehat{V}_{\perp,j'}\right)\left(\widehat{U}_i^\top \widehat{U}_{i'}-U_i^\top U_i'\right)+\left(U_i^\top U_i'\right)\left(\widehat{V}_{\perp,j}^\top \widehat{V}_{\perp,j'}-V_{\perp,j}^\top V_{\perp,j'}\right) \\
&\leq \Vert \widehat{V}_{\perp,j}\Vert_2\Vert \widehat{V}_{\perp,j'}\Vert_2\Vert \widehat{U}\widehat{U}^\top -UU^\top \Vert_{2 \to \infty} \\ &+ \Vert U_i\Vert_2\Vert U_{i'}\Vert_2\Vert \widehat{V}_{\perp}\widehat{V}_{\perp}^\top -V_{\perp}V_{\perp}^\top \Vert_{2 \to \infty} \\
&\leq 2\epsilon_{\textup{Sub-Rec}}.
\end{aligned} $$
Consequently, $$\phi_{i,j}^\top \phi_{i',j'}-\psi_{i,j}^\top \psi_{i',j'} \leq \left(2+\indicator_{(i,j)=(i',j')}\right)\epsilon_{\textup{Sub-Rec}}. $$
In the end, $$\begin{aligned} \left|\Vert \phi_{\nu}\Vert^2_2-\Vert \psi_{\nu}\Vert^2_2\right|&=\left|\sum_{(i,j),(i',j')} \nu_{i,j}\nu_{i',j'}\left(\phi_{i,j}^\top \phi_{i',j'}-\psi_{i,j}^\top \psi_{i',j'}\right)\right| \\
&\leq \left(\sum_{(i,j),(i',j')} \nu_{i,j}\nu_{i',j'}\left(2+\indicator_{(i,j)=(i',j')}\right)\right)\epsilon_{\textup{Sub-Rec}}  \\
&= \left(2+\sum_{i,j}\nu_{i,j}^2\right)\epsilon_{\textup{Sub-Rec}},\end{aligned} $$
which is slightly sharper than the desired result. \\ \\
For the second bullet point, note that for any (context, arm) pair $(i,j)$, 
$$\begin{aligned} \Vert \psi_{i,j}\Vert^2_2 &=\Vert U_i\Vert^2_2\Vert V_j\Vert^2_2+\Vert U_{\perp,i}\Vert^2_2\Vert V_j\Vert^2_2+\Vert U_i\Vert^2_2\Vert V_{\perp,j}\Vert^2_2 \\ &= \Vert V_j\Vert^2_2+\Vert U_i\Vert^2_2\left(1-\Vert V_j\Vert^2_2\right) \\
&\leq \Vert V_j\Vert^2_2+\Vert U\Vert^2_{2 \to \infty}\left(1-\Vert V_j\Vert^2_2\right) \\
&= \Vert U\Vert^2_{2 \to \infty}+\Vert V_j\Vert^2_2\left(1-\Vert U\Vert^2_{2 \to \infty}\right) \\
&\leq \Vert U\Vert^2_{2 \to \infty}+\Vert V\Vert^2_{2 \to \infty}\left(1-\Vert U\Vert^2_{2 \to \infty}\right) \\
&\leq \mu^2\frac{r}{n}+\mu^2\frac{r}{m}\left(1-\mu^2\frac{r}{n}\right) \\
&= \mu^2r\frac{m+n-\mu^2r}{mn},
\end{aligned} $$
where the last inequality is obtained by distinguishing the cases $\mu^2=\frac{m}{r}\Vert U\Vert^2_{2 \to \infty}$ and $\mu^2=\frac{n}{r}\Vert V\Vert^2_{2 \to \infty}.$
\end{proof}

There are two interesting things to mention about the second inequality of Lemma \ref{lemma:feature-map-bound}.
First, it is tight in the sense that it is attained for $r=m \wedge n$. Indeed, assume then that $m \wedge n =n$ without loss of generality. On the one hand,
$\Vert \psi_{i,j}\Vert^2_2=\Vert V_j\Vert^2_2+\Vert U_i\Vert^2_2\left(1-\Vert V_j\Vert^2_2\right)=1$ since $V_j$ is a row of the orthogonal matrix $V$.
On the other hand, by orthogonality of $U$, $\mu=\Vert U\Vert_{2 \to \infty}=1$ so that $\frac{\mu^2r\left(m+n-\mu^2r\right)}{mn}=1$.
Second, it directly entails an upper bound on the norm of $\psi_{\pi}.$ Indeed, \begin{align*}
    \Vert \psi_{\pi} \Vert_{2} =\Vert \sum_{i,j}\omega^{\pi}_{i,j}\psi_{i,j}\Vert_{2} 
    \leq \sum_{i,j} \omega^{\pi}_{i,j}  \Vert \psi_{i,j}\Vert_{2} 
    \leq \max_{i,j} \Vert \psi_{i,j}\Vert_{2}, 
\end{align*}
where we have used $\sum_{i,j} \omega^{\pi}_{i,j}=1$ in the last inequality. This ensures that \begin{align}\label{eq:phi-pi-norm-bound} \Vert \psi_{\pi} \Vert^2_{2} \leq \mu^2r\frac{m+n-\mu^2r}{mn}. \end{align}
\newpage
\section{Proofs of Section \ref{subsec:lowerbounds-pe} : Policy Evaluation Lower Bounds}

\subsection{Proof of Theorem \ref{thm:instance-lower-bound-pe}}

In this appendix, we assume that $\xi_t \sim \mathcal{N}(0,\sigma^2)$ and we choose the rank factorization $M=U\Sigma V^\top =P Q^\top $ with $P=U\sqrt{\Sigma}$ and $Q=V\sqrt{\Sigma}$. Furthermore, we denote by $N_i$ the $i$-th row vector of a matrix $N$. As noted in \cite{jun2019bilinear}, the bilinear nature of the reward in low-rank bandit problems makes the derivation of sampling lower bounds more challenging than for linear bandits. We overcome this challenge by considering a subset of perturbations of the reward matrix that behave linearly, yielding a relaxed sampling lower bound.

The quantities associated with a perturbation $M'$ of the reward matrix will be denoted by $P',Q'$ and so on. We will use the following straightforward characterization of matrices of rank at most $r$.
\begin{lemma}\label{lemma:rank-factorization}
$M \in \mathbb{R}^{m \times n}$ is of rank at most $r$ if and only if there exists $P \in \mathbb{R}^{m \times r}$ and $Q \in \mathbb{R}^{n \times r}$ such that $M=PQ^\top .$
\end{lemma}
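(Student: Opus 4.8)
The plan is to prove the two implications separately, both of which rest on elementary properties of matrix products and the singular value decomposition already used throughout the paper.

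For the ``if'' direction, I would suppose $M = PQ^\top$ with $P \in \mathbb{R}^{m \times r}$ and $Q \in \mathbb{R}^{n \times r}$. Writing $Q^\top$ columnwise, every column of $M$ is a linear combination of the $r$ columns of $P$, so the column space of $M$ is contained in the column space of $P$. Since the latter has dimension at most $r$, it follows that $\mathrm{rank}(M) = \dim(\mathrm{col}(M)) \le r$. Equivalently, one may simply cite submultiplicativity of the rank, $\mathrm{rank}(PQ^\top) \le \min(\mathrm{rank}(P), \mathrm{rank}(Q)) \le r$.

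For the ``only if'' direction, I would set $s := \mathrm{rank}(M) \le r$ and invoke the thin SVD of $M$, namely $M = U_s \Sigma_s V_s^\top$ where $U_s \in \mathbb{R}^{m \times s}$ and $V_s \in \mathbb{R}^{n \times s}$ have orthonormal columns and $\Sigma_s \in \mathbb{R}^{s \times s}$ is diagonal with the $s$ nonzero singular values. Setting $\widetilde{P} = U_s \Sigma_s \in \mathbb{R}^{m \times s}$ and $\widetilde{Q} = V_s \in \mathbb{R}^{n \times s}$ yields $M = \widetilde{P}\widetilde{Q}^\top$, a factorization through inner dimension $s$. To reach inner dimension exactly $r$, I would pad both factors with $r - s$ zero columns, defining $P = [\,\widetilde{P} \;\; 0\,] \in \mathbb{R}^{m \times r}$ and $Q = [\,\widetilde{Q} \;\; 0\,] \in \mathbb{R}^{n \times r}$; the appended zero columns contribute nothing to the product, so $M = PQ^\top$ with $P, Q$ of the prescribed dimensions.

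Since both implications reduce to standard facts, there is no genuine obstacle; the only point requiring mild care is the zero-padding step, which ensures the inner dimension is exactly $r$ rather than merely $\mathrm{rank}(M)$. I would also note, consistently with the non-uniqueness remarked upon in the surrounding text, that the factorization is far from unique: any basis for the column space serves equally well, and $(P,Q)$ may be replaced by $(PR, Q(R^{-1})^\top)$ for any invertible $R \in \mathbb{R}^{r \times r}$, which is precisely the freedom that the instance-dependent quantity $L_{M,\pi}$ of Theorem~\ref{thm:instance-lower-bound-pe} is shown to be invariant under.
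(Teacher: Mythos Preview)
Your proof is correct. The paper itself does not prove this lemma, instead calling it a ``straightforward characterization'' and stating it without justification; your two-direction argument via rank submultiplicativity and the thin SVD (with zero-padding to reach inner dimension exactly $r$) is a clean and complete treatment of this standard fact.
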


Note that $M=PQ^\top $ yields $M_{i,j}=P_i^\top Q_j$ for every $(i,j)$.
By using the same arguments as in the proof of the sampling lower bound for linear bandits (see Proposition \ref{prop:sampling-lower-bound-1}) and the characterization of Lemma \ref{lemma:rank-factorization}, we obtain the following result. 

\begin{lemma}\label{lemma:sample-complexity-pe}
    The sample complexity of any $(\varepsilon,\delta)$-PAC estimator of $v^{\pi}$ must satisfy $$T \geq \frac{\operatorname{kl}\left(\delta,1-\delta\right)}{I}$$ for $$I=\inf_{\substack{P' \in \mathbb{R}^{m \times r} ,Q' \in \mathbb{R}^{n \times r},\\
    |v^{\pi}-v^{' \pi}| \geq 2\varepsilon}} \frac{1}{2\sigma^2} \sum_{i,j} \omega_{i,j}(P_i^\top Q_j-P^{'\top}_i Q'_j)^2.$$
\end{lemma}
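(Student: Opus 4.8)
The plan is to transcribe, almost verbatim, the change‑of‑measure argument behind Proposition~\ref{prop:sampling-lower-bound-1}, replacing the linear competitor $\theta'$ by a competing reward matrix $M'$ of rank at most $r$. Concretely, I would compare the true model $\mathcal{M}_M$ (mean rewards $M=PQ^\top$, noise $\mathcal{N}(0,\sigma^2)$, contexts i.i.d.\ from $\rho$, arms drawn from the fixed behaviour policy $\pi^b$) against a model $\mathcal{M}_{M'}$ with $M'$ of rank $\le r$ and target value $v'^\pi$ satisfying $|v^\pi-v'^\pi|\ge 2\varepsilon$, derive a lower bound on $T$ that holds for every such $M'$, and then optimize over $M'$. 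The three ingredients are: (i) an exact formula for the expected log‑likelihood ratio $\mathbb{E}_M[L_T]$; (ii) the inequality $\mathbb{E}_M[L_T]\ge \operatorname{kl}(\mathbb{P}_M(E),\mathbb{P}_{M'}(E))$ from Lemma~19 of \cite{kaufmann2016complexity} applied to a suitable event $E$; and (iii) the parametrization of rank‑$(\le r)$ matrices as $M'=P'Q'^\top$ supplied by Lemma~\ref{lemma:rank-factorization}.

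For (i) I would note that the context distribution $\rho$ and the behaviour policy $\pi^b$ are identical under $\mathcal{M}_M$ and $\mathcal{M}_{M'}$, so those factors cancel in $L_T$; grouping the rewards by the pair $(i,j)$ that produced them and applying Wald's identity exactly as in the proof of Lemma~\ref{lemma:log-likelihood} gives $\mathbb{E}_M[L_T]=\sum_{i,j}\mathbb{E}_M[N_{i,j}]\operatorname{KL}(\mathcal{N}(M_{i,j},\sigma^2),\mathcal{N}(M'_{i,j},\sigma^2))$ with $N_{i,j}$ the visit count of $(i,j)$. Since $\pi^b$ is fixed, $\mathbb{E}_M[N_{i,j}]=T\omega_{i,j}$, and with $\operatorname{KL}(\mathcal{N}(a,\sigma^2),\mathcal{N}(b,\sigma^2))=(a-b)^2/(2\sigma^2)$ and $M_{i,j}-M'_{i,j}=P_i^\top Q_j-P_i'^\top Q_j'$ this becomes $\mathbb{E}_M[L_T]=\tfrac{T}{2\sigma^2}\sum_{i,j}\omega_{i,j}(P_i^\top Q_j-P_i'^\top Q_j')^2$. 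For (ii) I take $E=\{|v^\pi-\hat v|\ge\varepsilon\}$: the $(\varepsilon,\delta)$‑PAC guarantee gives $\mathbb{P}_M(E)\le\delta$, and the triangle inequality $|v^\pi-\hat v|\ge |v^\pi-v'^\pi|-|v'^\pi-\hat v|$ together with $|v^\pi-v'^\pi|\ge 2\varepsilon$ and the PAC guarantee under $\mathcal{M}_{M'}$ gives $\mathbb{P}_{M'}(E)\ge 1-\delta$; monotonicity of $\operatorname{kl}(x,y)$ in each argument and $\delta\le 1/2\le 1-\delta$ then force $\mathbb{E}_M[L_T]\ge\operatorname{kl}(\delta,1-\delta)$. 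Combining (i) and (ii) yields $T\ge\operatorname{kl}(\delta,1-\delta)\big/\big(\tfrac{1}{2\sigma^2}\sum_{i,j}\omega_{i,j}(P_i^\top Q_j-P_i'^\top Q_j')^2\big)$ for every admissible $(P',Q')$; taking the supremum of the right‑hand side over all such pairs — equivalently, dividing $\operatorname{kl}(\delta,1-\delta)$ by the infimum $I$ of the denominators — is precisely the claim, with the convention that $I=+\infty$ (and the bound vacuous) when no admissible competitor exists, which uses Lemma~\ref{lemma:rank-factorization} to know that $\{P'Q'^\top\}$ exhausts the rank‑$(\le r)$ matrices.

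I do not anticipate a genuine obstacle: every step is a direct adaptation of the linear‑bandit proof, the only structural change being that the competing mean lies on the rank‑$(\le r)$ manifold, which the factorization $M'=P'Q'^\top$ encodes without loss. The few points deserving a line of verification are that the context/arm selection terms really cancel in $L_T$ (so Lemma~\ref{lemma:log-likelihood} applies with $q_M(\cdot\mid i,j)=\mathcal{N}(M_{i,j},\sigma^2)$), that $\mathbb{E}_M[N_{i,j}]=T\omega_{i,j}$ because $\pi^b$ is non‑adaptive, and that a bound valid for each admissible $M'$ is valid for the tightest one, hence with the infimum in the denominator. If anything, the conceptual packaging — making precise that the relaxation consists in replacing the bilinear family by the single competitor $M'$ rather than trying to optimize the perturbation directly — is the only place requiring care, and it is dispatched by the remark, made later in the appendix, that one simply restricts to perturbations that "behave linearly" in the reward entries.
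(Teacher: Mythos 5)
Your argument follows the same route as the paper's: the change-of-measure inequality from Lemma 19 of \cite{kaufmann2016complexity} applied to the event $E=\{|v^{\pi}-\hat v|\ge\varepsilon\}$, the expected log-likelihood ratio computed via Lemma \ref{lemma:log-likelihood} together with the Gaussian KL formula and $\mathbb{E}_M[N_{i,j}]=T\omega_{i,j}$, and the factorization of Lemma \ref{lemma:rank-factorization} to parametrize the competitors.

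There is, however, one step that does not go through as written. The $(\varepsilon,\delta)$-PAC guarantee is assumed only for rank-$r$ reward matrices, so the inequality $\mathbb{P}_{M'}(E)\ge 1-\delta$ — which you need in order to lower bound $\operatorname{kl}(\mathbb{P}_M(E),\mathbb{P}_{M'}(E))$ by $\operatorname{kl}(\delta,1-\delta)$ — is only available when $M'$ has rank exactly $r$. Your infimum $I$ ranges over all pairs $(P',Q')$, i.e., over all matrices of rank at most $r$, which is a strictly larger set; enlarging the feasible set can only decrease the infimum and hence strengthen the bound $T\ge\operatorname{kl}(\delta,1-\delta)/I$, so the conclusion you claim is genuinely stronger than what the direct change-of-measure argument delivers. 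The paper bridges this by first obtaining the bound with the infimum $J$ taken over rank-exactly-$r$ alternatives, and then arguing that $J$ equals the infimum over rank at most $r$ by density of the rank-$r$ matrices in the set of matrices of rank at most $r$ together with continuity of the objective and of the constraint $|v^{\pi}-v'^{\pi}|\ge 2\varepsilon$. You need to add this continuity/density step (or restrict throughout to competitors of rank exactly $r$ and pass to the limit at the end) before the infimum over all $(P',Q')$ in the statement is justified.
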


\begin{proof}[Proof of Lemma \ref{lemma:sample-complexity-pe}]
    Adapting the proof of Proposition \ref{prop:sampling-lower-bound-1} yields  $T \geq \frac{\operatorname{kl}\left(\delta,1-\delta\right)}{J}$ for $$J:=\inf_{\substack{M', \operatorname{rank}(M')=r, \\  |v^{\pi}-v^{' \pi}| \geq 2\varepsilon}} \frac{1}{2\sigma^2} \sum_{i,j} \omega_{i,j}(M_{i,j}-M'_{i,j})^2.$$
    Since the set of rank $r$ matrices is dense in the set of matrices of rank smaller than $r$, a continuity argument ensures that $$J=\inf_{\substack{M', \operatorname{rank}(M') \leq r, \\ |v^{\pi}-v^{' \pi}| \geq 2\varepsilon}} \frac{1}{2\sigma^2} \sum_{i,j} \omega_{i,j}(M_{i,j}-M'_{i,j})^2.$$ The characterization of Lemma \ref{lemma:rank-factorization} concludes the proof.
    \end{proof}

Optimizing in $P'$ and $Q'$ at the same time is delicate, so it is difficult to compute $I$ directly. To alleviate this problem, we provide an upper bound on $I$ by optimizing in $P'$ and $Q'$ separately. We denote by $J_1$ (resp. $J_2$) the infimum obtained when adding the constraint $Q'=Q$ (resp. $P'=P$). We naturally have $I \leq \min(J_1,J_2)$, and we will see that $J_1$ and $J_2$ can both be computed explicitly. 
We recall the definitions $\Lambda^{i}_Q=\sum_{j=1}^{n}\omega_{i,j}Q_jQ_j^\top ,
Q^{i}_{\pi}=\sum_{j=1}^{n}\omega^{\pi}_{i,j}Q_j, \Lambda^{j}_P=\sum_{i=1}^{m}\omega_{i,j}P_iP_i^\top, P^{j}_{\pi}=\sum_{i=1}^{m}\omega^{\pi}_{i,j}P_i.$ These quantities will allow us to simplify the objective and the constraints of the optimization problems.

\begin{lemma}\label{lemma:expression-of-inf}
    For every $(i,j)$, $\Lambda_{Q}^{i}$ and $\Lambda_{P}^{j}$ are invertible. In addition,  $$J_1=\frac{2\varepsilon^2}{\sigma^2\sum_i \Vert Q^{i}_{\pi}\Vert^2_{\left(\Lambda^{i}_{Q}\right)^{-1}}} \text{ and } J_2=\frac{2\varepsilon^2}{\sigma^2\sum_j \Vert P^{j}_{\pi}\Vert^2_{\left(\Lambda^{j}_{P}\right)^{-1}}}.$$
\end{lemma}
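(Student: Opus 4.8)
The plan is to compute $J_1$ and $J_2$ separately; by symmetry (swapping the roles of rows/columns and of $P/Q$) it suffices to treat $J_1$, where we fix $Q' = Q$ and optimize over $P' \in \mathbb{R}^{m\times r}$ only. First I would rewrite the objective and the constraint of $J_1$ in terms of the row vectors $P'_i$. Writing $\Delta_i = P'_i - P_i \in \mathbb{R}^r$, the objective becomes
\begin{align*}
\frac{1}{2\sigma^2}\sum_{i,j}\omega_{i,j}\big(P_i^\top Q_j - P_i^{\prime\top}Q_j\big)^2 = \frac{1}{2\sigma^2}\sum_i \sum_j \omega_{i,j}(\Delta_i^\top Q_j)^2 = \frac{1}{2\sigma^2}\sum_i \Delta_i^\top \Lambda_Q^i \Delta_i = \frac{1}{2\sigma^2}\sum_i \Vert \Delta_i\Vert^2_{\Lambda_Q^i},
\end{align*}
and since $v^\pi - v^{\prime\pi} = \sum_{i,j}\omega^\pi_{i,j}(P_i - P'_i)^\top Q_j = -\sum_i \Delta_i^\top Q^i_\pi$, the constraint $|v^\pi - v^{\prime\pi}|\ge 2\varepsilon$ reads $\big|\sum_i \Delta_i^\top Q^i_\pi\big| \ge 2\varepsilon$. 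As in the proof of Theorem \ref{thm:sampling-lower-bound-2}, the absolute value can be dropped (replacing $\Delta$ by $-\Delta$ leaves the objective unchanged) and the inequality tightened to an equality $\sum_i \Delta_i^\top Q^i_\pi = 2\varepsilon$ (rescaling $\Delta$ toward zero strictly decreases the objective otherwise). So $J_1$ is exactly the infimum of $\frac{1}{2\sigma^2}\sum_i \Vert \Delta_i\Vert^2_{\Lambda_Q^i}$ subject to $\sum_i \Delta_i^\top Q^i_\pi = 2\varepsilon$, which is precisely the quadratic program solved by Lemma \ref{lemma:psd-infimum} with $n$ replaced by $m$, $\mu_i = \Delta_i$, $\Lambda_i = \Lambda_Q^i$, $\nu_i = Q^i_\pi$.

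Next I would verify the hypothesis of Lemma \ref{lemma:psd-infimum}, namely that each $\Lambda_Q^i$ is invertible and $Q^i_\pi \in \operatorname{Im}(\Lambda_Q^i) = \mathbb{R}^r$. Invertibility of $\Lambda_Q^i = \sum_j \omega_{i,j} Q_j Q_j^\top$ follows from $\omega_{\min} > 0$ (every weight $\omega_{i,j}$ is strictly positive, so $\Lambda_Q^i \succeq \omega_{\min}\sum_j Q_j Q_j^\top = \omega_{\min}\, Q^\top Q = \omega_{\min}\sqrt{\Sigma}\, V^\top V\sqrt{\Sigma} = \omega_{\min}\Sigma \succ 0$, using the factorization $Q = V\sqrt{\Sigma}$ with $\sigma_r > 0$); the analogous computation $\Lambda_P^j \succeq \omega_{\min}\Sigma \succ 0$ handles the $J_2$ case. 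In particular $\sum_i \Vert Q^i_\pi\Vert^2_{(\Lambda_Q^i)^{-1}} \neq 0$ unless every $Q^i_\pi = 0$, which forces $v^\pi_{\theta'} = v^\pi_\theta$ for all perturbations of the allowed form and makes the constraint set empty (so $J_1 = +\infty = 2\varepsilon^2/0$, consistent with the stated formula read in the extended reals); I would note this degenerate case briefly. Applying Lemma \ref{lemma:psd-infimum} (with the $\Lambda_i^\dagger$ there equal to $(\Lambda_Q^i)^{-1}$ here) gives
$$
J_1 = \frac{1}{\sigma^2}\cdot\frac{2\varepsilon^2}{\sum_i \Vert Q^i_\pi\Vert^2_{(\Lambda_Q^i)^{-1}}},
$$
and the identical argument with rows and columns interchanged yields $J_2 = \frac{2\varepsilon^2}{\sigma^2\sum_j \Vert P^j_\pi\Vert^2_{(\Lambda_P^j)^{-1}}}$.

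The main obstacle is not any single computation — each step is a routine reduction — but rather being careful about two bookkeeping points: (i) correctly matching the separable block structure of the objective (the sum over $i$ of independent quadratic forms in $\Delta_i$) to the $n$-fold product form in Lemma \ref{lemma:psd-infimum}, and (ii) handling the edge cases (empty constraint set / zero denominator) so that the claimed closed forms remain literally true. One should also record explicitly that dropping the absolute value and converting the inequality constraint to equality is legitimate, exactly as argued in the proof of Theorem \ref{thm:sampling-lower-bound-2}, since this is the only non-mechanical reasoning step. Combined with Lemma \ref{lemma:sample-complexity-pe} and $I \le \min(J_1, J_2)$, this lemma then immediately gives the bound $T \ge \frac{\operatorname{kl}(\delta,1-\delta)}{\min(J_1,J_2)} = \frac{\sigma^2}{2\varepsilon^2}\max\big(\sum_i\Vert Q^i_\pi\Vert^2_{(\Lambda_Q^i)^{-1}},\ \sum_j\Vert P^j_\pi\Vert^2_{(\Lambda_P^j)^{-1}}\big)\operatorname{kl}(\delta,1-\delta)$ claimed in Theorem \ref{thm:instance-lower-bound-pe}; the independence of $L_{M,\pi}$ from the choice of factorization would be argued afterward by noting that any two rank-$r$ factorizations differ by an invertible $r\times r$ change of basis $Q \mapsto QS^{-\top}$, $P\mapsto PS$, under which $\Lambda_Q^i \mapsto S^{-1}\Lambda_Q^i S^{-\top}$ and $Q^i_\pi \mapsto S^{-1}Q^i_\pi$, leaving each weighted norm $\Vert Q^i_\pi\Vert^2_{(\Lambda_Q^i)^{-1}}$ invariant.
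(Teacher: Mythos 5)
Your proposal is correct and follows essentially the same route as the paper's proof: rewrite the objective and constraint in terms of the row differences $D_i = P_i - P'_i$, establish invertibility via $\Lambda_Q^i \succeq \omega_{\min}\Sigma \succ 0$ (the paper computes $\sum_j Q_jQ_j^\top = \Sigma$ entrywise from semi-orthogonality of $V$, which is the same fact as your $Q^\top Q = \sqrt{\Sigma}V^\top V\sqrt{\Sigma} = \Sigma$), reduce the inequality constraint to an equality exactly as in Theorem \ref{thm:sampling-lower-bound-2}, and apply Lemma \ref{lemma:psd-infimum}. Your extra remarks on the degenerate case and the change-of-basis invariance are sound but not needed for this lemma.
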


\begin{proof}[Proof of Lemma \ref{lemma:expression-of-inf}]
Since $P=U\sqrt{\Sigma}$ and $Q=V\sqrt{\Sigma}$, the $(k,l)$-th entry of $\sum_{i} P_iP_i^\top $ (resp. $\sum_{j} Q_jQ_j^\top $) is $\sqrt{\sigma_k\sigma_l}\sum_{i}U_{i,k}U_{i,l}$ (resp. $\sqrt{\sigma_k\sigma_l}\sum_{j}V_{j,k}V_{j,l}$). By the semi-orthogonality of $U$ and $V$, these entries are equal to $\sigma_{k}$ when $k=l$ and $0$ otherwise: $$\sum_{i} P_iP_i^\top =\sum_{j} Q_jQ_j^\top =\Sigma.$$
Consequently,  $\omega_{\min}\Sigma \preceq \Lambda^{i}_{Q}\preceq \omega_{\max}\Sigma$ and $\omega_{\min}\Sigma \preceq \Lambda^{j}_{P}\preceq \omega_{\max}\Sigma$. In particular, since $\omega_{\min}>0,$ $\Lambda_{Q}^{i}$ and $\Lambda_{P}^{j}$ are invertible. \\ \\
Now, notice that when $Q'=Q$, we have $v^{\pi}-v^{' \pi}=\sum_{i}D_i^\top Q^{i}_{\pi}$ and $\sum_{i,j} \omega_{i,j}(P_i^\top Q_j-P^{'\top}_iQ'_j)^2=\sum_{i} D_i^\top \Lambda_{Q}^{i}D_i$ for $D_i=P_i-P'_i.$
    By the same arguments as in the proof of Theorem \ref{thm:sampling-lower-bound-2}, this observation allows us to write $$J_1=\inf_{\substack{D \in \mathbb{R}^{m \times r}, \\ \sum_i D_i^\top Q^{i}_{\pi}=2\varepsilon}}\frac{1}{2\sigma^2}\sum_{i} \Vert D_i\Vert^2_{\Lambda^{i}_{Q}},$$ and similarly, $$J_2=\inf_{\substack{D \in \mathbb{R}^{n \times r}, \\ \sum_j D_j^\top P^{j}_{\pi}=2\varepsilon}}\frac{1}{2\sigma^2}\sum_{j} \Vert D_j\Vert^2_{\Lambda^{j}_{P}}.$$

An application of Lemma \ref{lemma:psd-infimum} concludes the proof.
\end{proof}

\begin{proof}[Proof of Theorem \ref{thm:instance-lower-bound-pe}] By combining Lemmas \ref{lemma:sample-complexity-pe} and \ref{lemma:expression-of-inf}, we directly get the main statement of Theorem \ref{thm:instance-lower-bound-pe}. Finally, let us show that the lower bound is independent of the choice of rank factorization $M=PQ^\top $. Let $M=P' Q'^\top $ be another rank factorization of $M$. Then, there exists an invertible matrix $R \in \mathbb{R}^{r \times r}$ such that $Q'=Q R$ and $P' = P R^{-\top}$ \cite{piziak1999full}.
This implies that for every $(i,j)$, $Q'_j=\left(Q_j^\top R\right)^\top =R^\top Q_j,$ and $P'_i=\left(P_i^\top R^{-\top}\right)^\top =R^{-1}P_i$, which in turn ensures that $\Lambda^{i}_{Q'}=R^\top \Lambda^{i}_{Q}R$ and  $\Lambda^{j}_{P'}=R^\top \Lambda^{j}_{P}R$ are invertible. Furthermore, $$ \begin{aligned}   
\Vert Q^{' i}_{\pi}\Vert^2_{(\Lambda^{i}_{Q'})^{-1}} &=\left(R^\top Q^{i}_{\pi}\right)^\top \left(R^\top \Lambda^{i}_{Q}R\right)^{-1}\left(R^\top Q^{i}_{\pi}\right) \\
&=\left(Q^{i}_{\pi}\right)^\top RR^{-1}\left(\Lambda^{i}_{Q}\right)^{-1}R^{-\top}R^{\top}Q^{i}_{\pi} \\
&= \Vert Q^{ i}_{\pi}\Vert^2_{(\Lambda^{i}_{Q})^{-1}}, 
\end{aligned}
$$
and similarly, $$\Vert P^{' j}_{\pi}\Vert^2_{(\Lambda^{j}_{P'})^{-1}}=\Vert P^{ j}_{\pi}\Vert^2_{(\Lambda^{j}_{P})^{-1}}.$$
\end{proof}

\subsection{Proof of Proposition \ref{corr:worst-case-lower-bound-pe}}
We first provide a relaxed sampling lower bound as a corollary of Theorem \ref{thm:instance-lower-bound-pe}.

\begin{corollary}\label{corollary:instance-lower-bound-pe}
The policy and model-dependent quantity $L_{M,\pi}$ in the sampling lower bound of Theorem \ref{thm:instance-lower-bound-pe} can be controlled as $$ \begin{aligned} \frac{1}{\sigma_r\omega_{\min}}\max\left(\sum_i \Vert Q^{i}_{\pi}\Vert^2_2,\sum_j \Vert P^{j}_{\pi}\Vert^2_2\right) &\geq \max\left(\sum_i \Vert Q^{i}_{\pi}\Vert^2_{(\Lambda^{i}_{Q})^{-1}},\sum_j \Vert P^{j}_{\pi}\Vert^2_{\left(\Lambda^{j}_{P}\right)^{-1}}\right) \\
&\geq \frac{1}{\sigma_1\omega_{\max}}\max\left(\sum_i \Vert Q^{i}_{\pi}\Vert^2_2,\sum_j \Vert P^{j}_{\pi}\Vert^2_2\right). \end{aligned}
$$
In particular, the sample complexity of any $(\varepsilon,\delta)$-PAC estimator of $v^{\pi}$ must satisfy $$T \geq \frac{\sigma^2\operatorname{kl}\left(\delta,1-\delta\right)}{2\sigma_1\omega_{\max}\varepsilon^2}\max\left(\sum_{i} \Vert Q^{i}_{\pi} \Vert_2^2, \sum_{j} \Vert P^{j}_{\pi} \Vert_2^2\right).$$
\end{corollary}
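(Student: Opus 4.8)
The plan is to reduce the weighted norms $\Vert Q_\pi^i\Vert^2_{(\Lambda_Q^i)^{-1}}$ and $\Vert P_\pi^j\Vert^2_{(\Lambda_P^j)^{-1}}$ appearing in $L_{M,\pi}$ to plain Euclidean norms by sandwiching the matrices $\Lambda_Q^i$ and $\Lambda_P^j$ between scalar multiples of the identity. The starting point is the identity already established in the proof of Lemma~\ref{lemma:expression-of-inf}, namely $\sum_i P_iP_i^\top = \sum_j Q_jQ_j^\top = \Sigma$ for the balanced factorization $P = U\sqrt{\Sigma}$, $Q = V\sqrt{\Sigma}$. Since every $\omega_{i,j}$ lies in $[\omega_{\min},\omega_{\max}]$, this gives $\omega_{\min}\Sigma \preceq \Lambda_Q^i \preceq \omega_{\max}\Sigma$ for each $i$, and likewise $\omega_{\min}\Sigma \preceq \Lambda_P^j \preceq \omega_{\max}\Sigma$ for each $j$. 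Combining with $\sigma_r I_r \preceq \Sigma \preceq \sigma_1 I_r$ yields $\omega_{\min}\sigma_r I_r \preceq \Lambda_Q^i \preceq \omega_{\max}\sigma_1 I_r$ and the analogous two-sided bound for $\Lambda_P^j$.

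Next I would invert these operator inequalities. Since all the matrices involved are symmetric positive definite (positive definiteness of $\Lambda_Q^i,\Lambda_P^j$ is exactly what was checked in Lemma~\ref{lemma:expression-of-inf}), the order-reversing property of matrix inversion on the positive definite cone gives $\frac{1}{\omega_{\max}\sigma_1} I_r \preceq (\Lambda_Q^i)^{-1} \preceq \frac{1}{\omega_{\min}\sigma_r} I_r$, and similarly for $(\Lambda_P^j)^{-1}$. Evaluating the quadratic form at $Q_\pi^i$ (resp. $P_\pi^j$) and summing over $i$ (resp. $j$) produces
$$\frac{1}{\sigma_1\omega_{\max}}\sum_i\Vert Q_\pi^i\Vert_2^2 \le \sum_i\Vert Q_\pi^i\Vert^2_{(\Lambda_Q^i)^{-1}} \le \frac{1}{\sigma_r\omega_{\min}}\sum_i\Vert Q_\pi^i\Vert_2^2,$$
together with the same chain in which $Q$ is replaced by $P$. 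Taking the coordinatewise maximum of the two chains — using that $\max(\cdot,\cdot)$ is monotone, so a common multiplicative bound on each argument passes through the maximum — yields precisely the displayed double inequality for $L_{M,\pi}$.

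For the ``in particular'' claim I would simply substitute the left-most lower bounds on $\sum_i\Vert Q_\pi^i\Vert^2_{(\Lambda_Q^i)^{-1}}$ and $\sum_j\Vert P_\pi^j\Vert^2_{(\Lambda_P^j)^{-1}}$ into the instance-dependent bound of Theorem~\ref{thm:instance-lower-bound-pe}, $T \ge \frac{\sigma^2 L_{M,\pi}}{2\varepsilon^2}\operatorname{kl}(\delta,1-\delta)$, giving $T \ge \frac{\sigma^2\operatorname{kl}(\delta,1-\delta)}{2\sigma_1\omega_{\max}\varepsilon^2}\max\big(\sum_i\Vert Q_\pi^i\Vert_2^2,\sum_j\Vert P_\pi^j\Vert_2^2\big)$. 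There is no genuine obstacle in this argument; the only places needing a touch of care are the monotonicity of inversion on positive definite matrices and invoking $\sum_i P_iP_i^\top=\sum_j Q_jQ_j^\top=\Sigma$ specifically for the balanced factorization $P=U\sqrt{\Sigma},\,Q=V\sqrt{\Sigma}$ — and Theorem~\ref{thm:instance-lower-bound-pe} already guarantees that $L_{M,\pi}$, hence every quantity derived from it, is independent of the chosen rank factorization.
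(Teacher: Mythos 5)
Your proposal is correct and follows essentially the same route as the paper: both start from the sandwich $\omega_{\min}\Sigma \preceq \Lambda_Q^i \preceq \omega_{\max}\Sigma$ (and likewise for $\Lambda_P^j$) established in the proof of Lemma \ref{lemma:expression-of-inf}, convert it into two-sided bounds on the weighted norms (the paper via the eigenvalue inequality $\lambda_{\min}(B)^{-1}\Vert x\Vert_2^2 \geq \Vert x\Vert_{B^{-1}}^2 \geq \lambda_{\max}(B)^{-1}\Vert x\Vert_2^2$, you via the equivalent order-reversal of inversion on the positive definite cone), and then plug the lower bound into Theorem \ref{thm:instance-lower-bound-pe}. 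Your explicit remark that the argument is carried out for the balanced factorization and that Theorem \ref{thm:instance-lower-bound-pe} makes $L_{M,\pi}$ factorization-independent is a nice bit of extra care, but does not change the substance.
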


\begin{proof}[Proof of Corollary \ref{corollary:instance-lower-bound-pe}]
By the proof of Lemma \ref{lemma:expression-of-inf}, $\omega_{\min}\Sigma \preceq \Lambda^{i}_{Q}\preceq \omega_{\max}\Sigma$ and $\omega_{\min}\Sigma \preceq \Lambda^{j}_{P}\preceq \omega_{\max}\Sigma$. In particular,  $\sigma_r\omega_{\min}\leq \lambda_{\min}\left(\Lambda^{i}_{Q}\right) \leq \lambda_{\max}\left(\Lambda^{i}_{Q}\right) \leq \sigma_1\omega_{\max}$, and the same results hold for $\Lambda^{j}_{P}$. Now, note that for any $x \in \mathbb{R}^r$ and any invertible matrix $B \in \mathbb{R}^{r \times r}$,
$\lambda_{\min}\left( B\right)^{-1}\Vert  x \Vert^2_2 \geq  \Vert x\Vert^2_{B^{-1}} \geq \lambda_{\max}\left(B\right)^{-1}\Vert  x \Vert^2_2$. Consequently, for any $\left(i,j\right)$, $$\frac{\Vert Q^{i}_{\pi}\Vert^2_2}{\sigma_r \omega_{\min}} \geq \Vert Q^{i}_{\pi}\Vert^2_{\left(\Lambda^{i}_{Q}\right)^{-1}} \geq \frac{\Vert Q^{i}_{\pi}\Vert^2_2}{\sigma_1 \omega_{\max}} \text{ and } \frac{\Vert P^{j}_{\pi}\Vert^2_2}{\sigma_r \omega_{\min}} \geq \Vert P^{j}_{\pi}\Vert^2_{\left(\Lambda^{j}_{P}\right)^{-1}} \geq \frac{\Vert P^{j}_{\pi}\Vert^2_2}{\sigma_1 \omega_{\max}},$$

and the first result follows.
The sampling lower bound can then be directly deduced from Theorem \ref{thm:instance-lower-bound-pe}.
\end{proof}

The control of the lower bound identified above is tight when $\omega$ is homogeneous and $\kappa=\Theta\left(1\right)$, since in this case, the two sides of the inequality only differ by a constant factor. In particular, the relaxed lower bound then scales as the lower bound of Theorem \ref{thm:instance-lower-bound-pe}.

\begin{remark} We could also upper bound the maximal eigenvalues of the covariance matrices by noticing that their positive semi-definiteness entails $\lambda_{\max}\left(\Lambda^{i}_{Q}\right) \leq \operatorname{Tr}\left(\Lambda^{i}_{Q}\right) = \sum_{j} \omega_{i,j}\Vert Q_j\Vert^2$ and $\lambda_{\max}\left(\Lambda^{j}_{P}\right) \leq \operatorname{Tr}\left(\Lambda^{j}_{P}\right) = \sum_{i} \omega_{i,j}\Vert P_i\Vert^2$. When $r=1$, this would yield a bound that coincides with the bound of Theorem \ref{thm:instance-lower-bound-pe}. However, when $r>1$, these eigenvalue upper bounds would scale with $\sum_{i=1}^{r} \sigma_i$ instead of $\sigma_1$.
\end{remark}
 We are now ready to derive the worst-case lower bound of Proposition \ref{corr:worst-case-lower-bound-pe}. To that end, we will show that $$K_{M,\pi}:=\frac{1}{\sigma_1\omega_{\max}}\max\left(\sum_{i} \Vert Q^{i}_{\pi} \Vert_2^2, \sum_{j} \Vert P^{j}_{\pi} \Vert_2^2\right)$$ is of order $\frac{m+n}{\omega_{\max}mn}$ in the worst case (i.e., for a choice of homogeneous reward matrix and target policy that maximizes the lower bound). We use the notation $\pi_j:=\sum_{i=1}^{m} \rho_i\pi\left(j|i\right)$. We note that the third bullet point of the following proposition corresponds to the statement of Proposition \ref{corr:worst-case-lower-bound-pe}. 

\begin{proposition}\label{prop:minimax-lower-bound-pe-full}
Denote by $\mathcal{H}$ the set of homogeneous matrices of $\mathcal{M}\left(c,c'\right)$ for some fixed $c,c'$.
\begin{itemize}
\item For any $M \in \mathbb{R}^{m \times n}$ of rank $r$, $K_{M,\pi} \leq \frac{\mu^2r}{\left(m \wedge n \right)\omega_{\max}}\max\left(\sum_i \rho_i^2,\sum_j \pi_j^2\right)$
\item When $m=\Theta\left(n\right)$ and $r=\Theta\left(1\right),$ $\displaystyle \sup_{ M \in \mathcal{H}} K_{M,\pi} = \Theta\left(\frac{\max\left(\sum_i \rho_i^2,\sum_j \pi_j^2\right)}{\left(m \vee n\right)\omega_{\max}}\right)$
    \item  When $m=\Theta\left(n\right)$ and $r=\Theta\left(1\right),$
    there is a target policy $\pi$ and a homogeneous reward matrix $M \in \mathcal{H}$ such that any $\left(\varepsilon,\delta\right)$-PAC estimator of $v^{\pi}$ must satisfy $$T \gtrsim \frac{\sigma^2\left(m+n\right)}{\omega_{\max}mn\varepsilon^2}\operatorname{kl}\left(\delta,1-\delta\right).$$
\end{itemize} 
\end{proposition}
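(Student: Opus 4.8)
The plan is to obtain the minimax bound as an immediate corollary of the instance-dependent lower bound in Corollary \ref{corollary:instance-lower-bound-pe} (equivalently Theorem \ref{thm:instance-lower-bound-pe}), evaluated at one carefully chosen hard instance. That bound reads $T\geq\frac{\sigma^2\operatorname{kl}(\delta,1-\delta)}{2\varepsilon^2}K_{M,\pi}$ with $K_{M,\pi}=\frac{1}{\sigma_1\omega_{\max}}\max\big(\sum_i\|Q^{i}_\pi\|_2^2,\sum_j\|P^{j}_\pi\|_2^2\big)$, so it suffices to exhibit a homogeneous $M\in\mathcal{M}(c,c')$ and a target policy $\pi$ with $K_{M,\pi}\gtrsim\frac{m+n}{\omega_{\max}mn}$. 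Throughout I would use the factorization $P=U\sqrt{\Sigma}$, $Q=V\sqrt{\Sigma}$, write $\pi_j=\sum_i\rho_i\pi(j|i)$, and keep track of the quantities $\sum_i\rho_i^2$ and $\sum_j\pi_j^2$.

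First I would prove the upper bounds of the first two bullets. Since $\max_i\|P_i\|_2^2\le\sigma_1\|U\|_{2\to\infty}^2=\sigma_1\mu(U)^2 r/m$ and $\max_j\|Q_j\|_2^2\le\sigma_1\mu(V)^2 r/n$, and since $P^{j}_\pi=\sum_i\rho_i\pi(j|i)P_i$, the triangle inequality gives $\|P^{j}_\pi\|_2\le\pi_j\max_i\|P_i\|_2$, and symmetrically $\|Q^{i}_\pi\|_2\le\rho_i\max_j\|Q_j\|_2$. Summing over $j$ (resp. $i$) and dividing by $\sigma_1\omega_{\max}$ yields $K_{M,\pi}\le\frac{\mu^2 r}{(m\wedge n)\omega_{\max}}\max\big(\sum_i\rho_i^2,\sum_j\pi_j^2\big)$, which is the first bullet; the upper half of the second bullet then follows from $\mu,\kappa,r=\Theta(1)$ and $m\wedge n=\Theta(m\vee n)$.

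For the matching lower bounds I would take $M=\mathbf{1}_m\mathbf{1}_n^\top$, the all-ones matrix: it has rank $1$, incoherence $\mu=1$, condition number $\kappa=1$, and $\sigma_1=\sqrt{mn}$, hence lies in $\mathcal{H}$ when $m=\Theta(n)$. Here $P_i\equiv(mn)^{1/4}/\sqrt m$ and $Q_j\equiv(mn)^{1/4}/\sqrt n$, so $\|P^{j}_\pi\|_2^2=\pi_j^2\sqrt{n/m}$, $\|Q^{i}_\pi\|_2^2=\rho_i^2\sqrt{m/n}$, and a short computation gives $K_{M,\pi}=\max\big(\frac{\sum_i\rho_i^2}{n\omega_{\max}},\frac{\sum_j\pi_j^2}{m\omega_{\max}}\big)=\Theta\big(\frac{\max(\sum_i\rho_i^2,\sum_j\pi_j^2)}{(m+n)\omega_{\max}}\big)$, which finishes the second bullet. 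For the third bullet I would additionally pick $\pi$ to be the deterministic policy that plays arm $1$ in every context, so $\pi_j=\indicator_{j=1}$ and $\sum_j\pi_j^2=1$; then $K_{M,\pi}\ge\frac{1}{m\omega_{\max}}=\Theta\big(\frac{m+n}{mn\omega_{\max}}\big)$ using $m=\Theta(n)$, and substituting into Corollary \ref{corollary:instance-lower-bound-pe} gives $T\gtrsim\frac{\sigma^2(m+n)}{\omega_{\max}mn\varepsilon^2}\operatorname{kl}(\delta,1-\delta)$.

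The remaining work is routine arithmetic once the instance is fixed; the only genuine decision — and hence the main obstacle — is recognizing that the all-ones matrix paired with a deterministic target policy simultaneously (i) is homogeneous and belongs to $\mathcal{M}(c,c')$, and (ii) forces $\max(\sum_i\rho_i^2,\sum_j\pi_j^2)\ge1$ no matter what the context distribution $\rho$ or the behavior policy $\pi^b$ are, so that no adversarial tuning of $\rho$ or $\pi^b$ is needed and the $\omega_{\max}$-dependent bound holds uniformly. A secondary point to verify is that $\frac{1}{m}=\Theta\big(\frac{m+n}{mn}\big)$ under the constraint $c_1 m\le n\le c_2 m$ built into $\mathcal{M}(c,c')$, which is what converts the $\frac{1}{m\omega_{\max}}$ lower bound into the stated $\frac{m+n}{mn\omega_{\max}}$ form.
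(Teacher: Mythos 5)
Your proposal is correct and follows essentially the same route as the paper's proof: the triangle-inequality/incoherence argument for the first bullet, a matrix with constant leading singular vectors for the matching lower bound in the second bullet, and a context-independent deterministic target policy forcing $\sum_j \pi_j^2 = 1$ for the third. The only cosmetic difference is that the paper phrases the hard instance as a general rank-$r$ matrix whose first columns of $U$ and $V$ are constant (citing the all-ones matrix only as the $r=1$ example), while you work directly with the rank-one all-ones matrix; since the existence claim requires only one admissible instance in $\mathcal{M}(c,c')$, this changes nothing.
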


\begin{remark}
The upper bound of Theorem \ref{thm:rs-pe-error} matches the scaling in $\mu$ and $r$ suggested by the first bullet point of Proposition \ref{prop:minimax-lower-bound-pe-full}.
\end{remark}

\begin{proof}[Proof of Proposition \ref{prop:minimax-lower-bound-pe-full}]
To prove the first bullet point, notice that $$ \begin{aligned} K_{M,\pi} &= \frac{1}{\sigma_1\omega_{\max}}\max\left(\sum_{i} \Vert \sum_j \omega^{\pi}_{i,j}Q_j \Vert_2^2, \sum_{j} \Vert \sum_i \omega^{\pi}_{i,j}P_i \Vert_2^2\right)
\\
&\leq \frac{1}{\sigma_1\omega_{\max}}\max\left(\sum_{i} \left(\sum_j \omega^{\pi}_{i,j}\Vert Q_j \Vert_2\right)^2,\sum_{j} \left(\sum_i \omega^{\pi}_{i,j}\Vert P_i\Vert_2\right)^2\right) \\
    &\leq \frac{1}{\sigma_1\omega_{\max}}\max\left(\max_{j}  
 \Vert Q_j \Vert_2^2\left(\sum_i \rho_i^2\right), \max_{i} \Vert P_i\Vert_2^2\left(\sum_j \pi_j^2\right)\right). \\ 
 \end{aligned}
 $$
Since $P=U\sqrt{\Sigma}$ and $Q=V\sqrt{\Sigma}$, we have $\max_{i} \Vert P_i \Vert_2^2 =\max_{i} \sum_{j=1}^r \sigma_j U_{x,j}^2 \leq \sigma_{1} \Vert U \Vert_{2 \to \infty}^2$
and similarly, $\max_{j} \Vert Q_j\Vert_2^2 \leq \sigma_1\Vert V \Vert_{2 \to \infty}^2.$ This finally ensures that $$ \begin{aligned} K_{M,\pi} &\leq \frac{1}{\omega_{\max}}\max(\Vert U \Vert_{2 \to \infty}^2,\Vert V \Vert_{2 \to \infty}^2)\max\left(\sum_x \rho_i^2,\sum_a \pi_j^2\right) \\
&\leq \frac{\mu^2r \max(1/m,1/n)}{\omega_{\max}}\max\left(\sum_x \rho_i^2,\sum_a \pi_j^2\right) \\
&=  \frac{\mu^2r}{\left(m \wedge n \right)\omega_{\max}}\max\left(\sum_x \rho_i^2,\sum_a \pi_j^2\right). \end{aligned} $$ 

For the second bullet point, let $M \in \mathcal{H}$. The first bullet point and the homogeneity properties of $M$ entail $K_{M,\pi} \lesssim \frac{\max\left(\sum_i \rho_i^2,\sum_j \pi_j^2\right)}{\left(m \wedge n\right)\omega_{\max}} \lesssim \frac{\max\left(\sum_i \rho_i^2,\sum_j \pi_j^2\right)}{\left(m \vee n\right)\omega_{\max}}$, where we have used $m=\Theta\left(n\right)$ in the last inequality. To achieve this upper bound, we choose a homogeneous matrix of rank $r$ such that $U$ can be chosen with a first column equal to $\left(1/\sqrt{m},\dots,1/\sqrt{m}\right)^\top $, and $V$ can be chosen with a first column equal to $\left(1/\sqrt{n},\dots,1/\sqrt{n}\right)^\top $. An explicit example for $r=1$ is the matrix that has each entry equal to $1$. For such a choice, we have 
$$ \begin{aligned}
    \sum_{j} \Vert P^{j}_{\pi}\Vert_2^2 &= \sum_j\left\Vert \sum_i \rho_i\pi\left(j|i\right)P_i \right\Vert_2^2 \\
    &\geq \sum_{j}\left(\sum_{i}\rho_i\pi\left(j|i\right)\sqrt{\sigma_1}U_{x,1}\right)^2 \\
    &= \frac{\sigma_1}{m}\left(\sum_j \pi_j^2\right),
    \end{aligned}, $$

and $$ \begin{aligned}
    \sum_{i} \Vert Q^{i}_{\pi}\Vert_2^2 &= \sum_i \left\Vert  \sum_{j} \rho_i\pi\left(j|i\right)Q_j \right\Vert_2^2 \\
    &\geq \sum_{i}\left(\sum_{j}\rho_i\pi\left(j|i\right)\sqrt{\sigma_1}V_{j,1}\right)^2 \\
    &= \frac{\sigma_1}{n}\left(\sum_i \rho_i^2\right).
    \end{aligned}, $$
In particular, $$K_{M,\pi} \geq \frac{\max\left(\sum_i \rho_i^2,\sum_j \pi_j^2\right)}{\left(m \vee n\right)\omega_{\max}}. $$ \\ \\
Lastly, notice that the supremum is largest when $\pi$ always selects the same arm, independently of the context\footnote{Intuitively, when the behavior policy is uniform, the distribution shift is maximized when the target policy is constant.}. In this case, there is a reward matrix $M \in \mathcal{H}$ such that we have the sampling lower bound $$T \geq \frac{\sigma^2\operatorname{kl}\left(\delta,1-\delta\right)}{2\left(m \vee n\right)\omega_{\max}\varepsilon^2}.$$ Finally, notice that $m=\Theta\left(n\right)$ entails $m \vee n=\Theta\left(\frac{mn}{m+n}\right).$
\end{proof}
\newpage
\section{Proofs of results from Section \ref{sec:bpi}: Best Policy Identification}\label{app:bpi}

\subsection{Proof of Theorem \ref{thm:bpi-bound}}

In this appendix, we denote by $\Pi$ the set of deterministic policies (in other words, the set of functions from $[m]$ to $[n]$). To ease the notation, we also denote by $\hat{v}^{\pi}_{\rs}$ the \rspe\ estimator of $v^{\pi}$ for a given policy $\pi$, and by $\hat{\pi}_{\rs}$ the policy learned by \rsbpi. 

We first derive an upper bound on the max-norm error of the reward matrix estimator $\widebar{M}$ utilized in our PE and BPI algorithms, which is defined by \begin{align}\label{eq:def_of_Mbar}\widebar{M}_{i,j}:=\phi_{i,j}^T\hat{\theta}.
\end{align}


\begin{proposition}\label{prop:rs-max-error}
Assume that the regularization parameter of $\hat{\Lambda}_{\tau}$ verifies $\tau \leq \frac{r}{\left(m \wedge n\right)}\frac{\omega_{\min}}{\omega_{\max}}\log\left(16d/\delta\right)$. With probability larger than $1-\delta$, $$\Vert M-\widebar{M}\Vert _{\max} \lesssim \varepsilon$$
as soon as $$T \gtrsim \frac{\sigma^2\mu^2r\left(m+n\right)}{\omega_{\min}mn\,\varepsilon^2}\log\left(\frac{m+n}{\delta}\right) + \frac{K_0}{\varepsilon}\log^{3}\left( \frac{(m+n)T}{\delta}\right)$$

where $K_0$ has been
defined in \eqref{eq:def_of_K0}. \end{proposition}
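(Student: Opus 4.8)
The plan is to bound the max-norm error $\Vert M - \widebar{M}\Vert_{\max} = \max_{(i,j)} \vert M_{i,j} - \phi_{i,j}^\top \hat\theta\vert$ by instantiating the \rspe\ policy-evaluation analysis at the \emph{deterministic} policies $\pi_{i,j}$ that always select arm $j$ in context $i$ (so that $\omega^{\pi_{i,j}} = e_i e_j^\top$ after accounting for the context distribution, or more precisely the appropriate point mass). For such a policy, $v^{\pi_{i,j}} = \rho_i M_{i,j}$ is not quite $M_{i,j}$, so instead I would directly reuse the pointwise identity $M_{i,j} = \phi_{i,j}^\top\theta + \epsilon_{i,j}$ and the error decomposition from the proof of Proposition~\ref{prop:rs-pe-random-error}, but with the weight vector $\phi_\pi$ replaced by the single feature $\phi_{i,j}$. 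Concretely, $M_{i,j} - \widebar{M}_{i,j} = \phi_{i,j}^\top(\theta - \hat\theta) + \epsilon_{i,j}$, and the martingale/Freedman control of $L_1, L_2, L_3$ goes through verbatim with $\phi_\pi \leftarrow \phi_{i,j}$, since the second phase uses a fresh batch of samples and $\phi_{i,j}$ is $\mathcal{F}_{T_1}$-measurable.

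The key quantitative inputs are then: (i) $\Vert \phi_{i,j}\Vert_{\Lambda_\phi^{-1}}^2 \le \Vert\phi_{i,j}\Vert_2^2/\omega_{\min}$ by Lemma~\ref{lemma:cov-invertibility}, combined with the bound $\Vert\phi_{i,j}\Vert_2^2 \le \mu^2 r (m+n)/(mn) + O(\epsilon_{\textup{Sub-Rec}})$ from Lemma~\ref{lemma:feature-map-bound}, which under the event $A_\delta$ of Lemma~\ref{lemma:control-of-pe-constants} gives $\Vert\phi_{i,j}\Vert_{\Lambda_\phi^{-1}}^2 \lesssim \mu^2 r(m+n)/(\omega_{\min}mn)$; (ii) the control of the misspecification $\epsilon_{\max}$ and of $L_\phi, L_\theta$ from Lemma~\ref{lemma:control-of-pe-constants}; (iii) a union bound over the $mn$ pairs $(i,j)$, which only costs a $\log(mn)$ factor absorbed into the $\log((m+n)/\delta)$ term (this is why the statement has $\log((m+n)/\delta)$ rather than $\log(e/\delta)$). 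Assembling these exactly as in the proofs of Theorem~\ref{thm:rs-pe-error-minimax} and Proposition~\ref{prop:rs-pe-random-error} yields, with probability $\ge 1-\delta$,
\begin{align*}
\Vert M - \widebar{M}\Vert_{\max} \lesssim \mu\sqrt{\frac{r(m+n)}{\omega_{\min}mn}}\sqrt{\frac{\sigma^2\log((m+n)/\delta)}{T}} + K_0\frac{\log^3((m+n)T/\delta)}{T},
\end{align*}
and inverting this relation to make the right-hand side $\lesssim \varepsilon$ gives the claimed sample-complexity condition, with the same conditions on $\tau$ and $T_1$ as before (in particular $T_1 = \lfloor \alpha T\rfloor$ ensures fresh samples and $T_1$ large enough that $A_\delta$ holds).

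The main obstacle I anticipate is the bookkeeping around the union bound: one must check that applying the conditional Freedman inequality (the conditional version of Proposition~\ref{prop:freedman-1} used in Lemma~\ref{lemma:rs-pe-random-error}) simultaneously for all $mn$ pairs only inflates the confidence level by a polynomial factor, so that the failure probability stays $\le \delta$ when the logarithms are taken at scale $\log((m+n)/\delta)$ — here one uses that $\log(mn) \lesssim \log(m+n)$ and that the lower-order $E_2, E_3, E_4$ terms already carry enough powers of $\log$ and of the model parameters to dominate the extra $\log(mn)$ factor uniformly. A secondary subtlety is that the "instance-dependent" leading constant $\Vert\psi_\pi\Vert_2$ that appears for a general target policy must here be replaced by its worst-case value over $(i,j)$, which is exactly $\max_{i,j}\Vert\psi_{i,j}\Vert_2 \le \mu\sqrt{r(m+n)/(mn)}$ from Lemma~\ref{lemma:feature-map-bound}; this is why the final bound is stated in its minimax (non-instance-dependent) form. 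Everything else is a direct transcription of the \rspe\ policy-evaluation proof with $\phi_\pi$ specialized to a single basis-like feature.
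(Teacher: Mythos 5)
Your proposal is correct and follows essentially the same route as the paper: the paper's proof likewise replaces $\phi_{\pi}$ by $\phi_{i,j}$ in the \rspe\ error analysis to get a per-entry bound at confidence level $\delta/(mn)$, takes a union bound over the $mn$ pairs, and absorbs the resulting $\log(mn)$ factor using $\log(emn/\delta)\lesssim\log((m+n)/\delta)$. The quantitative inputs you identify (the worst-case bound $\max_{i,j}\Vert\psi_{i,j}\Vert_2^2\lesssim \mu^2 r(m+n)/(mn)$ from Lemma~\ref{lemma:feature-map-bound} and the control of $\epsilon_{\max}$, $L_\phi$, $L_\theta$) are exactly those used in the paper.
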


\begin{proof}[Proof of Proposition \ref{prop:rs-max-error}] By replacing $\phi_{\pi}$ by $\phi_{i,j}$ in the error analysis performed in Appendix \ref{app:pe}, we directly obtain that for every (context, arm) pair $(i,j)$, with probability larger than $1-\frac{\delta}{mn}$, $$ \vert M_{i,j}-\phi_{i,j}^T\hat{\theta} \vert \leq \varepsilon $$
as soon as $$T \gtrsim \frac{\sigma^2\mu^2r\left(m+n\right)}{\omega_{\min}mn\,\varepsilon^2}\log\left(\frac{emn}{\delta}\right) + \frac{K_0}{\varepsilon}\log^{3}\left( \frac{mn(m+n)T}{\delta}\right).$$

Note that $\log\left(\frac{emn}{\delta}\right) \lesssim \log\left(\frac{m+n}{\delta}\right)$ and $\log^3\left(\frac{mn\left(m+n\right)T}{\delta}\right) \lesssim \log^3\left(\frac{\left(m+n\right)T}{\delta}\right)$. A union bound ensures that with probability larger than $1-\delta,$ \begin{align*}\Vert M-\widebar{M}\Vert _{\max} &= \max_{i,j} \vert M_{i,j}-\phi_{i,j}^T\hat{\theta} \vert \leq \varepsilon\end{align*}

under the stated condition on the sample complexity.
\end{proof}

We also wish to note that, although this will not be used in the proof of Theorem \ref{thm:bpi-bound}, the same argument can be utilized to derive a max-norm version of the PE error bound \eqref{eq:sharp-minimax-bound}.
Namely, under the assumptions of Proposition \ref{prop:rs-pe-random-error},  \begin{align}\label{eq:sharp-minimax-bound-max-norm}
 \Vert M-\widebar{M} \Vert_{\max} \leq \mu\sqrt{\frac{r\left(m+n\right)}{\omega_{\min}mn}}\sqrt{\frac{2\sigma^2\log(16mn/\delta)}{\left(1-\alpha\right)T}} +K_0\frac{\log^3\left(mn\left(m+n\right)T/\delta\right)}{T}\end{align} with probability larger than $1-\delta,$ where $K_0$ has been defined in \eqref{eq:def_of_K0}.

We are now ready to prove Theorem \ref{thm:bpi-bound}.
\begin{proof}[Proof of Theorem \ref{thm:bpi-bound}] 

We provide a detailed proof of the second statement of the theorem first. Let us recall  that \begin{align}\label{eq:max-policy-bound}\max_{\pi \in \Pi} \vert v^{\pi}-\hat{v}^{\pi}_{\rspe}\vert \  \leq \max_{\pi \in \Pi} \max_{i,j} \vert M_{i,j}-\widebar{M}_{i,j}\vert \ \sum_{i,j}\omega^{\pi}_{i,j}  \leq \Vert M-\widebar{M}\Vert _{\max}.\end{align}
Note that for a homogeneous context distribution, we have $\omega_{\min}=\Theta\left(\frac{1}{mn}\right)$ since the learner samples the arms uniformly. Consequently, when  $T \gtrsim \frac{\sigma^2\mu^2r\left(m+n\right)}{\varepsilon^2}\log\left(\frac{m+n}{\delta}\right) + \frac{K_0}{\varepsilon}\log^{3}\left( \frac{(m+n)T}{\delta}\right)$, Proposition \ref{prop:rs-max-error} yields $v^{*}-v^{\hat{\pi}}_{\rspe} \leq \varepsilon$ with probability larger than $1-\delta$.
Indeed, $v^{*}-v^{\hat{\pi}_{\rs}} \leq \vert v^{\hat{\pi}_{\rs}}-\hat{v}^{\hat{\pi}_{\rs}}_{\rs}\vert  + \vert \hat{v}^{\hat{\pi}_{\rs}}_{\rs}-v^{*}\vert .$
Under the event $\Vert M-\widebar{M} \Vert_{\max} \leq \varepsilon /2$,
The first term can be directly controlled by $\varepsilon/2$ with \eqref{eq:max-policy-bound}. For the second term, note that $\hat{\pi}_{\rs}=\arg \max_{\pi} \hat{v}^{\pi}_{\rs}$. Consequently, under the same event,
$\hat{v}^{\hat{\pi}_{\rs}}_{\rs} \geq \hat{v}^{\pi^\star}_{\rs} \geq v^{*}-\varepsilon/2$, and
$v^{*} \geq v^{\hat{\pi}_{\rs}} \geq \hat{v}^{\hat{\pi}_{\rs}}_{\rs}-\varepsilon/2$.
By Proposition \ref{prop:rs-max-error}, with probability larger than $1-\delta$, we have $$v^{*}-v^{\hat{\pi}_{\rs}} \leq \varepsilon \text{ for } T \gtrsim \frac{\sigma^2\mu^2r\left(m+n\right)}{\varepsilon^2}\log\left(\frac{m+n}{\delta}\right) + \frac{K_0}{\varepsilon}\log^{3}\left( \frac{(m+n)T}{\delta}\right)$$
for \begin{align}\label{eq:def_of_K0_2} K_0 \lesssim \frac{L^2\mu^{6}\kappa^{4}r^{3}\left(m+n\right)^{2}\sqrt{mn}}{\Vert M \Vert_{\max}\left(m \wedge n\right)}.\end{align}

We finally recall that from Remark \ref{remark:homogeneous-scaling}, $K_0 \lesssim \sigma mn$ when $M$ is also homogeneous and $\Vert M \Vert_{\max} = \Theta(\sigma).$

We can of course derive the corresponding result for \dsmbpi\ in the same way. More specifically, we obtain that $v^\star-v^{\hat{\pi}_{\dsmbpi}} \leq \varepsilon$ with probability larger than $1-\delta$ as soon as \begin{align*}
    T \gtrsim  \frac{L^2\mu^{6} \kappa^4 r^3 (m+n)mn}{(m\wedge n)^2 \varepsilon^2}\log^3\!\left(  \frac{(m+n)T}{\delta}\right)\!
\end{align*} by combining the max-norm guarantee on $\widehat{M}$ of Proposition \ref{prop:dsm-max-error} with the previous arguments\footnote{to get the exact statement of the theorem, note that  $\left(m+n\right)mn < \left(m+n\right)^3$.}.
\end{proof}

\subsection{General context distribution case}\label{subsec:general-context}
We now explain how to drop the homogeneous context assumption in Theorem \ref{thm:bpi-bound} while retaining the optimal scaling up to logarithmic factors with the technique highlighted in \cite{lee2023context}. The idea is to split the set of contexts into subsets on which the context distribution is homogeneous and to apply \rsbpi\ or \dsmbpi\ on each subset. Let \begin{align}\label{eq:def_of_L} \mathcal{L}=\ceil{\log_2\left(m/\varepsilon\right)}\end{align} and define $\mathcal{C}_{\mathcal{L}}=\left\{i \in [m], 2^{-l-1} < \rho_i \leq 2^{-l}\right\}$ for $0 \leq l \leq \mathcal{L}-1$, and $\mathcal{C}_\mathcal{L}=\left\{i \in [m], \rho_i \leq 2^{-\mathcal{L}}\right\}$. On each $\mathcal{C}_l$ for $0 \leq l \leq \mathcal{L}-1$, the distribution defined by the normalized context weights is homogeneous. On $\mathcal{C}_\mathcal{L}$, the context weights are small enough to not meaningfully contribute to the BPI error. We summarize below the algorithm structure for \rsbpi, and we recall that $\alpha$ is the proportion of samples used in the first phase of the algorithm. Of course, a \dsmbpi\ version of this algorithm can be constructed in the same way.

\begin{algorithm}[H]
\caption{\textsc{\textbf{G}eneralized \textbf{RS}-\textbf{BPI}} $(\grsbpi)$ }\label{algo:GRSBPI}
\begin{algorithmic}
\STATE \textbf{Input}: Budget of rounds $T$, data splitting parameter $\alpha$, context distribution $\rho$, regularization parameter $\tau$
   \FOR{$t=1$ {\bfseries to} $T$}
   \STATE Observe a context $i_t$ according to $\rho$
   \STATE Select an arm $j_t$ uniformly at random
   \ENDFOR
\STATE Split $[m]$ into the subsets $\mathcal{C}_l$ defined above 
\FOR{$l=0,\dots,\mathcal{L}-1$}
\STATE Execute Algorithm \ref{algo:RSBPI} on $\mathcal{C}_l$ to learn $\hat{\pi}^l_{\rs}$ 
\ENDFOR
\STATE \textbf{Output:} $\hat{\pi}_{\rs}$ such that $\hat{\pi}_{\rs}=\hat{\pi}^l_{\rs}$ on $\mathcal{C}_l$ for $0 \leq l \leq \mathcal{L}-1$ and arbitrary on $\mathcal{C}_\mathcal{L}$. 
\end{algorithmic}
\end{algorithm}

To clarify, all samples are gathered at the start. Then, for each $l$, we set $T_l$ as the number of samples of the form $(i_t,j_t,r_t)$ for $i_t \in \mathcal{C}_l$, and we construct a reward estimator $\widebar{M}^l$ from the $T_l$ samples with the definition \eqref{eq:def_of_Mbar}. We then select $\hat{\pi}^l_{\rs}$ with $\hat{\pi}^l_{\rs}(i)=\argmax_{1 \leq j \leq n}\widebar{M}^l_{i,j}.$ This generalized version of \rsbpi\ satisfies the following guarantee.

\begin{theorem}\label{thm:bpi-general-bound}
Let $\varepsilon > 0$, $\delta \in \left(0,1\right)$, and define $\rho\left(\mathcal{C}_l\right):=\sum_{i \in \mathcal{C}_l} \rho_i$. Assume that $\varepsilon < \left(\displaystyle \min_{0 \leq l \leq \mathcal{L}-1} \rho\left(\mathcal{C}_l\right)\right)\Vert M\Vert_{\max}$. With the choices $T_1 = \lfloor \alpha T \rfloor$ and $\tau \le r(m\wedge n)^{-1} (\rho_{\min}/\rho_{\max}) \log(16d/\delta)$, $v^\star-v^{\hat{\pi}_{\rs}} \leq \varepsilon'$ with probability larger than $1-\delta$ as soon as \begin{align*}
    T \gtrsim \frac{\sigma^2\mu^2r\left(m+n\right)}{\varepsilon^2}\log\left(\frac{\mathcal{L}\left(m+n\right)}{\delta}\right) + \frac{K_0}{\varepsilon}\log^{3}\left( \frac{\mathcal{L}(m+n)T}{\delta}\right)\!, 
\end{align*}
where $K_0$ has been defined in \eqref{eq:def_of_K0_2}, $\mathcal{L}$ has been defined in \eqref{eq:def_of_L} and $\varepsilon'=\varepsilon\left(\mathcal{L}+2\Vert M\Vert _{\max}\right)$.
    
\end{theorem}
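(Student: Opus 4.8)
The plan is to reduce the general-context case to the homogeneous-context result already proved in Theorem \ref{thm:bpi-bound}. First I would partition $[m]$ into the buckets $\mathcal{C}_0,\dots,\mathcal{C}_{\mathcal{L}-1}$ and the leftover set $\mathcal{C}_{\mathcal{L}}$ as in \eqref{eq:def_of_L}. The key structural observation is that, conditioned on the context belonging to a fixed bucket $\mathcal{C}_l$ with $0\le l\le \mathcal{L}-1$, the induced context distribution $i\mapsto \rho_i/\rho(\mathcal{C}_l)$ satisfies $\rho_{\min}/\rho_{\max}\ge 1/2$ on $\mathcal{C}_l$ (by definition of the dyadic buckets), hence is homogeneous in the sense of Definition \ref{def:homogeneous_M}, and the arms are still sampled uniformly. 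Therefore within bucket $\mathcal{C}_l$ we face exactly the contextual low-rank BPI problem treated in Theorem \ref{thm:bpi-bound}(ii), with the ambient size $(|\mathcal{C}_l|, n)$ replaced by something $\le (m,n)$; this only makes the bounds smaller, so the max-norm control of $\widebar{M}^l$ on $\mathcal{C}_l\times[n]$ follows from Proposition \ref{prop:rs-max-error} (applied to the sub-problem) once $T_l$ is large enough.

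Second, I would control $T_l$, the number of samples landing in bucket $\mathcal{C}_l$. Since contexts are i.i.d.\ with $\PP(i_t\in\mathcal{C}_l)=\rho(\mathcal{C}_l)$, a Chernoff bound gives $T_l\gtrsim T\rho(\mathcal{C}_l)$ with probability at least $1-\delta/(2\mathcal{L})$ as long as $T\rho(\mathcal{C}_l)\gtrsim\log(\mathcal{L}/\delta)$. Plugging $T_l\asymp T\rho(\mathcal{C}_l)$ into the sample-complexity requirement of Proposition \ref{prop:rs-max-error} for the sub-problem (with confidence level $\delta/(2\mathcal{L})$ so that a union bound over the $\mathcal{L}$ buckets costs only a $\log\mathcal{L}$ factor), I get that $\Vert M_{\mathcal{C}_l,:} - \widebar{M}^l_{\mathcal{C}_l,:}\Vert_{\max}\lesssim\varepsilon$ for every $l$, provided $T\rho(\mathcal{C}_l)\gtrsim \frac{\sigma^2\mu^2r(m+n)}{\varepsilon^2}\log\frac{\mathcal{L}(m+n)}{\delta} + \frac{K_0}{\varepsilon}\log^3\frac{\mathcal{L}(m+n)T}{\delta}$. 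Here is where the assumption $\varepsilon<(\min_l\rho(\mathcal{C}_l))\Vert M\Vert_{\max}$ enters: it guarantees that for each non-trivial bucket $\varepsilon/\rho(\mathcal{C}_l)<\Vert M\Vert_{\max}$, so Proposition \ref{prop:rs-max-error} is applied in its valid accuracy range, and the global condition on $T$ stated in the theorem implies $T\rho(\mathcal{C}_l)$ meets the per-bucket requirement. A union bound over $l\in\{0,\dots,\mathcal{L}-1\}$ and over the Chernoff events then gives the simultaneous per-bucket max-norm guarantee with probability $\ge 1-\delta$.

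Third I would translate the per-bucket estimation accuracy into a suboptimality bound on $\hat\pi_{\rs}$. Decompose $v^\star - v^{\hat\pi_{\rs}} = \sum_{i\in[m]}\rho_i\big(M_{i,\pi^\star(i)} - M_{i,\hat\pi_{\rs}(i)}\big)$. Split the sum over the buckets. For $i\in\mathcal{C}_l$ with $l\le\mathcal{L}-1$, the standard argmax argument applied row-by-row (as in the proof of Theorem \ref{thm:bpi-bound}) gives $M_{i,\pi^\star(i)} - M_{i,\hat\pi^l_{\rs}(i)}\le 2\Vert M_{\mathcal{C}_l,:}-\widebar{M}^l_{\mathcal{C}_l,:}\Vert_{\max}\lesssim\varepsilon$, so these buckets contribute at most $\varepsilon\sum_{l\le\mathcal{L}-1}\rho(\mathcal{C}_l)\le\varepsilon$ — actually a factor involving $\mathcal{L}$ appears if one is less careful, matching the $\varepsilon\mathcal{L}$ term in $\varepsilon'$. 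For $i\in\mathcal{C}_{\mathcal{L}}$, we have $\rho_i\le 2^{-\mathcal{L}}\le\varepsilon/m$, and the per-entry gap is trivially bounded by $2\Vert M\Vert_{\max}$, so this set contributes at most $|\mathcal{C}_{\mathcal{L}}|\cdot(\varepsilon/m)\cdot 2\Vert M\Vert_{\max}\le 2\varepsilon\Vert M\Vert_{\max}$. Summing gives $v^\star - v^{\hat\pi_{\rs}}\le\varepsilon(\mathcal{L} + 2\Vert M\Vert_{\max})=\varepsilon'$, as claimed.

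The main obstacle, and the step requiring the most care, is making the bookkeeping of the random bucket sizes $T_l$ interact cleanly with the per-bucket sample-complexity thresholds: one must ensure that a single global lower bound on $T$ simultaneously forces every $T_l$ above its own (bucket-size-dependent) threshold with high probability, which is exactly why the dyadic partition is used — each bucket has $\rho(\mathcal{C}_l)$ pinned to within a factor of two, so $T_l\asymp T\rho(\mathcal{C}_l)$ and the worst case is governed by $\min_l\rho(\mathcal{C}_l)$, absorbed by the hypothesis on $\varepsilon$. A secondary subtlety is that Proposition \ref{prop:rs-max-error} as stated is for the full context set; one should note that running Algorithm \ref{algo:RSBPI} restricted to $\mathcal{C}_l$ is literally an instance of the same problem with $m$ replaced by $|\mathcal{C}_l|\le m$ and $\omega_{\min}=\Theta(1/(|\mathcal{C}_l|\,n))$, so all invoked bounds only improve; the $\log$ factors are handled by replacing $\delta$ with $\delta/(2\mathcal{L})$ throughout and noting $\log(\mathcal{L}/\delta)+\log((m+n)/\delta)\lesssim\log(\mathcal{L}(m+n)/\delta)$.
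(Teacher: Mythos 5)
Your overall strategy is the paper's: dyadically partition the contexts, run the homogeneous-context algorithm on each bucket, union bound, and treat the low-probability leftover set $\mathcal{C}_{\mathcal{L}}$ trivially. But there is a genuine gap in your second step, in the choice of per-bucket accuracy. You ask for $\Vert M_{\mathcal{C}_l,:}-\widebar{M}^l_{\mathcal{C}_l,:}\Vert_{\max}\lesssim\varepsilon$ on every bucket, which requires $T_l\asymp T\rho(\mathcal{C}_l)\gtrsim \sigma^2\mu^2 r(m+n)\varepsilon^{-2}\log(\cdot)+K_0\varepsilon^{-1}\log^3(\cdot)$, i.e.\ $T\gtrsim \rho(\mathcal{C}_l)^{-1}\bigl(\sigma^2\mu^2 r(m+n)\varepsilon^{-2}\log(\cdot)+K_0\varepsilon^{-1}\log^3(\cdot)\bigr)$. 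Since $\rho(\mathcal{C}_l)$ can be as small as $2^{-\mathcal{L}}\approx\varepsilon/m$, this is \emph{not} implied by the theorem's stated condition on $T$; your claim that "the global condition on $T$ implies $T\rho(\mathcal{C}_l)$ meets the per-bucket requirement" fails for low-probability buckets. The accounting that works (and that the paper uses) is to run bucket $l$ at accuracy $\varepsilon_l=\varepsilon/\rho(\mathcal{C}_l)$: then the per-bucket requirement reads $T\rho(\mathcal{C}_l)\gtrsim \sigma^2\mu^2 r(m+n)\rho(\mathcal{C}_l)^2\varepsilon^{-2}\log(\cdot)+K_0\rho(\mathcal{C}_l)\varepsilon^{-1}\log^3(\cdot)$, which after dividing by $\rho(\mathcal{C}_l)$ \emph{is} implied by the global condition because $\rho(\mathcal{C}_l)\le 1$. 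The price is that bucket $l$'s weighted contribution to the regret is $\rho(\mathcal{C}_l)\cdot\varepsilon_l=\varepsilon$, so summing over the $\mathcal{L}$ buckets unavoidably yields $\varepsilon\mathcal{L}$ — the $\mathcal{L}$ in $\varepsilon'$ is not "carelessness," as you suggest, but the exact cost of the only accuracy allocation the sample budget affords. This is also where the hypothesis $\varepsilon<(\min_l\rho(\mathcal{C}_l))\Vert M\Vert_{\max}$ is genuinely needed: it ensures $\varepsilon_l\le\Vert M\Vert_{\max}$ so that Theorem \ref{thm:bpi-bound} applies at accuracy $\varepsilon_l$; you invoke the hypothesis but pair it with the wrong accuracy target. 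Your Chernoff bound on $T_l$ (the paper cites Lemma 9 of \cite{lee2023context} for the same fact), your homogeneity observation within each dyadic bucket, and your treatment of $\mathcal{C}_{\mathcal{L}}$ are all fine.
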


\begin{proof}[Proof of Theorem \ref{thm:bpi-general-bound}]
Let $0 \leq l \leq \mathcal{L}-1$ and $\delta_l=\delta/\mathcal{L}$. From Lemma 9 in \cite{lee2023context}, we know that with high probability, at least $$\frac{T\rho\left(\mathcal{C}_l\right)}{2} \gtrsim \frac{\sigma^2\mu^2r\left(m+n\right)}{\varepsilon_l^2}\log\left(\frac{m+n}{\delta_l}\right) + \frac{K_0}{\varepsilon_l}\log^{3}\left( \frac{(m+n)T}{\delta_l}\right)$$ out of the $T$ samples correspond to a context in $\mathcal{C}_l$ for $\varepsilon_l:=\varepsilon/\rho\left(\mathcal{C}_l\right).$ 
Since $\varepsilon_l \leq \Vert M \Vert_{\max}$, by Theorem \ref{thm:bpi-bound}, Algorithm \ref{algo:RSBPI} on $\mathcal{C}_l$ yields a policy $\hat{\pi}^l_{\rs}$ such that $$\sum_{i \in \mathcal{C}_l} \frac{\rho_i}{\rho\left(\mathcal{C}_l\right)}\left(M_{i,\hat{\pi}^l_{\rs}(i)}-M_{i,\pi^\star(i)}\right) \leq \frac{\varepsilon}{\rho\left(\mathcal{C}_l\right)}$$ with probability larger than $1-\delta_l.$
A union bound ensures that with probability larger than $1-\delta$, the overall BPI error is bounded as follows:
$$\begin{aligned} \sum_{i=1}^{m} \rho_i\left(M_{i,\hat{\pi}_{\rs}(i)}-M_{i,\pi^\star(i)}\right) &= \sum_{l=0}^{\mathcal{L}-1}\sum_{i \in \mathcal{C}_l} \rho_i\left(M_{i,\hat{\pi}^l_{\rs}(i)}-M_{i,\pi^\star(i)}\right)+\sum_{i \in \mathcal{C}_\mathcal{L}} \rho_i\left(M_{i,\hat{\pi}^l_{\rs}\left(i\right)}-M_{i,\pi^\star(i)}\right) \\
&\leq \varepsilon \mathcal{L}+ 2m2^{-\mathcal{L}}\Vert M\Vert _{\max} \\
&\leq \varepsilon \mathcal{L}+2\varepsilon\Vert M\Vert _{\max} \\
&= \varepsilon'.
\end{aligned}$$
\end{proof}
\newpage
\section{Proof of results from Section \ref{sec:regret}: Regret Minimization}
\label{app:regret_min}

In this appendix, we present the regret analysis of Algorithm \ref{algo:ESRED}. We postpone the proof of Theorem \ref{thm:main_ESRED_context} to Section \ref{subsec:appendix_proof_main_regret}, and first repeat the analysis of SupLinUCB algorithm given in \cite{takemura2021parameter}, culminating in Theorem \ref{thm:mainthm_LowSupLinUCB_misspec}, which we then use to prove our upper bound on the regret of \rsslin\ in Section \ref{subsec:appendix_proof_main_regret}. 

\subsection{Misspecified linear bandits}
\label{subsec:misspecified_linear_regret}
We assume that in each round $t$, the learner is provided with a context set of $K$ actions $\cX_t = \lbrace x_{t,1}, \dots, x_{t,K}\rbrace$ with $x_{t,a}\in \RR^d$ and $\|x_{t,a}\|_2\leq 1$ for $a \in[K]$. The context sets are drawn according to some distribution $\rho$ and are independent across rounds. 


The reward of action $a$ in round $t$ is expressed  as  $r_{t,a}= x_{t,a}^\top\theta +  \epsilon_{t,a}+ \xi_t $. We assume that the misspecification is bounded, i.e., $\sup_{t,a} |\epsilon_{t,a}| \le \epsilon_{\max}$ and that $\max_{t,a}\vert r_{t,a}\vert = r_{\max}$. The noise process $\{\xi_t\}_{t\ge 1}$ is i.i.d. with $\sigma$-subgaussian distribution. The regret up to round $T$ of an algorithm $\pi$ selecting $a_t^\pi$ in round $t$ is defined by:$$
{\cal R}^\pi(T) = \sum_{t=1}^T \left( r_{t,a_t^\star}- r_{t,a_t^\pi}\right),
$$
where $a_t^\star = \argmax_{a \in [A]} \EE[r_{t,a}]$. To solve the misspecified linear bandit problem, we apply Algorithm 1 in \cite{takemura2021parameter}, a variant of \textsc{SupLinUCB} \citep{chu2011contextual}. The pseudo-code is presented in Algorithm \ref{algo:LowSupLinUCB_reg}.

\begin{algorithm}[h]
   \caption{\textsc{SupLinUCB} (Algorithm 1 in \cite{takemura2021parameter})}
   \label{algo:LowSupLinUCB_reg}
\begin{algorithmic}
   \STATE {\bfseries Input:} number of rounds $T$, threshold $\beta$ and regularization matrix $\Lambda$
   \STATE $J = \lceil \log_2 (T/d)/2\rceil+1$
   \STATE $\Psi_1^j = \emptyset ,\ \forall j\in[J]$ 
   \FOR{$t=1$ {\bfseries to} $T$}
    \STATE $j = 1$, $\cA_1 = [K]$
    \REPEAT
    \STATE $V_{t} = \Lambda + \sum_{\tau\in\Psi_t^j} x_{\tau,a_\tau} x_{\tau,a_\tau}^\top$
    \STATE $\htheta_{t} = V_{t}^{-1} \sum_{\tau\in\Psi_t^j} x_{\tau,a_\tau} r_{\tau,a_\tau}$ 
    \FOR{$a \in \cA_j$}
    \STATE $\hat{r}_{t,a}^j = \langle \htheta_{t},x_{t,a}\rangle $
    \STATE $w_{t,a}^j =  \| x_{t,a}\|_{V_{t}^{-1}}\beta$ 
    \ENDFOR
    \IF{$w_{t,a}^j \leq \beta\sqrt{d/T}$ for all $a\in \cA_j$}
    \STATE $a_t = \argmax_{a\in \cA_j} (\hat{r}_{t,a}^j+w_{t,a}^j)$
    \STATE $\Psi_{t+1}^{j'} \leftarrow \Psi_t^{j'}$ for all $j'\in[J]$
    \ELSIF{$w_{t,a}^j \leq \beta 2^{-j}$ for all $a\in\cA_j$}
    \STATE $\cA_{j+1} \leftarrow  \{ a\in \cA_j: (\hat{r}_{t,a}^j+w_{t,a}^j) \geq \max_{a'\in\cA_j} (\hat{r}_{t,a'}^j+w_{t,a'}^j) - 2^{1-j}\beta \}$
    \STATE $j \leftarrow j+1$
    \ELSE
    \STATE Choose $a_t\in \cA_j$ s.t. $w_{t,a_t}^j > \beta 2^{-j}$
    \STATE $\Psi_{t+1}^{j'} \leftarrow \begin{cases}
        \Psi_t^{j'}\cup \{t\}, &\mathrm{if}\ j' = j\\
        \Psi_t^{j'}, &\mathrm{else}
    \end{cases}$
    \ENDIF
    \UNTIL{an action $a_t$ is found.}
   \ENDFOR
\end{algorithmic}
\end{algorithm}

\begin{theorem}[Simplified version of Theorem 1 in \cite{takemura2021parameter}]
\label{thm:mainthm_LowSupLinUCB_misspec}
Let $B>0$ be such that $\Vert \theta\Vert_2 \leq B$, and let $\Lambda = (\sigma^2/B^2) I_d$. Moreover, let $\delta \in (0,1)$, $T > 0$ and define the threshold 
\begin{align}
    \beta(\delta) =\sigma (1+\sqrt{2\log \left( \frac{TK}{\delta} \ceil{\frac{1}{2}\log\frac{T}{d}} \right) }).
\end{align}
Then, there exists a universal constant $C>0$ such that the regret of the algorithm $\pi=$ \textsc{SupLinUCB} satisfies:
\begin{align*}
 \mathcal{R}^\pi(T) \leq C \left( \sigma \sqrt{d T \log K} \!+  \epsilon_{\max} \sqrt{d} T + d r_{\max} \right) \log(T+1)\log\left( 1+ \frac{TB^2}{d\sigma^2} \right), 
\end{align*}
with probability at least $1-\delta$.
\end{theorem}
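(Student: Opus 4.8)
The statement is a specialization of Theorem~1 in \cite{takemura2021parameter} to our reward model, so the plan is to follow the layered analysis of \textsc{SupLinUCB} \citep{chu2011contextual} while tracking the extra bias created by the per-round misspecification $\epsilon_{t,a}$ and adapting the noise step to $\sigma$-subgaussian (rather than bounded) rewards.

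Recall that the algorithm maintains $J = \lceil \tfrac12\log_2(T/d)\rceil+1 = O(\log T)$ levels, and that the index sets $\Psi_t^j$ are built so that $\{\tau\in\Psi_t^j\}$ depends only on the contexts and actions up to round $\tau$, not on the realized reward $r_{\tau,a_\tau}$. This classical ``clean level'' property makes $\{x_{\tau,a_\tau}\xi_\tau\}_{\tau\in\Psi_t^j}$ a martingale difference sequence in the natural filtration. I would therefore isolate the deterministic part $\sum_{\tau\in\Psi_t^j} x_{\tau,a_\tau}\epsilon_{\tau,a_\tau}$ \emph{before} invoking concentration, and apply a self-normalized bound for subgaussian martingales (of the flavour of Corollary~\ref{corr:freedman-3} in Appendix~\ref{subsec:linear_lemmas}, or directly the self-normalized bound of \cite{abbasi2011improved}) to the stochastic part. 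This yields a clean event ${\cal E}$ with $\PP({\cal E})\ge 1-\delta$ on which, simultaneously for all $t,j$ and $a\in{\cal A}_j$,
\begin{align*}
\bigl| \hat r_{t,a}^j - x_{t,a}^\top\theta \bigr| \ \le\ w_{t,a}^j \,+\, O\!\bigl(\epsilon_{\max}\sqrt d\bigr),
\end{align*}
where the first term already absorbs the regularization bias coming from $\Lambda = (\sigma^2/B^2)I_d$ and $\Vert\theta\Vert_2\le B$, and $\beta(\delta)$ is exactly the confidence radius produced by this step after a union bound over the $O(TK\log T)$ relevant $(t,j,a)$ triples.

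On ${\cal E}$ I would then run the standard level-by-level regret decomposition: each round is charged to the level $j$ at which it terminates, its instantaneous regret is $O(2^{-j}\beta(\delta)) + O(\epsilon_{\max}\sqrt d\,\log T)$ by chaining the elimination inequalities, and the elliptical-potential lemma gives $\sum_{\tau\in\Psi_T^j}\Vert x_{\tau,a_\tau}\Vert_{V_\tau^{-1}}^2 \lesssim d\log\!\bigl(1+TB^2/(d\sigma^2)\bigr)$. Since every $\tau\in\Psi_T^j$ satisfies $\Vert x_{\tau,a_\tau}\Vert_{V_\tau^{-1}} > 2^{-j}$, this bounds $|\Psi_T^j| \lesssim 4^{j} d\log\!\bigl(1+TB^2/(d\sigma^2)\bigr)$. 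Summing the instantaneous regret over rounds and over $j=1,\dots,J$: the well-specified contribution telescopes to $O\!\bigl(\beta(\delta)\sqrt{dT}\bigr) = O\!\bigl(\sigma\sqrt{dT\log K}\bigr)$ up to log factors; the misspecification contributes $O(\epsilon_{\max}\sqrt d\,T)$ up to log factors; and the rounds that exit through the ``$w_{t,a}^j\le\beta\sqrt{d/T}$'' branch together with the bounded-size levels contribute the lower-order $O(d\,r_{\max})$ term. Absorbing all logarithmic factors into $\log(T+1)\log\!\bigl(1+TB^2/(d\sigma^2)\bigr)$ gives the claimed bound.

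The main obstacle is the first step: the within-level independence is precisely what makes a martingale concentration inequality applicable, but the misspecification $\epsilon_{t,a}$ is an adversarial, non-centered perturbation that must be peeled off cleanly and then shown to enter the confidence width only as an $O(\epsilon_{\max}\sqrt d)$ additive term (rather than, say, $\epsilon_{\max}\sqrt{|\Psi_t^j|}$), and one must simultaneously replace the bounded-reward concentration of \cite{takemura2021parameter} by a subgaussian martingale bound without changing the order of $\beta(\delta)$. Keeping the stochastic error and the deterministic bias separated throughout the recursion, while matching the exact logarithmic factors, is the delicate part; the remainder is a line-by-line adaptation of the argument in \cite{takemura2021parameter}.
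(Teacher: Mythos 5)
The paper gives no proof of this statement: it is imported verbatim (in simplified form) from Theorem 1 of \cite{takemura2021parameter}, so there is no internal argument to compare against. Your sketch correctly reconstructs the argument of that cited theorem --- the level-wise martingale structure of the index sets $\Psi_t^j$, peeling off the misspecification so that its contribution to the width is $\epsilon_{\max}\sqrt{|\Psi_t^j|}\,\Vert x_{t,a}\Vert_{V_t^{-1}} \lesssim \epsilon_{\max}\sqrt{d}$ via $|\Psi_t^j|\lesssim 4^j d\log(\cdot)$ and $\Vert x_{t,a}\Vert_{V_t^{-1}}\le 2^{-j}$ within a level, and the elliptical-potential accounting. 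One caveat: to obtain the dimension-free threshold $\beta(\delta)$ and the $\sigma\sqrt{dT\log K}$ leading term you must use the scalar per-$(t,j,a)$ union-bound concentration (the Azuma/subgaussian route you describe), not the self-normalized bound of \cite{abbasi2011improved} that you mention as an alternative, since the latter inflates the confidence width by an extra $\sqrt{d}$ and would yield a $d\sqrt{T}$ leading term instead.
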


The result of Theorem \ref{thm:mainthm_LowSupLinUCB_misspec} entails that Algorithm \ref{algo:LowSupLinUCB_reg} has an expected regret\footnote{The regret guarantee of Theorem \ref{thm:mainthm_LowSupLinUCB_misspec} is stated with high probability, but the result can be immediately used to bound the regret in expectation by choosing for example $\delta = 1/T$.}  of order $\widetilde{O}(\sqrt{dT \log(K) } + \epsilon_{\max}\sqrt{d} T)$, where $\widetilde{O}$ hides polylogarithmic factors in $d$ and $T$. Therefore, \textsc{SupLinUCB} enjoys tight dependence in the dimension in both the term $\widetilde{O}(\epsilon_{\max}\sqrt{d} T)$ that is due to the misspecification \citep{lattimore2020learning} and in the term $\sqrt{dT  \log(K) }$ that corresponds to the minimal regret without misspecification  \citep{auer2002using, chu2011contextual}.

\subsection{Proof of Theorem  \ref{thm:main_ESRED_context}}
\label{subsec:appendix_proof_main_regret}

Let us denote by $j_t^\star$ the optimal arm given context $i_t$, i.e., let $j_t^\star = \pi^\star(i_t)$ for all $t\in[T]$. We split the regret into the sum of two terms corresponding to the uniform exploration phase and a second phase where we apply Algorithm \ref{algo:ESRED}:
\begin{align*}
    R^\pi(T) = \underbrace{\sum_{t=1}^{T_1} \EE[M_{i_t, j^\star_t} - M_{i_t, j_t^\pi}]}_{\substack{ R_1^{\pi} \\ \textit{(Regret of phase 1)} }} + \underbrace{\sum_{t=T_1 +1}^T \EE[M_{i_t, j^\star_t} - M_{i_t, j_t^\pi}]}_{\substack{ R_2^{\pi} \\ \textit{(Regret of phase 2)} }}.
\end{align*}

\underline{\it Step 1: (Regret of phase 1).} Since $\vert M_{i,j} \vert \leq \Vert M\Vert_{\max}$ for all $(i,j)\in[m]\times [n]$, the first phase lasting for $T_1$ rounds accumulate a regret of at most:
\begin{align}\label{eq:reg_1_term}
    R_1^\pi \le 2T_1 \Vert M\Vert_{\max}. 
\end{align}

\underline{\it Step 2: (Regret of phase 2).} 
In the second phase, which lasts for $T_2 = T-T_1$ rounds, we can identify our observations $M_{i_t,j_t} + \xi_t = \phi_{i_t,j_t}^{\top} \theta + \epsilon_{i_t,j_t} + \xi_t$ with rewards $r_{t,j_t}$ from the previous section, where we define the set of arms for the context $i_t$ by setting $(x_{t,a})_{a\in[K]} = (\phi(i_t,j))_{j\in  [n]}$ with $K = n$, $x_{a^\star_t} = \phi(i_t,j^\star_t)$ and $d = r(m+n)-r^2$. Thus, the regret $R_2^\pi$ generated during this phase can be equivalently written as:\begin{align*}
    R_2^\pi
    = \sum_{t=T_1+1}^T \EE [r_{t,a^\star_t} - r_{t,a_t^\pi}]
    = \EE [{\cal R}^{\pi}(T_2)].
\end{align*}
Thus, we can apply Theorem \ref{thm:mainthm_LowSupLinUCB_misspec} to show that there exists a universal constant $C_2$ such that with probability at least $1-\delta$:
\begin{align}
    {\cal R}^\pi(T_2) \leq C_2 \sqrt{r (m+n)} \left(  \sigma \sqrt{ T \log\left(\frac{Tmn}{\delta}\right)} + \epsilon_{\max}T + \Vert M \Vert_{\max} \sqrt{r(m+n)} \right) \log^2 \left( T (m+n) \frac{L^2}{\sigma^2} \right)
    \label{eq:r2_regret_proof_secD}
\end{align}
where we used that $T_2 \leq T$. To bound the second term in the parenthesis, we use the upper bound on the misspecification $\epsilon_{\max}$ of Corollary \ref{corr:misspecification}. Indeed, we have that with probability at least $1-\delta$:
    \begin{align}
         \epsilon_{\max} \le  \frac{C L^2\kappa^3\mu^2 r (m+n) }{\sigma_r T_1\omega_{\min}(m\wedge n)}\log^3\left(\frac{(m+n)T_1}{\delta}\right), 
         \label{eq:eps_max_app_proof_regret}
    \end{align}
    provided that 
    $T_1 \geq c \frac{L^2 (m+n)}{\sigma_r^2 \omega_{\min}} \log^3\left( \frac{(m+n)T}{\delta}\right)$,
 for some universal constants $C, c > 0$. Define $\mathcal{E}$ as the event under which \eqref{eq:eps_max_app_proof_regret} holds. Next, we decompose $R_2^\pi$ as follows:
\begin{align*}
    R_2^\pi  = \sum_{t = T_1 + 1}^T  \EE\left[ r_{t,a^\star_t} - r_{t, a^\pi_t}  \right]  
    & \le 2 \Vert M \Vert_{\max} \PP(\mathcal{E}^c) T +  \sum_{t = T_1 + 1}^T  \EE\left[\left(r_{t,a^\star_t} - r_{t, a^\pi_t}\right) \mathds{1}_{\lbrace \mathcal{E} \rbrace }\right] \\
    & \le 2 \Vert M \Vert_{\max} \delta T  +  \sum_{t = T_1 + 1}^T  \EE\left[\left(r_{t,a^\star_t} - r_{t, a^\pi_t}\right) \mathds{1}_{\lbrace \mathcal{E} \rbrace }\right]. 
\end{align*}
Define $\mathcal{L}_{mnT} = \log(1+mnT)$. Substituting $\epsilon_{\max}$ with its upper bound in \eqref{eq:eps_max_app_proof_regret} and selecting $\delta=1/T$ in \eqref{eq:r2_regret_proof_secD} gives:
\begin{align}
    R_2^\pi
   \leq C_3 \sqrt{r(m+n)} \Big( \sigma \sqrt{T\mathcal{L}_{mnT} } +      
   T \frac{L^2\kappa^3\mu^2 r (m+n) }{\sigma_r T_1\omega_{\min}(m\wedge n)}\mathcal{L}_{mnT}^3 
   + \Vert M\Vert_{\max} \sqrt{r(m+n)} \Big)
   \log^2 \left( T (m+n)\frac{L^2}{\sigma^2} \right)
    \label{eq:R_pi_T_proof_noncon_red}
\end{align}
    provided that $T_1 \geq c \frac{L^2 (m+n)}{\sigma_r^2 \omega_{\min}} \log^3\left( (m+n)T^2 \right)$, for some universal constant $C_3, c > 0$ and $T_1 \leq T$.

    \underline{\it Step 3: (Optimizing $T_1$).} Observe that the upper bound of $R^{\pi}_{1}$ scales as $T_1$ (see \eqref{eq:reg_1_term}) while the second term in the upper bound of $R^{\pi}_2$ scales as $1/T_1$ (see \eqref{eq:R_pi_T_proof_noncon_red}). Therefore, to obtain a tight regret bound on $R^\pi(T)$, we need to balance these two terms that depend on $T_1$ with a proper choice of $T_1$. We can easily verify that an optimal choice is:  
    \begin{align*}
        T_1^\star =   \sqrt{\frac{C_3}{2}} \frac{\mu \kappa^{3/2} L}{\sqrt{\Vert M\Vert_{\max}}}  \frac{r^{3/4}(m+n)^{3/4} }{\sqrt{ \sigma_r \omega_{\min} (m\wedge n)}} \sqrt{T }\log^{3/2}\left( 1+mnT \right)\log \left( T (m+n) \frac{L^2}{\sigma^2}  \right). 
    \end{align*}
    Next, note that we can use that $\Vert M\Vert_{\max}/\sigma_r(M)\leq \mu^2 \kappa r/\sqrt{mn}$ from Lemma \ref{lem:spikiness} and the fact that $\omega_{\min} = \rho_{\min}/n$ to simplify further the last quantity and define:
    \begin{align}
    \label{eq:T1_def_regret_app}
        T_1 =   \sqrt{\frac{C_3}{2}} \frac{\mu^2 \kappa^{2} L}{\Vert M\Vert_{\max}}  \frac{r^{5/4}(m+n)^{3/4} (mn)^{1/4} }{\sqrt{  m\rho_{\min} (m\wedge n)} } \sqrt{T }\log^{3/2}\left( 1+mnT \right)\log \left( T (m+n) \frac{L^2}{\sigma^2}  \right). 
    \end{align}
    This choice of $T_1$ entails that the regret can be upper bounded as follows:
    \begin{align*}
     R^\pi(T)  = \widetilde{O}\left(\mu^2 \kappa^{2} r^{5/4} L \frac{(m+n)^{3/4}(mn)^{1/4}}{\sqrt{m \rho_{\min}(m\wedge n)}} \sqrt{T} \right).
    \end{align*}


\underline{\it Step 4: (Checking conditions on $T_1$).} It remains to check whether our choice of $T_1$ verifies $T_1 \geq c \frac{L^2 (m+n)}{\sigma_r^2 \omega_{\min}} \log^3\left( \frac{(m+n)T}{\delta}\right)$ and $T_1\leq T$. These conditions are satisfied if 
\begin{align*}
    T = \widetilde{\Omega} \left( \mu^4 \kappa^4 r^{5/2} L^2 \frac{(m+n)^{3/2} \sqrt{mn}}{\Vert M\Vert_{\max}^2 m\rho_{\min} (m\wedge n)} \right).
\end{align*}
Now, note that when the above condition does not hold, i.e.,
\begin{align*}
    T = \widetilde{O} \left( \mu^4 \kappa^4 r^{5/2} L^2 \frac{(m+n)^{3/2} \sqrt{mn}}{\Vert M\Vert_{\max}^2 m\rho_{\min} (m\wedge n)} \right),
\end{align*}
then the trivial bound $R^\pi(T) \le 2 T \Vert M \Vert_{\max}$, gives us after a few simple computations the upper bound 
\begin{align*}
    R^\pi(T) \le 2 \Vert M \Vert_{\max} T = \widetilde{O}\left(\mu^2 \kappa^{2} r^{5/4} L \frac{(m+n)^{3/4}(mn)^{1/4}}{\sqrt{m \rho_{\min}(m\wedge n)}} \sqrt{T} \right).  
\end{align*}


If, furthermore, $M$ and $\rho$ are homogeneous, we obtain, 
for any $T\geq 1$:
\begin{align*}
    R^\pi(T) = \widetilde{O}\left( L (m+n)^{3/4}\sqrt{T} \right). 
\end{align*}



\newpage
\section{Reduction to almost low-dimensional linear bandits and its limitations}
\label{subsec:main_reduction_almost_lowd}

We start by observing that our low-rank bandit settings can be viewed as an instance of the bilinear bandit problem \citep{jun2019bilinear}, where for any $i,j \in [m]\times[n]$, $M_{i,j} = e_i^\top M e_j$, and we may think of $e_i$ and $e_j$ as the left and right feature respectively. 
Similarly to Section \ref{subsec:reductions}, we define extended vectors $\phi_{i,j}^\ext,\theta^\ext\in\mathbb{R}^{mn}$ as follows:
\begin{align*}
    \phi_{i,j}^\ext = \begin{bmatrix}\textup{vec}( \widehat{U}^\top e_i e_j^\top \widehat{V} ) \\
    \textup{vec}( \widehat{U}^\top e_i e_j^\top \widehat{V}_\perp  ) \\
    \textup{vec}( \widehat{U}_\perp^\top e_i e_j^\top \widehat{V} ) \\
    \textup{vec}( \widehat{U}_\perp^\top e_i e_j^\top \widehat{V}_\perp )
    \end{bmatrix}, \quad \theta^\ext   = \begin{bmatrix}
        \textup{vec}(\widehat{U}^\top M \widehat{V} ) \\
        \textup{vec}(\widehat{U}^\top M  \widehat{V}_\perp ) \\
        \textup{vec}(\widehat{U}_\perp^\top  M \widehat{V}) \\
        \textup{vec}(\widehat{U}_\perp^\top  M  \widehat{V}_\perp) 
    \end{bmatrix}\!.
\end{align*}
We note that according to \eqref{eq:M_decomposition}, $M_{i,j} = \langle \phi_{i,j}^\ext , \theta^\ext \rangle$. Moreover, one can readily show that, for any $(i,j) \in [m]\times[n]$. 
\begin{align}
    \Vert \phi_{i,j}^\ext\Vert_2 \le 1, \ \ \Vert \theta^\ext \Vert_2 = \Vert M \Vert_\F \le \sqrt{nm} \Vert M \Vert_{\max}. 
    \label{eq:bound_psi_theta}
\end{align}
We can also write $\theta^\ext = \begin{bmatrix} \theta^\top & \theta_ \ell^\top \end{bmatrix}^\top$, where we denote $\theta_\ell = \textup{vec}(\widehat{U}^\top_\perp M \widehat{V}_\perp ) \in \RR^{(m-r)\times(n-r)}$. Observe that 
 \begin{align}
     \Vert \theta_\ell \Vert_2 = \Vert \widehat{U}_\perp^\top  M  \widehat{V}_\perp \Vert_F
     \leq \Vert \widehat{U}_\perp^\top  U\Vert_F \Vert M\Vert_\op \Vert \widehat{V}_\perp^\top  V\Vert_F \leq \Vert UU^\top - \widehat{U}\widehat{U}^\top \Vert_{F} \Vert V V^\top - \widehat{V} \widehat{V}^\top  \Vert_{F}  \sqrt{nm}\Vert M\Vert_{\max}.
     \label{eq:norm_theta_ell}
 \end{align}
 
Thus, one may use guarantees on the subspace recovery in the Frobenius norm such as Lemma \ref{lemma:subspace_Frobenius} to show that the impact of $\theta_\ell$ in the reward is much smaller than that of $\theta$. 
We have with probability at least $1-\delta$:
\begin{align*}
    \Vert \theta_\ell \Vert_2 
    \lesssim   \frac{L^2 \kappa r  (m+n) }{T_1 \omega_{\min} \sigma_r}\log^{3}\left( \frac{(m+n)T_1}{\delta}\right)
\end{align*}
provided that $T_1 \gtrsim \frac{1}{\omega_{\min} (m\wedge n)} \log^3\left( \frac{(m+n)T_1}{\delta}\right)$. 

From these observations, together with the reformulation $M_{i,j} = \langle \phi_{i,j}^\ext , \theta^\ext \rangle$, one can claim a reduction from a low-rank bandit model to an almost low-dimensional linear bandit \citep{valko2014spectral, jun2019bilinear, kocak2020spectral, kang2022efficient}. More precisely, the sub-vector corresponding to the last $(m-r)(n-r)$ components of $\theta$ (i.e., those corresponding to $\theta_\ell$) do not have a significant impact on the rewards, hence reducing the effective dimension of this linear bandit to $d= r(m+n) - r^2$.

We can derive an analogue of Theorem \ref{thm:main_ESRED_context} with a similar regret analysis:

\begin{proposition}
\label{prop:almost_low_d_regret}
    Let $B_2$ be an upper bound of $\Vert \theta^\ext \Vert_2$ and let $B_{\ell}$ be an upper bound of $\Vert \theta_{\ell} \Vert_2$. Define  $T_1 = \widetilde{\Theta} (\mu^2 \kappa r^{3/2}\sqrt{\frac{m+n}{\omega_{\min}\sqrt{mn}}})\sqrt{T})$, and let $\lambda = B_2^{-2}$, $\lambda_\perp = \frac{T}{d\log(1+\frac{T}{\lambda})}$ and:
    \begin{align*}
        \Lambda  = \mathrm{diag}(\overbrace{\lambda,\dots,\lambda}^{d},\overbrace{\lambda_\perp,\dots,\lambda_\perp}^{mn-d}),\qquad  
        \beta =   \sigma\sqrt{2\log \left( \frac{10 Tmn}{\delta}\log(1+T) \right) } + B_2\sqrt{\lambda} + B_\ell \sqrt{\lambda_\perp}.
    \end{align*}
    Then, Algorithm \ref{algo:ESRED} with \textsc{SupLinUCB} parameters $T-T_1,\beta(\delta),\Lambda$ has an expected regret that satisfies $R^\pi_T = \widetilde{O}\left( L \mu\kappa r \frac{\sqrt{m+n}(mn)^{1/4}}{\sqrt{m\rho_{\min}}} \sqrt{T}\right)$. 
\end{proposition}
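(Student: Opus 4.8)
\textbf{Proof plan for Proposition~\ref{prop:almost_low_d_regret}.}
The plan is to mimic the regret decomposition used for Theorem~\ref{thm:main_ESRED_context}, but now reducing to an \emph{almost low-dimensional} linear bandit rather than a misspecified one, and then to invoke the standard \textsc{SupLinUCB}-type analysis with a carefully chosen anisotropic regularizer $\Lambda$. First I would split $R^\pi(T) = R_1^\pi + R_2^\pi$ where $R_1^\pi \le 2T_1\|M\|_{\max}$ is the regret of the uniform exploration phase (identical to Step~1 of the proof of Theorem~\ref{thm:main_ESRED_context}), and $R_2^\pi$ is the regret incurred during the second phase. For the second phase, I would identify the observed rewards with $M_{i_t,j_t}+\xi_t = \langle \phi_{i_t,j_t}^\ext, \theta^\ext\rangle + \xi_t$, i.e.\ an \emph{exactly realizable} linear bandit of ambient dimension $mn$, but where the last $mn-d$ coordinates of $\theta^\ext$ (namely $\theta_\ell$) have small norm $\|\theta_\ell\|_2 \le B_\ell$. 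The key point is that by penalizing those coordinates heavily through $\lambda_\perp$ in $\Lambda$, the effective dimension appearing in the regret becomes $d = r(m+n)-r^2$ instead of $mn$, at the cost of an additive bias term $B_\ell\sqrt{\lambda_\perp}$ in the confidence width $\beta$; this is the classical \textsc{LowOFUL}/\textsc{SpectralUCB} trick adapted to \textsc{SupLinUCB}.

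The core quantitative input is a regret bound for \textsc{SupLinUCB} with anisotropic regularization. I would state (or cite the appropriate variant of Theorem~1 in \cite{takemura2021parameter}, with $\epsilon_{\max}=0$ since the model is realizable) that with probability at least $1-\delta$,
\begin{align*}
\mathcal{R}^\pi(T_2) \;=\; \widetilde{O}\!\left( \beta \sqrt{ d\, T_2 } \right)
\;=\; \widetilde{O}\!\left( \Big(\sigma + B_2\sqrt{\lambda} + B_\ell\sqrt{\lambda_\perp}\Big)\sqrt{d\,T} \right),
\end{align*}
where the $\sqrt{d\log(1+T/\lambda)}$ factor replaces the naive $\sqrt{mn}$ precisely because the log-determinant of the Gram matrix restricted to the heavily-penalized block is controlled by $\lambda_\perp$. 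With the prescribed choices $\lambda = B_2^{-2}$ (so $B_2\sqrt{\lambda}=1$) and $\lambda_\perp = T/(d\log(1+T/\lambda))$ (so $B_\ell\sqrt{\lambda_\perp} \approx B_\ell\sqrt{T/d}$), the dominant term becomes $\widetilde{O}\big((\sigma + B_\ell\sqrt{T/d})\sqrt{dT}\big) = \widetilde{O}\big(\sigma\sqrt{dT} + B_\ell\, T\big)$. Then I would plug in the Frobenius-norm subspace-recovery bound from Lemma~\ref{lemma:subspace_Frobenius}, which together with \eqref{eq:norm_theta_ell} gives $B_\ell \lesssim \frac{L^2\kappa r(m+n)}{T_1\omega_{\min}\sigma_r}\,\mathrm{polylog}$, i.e.\ $B_\ell = \widetilde O(1/T_1)$ up to model constants. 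So $R_2^\pi = \widetilde{O}\big(\sigma\sqrt{r(m+n)T} + \frac{L^2\kappa r(m+n)}{T_1\omega_{\min}\sigma_r} T\big)$ (after the usual $\delta = 1/T$ and event-complement bookkeeping as in Step~2 of the proof of Theorem~\ref{thm:main_ESRED_context}).

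Finally I would optimize over $T_1$: balancing $R_1^\pi \asymp T_1\|M\|_{\max}$ against the $1/T_1$ term in $R_2^\pi$ gives $T_1^\star \asymp \sqrt{\frac{L^2\kappa r(m+n)}{\|M\|_{\max}\omega_{\min}\sigma_r}T}$, and using $\|M\|_{\max}/\sigma_r \le \mu^2\kappa r/\sqrt{mn}$ from Lemma~\ref{lem:spikiness} together with $\omega_{\min} = \rho_{\min}/n$ simplifies this to $T_1 = \widetilde\Theta\big(\mu\kappa r^{3/2}\sqrt{\tfrac{m+n}{\omega_{\min}\sqrt{mn}}}\sqrt{T}\big)$, matching the statement; substituting back yields the claimed $R^\pi(T) = \widetilde{O}\big(L\mu\kappa r \frac{\sqrt{m+n}(mn)^{1/4}}{\sqrt{m\rho_{\min}}}\sqrt{T}\big)$. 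As in the proof of Theorem~\ref{thm:main_ESRED_context}, one also checks that when $T$ is too small for the condition $T_1 \le T$ (and the condition $T_1 \gtrsim \frac{1}{\omega_{\min}(m\wedge n)}\log^3(\cdot)$ needed for Lemma~\ref{lemma:subspace_Frobenius}) to hold, the trivial bound $R^\pi(T) \le 2T\|M\|_{\max}$ already satisfies the claimed rate. The main obstacle I anticipate is the first technical step: verifying that the \textsc{SupLinUCB} analysis of \cite{takemura2021parameter}, which is stated for isotropic $\Lambda = (\sigma^2/B^2)I_d$ and with misspecification, can be transported to an anisotropic $\Lambda$ with a small-norm tail block — concretely, one must re-derive the self-normalized/elliptical-potential bound so that the effective dimension is the log-determinant ratio $\log\det(I + \Lambda^{-1}\sum x x^\top) = \widetilde O(d)$ rather than $mn$, and ensure the confidence ellipsoid radius picks up exactly the $B_\ell\sqrt{\lambda_\perp}$ bias term; this is where most of the genuine work lies, whereas the $T_1$-optimization and constant-chasing are routine.
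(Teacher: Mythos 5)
Your proposal follows essentially the same route as the paper: the same phase-1/phase-2 regret decomposition, the same reduction to an exactly realizable $mn$-dimensional linear bandit with a small-norm tail block controlled via the Frobenius-norm recovery bound of Lemma~\ref{lemma:subspace_Frobenius} and \eqref{eq:norm_theta_ell}, the same anisotropically regularized \textsc{SupLinUCB} bound of the form $\widetilde{O}\bigl(\sqrt{dT}\,(\sigma + B_2\sqrt{\lambda} + B_\ell\sqrt{\lambda_\perp})\bigr)$, and the same optimization over $T_1$ with the fallback to the trivial bound. The technical step you flag as the main obstacle — transporting the elliptical-potential/confidence-width analysis to the anisotropic $\Lambda$ — is exactly what the paper handles by combining the \textsc{SupLinUCB} analysis of \cite{chu2011contextual} with Lemma~38 of \cite{kocak2020spectral}, Lemma~3 of \cite{jun2019bilinear}, and Lemma~11 of \cite{abbasi2011improved}.
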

The proof of the proposition only differs slightly from the proof of Theorem \ref{thm:main_ESRED_context}. Instead of Theorem \ref{thm:mainthm_LowSupLinUCB_misspec}, we combine the analysis of \textsc{SupLinUCB}\footnote{Here we used \textsc{SupLinUCB} as given in \cite{chu2011contextual} for the analysis.} from \cite{chu2011contextual}, Lemma 38 in \cite{kocak2020spectral}, Lemma 3 \cite{jun2019bilinear}, and Lemma 11 in \cite{abbasi2011improved} to show that the second phase of this algorithm has regret as follows:
\begin{align*}
    {\cal R}^{\pi}(T) \lesssim \sqrt{dT} \left( r_{\max} + \sigma \sqrt{\log  \left( eTK/\delta \right) }  + B_2 \sqrt{\lambda} + B_\ell \sqrt{\lambda_\perp} \right)\sqrt{\log\left( 1+T/\lambda\right) \log T}
\end{align*}
with probability at least $1-\delta$. Proposition \ref{prop:almost_low_d_regret} follows straightforwardly from this result and our bounds $B_2,B_{\ell}$ derived above. Note that in the homogeneous setting (see Definition \ref{def:homogeneous_M}), the regret above scales as $\widetilde{O}( (m+n) \sqrt{T})$.

The reduction presented above has the following limitations: 

(i) First, it does not take advantage of the delocalization of the subspace recovery error, i.e., of the fact that this error spreads out along $m+n$ dimensions. This delocalization property is quantified through our subspace recovery error guarantees in the two-to-infinity norm, and it has subtle but important implications. Indeed, it can be leveraged to perform a reduction to an almost low-dimensional linear bandit that is also \emph{almost sparse}, in the sense that $\Vert \theta_\ell \Vert_\infty \lesssim \Vert \theta_\ell \Vert_2/\sqrt{mn}$. A reduction to a \emph{misspecified} linear bandit leverages this structure and allows us to devise algorithms with tighter regret guarantees.

(ii) Then, since the problem can be reduced to an almost-sparse linear bandit, using algorithms such as \textsc{LowDim-OFUL} \citep{jun2019bilinear, kang2022efficient} does not give tight regret guarantees in our setting. Indeed, \textsc{LowDim-OFUL} relies on least squares estimators obtained with a weighted Euclidean norm regularization. This type of regularization accounts for the low-dimensionality of the problem, but fails at exploiting the sparsity of the problem. To this aim, one should use an $\ell_1$-norm regularization instead. In fact, even adapting algorithms for linear bandits such as \textsc{SupLinUCB} \citep{chu2011contextual} that have typically tighter dimension dependencies than \textsc{OFUL} does not yield tighter bounds. As we state in Proposition \ref{prop:almost_low_d_regret}, the corresponding regret upper bound scales as $\widetilde{O}((m+n)\sqrt{T})$ in the homogeneous case, compared to $\widetilde{O}((m+n)^{3/4}\sqrt{T})$ achieved using Algorithm \ref{algo:ESRED} in Section \ref{sec:regret}.

(iii) Finally, a reduction to misspecified linear bandit is more appealing computationally because the resulting feature vectors are of dimension $r(m+n) - r^2$. In the case of a reduction to an almost low-dimensional linear bandit, these vectors are of dimension $mn$. This means that instead of having to invert\footnote{Indeed, almost all existing algorithms for linear and contextual linear bandits require inverting the features matrix during a regression step that involves least squares estimation \citep{lattimore2020bandit}.} $mn \times mn$ matrices, we will only have to invert $(r(m+n)-r^2) \times (r(m+n)-r^2)$ matrices. 

\newpage
\section{Additional discussions}

\subsection{Knowledge of the context distribution}\label{subsec:knowledge-of-context}
We decided to assume knowledge of $\rho$ to simplify the exposition of our work, as dealing with unknown $\rho$ does not add any insight to our work nor does it harm any of our guarantees, as we show in this section.
After $T$ observations, a natural estimator $\hat \rho_T$ of $\rho$ is: 
$$
    \forall i \in [m], \qquad \hat{\rho}_{i,T} = \frac{1}{T} \sum_{t=1}^T \mathds{1}\lbrace i_t = 1 \rbrace. 
$$
Next, an immediate application of Bernstein's inequality gives 
$$
    \mathbb{P}( \vert \hat{\rho}_{i,T} - \rho_i \vert > \varepsilon \rho_i  ) \le 2 \exp\left(  -\frac{T\rho_i^2 \varepsilon^2}{2\sigma_i^2 +  2\varepsilon\rho_i/3 }\right) \le 2 \exp\left( -\frac{T\rho_i^2\varepsilon^2}{2\rho_i +  2\rho_i\varepsilon/3 }\right)  \le 2 \exp\left(  - T\rho_i \left(\frac{ \varepsilon^2}{2 +  2\varepsilon/3 } \right)\right)
$$
 where we used 
 $\sigma_i^2 := \mathbb{E}[ (\mathds{1}\lbrace i_t  = i\rbrace  - \rho_i)^2] = \rho_i - \rho_i^2 \le \rho_i$. Using a union bound and choosing for instance $\varepsilon = 1/2$, we obtain that 
 $$
     \mathbb{P}( \exists i \in [m], \; \vert \hat{\rho}_{i,t} - \rho_i \vert > \rho_{i}/2 ) \le 2\sum_{i=1}^m  \exp\left(  -\frac{3T\rho_{i}}{ 16 }\right) \le 2m \exp\left(  -\frac{3T\rho_{\min}}{ 16 }\right).
$$
Thus, for all $\delta \in (0,1)$, 
$$
 \mathbb{P}( \forall i \in [m], \; \vert \hat{\rho}_{i,t} - \rho_i \vert \le \rho_{i}/2 ) \ge 1- \delta
$$
provided that
$$
T \ge \frac{16}{3\rho_{\min}} \log\left(\frac{2m}{\delta}\right).
$$
In view of the above result, estimating $\rho$ always requires fewer samples than required in Theorem \ref{thm:recovery-two-to-infinity-norm}. To see that, 
 observe that by Lemma \ref{lem:spikiness} and the inequality $\omega_{\min} \leq \rho_{\min}/n$, $$
\frac{L^2 (m+n)}{\sigma_r^2 \omega_{\min}} \log^3\left( \frac{(m+n)T}{\delta}\right) \gtrsim \frac{m+n}{ \omega_{\min}mn} \log\left( \frac{2m}{\delta}\right) \gtrsim \frac{1}{ \rho_{\min}} \log\left( \frac{2m}{\delta}\right).$$  Consequently, $\rho$ can be assumed to be known without loss of generality.
 
\subsection{Low-rank matrix bandits}
Our framework can be applied to address a related and simpler problem known as low-rank matrix bandits. In this scenario, 
the learner must choose two arms, $i_t\in[m]$ and $j_t\in[n]$, at each time step $t\in T$. In contrast, in the contextual low-rank bandit setting, the row $i_t$ is sampled according to a distribution $\rho$ that is not chosen by the learner.

Our algorithms can be straightforwardly adapted to this setting. For instance, for regret minimization, we could utilize a two-phase algorithm. In the first phase, for $t\le T_1$, the arm pair $(i_t,j_t)$ is chosen uniformly at random from $[m]\times [n]$. In the second phase, for $t=T_1+1,\dots,T$, $(i_t,j_t)$ is selected by \textsc{SupLinUCB} with a fixed set of feature vectors ${\cal X}=\{ \phi(i,j), i\in [m],j\in [n]\}$ where $\phi(i,j)$ is constructed from the estimated singular subspaces according to \eqref{eq:phi}. The parameters used in  \textsc{SupLinUCB} are the same as for contextual low-rank bandits, except that we set $\omega_{\min} = 1/(mn)$. 

Similarly to Theorem \ref{thm:main_ESRED_context}, it can be proven that such an algorithm would achieve a regret of at most $\widetilde{O}(L\mu^2 \kappa^2 r^{5/4}  (m+n)^{3/4}(mn)^{1/4} \sqrt{T /(m\wedge n)})$ under an appropriate condition on $T$ (see Theorem \ref{thm:main_ESRED_context}). When $M$ is homogeneous, the regret scales as $\widetilde{O}(L(m+n)^{3/4} \sqrt{T})$.


Most existing work on low-rank bandits lacks minimax regret guarantees. \cite{katariya2017stochastic} introduced \textsc{Rank1Elim}, achieving regret $\widetilde{O}((m+n)\log(T)/\tilde{\Delta}_{\min})$\footnote{Here $\tilde{\Delta}_{\min}$ corresponds to a notion of minimum gap different from the standard one - see \cite{katariya2017stochastic}.} for a rank-1 reward matrix. \cite{kveton2017stochastic} extended this result to rank $r$, with additional strong assumptions on the reward matrix in their \textsc{LowRankElim} algorithm. \cite{trinh2020solving} provided the first algorithm with asymptotically optimal instance-dependent regret for rank-$1$ bandits. 
\cite{bayati2022speed} proposed an algorithm enjoying a minimax regret guarantee, but their upper bounds scale at least with $mn$ through the constant $C_2$ in the worst case (see Theorem 2 in \citep{bayati2022speed}), and requires tuning a filtering resolution $h$, making their guarantees hard to evaluate. \cite{stojanovic2024spectral} presented \textsc{SME-AE} algorithm, leveraging the entry-wise matrix estimation guarantees to obtain a regret upper bound of order $\widetilde{O}((m+n)(\bar{\Delta}/\Delta_{\min}^2)\log^3(T))$ where $\bar{\Delta}$ is the average reward gap, and $\Delta_{\min}$ is the minimum reward gap.

While these gap-dependent bounds require strong assumptions, e.g., \cite{stojanovic2024spectral} assume that $\Delta_{\max}/\Delta_{\min} \le \zeta$ where $\Delta_{\max}$ is the maximum reward gap, we provide the first algorithm with a minimax regret of order $\widetilde{O}((m+n)^{3/4} \sqrt{T})$ without such assumptions.




\subsection{Improved max-norm guarantees}\label{subsec:max-norm-improvements}

Our two-phase algorithm structure also yields a generic method to estimate a low-rank matrix in more general settings. The low-rank matrix estimator $\widebar{M}$ defined by $\widebar{M}_{i,j}=\phi_{i,j}^T\hat{\theta}$ (see \eqref{eq:phi} and \eqref{eq:rspe-lse} for a definition of these terms) enjoys a sharper max-norm guarantee than the standard estimator $\widehat{M}$ defined in \eqref{eq:SVD_hat_M}. Indeed, recall that from Proposition \ref{prop:dsm-max-error}, $$\Vert M-\widehat{M}\Vert_{\max} \lesssim L\sqrt{\frac{\mu^6\kappa^4r^3(m+n)}{T \omega_{\min}(m\wedge n)^2} \log^3\left( \frac{(m+n)T}{\delta}\right)}$$ with probability larger than $1-\delta$ when $T \gtrsim \frac{L^2\kappa^2(m+n)}{\sigma_r^2\omega_{\min}}\log^3\left(\frac{(m+n)T}{\delta}\right)$.

Additionally, from \eqref{eq:sharp-minimax-bound-max-norm},  $$\Vert M-\widebar{M}\Vert_{\max} \lesssim \sigma \mu\sqrt{\frac{r\left(m+n\right)}{T\omega_{\min}mn}\log\left(\frac{m+n}{\delta} 
 \right)}+\widetilde{O}\left(\frac{K}{T}\right)$$ with probability larger than $1-\delta$ under the assumptions of Proposition \ref{prop:rs-pe-random-error}, where $K$ depends polynomially on the model parameters. When $T$ is large enough to ensure that the first term in the error bound is larger than the second, the scaling in $\mu$ and $r$ of the max-norm error bound is improved from $\mu^{3}r^{3/2}$ to $\mu r^{1/2}$. Furthermore, the dependency in $\kappa$ and $\Vert M\Vert_{\max}$ is removed in the first term (recall that $L=\Vert M \Vert_{\max} \vee \sigma$). This may appear suprising, but note that the term $\widetilde{O}\left(\frac{K}{T}\right)$ still depends on $\kappa$ and $\Vert M\Vert_{\max}$ through $K$: more samples are required to achieve the stated scaling when $\kappa$ and $\Vert M\Vert_{\max}$ are large.

\subsection{Removing the dependence in $\Vert M\Vert_{\max}$ in the regret upper bound}
\label{app:debiased_regret}

We assume that $m= \Theta(n)$ for simplicity as in \cite{chen2020noisy}. Let $\widehat{M} := Z_{\mathrm{cvx},r}$ from Theorem 2 in \cite{chen2020noisy} and let $\widehat{M} = \widehat{U} \widehat{\Sigma} \widehat{V}^\top$. Then, we have:

\begin{lemma}
    Under the condition $\Vert M - \widehat{M} \Vert_\op \leq \sigma_r(M)/4$, it holds that:
\begin{align*}
    \max\left( \Vert U - \widehat{U} (\widehat{U}^\top U)\Vert_{2 \to \infty}, \Vert V - \widehat{V} (\widehat{V}^\top V)\Vert_{2 \to \infty} \right)\lesssim \frac{1}{\sigma_r(M)} \left( \sqrt{n} \Vert M - \widehat{M} \Vert_{\max} + \mu\sqrt{\frac{r}{n}} \Vert M - \widehat{M} \Vert_\op  \right)
\end{align*}
\label{lemma:vectors_perturbation_from_max}
\end{lemma}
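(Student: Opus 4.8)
}
The plan is to treat this as a deterministic perturbation bound, exploiting that $\widehat{M} = \widehat{U}\widehat{\Sigma}\widehat{V}^\top$ has rank (at most) $r$ and recycling the one-sided-to-two-sided conversion already proved in Lemma~\ref{lem:recovery-two-to-infinity-norm}. Write $E = \widehat{M} - M$. Since $\operatorname{col}(\widehat{M}) \subseteq \operatorname{col}(\widehat{U})$, we have $(I - \widehat{U}\widehat{U}^\top)\widehat{M} = 0$, hence $(I - \widehat{U}\widehat{U}^\top)M = -(I - \widehat{U}\widehat{U}^\top)E$. Substituting $M = U\Sigma V^\top$ and multiplying on the right by $V\Sigma^{-1}$ (legitimate because $M$ has rank exactly $r$, so $\Sigma$ is invertible) yields the identity
\[
U - \widehat{U}(\widehat{U}^\top U) = -\,(I - \widehat{U}\widehat{U}^\top)\,E\,V\,\Sigma^{-1}.
\]
Note that the only thing we will use from the hypothesis is $\Vert E \Vert_\op \le \sigma_r(M)/4$, which is in particular stronger than the hypothesis $\Vert \widetilde M - M\Vert_\op \le \sigma_r(M)/2$ under which Lemma~\ref{lem:recovery-two-to-infinity-norm} is stated.

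Next I would bound the $\Vert \cdot \Vert_{2\to\infty}$ norm of the right-hand side by writing $I - \widehat{U}\widehat{U}^\top = I - UU^\top - (\widehat{U}\widehat{U}^\top - UU^\top)$ and using repeatedly the submultiplicativity $\Vert AB\Vert_{2\to\infty}\le \Vert A\Vert_{2\to\infty}\Vert B\Vert_\op$, together with $\Vert\Sigma^{-1}\Vert_\op = 1/\sigma_r(M)$ and $\Vert U\Vert_\op = \Vert V\Vert_\op = 1$. The contribution of $I$ is $\Vert EV\Sigma^{-1}\Vert_{2\to\infty} \le \Vert E\Vert_{2\to\infty}/\sigma_r(M) \le \sqrt{n}\,\Vert E\Vert_{\max}/\sigma_r(M)$, since each row of $E$ has $\ell_2$ norm at most $\sqrt{n}\Vert E\Vert_{\max}$. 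The contribution of $UU^\top$ is $\Vert UU^\top E V\Sigma^{-1}\Vert_{2\to\infty} \le \Vert U\Vert_{2\to\infty}\,\Vert E\Vert_\op/\sigma_r(M)$. The contribution of $\widehat{U}\widehat{U}^\top - UU^\top$ is at most $d_{2\to\infty}(U,\widehat{U})\,\Vert E\Vert_\op/\sigma_r(M)$, and here I would invoke Lemma~\ref{lem:recovery-two-to-infinity-norm} (with $\widehat M$ in the role of $\widetilde M$) to bound $d_{2\to\infty}(U,\widehat{U}) \le 2\Vert U - \widehat{U}(\widehat{U}^\top U)\Vert_{2\to\infty} + \Vert U\Vert_{2\to\infty}$, using $\Vert E\Vert_\op/\sigma_r(M) \le 1/4$ in the remainder term of that lemma.

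Collecting the three contributions gives a self-referential inequality $X \le \sqrt{n}\,\Vert E\Vert_{\max}/\sigma_r(M) + 2\Vert U\Vert_{2\to\infty}\,\Vert E\Vert_\op/\sigma_r(M) + \tfrac12 X$, where $X := \Vert U - \widehat{U}(\widehat{U}^\top U)\Vert_{2\to\infty}$ and the coefficient $\tfrac12$ on the last term is exactly $2\Vert E\Vert_\op/\sigma_r(M) \le \tfrac12$. Absorbing it, and then using $\Vert U\Vert_{2\to\infty} = \mu(U)\sqrt{r/m} \lesssim \mu\sqrt{r/n}$ under $m = \Theta(n)$, yields the claimed bound for the left singular subspace; applying the identical argument to $M^\top$ (whose left subspace is $V$) handles the right one, and taking the maximum finishes the proof. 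The only genuinely delicate part is this final bookkeeping of absolute constants: one must ensure the coefficient of $X$ on the right-hand side stays strictly below $1$, which is precisely what the sharper hypothesis $\Vert M - \widehat{M}\Vert_\op \le \sigma_r(M)/4$ (rather than $\sigma_r(M)/2$) buys us; everything else is routine norm algebra.
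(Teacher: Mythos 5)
Your proof is correct and follows essentially the same route as the paper's: both start from the identity $U-\widehat{U}\widehat{U}^\top U=-(I-\widehat{U}\widehat{U}^\top)(\widehat{M}-M)V\Sigma^{-1}$ (the paper derives it via $U=MV\Sigma^{-1}$ and a triangle inequality), extract the $\sqrt{n}\Vert M-\widehat M\Vert_{\max}/\sigma_r$ term from the identity block, and close a self-referential inequality whose coefficient is forced below $1$ by the $\sigma_r/4$ hypothesis. The only cosmetic difference is that you control the projector term through $d_{2\to\infty}(U,\widehat U)$ and Lemma~\ref{lem:recovery-two-to-infinity-norm}, whereas the paper bounds $\Vert\widehat U\Vert_{2\to\infty}$ directly via the sign matrix $\mathrm{sgn}(\widehat U^\top U)$ and $\Vert(\widehat U^\top U)^{-1}\Vert_\op$; both yield the same absorption.
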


\begin{proof} 
    Using repeatedly that $\Vert AB\Vert_{2\to\infty} \leq \Vert A\Vert_{2\to\infty} \Vert B\Vert_\op$ for appropriate $A,B$, we obtain:
    \begin{align*}
    \Vert U - \widehat{U} \widehat{U}^\top U \Vert_{2\to\infty} &= \Vert M V\Sigma^{-1} - \widehat{U} \widehat{U}^\top U\Vert_{2\to\infty} \leq \frac{1}{\sigma_r(M)} \Vert MV - \widehat{U} \widehat{U}^\top U \Sigma \Vert_{2\to\infty} \\
    &\leq  \frac{1}{\sigma_r(M)} \Vert M V - \widehat{M}V \Vert_{2\to\infty} + \frac{1}{\sigma_r(M)} \Vert \widehat{M}V - \widehat{U} \widehat{U}^\top U \Sigma \Vert_{2\to\infty}
\end{align*}
Note that the first term is bounded by $\sqrt{n}\Vert M - \widehat{M} \Vert_{\max}$ since $\Vert M V - \widehat{M}V \Vert_{2\to\infty} \leq \Vert M - \widehat{M} \Vert_{2\to\infty}\Vert V \Vert_\op \leq \sqrt{n}\Vert M - \widehat{M} \Vert_{\max}$. Before we bound the second term, note that:
\begin{align*} 
    \widehat{U} \widehat{U}^\top U \Sigma = \widehat{U} \widehat{U}^\top MV  =  \widehat{U} \widehat{U}^\top (M-\widehat{M})V + \widehat{U} \widehat{U}^\top \widehat{M} V
\end{align*}
and $\widehat{U} \widehat{U}^\top \widehat{M} V = \widehat{U} \widehat{\Sigma} \widehat{V}^\top V = \widehat{M}V$.
Thus:
\begin{align*}
    \Vert \widehat{M}V - \widehat{U} \widehat{U}^\top U \Sigma \Vert_{2\to\infty} = \Vert \widehat{U} \widehat{U}^\top (M-\widehat{M})V \Vert_{2\to\infty} \leq \Vert  \widehat{U} \Vert_{2\to\infty} \Vert M-\widehat{M} \Vert_\op.
\end{align*}
Lastly, we bound $\Vert  \widehat{U} \Vert_{2\to\infty}$ as follows:
\begin{align*}
    \Vert  \widehat{U} \Vert_{2\to\infty} \leq \Vert  \widehat{U} \widehat{U}^\top U \Vert_{2\to\infty} \Vert ( \widehat{U}^\top U)^{-1} \Vert_\op \leq (\Vert U\Vert_{2\to\infty} + \Vert U - \widehat{U} \widehat{U}^\top U \Vert_{2\to\infty}) \Vert ( \widehat{U}^\top U)^{-1} \Vert_\op
\end{align*}
and similarly to the equation between (49) and (50) in \cite{stojanovic2024spectral} we have:
\begin{align*}
    \Vert ( \widehat{U}^\top U)^{-1} \Vert_\op = \frac{1}{\sigma_{r}(\widehat{U}^\top U)} \leq \frac{1}{1-\Vert \widehat{U}^\top U - \mathrm{sgn}(\widehat{U}^\top U)\Vert_\op } \leq \frac{1}{1- \frac{2\Vert M-\widehat{M}\Vert_\op^2}{\sigma_r^2(M)}}
\end{align*}
Under the condition $\Vert M - \widehat{M} \Vert_\op\leq \sigma_r(M)/4$ we consequently have:
\begin{align*}
    \Vert U - \widehat{U} \widehat{U}^\top U \Vert_{2\to\infty} \lesssim \frac{1}{\sigma_r(M)} \left( \Vert M - \widehat{M}\Vert_{2\to\infty} + \Vert U\Vert_{2\to\infty} \Vert M - \widehat{M}\Vert_\op  \right)
\end{align*}
\end{proof}

Then, as a consequence of Lemma \ref{lemma:vectors_perturbation_from_max} and Theorem 2 from \cite{chen2020noisy}, we obtain analogously to Theorem \ref{thm:recovery-two-to-infinity-norm}: 
\begin{corollary}
    Let us denote $\epsilon_{\textup{Sub-Rec}} := \max( d_{2\to\infty}(U,\widehat{U}), d_{2\to\infty}(V,\widehat{V}))$. For any $\delta \in (0,1)$, the following event: 
    \begin{align*}
    \epsilon_{\textup{Sub-Rec}} = \widetilde{O} \left( \frac{n\sigma}{\sigma_r} \frac{1}{\sqrt{T}} \mu^3 \kappa^{5/2} r^{3/2} \right)
    \end{align*}
    holds with probability $\geq 1-O(n^{-3})$, provided that 
    $
    T \gtrsim \kappa^4 \mu^2 r n \log n \left( \mu^2 r \log^2 n + \frac{\sigma^2 n^2}{\sigma_r^2} \right).
    $ 
    \label{corr:chen_epsilon_sub}
\end{corollary}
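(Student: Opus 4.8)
The plan is to feed the entrywise and spectral error bounds for the debiased rank-$r$ estimator $\widehat{M} = Z_{\mathrm{cvx},r}$ of \cite{chen2020noisy} into Lemma \ref{lemma:vectors_perturbation_from_max}, and then upgrade the resulting control of $\Vert U - \widehat{U}(\widehat{U}^\top U)\Vert_{2\to\infty}$ (and its right-subspace analogue) to a bound on $d_{2\to\infty}(U,\widehat{U})$ via Lemma \ref{lem:recovery-two-to-infinity-norm}. First I would translate our sampling-with-replacement model into the Bernoulli observation model of \cite{chen2020noisy}: after $T$ i.i.d.\ draws from the uniform distribution on $[m]\times[n]$, the number of times each entry is observed is close (in the sense of Lemma \ref{lem:poisson-approx-rewards}) to independent $\mathrm{Poisson}(T/(mn))$ variables, so the effective observation probability is $p \asymp T/(mn) = \Theta(T/n^2)$ when $m = \Theta(n)$. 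Under this identification, Theorem~2 of \cite{chen2020noisy} supplies, with probability $1 - O(n^{-3})$ and under the stated lower bound on $T$ (which is exactly their incoherence-plus-SNR sample complexity rewritten with $p = T/(mn)$), bounds $\Vert M - \widehat{M}\Vert_{\max} \lesssim \mathcal{E}_{\max}$ and $\Vert M - \widehat{M}\Vert_{\op} \lesssim \mathcal{E}_{\op}$, where $\mathcal{E}_{\max}$ and $\mathcal{E}_{\op}$ depend polynomially on $\mu,\kappa,r$ and on $\sigma$, $p$, $n$; the dominant contribution to $\mathcal{E}_{\max}$ is of order $\sigma/\sqrt{pn}$ up to $\mathrm{polylog}(n)$ and $\mathrm{poly}(\mu,\kappa,r)$ factors, and $\mathcal{E}_{\op}$ of order $\sigma\sqrt{n/p}$.

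Next I would verify the precondition $\Vert M - \widehat{M}\Vert_{\op} \le \sigma_r(M)/4$ of Lemma \ref{lemma:vectors_perturbation_from_max}: substituting $p \asymp T/n^2$, using $\sigma_1(M) = \kappa\,\sigma_r(M)$ and the spikiness inequality $\Vert M\Vert_{\max} \le \mu^2 r\,\sigma_1(M)/\sqrt{mn}$ of Lemma \ref{lem:spikiness}, this requirement reduces to the assumed bound $T \gtrsim \kappa^4\mu^2 rn\log n\,(\mu^2 r\log^2 n + \sigma^2 n^2/\sigma_r^2)$. Then Lemma \ref{lemma:vectors_perturbation_from_max} gives $\max\big(\Vert U - \widehat{U}(\widehat{U}^\top U)\Vert_{2\to\infty}, \Vert V - \widehat{V}(\widehat{V}^\top V)\Vert_{2\to\infty}\big) \lesssim \sigma_r(M)^{-1}\big(\sqrt{n}\,\Vert M-\widehat{M}\Vert_{\max} + \mu\sqrt{r/n}\,\Vert M-\widehat{M}\Vert_{\op}\big)$, and Lemma \ref{lem:recovery-two-to-infinity-norm} (whose hypothesis $\Vert \widehat{M}-M\Vert_{\op}\le \sigma_r(M)/2$ is now in force) converts this, at the cost of an additive term $\Vert U\Vert_{2\to\infty}\Vert M-\widehat{M}\Vert_{\op}/\sigma_r(M) \le \mu\sqrt{r/n}\,\Vert M-\widehat{M}\Vert_{\op}/\sigma_r(M)$ of the same order, into the same bound for $d_{2\to\infty}(U,\widehat{U})$ and $d_{2\to\infty}(V,\widehat{V})$. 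Substituting $\mathcal{E}_{\max}$ and $\mathcal{E}_{\op}$, observing that $\sqrt{n}\,\mathcal{E}_{\max} \gtrsim \mu\sqrt{r/n}\,\mathcal{E}_{\op}$ so the $\Vert\cdot\Vert_{\max}$ term dominates, and collecting the $\mathrm{poly}(\mu,\kappa,r)$ factors together with the $\sqrt{n/p} = \Theta(n/\sqrt{T})$ scaling yields $\epsilon_{\textup{Sub-Rec}} = \widetilde{O}\big(\tfrac{n\sigma}{\sigma_r}\tfrac{1}{\sqrt{T}}\mu^3\kappa^{5/2}r^{3/2}\big)$. The exact exponents $\mu^3,\kappa^{5/2},r^{3/2}$ are just bookkeeping over the exponents appearing in Theorem~2 of \cite{chen2020noisy} and in Lemma \ref{lem:spikiness}.

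The main obstacle is the first step. The convex-relaxation analysis of \cite{chen2020noisy} is written for a \emph{fixed} Bernoulli mask with \emph{homoskedastic} sub-Gaussian noise on the observed entries, whereas here the natural inverse-propensity estimator averages the random number of repeated observations of each entry, so the effective per-entry noise has variance $\sigma^2/Z_{i,j}$ with the counts $(Z_{i,j})$ random and correlated through the multinomial constraint. Making their leave-one-out / convex-to-nonconvex bridge go through therefore requires either re-running their argument under the Poisson-count model — the same device used for Theorem \ref{thm:recovery-two-to-infinity-norm} following \cite{stojanovic2024spectral} — or a conditioning argument showing that the count profile concentrates tightly enough around $T/(mn)$ that the effective noise is $\Theta(\sigma\sqrt{mn/T})$ uniformly over all $(i,j)$, with the residual fluctuations absorbed into the polylogarithmic factors. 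This is why the corollary is stated only in the uniform-sampling (low-rank matrix bandit) regime, where $\omega_{i,j} = 1/(mn)$ and the count profile is most benign; extending it to non-uniform $\omega$ is exactly the "further care" flagged in Section \ref{subsec:main_guarantees_two_to_inf}.
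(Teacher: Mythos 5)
Your proposal follows the same route as the paper, whose entire proof of this corollary is the one-line assertion that it follows from Lemma \ref{lemma:vectors_perturbation_from_max} together with Theorem 2 of \cite{chen2020noisy}; the steps you make explicit — verifying the operator-norm precondition, passing from $\Vert U - \widehat{U}(\widehat{U}^\top U)\Vert_{2\to\infty}$ to $d_{2\to\infty}(U,\widehat{U})$ via Lemma \ref{lem:recovery-two-to-infinity-norm}, and the $\mu,\kappa,r$ bookkeeping — are exactly what the paper leaves implicit. Your closing paragraph correctly isolates the one genuine issue the paper glosses over, namely that Theorem 2 of \cite{chen2020noisy} is proved for a fixed Bernoulli mask with homoskedastic noise rather than for multinomial sampling with repeated, count-averaged observations; this is precisely the ``further care'' the authors themselves acknowledge in Section \ref{subsec:main_guarantees_two_to_inf} and do not resolve, so your write-up is, if anything, more forthcoming about the status of this step than the paper's.
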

Considering the setting of Corollary \ref{corr:chen_epsilon_sub}, the bound from Theorem \ref{thm:recovery-two-to-infinity-norm} scales with $\widetilde{O} \left( \frac{nL}{\sigma_r} \frac{1}{\sqrt{T}} \mu  \kappa r^{1/2} \right)$. Note that even though the dependence on $L=\Vert M \Vert_{\max} \vee \sigma$ is reduced to $\sigma$ in Corollary \ref{corr:chen_epsilon_sub}, the scaling in $\mu,\kappa$ and $r$ is worse than in Theorem \ref{thm:recovery-two-to-infinity-norm}.

To derive the corresponding regret bound, we can rewrite the requirement on the number of samples from Corollary \ref{corr:chen_epsilon_sub} as $T_1 \gtrsim  n \left(1+ \frac{\sigma^2}{\Vert M\Vert_{\max}^2}\right) \mathrm{poly}(\kappa,\mu,r,\log n)$. Next, similarly to \eqref{eq:T1_def_regret_app} in the proof of Theorem \ref{thm:main_ESRED_context} we obtain that the regret is minimized for:
\begin{align*}
    T_1^\star \gtrsim \frac{\sigma}{\Vert M\Vert_{\max}}\sqrt{T} n^{3/4} \mathrm{poly}(\kappa,\mu,r,\log n)
\end{align*}
Combining this with the requirement on $T_1$, we set:
\begin{align*}
    T_1 := \max \left\{\frac{\sigma}{\Vert M\Vert_{\max}}\sqrt{T} n^{3/4}, n \left(1+ \frac{\sigma^2}{\Vert M\Vert_{\max}^2}\right) \right\} \mathrm{poly}(\kappa,\mu,r,\log n)
\end{align*}

Next, following the arguments from the proof of Theorem \ref{thm:main_ESRED_context} in Appendix \ref{subsec:appendix_proof_main_regret} we obtain that combining our analysis with the guarantee from Corollary \ref{corr:chen_epsilon_sub} achieves:
\begin{align*}
     R^\pi(T)  \lesssim \max \left\{\sigma \sqrt{T} n^{3/4}, n \left(\Vert M\Vert_{\max}+ \sigma \frac{\sigma}{\Vert M\Vert_{\max}}\right) \right\} \mathrm{poly}(\kappa,\mu,r,\log n)  ,
\end{align*}
Comparing this result to the regret upper bound from Theorem \ref{thm:main_ESRED_context}, namely
\begin{align*}
    R^\pi(T)  = \widetilde{O}\left( (\Vert M\Vert_{\max}\vee \sigma )  \mu^2 \kappa^2 r^{5/4}  n^{3/4} \sqrt{T } \right),
\end{align*}
we see that in the regime $\sigma \lesssim \Vert M\Vert_{\max}$, the regret upper bound of Theorem \ref{thm:main_ESRED_context} scales with $\Vert M\Vert_{\max} n^{3/4} \sqrt{T}$, whereas the regret upper bound obtained by using Corollary \ref{corr:chen_epsilon_sub} scales with $\sigma n^{3/4} \sqrt{T} + \Vert M\Vert_{\max} n$ up to $\mathrm{poly}(\kappa,\mu,r,\log n)$ terms.

Finally, we note that we could also use the improved 
max-norm bound recalled in Section \ref{subsec:max-norm-improvements} to estimate the subspace recovery error related to $\widebar{M}$
and that, for $T$ large enough, it would have a milder dependence in $\mu, \kappa$ and $r$ compared to Corollary \ref{corr:chen_epsilon_sub}.
\newpage
\section{Numerical experiments}\label{app:experiments}

\textbf{Notation.} In this appendix, we denote by $\widebar{M}$ the reward matrix estimator that \rspe\ and \rsbpi\ leverage. It is defined by $$\widebar{M}_{i,j}=\phi_{i,j}^T\hat{\theta}$$ for $\left(i,j\right) \in [m] \times [n]$. We also denote by \dsmbpi\ the algorithm that outputs the policy $\hat{\pi}_{ \dsmbpi}$ defined by $\hat{\pi}_{\dsmbpi}\left(i\right)=\arg \max_{1 \leq j \leq n} \widehat{M}_{i,j}. $
We finally note that \dsmpe\ (resp. \dsmbpi) is denoted by $\operatorname{DSM-PE}$  (resp. $\operatorname{DSM-BPI}$) on the graphs.

We perform numerical experiments on synthetic data with a uniform context distribution. Unless specified otherwise, the behavior policy is uniform, the target policy is chosen as the best policy: $\pi(i)=\arg \max_{j} M_{i,j}$ (ties are broken arbitrarily), and we generate noisy entries $M_{i_t,j_t}+\xi_t$ where $\xi_t \sim \mathcal{N}\left(0,1\right)$ is standard Gaussian, and where $M = PDQ$ for two invertible matrices $P \in \mathbb{R}^{m \times m}$, $Q \in \mathbb{R}^{n \times n}$, and $D \in \mathbb{R}^{m \times n}$ defined by $D_{i,j}=\indicator_{i=j}\indicator_{i \leq r}$ (note that $M$ is consequently of rank $r$). $P$ and $Q$ are initially generated at random with uniform entries in $\left[0,1\right]$ and their diagonal elements are replaced by the sum of the corresponding row to ensure invertibility. All experiments are performed $50$ times and the shaded regions are the corresponding $\left[5\%,95\%\right]$ confidence intervals\footnote{The code used in the experiments can be accessed at \url{https://github.com/wilrev/LowRankBanditsTwoToInfinity}.}.




\subsection{Policy Evaluation}\label{subsec:pe-experiments}

\subsubsection{Choice of hyperparameters}

\textbf{Experiment 1: Impact of data splitting.}
For a regularization parameter of $\tau=10^{-4}$, we compare the performance of \rspe\ for $\alpha \in \left\{1/5,1/2,4/5\right\}$, where $\alpha$ is the proportion of samples used in the first phase of the algorithm. We also plot the error of \rspe\ when no data splitting is performed, that is to say when all of the samples are used in both phases of the algorithm. 
Finally, we plot the dominant term $\frac{\Vert \psi_{\pi}\Vert _{2}}{\sqrt{\omega_{\min}}}\sqrt{\frac{2\log(16/\delta)}{\left(1-\alpha\right)T}}$ in the instance-dependent upper bound on the PE error of \rspe\ for $\alpha=4/5$ and $\delta=10^{-2}$ (see Theorem \ref{thm:rs-pe-error-full}), which approaches its true error bound when $T$ grows large. 
Note that consequently, the error of \rspe\ can be below this dominant term. Additionally, the scaling in $1/\sqrt{1-\alpha}$ of this term suggests that using more samples in the second phase should yield better asymptotic performance. The results are presented in Figure \ref{fig:split}. 

\begin{figure}[H]
    \centering
    \includegraphics[width=0.66\columnwidth]{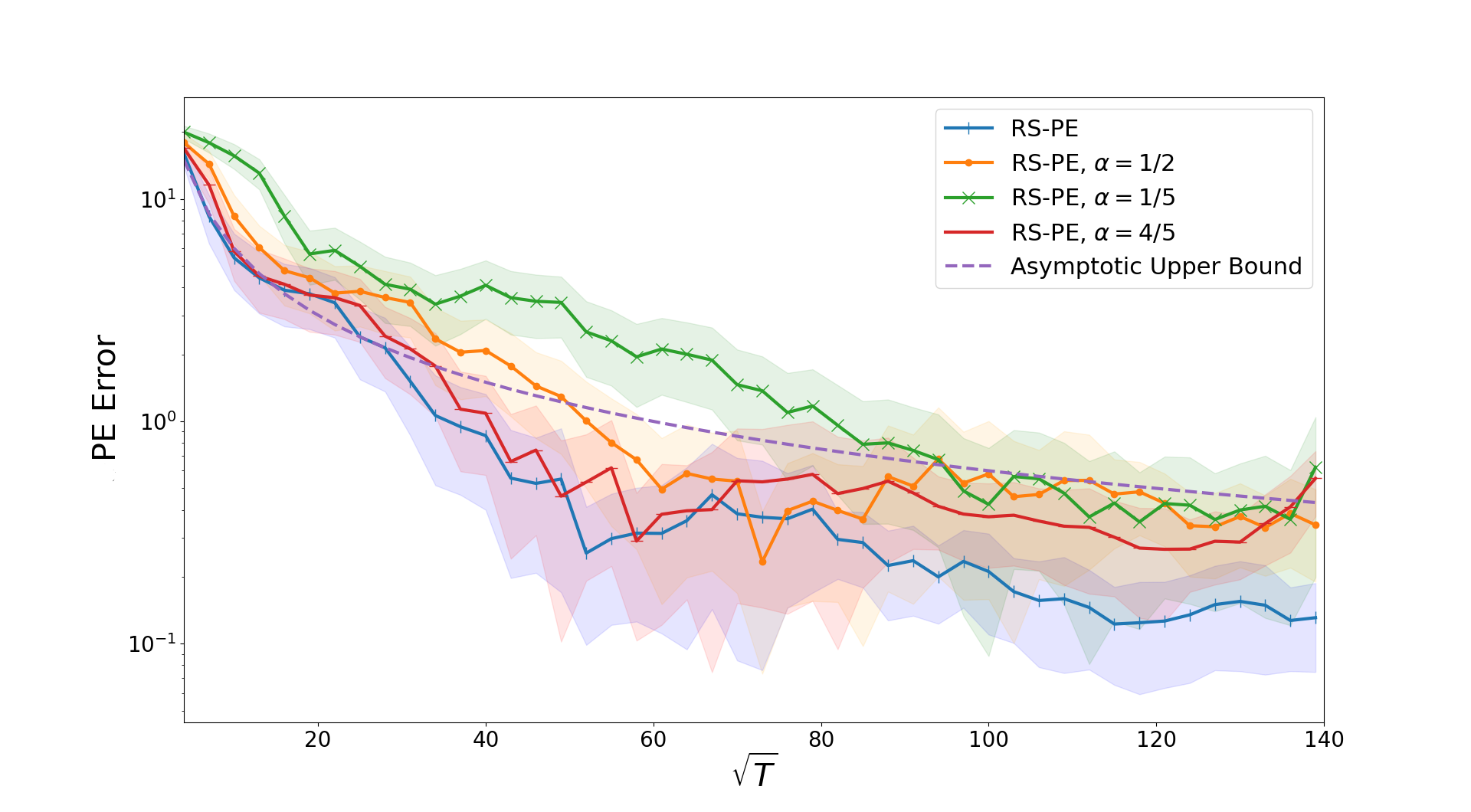}
    \caption{Impact of data splitting ($m=n=50$, $r=2$).}
    \label{fig:split}
\end{figure}

Empirically, using more samples in the first phase appears to yield faster initial performance, but this advantage is negated when the number of samples grows large. Finally, although this is not justified theoretically, not splitting the data (i.e., using the entire dataset for both phases) appears empirically more efficient.

\textbf{Experiment 2: Impact of the regularization parameter $\tau$.} 
We compare the performance of \rspe\ without data splitting for different values of the regularization parameter of $\hat{\Lambda}_{\tau}$, namely $\tau \in \left\{10^{-4},10^{-2},10^{-1}\right\}$. 
The results are presented in Figure \ref{fig:regu}. 

\begin{figure}[H]
    \centering
    \includegraphics[width=0.66\columnwidth]{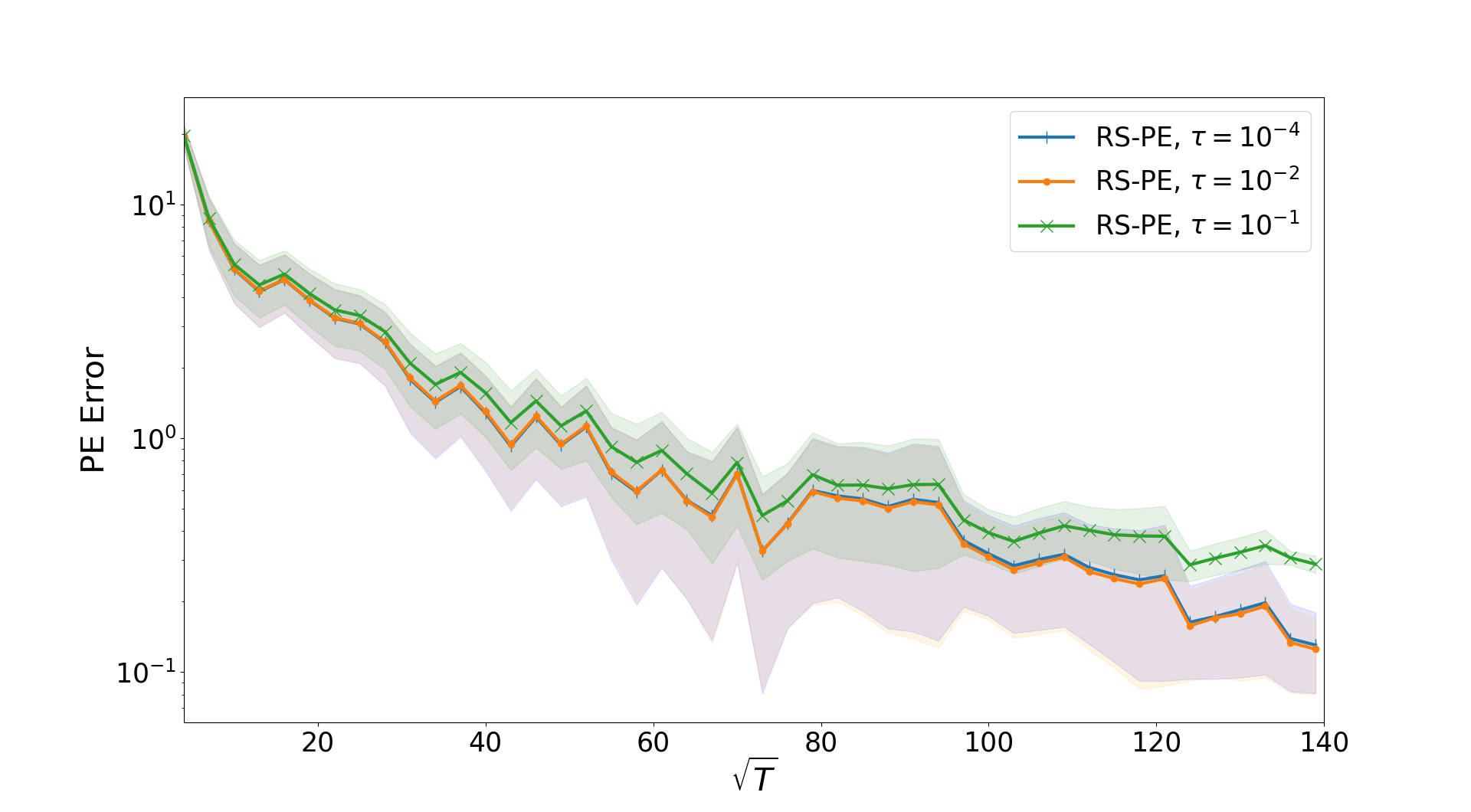}
    \caption{Impact of regularization ($m=n=50$, $r=2$)}
    \label{fig:regu}
\end{figure}
We note that the regularization parameter does not appear to impact the error significantly as long as it is chosen small enough.

Given the previous results, in all future experiments, we do not perform data splitting: all samples are used in both phases of the algorithms. Furthermore the regularization parameter of $\hat{\Lambda}_{\tau}$ will be chosen small: $\tau=10^{-4}$. 

\subsubsection{Comparison of \rspe\ with benchmark estimators.}

\textbf{Experiment 3: Scaling of the PE error with the sample size.}
We compare the \rspe\ estimator $\hat{v}_{\rspe}=\sum_{i,j} w^{\pi}_{i,j} \widebar{M}_{i,j}$ with the \dsmpe\ estimator $\hat{v}_{\dsmpe}=\sum_{i,j} w^{\pi}_{i,j} \widehat{M}_{i,j}$ and the \ips\ estimator \cite{wang2017optimal} which can also be defined as $\hat{v}_{\ips}=\sum_{i,j} w^{\pi}_{i,j} \widetilde{M}_{i,j}$. Similarly to \rspe\ and \dsmpe, \ips\ can be interpreted as a plug-in estimator, i.e., it only relies on an estimator of the reward matrix. It is thus a suitable benchmark to demonstrate the efficiency of the reward matrix estimator $\widebar{M}$ for the PE task. We also plot the asymptotic upper bound $\frac{\Vert \psi_{\pi}\Vert _{2}}{\sqrt{\omega_{\min}}}\sqrt{\frac{2\log(16/\delta)}{T}}$ of \rspe\ suggested by Theorem \ref{thm:rs-pe-error-full} for $\delta=10^{-2}$. More specifically, this is the value of the dominant term in the upper bound for $\alpha=0$, which one would expect to match the asymptotic behavior of the error of \rspe\ when no data splitting is performed. The results are presented in Figure \ref{fig:PE_comparison}.

\begin{figure}[H]
    \centering
    \includegraphics[width=0.66\columnwidth]{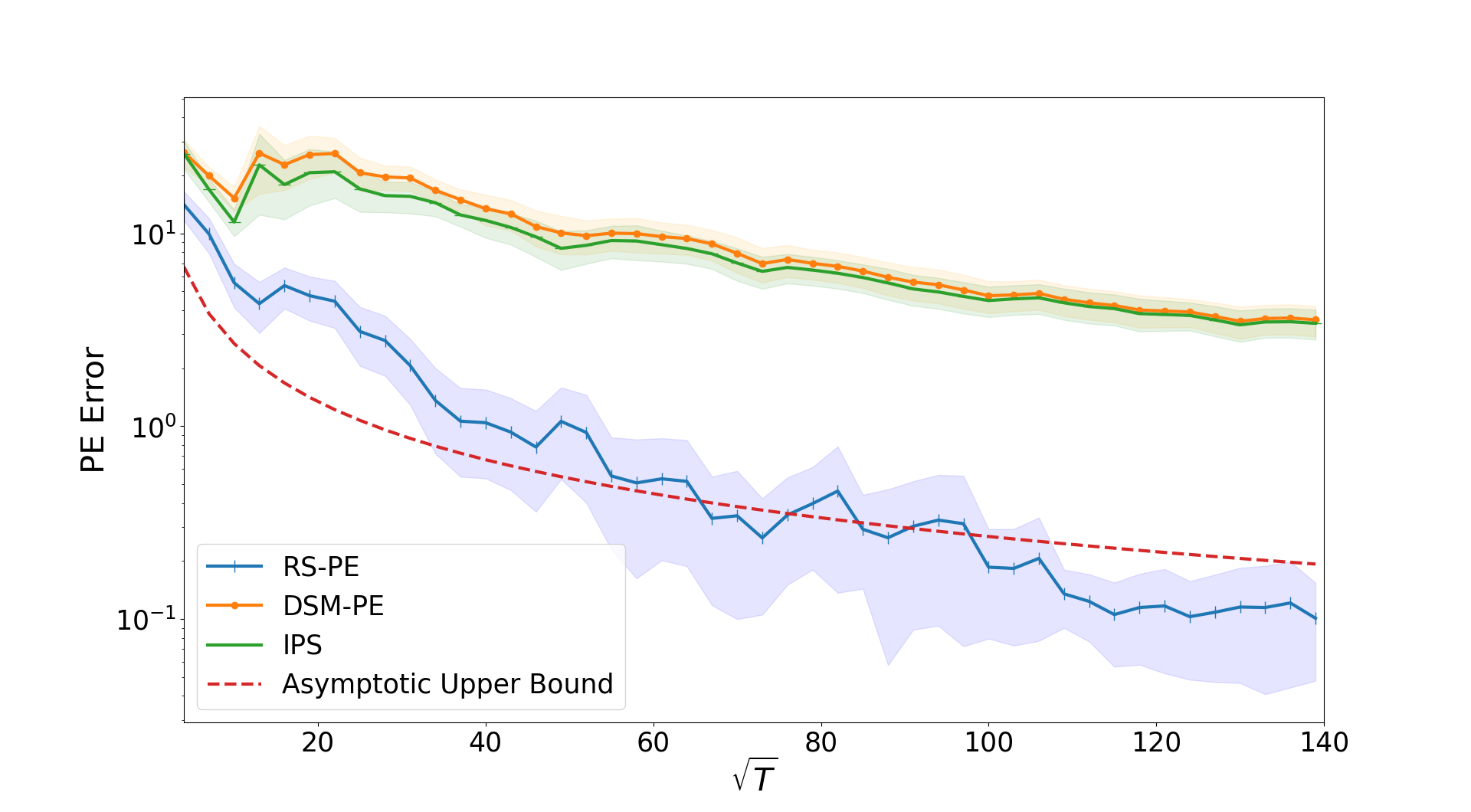}
    \caption{Sample size scaling of the PE error ($m=n=50$, $r=2$)}
    \label{fig:PE_comparison}
\end{figure}

We note that the \rspe\ estimator outperforms the \dsmpe\ and \ips\ estimators by a significant margin. The \dsmpe\ estimator appears to perform comparably to the \ips\ estimator, which is surprising since the latter does not leverage the low-rank structure. Furthermore, despite the restrictive theoretical condition on the number of samples to ensure that the higher-order term in the bound of Theorem  \ref{thm:rs-pe-error-full} is negligible, the asymptotic upper bound appears to closely match the behavior of the \rspe\ error when the number of samples is reasonably large. We finally note that the confidence intervals for \rspe\ only appear larger because of the logarithmic scale on the y-axis.


\textbf{Experiment 4: Scaling of the PE error with the matrix size.}
Note that when $m=n$,  Theorems \ref{thm:dsm-pe-error} and \ref{thm:rs-pe-error} suggest that the PE error of \rspe\ and \dsmpe\ scale with $\sqrt{m}$, while the error bounds of estimators that do not leverage the low-rank structure would typically scale with $m$ \cite{yin2020asymptotically}. We perform an experiment to determine if the matrix size scaling of the error is also improved experimentally. To isolate the dependency in the size of the matrix,  we do not generate a random reward matrix for each $m$. Rather, we ensure that the reward matrix retains the same incoherence parameter, condition number and max-norm for each $m$ by choosing $M \in \mathbb{R}^{m \times m}$ defined by $M_{i,j}=1$ for all $i,j \in [m].$ 
 Furthermore, for this choice of $M$ and our policy choices, it can be checked that the instance-dependent term in the \rspe\ guarantee of Theorem \ref{thm:rs-pe-error} is simply $\frac{\Vert \psi_{\pi}\Vert }{\sqrt{\omega_{\min}}}=\sqrt{m}$, which further suggests that we should expect the \rspe\ error to scale with $\sqrt{m}$ for this particular instance. The PE error of \rspe, \dsmpe\ and \ips\ are plotted as a function of $m \in \left\{1,\dots,300\right\}$ on a log-log scale for $T=10 \ 000$. The results are presented in Figure \ref{fig:PE_error_scaling}.

\begin{figure}[H]
    \centering
    \includegraphics[width=0.66\columnwidth]{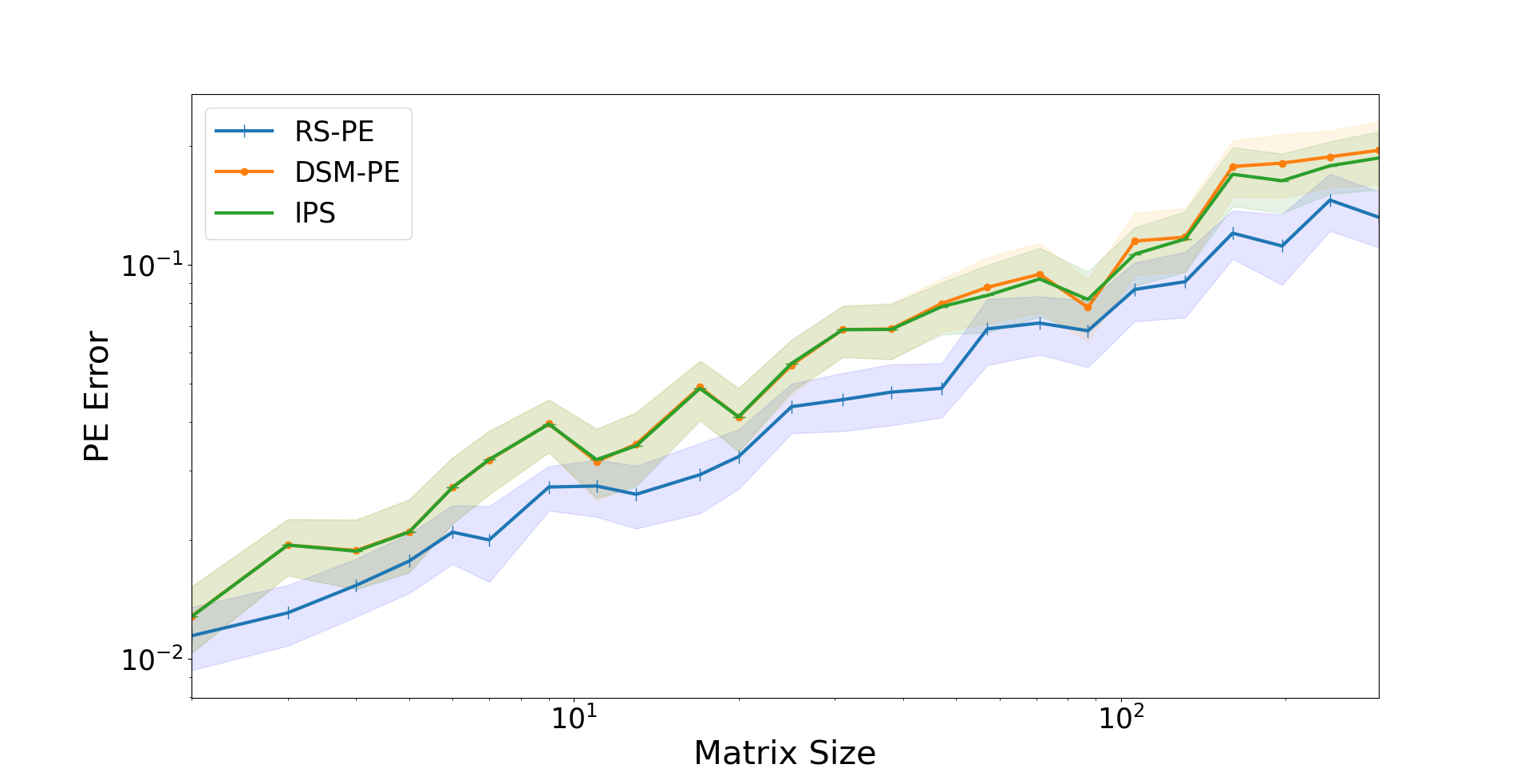}
    \caption{Matrix size scaling of the PE error ($T=10 \ 000$, $r=1$)}
    \label{fig:PE_error_scaling}
\end{figure}

 Surprisingly, all PE estimators appear to have a comparable scaling in $m$. Nonetheless, \rspe\ outperforms the two other estimators for every $m$, even though we have chosen a reward matrix for which $\Vert M\Vert_{\max}=\mu=\kappa=r=1$, so that the error guarantee of \dsmpe\ matches the \rspe\ one up to logarithmic factors. 


\textbf{Experiment 5: Scaling of the max-norm error with the matrix size.} 
Similarly to the PE error, when $m=n$, the max-norm error bounds of $\widehat{M}$ and $\widebar{M}$ summarized in Section \ref{subsec:max-norm-improvements} scale with $\sqrt{m}$. In contrast, the error bounds of matrix estimators that do not leverage the low-rank structure would typically scale with $m$. To determine if the scaling in the size of the matrix is also improved experimentally, we retain the same setting as Experiment 4, but we instead plot $\Vert M-\widebar{M}\Vert_{\max}$, $\Vert M-\widehat{M}\Vert _{\max}$ and $\Vert M-\widetilde{M}\Vert _{\max}$ as a function of $m$. The results are presented in Figure \ref{fig:max_norm_scaling}.

\begin{figure}[H]
    \centering
    \includegraphics[width=0.66\columnwidth]{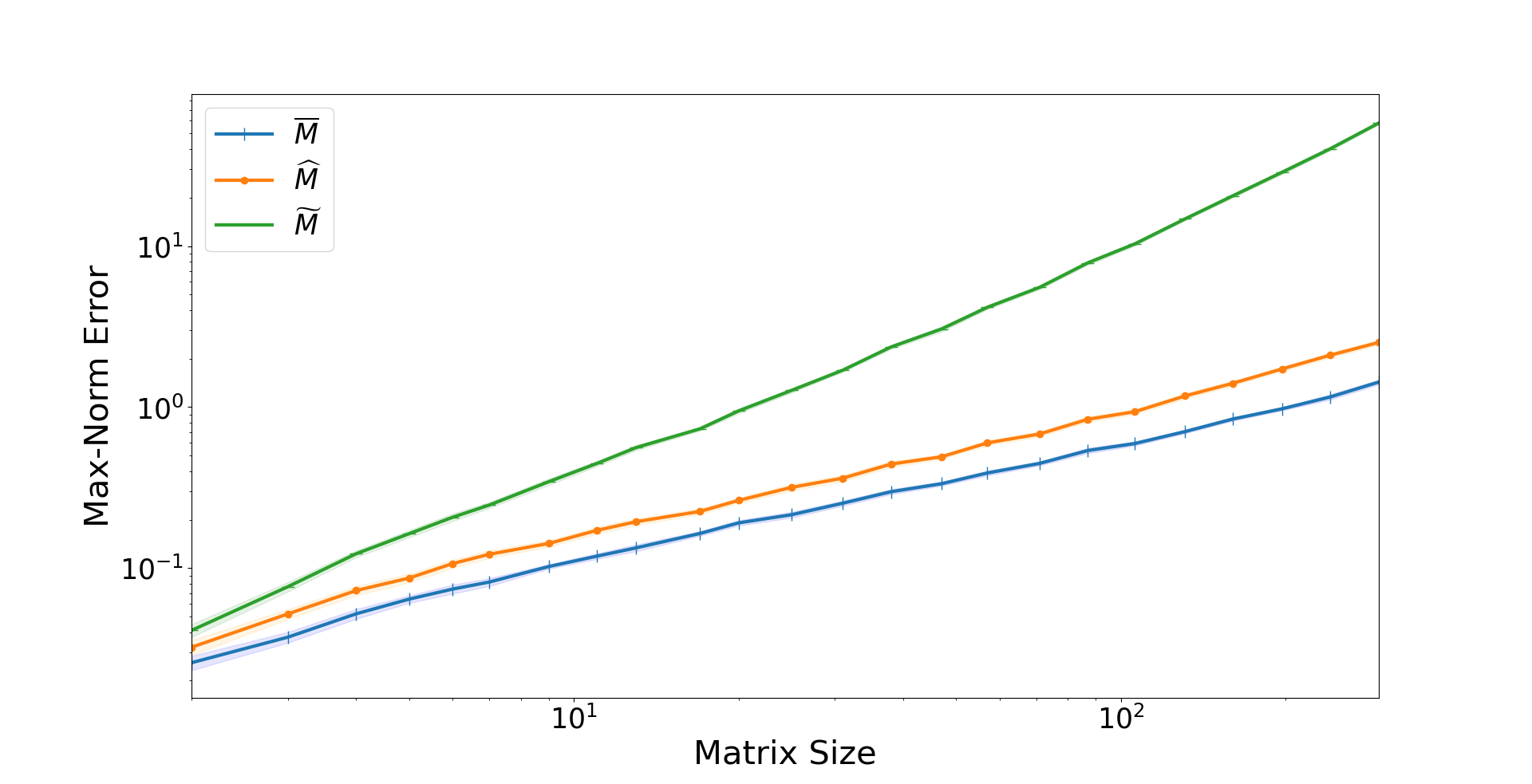}
    \caption{Matrix size scaling of the max-norm error ($T=10 \ 000$, $r=1$)}
    \label{fig:max_norm_scaling}
\end{figure}

We note that $\widehat{M}$ and $\widebar{M}$, the reward estimators that leverage the low-rank structure, have a max-norm error that scales noticeably better than $\widetilde{M}$. Furthermore, $\widebar{M}$ consistently outperforms $\widehat{M}$, even though their max-norm error bounds match up to logarithmic factors for our reward matrix choice (see Appendix \ref{subsec:max-norm-improvements} for a discussion on this matter).




\subsection{Best Policy Identification}\label{subsec:bpi-experiment}

\textbf{Experiment 6 : Comparison of \rsbpi\ with benchmark algorithms.}
We compare the value of the policy learned by \rsbpi, \dsmbpi, and a benchmark algorithm that corresponds to \dsmbpi\ without taking the rank-$r$ approximation of the estimated matrix. Specifically, this benchmark outputs the policy $\hat{\pi}$ defined by $\hat{\pi}\left(i\right)=\arg \max_{1 \leq j \leq m} \widetilde{M}_{i,j}$ for all $i \in [n].$ The results are presented in Figure \ref{fig:BPI_comparison}.

\begin{figure}[H]
    \centering
    \includegraphics[width=0.66\columnwidth]{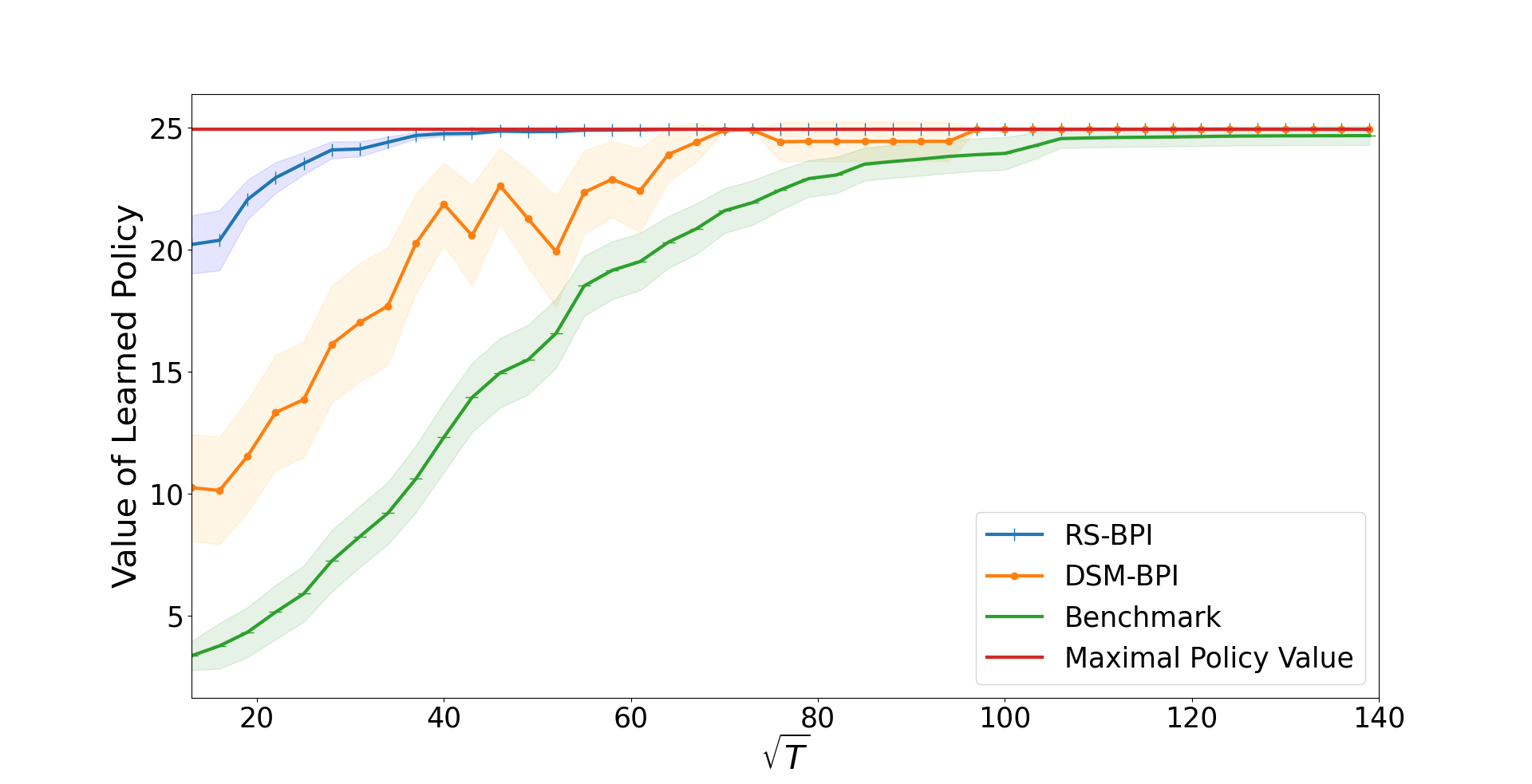}
    \caption{Value of the learned policy against the maximal policy value ($m=n=50, r=2$)}
    \label{fig:BPI_comparison}
\end{figure}

Both low-rank algorithms display improved performance compared to the benchmark, yet the value of the policy learned by \rsbpi\ appears to converge much quicker towards the value of the best policy than the ones of \dsmbpi\ and the benchmark algorithm. 

\end{document}